\newlist{inparaenum}{enumerate}{2}
\setlist[inparaenum]{nosep}
\setlist[inparaenum,1]{label=\bfseries\arabic*.}
\setlist[inparaenum,2]{label=\bfseries\roman*)}
\newcommand{\bX}{\boldsymbol{X}}
\newcommand{\bY}{\boldsymbol{Y}}
\newcommand{\bM}{\boldsymbol{M}}
\newcommand{\bE}{\boldsymbol{E}}
\newcommand{\bA}{\boldsymbol{A}}
\newcommand{\bB}{\boldsymbol{B}}
\newcommand{\bI}{\boldsymbol{I}}
\newcommand{\bU}{\boldsymbol{U}}
\newcommand{\bV}{\boldsymbol{V}}
\newcommand{\bN}{\boldsymbol{N}}
\newcommand{\bS}{\boldsymbol{S}}
\newcommand{\bH}{\boldsymbol{H}}
\newcommand{\bT}{\boldsymbol{T}}
\newcommand{\bG}{\boldsymbol{G}}
\newcommand{\sbeta}{\widehat{\beta}}
\newcommand{\pbeta}{{\beta}^*}
\newcommand{\veps}{\boldsymbol{\varepsilon}}
\newcommand{\hsvt}{\textsf{HSVT}}
\newcommand{\TE}{\textsf{TE}}
\newcommand{\Tr}{{\sf Tr}}
\newcommand{\per}{\textsf{PER}}
\newcommand{\tz}{\tilde{z}}
\newcommand{\tbH}{\widetilde{\bH}}
\newcommand{\Hankel}{\mathsf{H}}
\newcommand{\stackedHankel}{\mathsf{SH}}
\newcommand{\hrho}{\widehat{\rho}}
\newcommand{\cF}{\mathcal{F}}
\newcommand{\cH}{\mathcal{H}}
\newcommand{\cL}{\mathcal{L}}
\newcommand{\cE}{\mathcal{E}}
\newcommand{\mult}{\,\mathsf{x}\,}
\newcommand{\tensor}{{\mathbf T}}
\newcommand{\hightensor}{{\mathbf H}{\mathbf T}}
\newcommand{\Tensor}{{\mathbb T}}
\newcommand{\highTensor}{{\mathbb H}{\mathbb T}}
\newcommand{\sM}{\textsf{M}}
\newcommand{\btdU}{\widetilde{\bU}}
\newcommand{\btdV}{\widetilde{\bV}}
\newcommand{\bhM}{\widehat{\bM}}
\newcommand{\bSigma}{\boldsymbol{\Sigma}}
\newcommand{\btdSigma}{\widetilde{\boldsymbol{\Sigma}}}
\newcommand{\Ex}{\mathbb{E}}
\newcommand{\Pb}{\mathbb{P}}
\newcommand{\Rb}{\mathbb{R}}
\newcommand{\Zb}{\mathbb{Z}}
\newcommand{\Reals}{\mathbb{R}}
\newcommand{\Nb}{\mathbb{N}}
\newcommand{\Gb}{\mathbb{G}}
\newcommand{\Var}{\mathbb{V}\text{ar}}
\newcommand{\imp}{{\sf ImpErr}}
\newcommand{\fore}{{\sf ForErr}}
\newcommand{\oosfore}{{\sf TestForErr}}
\DeclarePairedDelimiter{\norm}{\lVert}{\rVert}
\DeclarePairedDelimiter{\abs}{\lvert}{\rvert}
\newtheorem{property}{Property}
\newcommand{\Page}{\mathsf{P}}
\newcommand{\hPage}{\widehat{\Page}}
\newcommand{\StackedPage}{\mathsf{SP}}
\newcommand{\hStackedPage}{\widehat{\StackedPage}}
\newcommand{\bOne}{\mathbf{1}}
\newcommand{\SP}{\mathsf{SP}}
\newcommand{\SPp}{\mathsf{SP}^\prime}
\newcommand{\hSPp}{\widehat{\SPp}}
\newcommand{\hSPpz}{\widehat{\SPp_0}}
\newcommand{\hSPpi}{\widehat{\SPp_i}}
\newcommand{\hSPpo}{\widehat{\SPp_1}}
\newcommand{\hSPpop}{{\widehat{\SPp_1}^{\perp}}}
\newcommand{\hSPpoi}{{\widehat{\SPp_i}^{\perp}}}
\newcommand{\hf}{\widehat{f}}
\theoremstyle{plain}
\newtheorem{theorem}{Theorem}[section]
\newtheorem{lemma}[theorem]{Lemma}
\newtheorem{proposition}{Proposition}[section]
\newtheorem*{prop*}{Proposition}
\newtheorem{corollary}{Corollary}[section]
\theoremstyle{remark}
\newtheorem{definition}[theorem]{Definition}
\begin{document}

\begin{frontmatter}
\title{On Multivariate Singular Spectrum Analysis: \\ Tensor and Matrix Variants}
\runtitle{On Multivariate Singular Spectrum Analysis: Tensor and Matrix Variants}

\begin{aug}
\author[A]{\fnms{Anish} \snm{Agarwal}\ead[label=e1]{aa5194@columbia.edu}},
\author[B]{\fnms{Abdullah} \snm{Alomar}\ead[label=e2]{aalomar@mit.edu}}
\and
\author[B]{\fnms{Devavrat} \snm{Shah}\ead[label=e3]{devavrat@mit.edu}}

\address[A]{Department of Industrial Engineering and Operations Research,
Columbia University\\
\printead{e1}}

\address[B]{Department of Electrical Engineering and Computer Science,
Massachusetts Institute of Technology\\
\printead{e2}; \printead{e3}}
\end{aug}

\begin{abstract}
We introduce and analyze two extensions of Singular Spectrum Analysis (SSA) to the multivariate setting: a new variant of the well-known matrix-based method (mSSA), and a novel tensor-based approach (tSSA).
For mSSA, under a spatio-temporal factor model with $N$ time series and $T$ observations per series, we establish that prediction mean-squared-error for both imputation and out-of-sample forecasting effectively scales as $1 / \sqrt{\min(N, T )T}$. 
This improves over: 
(i) $1 /\sqrt{T}$ error scaling of SSA, the univariate restriction of mSSA; 
(ii) $1/\min(N, T)$ error scaling for matrix estimation methods that ignore temporal structure. 
The out-of-sample forecasting result of mSSA could be of independent interest for online learning under a spatio-temporal factor model. 
For tSSA, we characterize its imputation mean-squared-error and showcase its better sample complexity, compared to mSSA,  for certain regimes of $N$ and $T$.
We establish that our spatio-temporal model admits a broad range of
time series dynamics including harmonics, polynomials, differentiable periodic functions, and H\"{o}lder continuous functions.
This is further illustrated via the {\em Hankel Calculus}, which establishes that the set of time series the model represents is closed under component-wise addition and multiplication.
Empirically, on benchmark datasets, mSSA performs competitively with state-of-the-art neural-network time series methods (e.g. DeepAR, LSTM) and significantly outperforms classical methods such as vector autoregression (VAR).
Consistent with our theory, tSSA achieves improved imputation performance over mSSA in certain regimes of $N$ and $T$.
Finally, we introduce and analyze an additional variant of SSA to estimate the time-varying variance of a time series.  
To our knowledge, this is the first result providing provable finite-sample performance guarantees for estimating the time-varying variance of a time series.
\end{abstract}


\begin{keyword}
\kwd{multivariate singular spectrum analysis}
\kwd{spatio-temporal factor model}
\kwd{time series analysis}
\kwd{out-of-sample forecasting}
\kwd{singular value thresholding}
\kwd{matrix and tensor completion}
\end{keyword}

\end{frontmatter}
\newpage
\tableofcontents
\newpage

\section{Introduction}\label{sec:introduction}
Multivariate time series data is of great interest across many application areas, including cyber-physical systems, finance, retail, healthcare to name a few. 
An important goal across these domains can be summarized as accurate imputation and forecasting of a multivariate time series in the presence of noisy and/or missing data. 

\vspace{2mm}
\noindent {\bf Setup.} 
We consider a discrete time setting with time indexed as $t \in \Zb$. 
For $N \in \Nb$, let the collection $f_n: \Zb \to \Rb, ~n \in [N] := \{1, \dots, N\}$ be the latent time series of interest.
For $t \in [T]$ and $n \in [N]$, we observe $X_n(t)$ where for $\rho \in (0, 1]$,
\footnote{The missingness probability can itself be a time series, i.e., we can have a different $\rho_t$ for each $t$. However, for simplicity, we shall leave it as a constant $\rho$ for all $t$.}
\begin{align}\label{eq:model}
X_n(t) & = 
\begin{cases}
f_n(t) + \eta_n(t) & \mbox{~with~probability~} \rho \\
\star & \mbox{~with~probability~} 1-\rho.
\end{cases}
\end{align}
Here $\star$ represents a missing observation and $\eta_n(t)$ represents the per-step noise,  which we assume to be an independent (across $t, n$) mean-zero random variable.
\footnote{Our results go through if the noise $\eta_n(t)$ is weakly correlated across $t \in [T], n \in [N]$, as long as the spectral norm induced by the stacked Page Matrix of $\eta_n(t)$ has $\| \cdot \|_2$-norm scaling on the order $O(\sqrt{N} + \sqrt{T})$.}
Though $\eta_n(t)$ is independent, we note that the underlying time series, $f_n(\cdot)$, is of course strongly dependent across $t, n$. 
Indeed the presence of per-step noise $\eta_n(t)$ and missing values (denoted by $\star$) represent an additional challenge of measurement error in our setup. 
The generic spatio-temporal factor model for $f_n(\cdot), n \in [N]$ described in Section \ref{sec:ts_model} {\em without} additional noise $\eta_n(\cdot)$ or missingness already provides an expressive model for a time series including any finite sum of products of harmonics and polynomials, any differentiable periodic function, and any H{\"o}lder continuous function.

\vspace{2mm}
\noindent {\bf Goal.} 
Our objective is two-folds, for $n \in [N]$: 
(i) imputation -- estimating $f_n(t)$ for all $t \in [T]$; 
(ii) out-of-sample forecasting -- predicting $f_n(t)$ for $t > T$. 

\subsection{Multivariate Singular Spectrum Analysis}\label{sec:mSSA}
{
Multivariate singular spectrum analysis (mSSA) is a known method to impute and forecast a multivariate time series (see \cite{Broomhead, Plaut, Ghil, mSSA3, mSSA1, mSSA2, bogalo2020understanding}). 
mSSA has been used for both imputation and forecasting, and signal extraction---decomposing a time series into a small number of simpler time series (e.g., periodic, trend, autoregressive component).
However, despite its heavy use in practice, the theoretical properties of mSSA are not well understood. 
Hence, in this work, we  introduce a variant of mSSA for which we provide a rigorous finite-sample analysis of its imputation and out-of-sample forecasting properties; such a finite-sample analysis of mSSA has been missing from the literature. 
We note that we  do not focus on the task of signal extraction which we leave as important future work.
The variant of mSSA we introduce is arguably much simpler to implement than the original mSSA method.
See Figure \ref{fig:algorithm_sketch} for a visual depiction of the key steps in this variant of mSSA\footnote{%
In Section \ref{appendix:lit_review}, we compare the original mSSA method with this variant and discuss key differences.}.

Moreover, existing formulations of mSSA preserve the matrix-based view of univariate SSA, implicitly flattening the multivariate time series.
While effective in practice, this representation may obscure the underlying multi-dimensional structure of the data.
Therefore, we propose and analyze a more natural tensor-based extension, which we refer to as tSSA.
See Figure \ref{fig:algorithm_sketch_tensor} for a visual depiction of the tensor representation used in tSSA.

We will begin by describing the two variants we are proposing in detail.
}

\vspace{2mm}
\noindent {\bf Singular spectrum analysis (SSA).} 
For ease of exposition and to build intuition, we start with $N = 1$, i.e. a univariate time series. 
There are two algorithmic parameters: $1 \leq L \leq T$ and $k \geq 1$.
For simplicity and without loss of generality assume that $T$ is an integer multiple of $L$, i.e. $T/L \in \Nb$ and $k \leq \min(L, T/L)$\footnote{When $T/L \notin \Nb$, by applying both the imputation and forecasting algorithms for two ranges, $1,\dots, \lfloor T/L \rfloor \mult L$ and $(T \mod L) +1, \dots, T$, this condition will be satisfied in each range and will provide imputation and forecasting for all $T$.
Here $\lfloor T/L \rfloor$ refers to the floor of $T/L$.}.
%

%
First, transform the time series $X_1(t), ~t \in [T]$ into an $L \times  T/L $ matrix where the entry of the matrix in row $i \in [L]$ and column $j \in [ T/L ]$ is $X_1(i+(j - 1) \mult L)$. 
This matrix induced by the time series is called the Page matrix, and we denote it as $\Page(X_1, T, L) \in \Rb^{L \times T/L}$. 

\vspace{2mm}
\sloppy\noindent
{\em Imputation.}
After replacing missing values (i.e. $\star$) in the matrix $\Page(X_1, T, L)$ by $0$, we compute its singular value decomposition, which we denote as 
\begin{align}\label{eq:svd.ssa}
\Page(X_1, T, L) & = \sum_{\ell=1}^{\min(L,  T/L )}  s_\ell u_\ell v_\ell^T, 
\end{align}
where $s_1 \geq s_2 \dots \geq s_{\min(L,  T/L )} \geq 0$ denote its ordered singular values, and $u_\ell \in \Reals^{L}, v_\ell \in \Reals^{ T/L }$ denote its left and right singular vectors, respectively, for $\ell \in [\min(L,  T/L )]$. 
Let $\hrho_1$ be the fraction of observed entries of $X_1$, precisely defined as $(\max(1, \sum_{t=1}^T \bOne(X_1(t) \neq \star)))/T$. 
Let the normalized, truncated version of $\Page(X_1, T, L)$ be
\begin{align}\label{eq:hsvt.ssa}
\hPage(X_1, T, L; k) & = \frac{1}{\hrho_1} \sum_{\ell=1}^k s_\ell u_\ell v_\ell^T,
\end{align}
i.e., we perform Hard Singular Value Thresholding (HSVT) with threshold $k$ on $\Page(X_1, T, L)$ to obtain $\hPage(X_1, T, L; k)$.
We then define the {\em de-noised and imputed estimate} of the original time series, denoted by $\hf_1$, as follows: 
for $t \in [T]$, $\hf_1(t)$ equals the entry of $\hPage(X_1, T, L; k)$ in row $(t-1 \mod\, L)+1$ and column $\lceil t/L\rceil$. 
Here $\lceil t/L\rceil$ refers to the ceiling of $t/L$.

\vspace{2mm}
\noindent
{\em Forecasting.}
\noindent
To forecast, we learn a linear model $\hat{\beta}(X_1, T, L; k) \in \Reals^{L-1}$, which is the solution to
\begin{align}\label{eq:ols.ssa}
{\sf minimize} \quad \sum_{m=1}^{ T/L} (y_m - \beta^T x_m)^2 \quad {\sf over} \quad\beta \in \Reals^{L-1},
\end{align}
where $y_m =  (1 / \hrho_1) X_1(L \mult m)$, $x_m = [\hf_1(L\mult (m-1)+1) \dots \hf_1(L \mult (m-1)+L-1)]$ for $m \in [T/L]$.
\footnote{
To establish theoretical results for the forecasting algorithm, we produce estimates $(\hf_1(L\mult (m-1)+1) \dots \hf_1(L\mult (m-1)+L-1))$ for $m \in [T/L]$ by applying the imputation algorithm on $\Page(X_1, T, L)$ {\em after} setting its $L$th row equal to $0$.
Also, $\hrho_1$ in the definition of $y_m$ is computed using only the first $L - 1$ rows of  $\Page(X_1, T, L)$. 
This avoids dependencies in the noise between $y_m$ and $x_m$ for $m \in [T/L]$.}
Note to define $y_m$ we impute missing values in $X_1$ by $0$.
We now describe how to use $\hat{\beta}(X_1, T, L; k)$ to produce both in-sample and out-of-sample forecasts.
(i) In-sample forecast: for time $ t = L \mult m$ and $m \in [T/L]$, the forecast is given by $\bar{f}_1(L \mult m) =  \hat{\beta}(X_1, T, L; k)^T x_m$ .
(ii) Out-of-sample forecast: for $m >T/L$,  i.e., for time $t>T$, the forecast is given by $\bar{f}_1(L \mult m) =  \hat{\beta}(X_1, T, L; k)^T x'_m$ where $x'_m = \frac{1}{\hrho_1}[X_1(L\mult (m-1)+1) \dots X_1(L \mult (m-1)+L-1)]$ after imputing missing values in $X_1$ by $0$.

\vspace{2mm}
\noindent {\bf Matrix-based Multivariate singular spectrum analysis (mSSA).} 
Below we describe the variant of mSSA we propose, which is an extension of the SSA algorithm described above, to when we have a multivariate time series, i.e., $N > 1$. 
The key change is in the first step where we construct the Page matrix---instead of considering the Page matrix of a single time series, we now consider a `stacked' Page matrix, which is obtained by a column-wise concatenation of the Page matrices induced by each time series separately.
{
Therefore, this variant maintains standard matrix-based approach used in prior work~\cite{mSSA3, mSSA1, mSSA2} for multivariate time series. 
}

Like SSA, it has two algorithmic parameters, $L \geq 1$ and $k \geq 1$. 
For each time series, $n \in [N]$, create its $L \times T/L$ Page matrix $\Page(X_n, T, L)$, where the entry in row $i \in [L]$ and column $j \in [ T/L ]$ is $X_n(i+(j - 1) \mult L)$. 
We then create a stacked Page matrix from these $N$ time series by performing a column wise concatenation of the $N$ matrices, $\Page(X_n, T, L), ~n \in [N]$. 
We denote this matrix as $\StackedPage((X_1,\dots, X_N), T, L)$, and note that it has $L$ rows and $N \mult T/L$ columns. 

\vspace{2mm}
\sloppy \noindent
{\em Imputation.}
We replace missing values (i.e. $\star$s) in $\StackedPage((X_1,\dots, X_N), T, L)$ by $0$.
Similar to \eqref{eq:hsvt.ssa}, we perform HSVT on $\StackedPage((X_1,\dots, X_N), T, L)$ and denote its normalized, truncated version as $\hStackedPage((X_1,\dots, X_N), T, L; k)$ (instead of $\hrho_1$, we now normalize by $\hrho \coloneqq (\max(1, \sum_{n=1}^{N}\sum_{t=1}^T  \bOne(X_n(t) \neq \star)))/NT$). 
From $\hStackedPage((X_1,\dots, X_N), T, L; k)$, like in SSA, we can {\em read off} $\hat{f}_n(t)$ for $n \in [N], ~t \in [T]$, the {\em de-noised and imputed estimate} of the $N$ time series over $T$ time steps.
In particular, let $\hPage(X_n, T, L; k)$ refer to sub-matrix of $\hStackedPage((X_1,\dots, X_N), T, L; k)$ induced by selecting only its $[(n-1) \mult (T / L)  +1,  \dots, n \mult T/L$] columns. 
Then for $t \in [T]$, $\hat{f}_n(t)$ equals the entry of $\hPage(X_n, T, L; k)$ in row $(t-1 \mod L)+1$ and column $\lceil t/L\rceil$.

\vspace{2mm}
\noindent
{\em Forecasting.}
Similar to SSA, to forecast, we learn a linear model $\hat{\beta}((X_1, \dots, X_N), T, L; k) \in \Reals^{L-1}$, which is the solution to
\begin{align}\label{eq:ols.mssa}
{\sf minimize} \quad \sum_{m=1}^{ N \mult T/L} (y_m - \beta^T x_m)^2 \quad {\sf over} \quad\beta \in \Reals^{L-1},
\end{align}

\sloppy \noindent where $y_m$ is the $m$th component of $(1 / \hrho) [X_1(L), \ X_1(2\mult L),  \dots, X_1(T), \ X_2(L), \dots, X_2(T), \dots, \allowbreak X_N(T)] \in \Reals^{N \mult T/L}$, and $x_m \in \Reals^{L -1}$ corresponds to the vector formed by the entries of the first $L-1$ rows in the $m$th column of $\hStackedPage((X_1,\dots, X_N), T, L; k)$\footnote{
Similar to the SSA forecasting algorithm, when creating a forecasting model in mSSA, we produce $\hStackedPage((X_1,\dots, X_N), T, L; k)$ by first setting the $L$th row of $\StackedPage((X_1,\dots, X_N), T, L; k)$ equal to zero before performing the SVD and the subsequent truncation.
Also, $\hrho$ in the definition of $y_m$ is computed only using the first $L-1$ rows of  $\StackedPage((X_1,\dots, X_N), T, L; k)$.
} for $m \in [N \mult T/L]$. 
Note, to define $y_m$, we impute missing values in $X_1, \dots, X_n$ by $0$.
(i) In-sample forecast: for time step $t = L \mult m'$ for $m' \in [T/L]$ and for time series $n \in [N]$, the forecast is given by $\bar{f}_n(L \mult m') =  \hat{\beta}((X_1, \dots, X_N), T, L; k)^T x_{m}$ where $m=m' + (n-1) \mult T/L$.
(ii) Out-of-sample forecast: for $m' >T/L$, i.e., for time $t>T$, and for time series $n \in [N]$, the forecast is given by $\bar{f}_n(L \mult m') =  \hat{\beta}((X_1, \dots, X_N), T, L; k)^T x'_{m'}$,  where $x'_{m'} = \frac{1}{\hrho}[X_n(L\mult m' - (L-1)) \dots X_n(L \mult m'-1)]$ after imputing missing values in $X_n$ by $0$.
See Figure \ref{fig:algorithm_sketch} for a visual depiction of the key steps above.

{
\noindent {\bf Tensor singular spectrum analysis (tSSA).} 
Below, we describe the tensor variant of multivariate SSA (tSSA) we propose. 
The key change in this algorithm is that we replace the stacked Page matrix 
representation with the  `Page tensor' representation which we introduce next. 

Given $N$ time series, with observations over $T$ time steps and hyper-parameter $L \geq 1$, define $\tensor \in \Rb^{N \times L \times T/L}$ such that
\begin{align}\label{eq:page_tensor_rep_2}
    \tensor_{n \ell s} & = f_n((s-1)\times L + \ell), ~~n \in [N], ~\ell \in [L], ~s \in [T/L].
\end{align}
The corresponding observation tensor, $\Tensor \in (\Rb \cup \{\star\})^{N \times L \times T/L}$, is
\begin{align}\label{eq:noisy_page_tensor_rep_2}
    \Tensor_{n \ell s} & = X_n((s-1)\times L + \ell), ~~n \in [N], ~\ell \in [L], ~s \in [T/L]. 
\end{align}

See Figure \ref{fig:algorithm_sketch_tensor} for a visual depiction of $\Tensor$.

\begin{figure}
\centering
  \includegraphics[width = 0.6\linewidth]{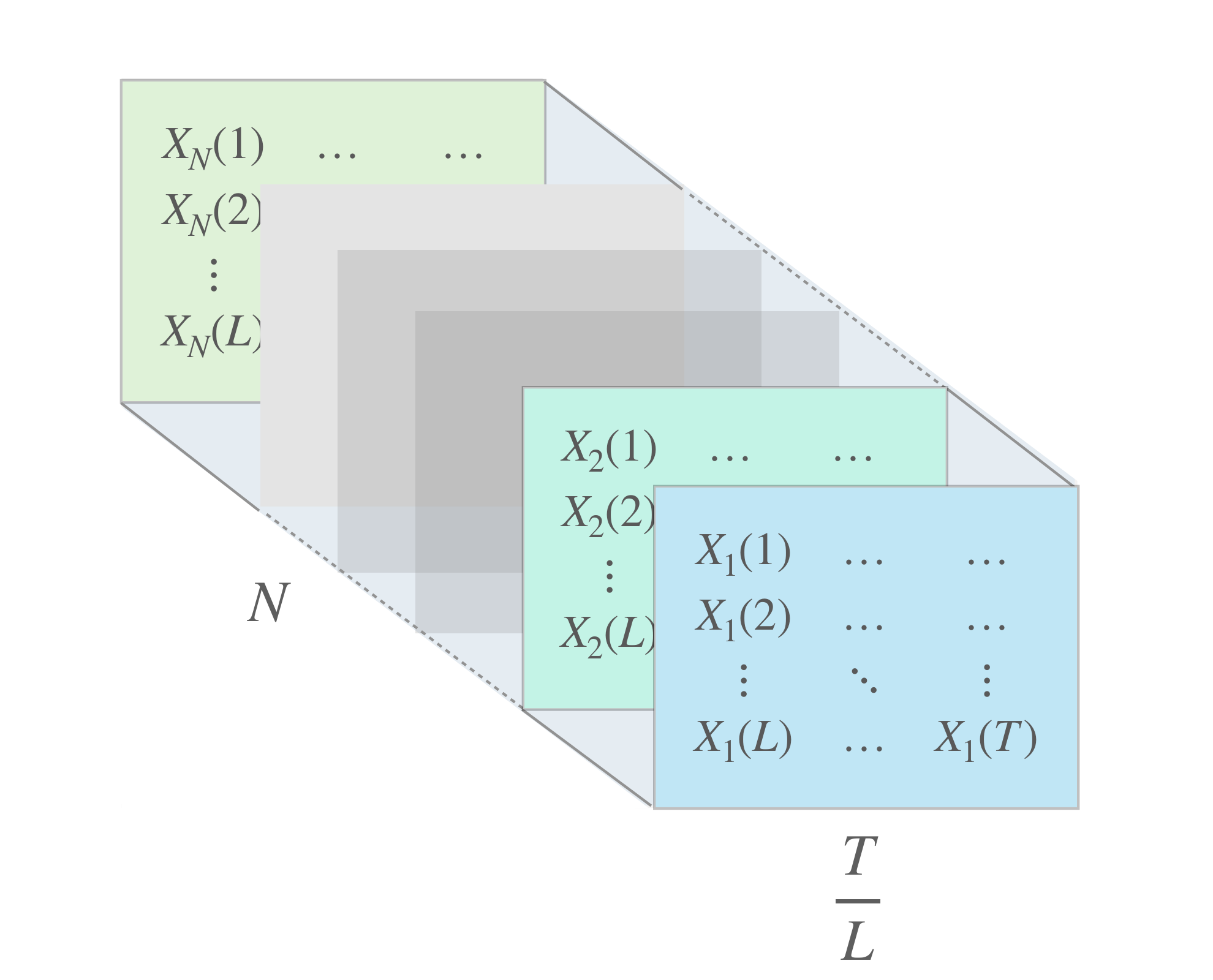}
  \caption{The observations Page tensor.}
      \label{fig:algorithm_sketch_tensor}
      \vspace{-2mm}
\end{figure}

%
\vspace{2mm}
\sloppy\noindent
{\em Imputation.}
This representation and the spatio-temporal model we propose collectively suggest that time series imputation can be reduced to low-rank tensor estimation, i.e., recovering a tensor of low CP-rank\footnote{
The CP-rank of an order-$d$ tensor $\bT \in \Rb^{n_1 \times n_2 \times \dots \times n_d}$ is the smallest value of $r \in \Nb$ such that $\bT_{i_1, \dots, i_d} = \sum^{r}_{k=1} u_{i_1, k} \dots u_{i_d, k}$, where $u_{i_\ell, \cdot}$ are latent factors for $\ell \in [d]$.
} from its noisy, partial observations. 
Over the past decade, the field of low-rank tensor (and matrix) estimation has received great empirical and theoretical interest, leading to a large variety of algorithms including spectral, convex optimization, and nearest neighbor based approaches.
We list a few works which have explicit finite-sample rates for noisy low-rank tensor completion \cite{barak_moitra, xia2018statistically, nonconvex_low_rank_tensor_noisy, yu2020tensor, shah2019iterative}).
As a result, we ``blackbox'' the tensor estimation algorithm used in tSSA as a pivotal subroutine.
Doing so allows one the flexibility to use the tensor estimation algorithm of their choosing within tSSA.
Consequently, as the tensor estimation literature continues to advance, the ``meta-algorithm'' of tSSA will continue to improve in parallel. 
We now define the ``meta'' tSSA imputation algorithm;
the two algorithmic hyper-parameters are $L \ge 1$ (defined in \eqref{eq:noisy_page_tensor_rep_2}) and the order-three tensor estimation algorithm one chooses $\TE_3$ (see Definition \ref{def:tensor_estimation}).
First, using $X_n(t)$ for $n \in [N], t \in [T]$, construct Page tensor $\Tensor$ as in \eqref{eq:noisy_page_tensor_rep_2}.
Second, obtain $\widehat{\tensor}$ as the output of $\TE_3(\Tensor)$ and read off $\hat{f}_n(t)$ by selecting appropriate entry in $\widehat{\tensor}$.

\vspace{2mm}
\noindent
{\em Forecasting.}
\noindent
For forecasting, one can apply the same algorithm as in mSSA, with the modification that each input vector $x_m \in \mathbb{R}^{L-1}$ is formed from the entries $ \left( \widehat{\tensor}_{n,1, i}, \widehat{\tensor}_{n,2, i}, \dots, \widehat{\tensor}_{n,L-1, i} \right)^\top$, where $i = \big((m - 1) \bmod T/L\big) + 1$ and $n = \left\lfloor \frac{mL - 1}{T} \right\rfloor + 1$.
We leave the analysis and empirical evaluation of tSSA forecasting to future work.
}

\vspace{2mm}

{
\noindent \textbf{Parameter selection.} Herein, we give guidance about the selection of the two algorithmic parameters $L$ and $k$ introduced above.
Guided by our finite-sample results in Sections \ref{sec:main_results} and \ref{sec:hankel_calculus}, the optimal value for $L$ in mSSA is $\sqrt{\min(N, T)T}$. 
{
Further, guided by Property \ref{property:te_error_rates}, the optimal value for $L$ in tSSA is $\sqrt{T}$. 
}
On the other hand, we select $k$ in a data-driven manner. In particular, using cross-validation, we either choose $k$ based on the thresholding procedure outlined in [5] or as the minimum number of singular values capturing $ > 90\%$ of its spectral energy. 
See Section \ref{ssec:evidence} and Appendix \ref{ssec:params} for details. 
}

\begin{figure}[h]
    \centering
	\includegraphics[width=0.8\linewidth]{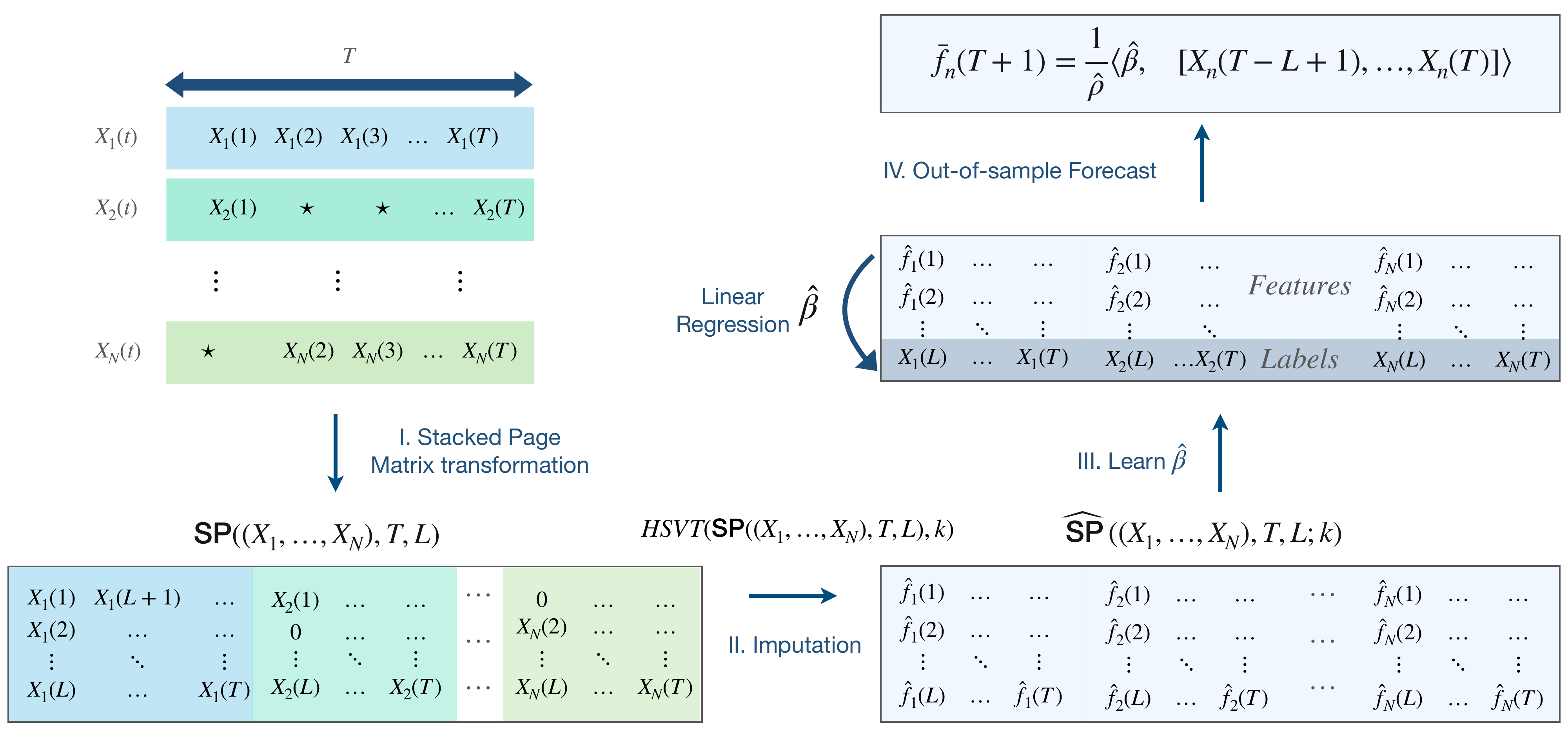}
	\caption{Key steps of our proposed variant of the mSSA algorithm.}
    \label{fig:algorithm_sketch}
    \vspace{-2mm}
\end{figure}

\vspace{2mm}
\noindent {\bf Page vs. Hankel mSSA.} 
See Appendix \ref{sec:hankel_vs_page} for a detailed discussion of the various benefits and drawbacks of the using the Page matrix representation as we propose in both our mSSA variants, instead of the Hankel representation used in the original mSSA.
%

\vspace{2mm}
{
\noindent {\bf Empirical performance.} 
A key question here is how well do our proposed variants (mSSA and tSSA) perform empirically?
In Table \ref{table:stat_nrmse}, we provide a summary comparison of mSSA's performance for
imputation and forecasting on benchmark datasets with respect to state-of-the-art time series algorithms. 
We find that by using the stacked Page matrix in mSSA, it greatly improves performance over SSA; indicating that mSSA is effectively utilizing information {\em across} multiple time series. 
Surprisingly, our variant of mSSA performs competitively or outperforms popular neural network based methods, such as LSTM and DeepAR---we note that these state-of-the-art neural network based methods have no associated theoretical analysis.
Further, it significantly outperforms classical multivariate forecasting methods such as VAR.

For tSSA, we evaluate its effectiveness in imputing/de-noising a multivariate time series relative to mSSA (i.e. the matrix-based approach).
Pleasingly, we find that the empirical results agree with our theoretical analysis, where tSSA outperforms mSSA  in the regime $T \ge N \ge T^{1/3}$, and mSSA performs better when $T \ge N^3$.
See Figure \ref{fig:tssaA} for more details.

Indeed, apart from its use in practice, the empirical performance of both variants strongly motivates a theoretical analysis of when and why mSSA and tSSA work.

}
\begin{table}[h!]
\fontsize{8.9pt}{8.9pt}\selectfont
\tabcolsep=0.07cm
\centering
\caption{mSSA statistically outperforms SSA and other state-of-the-art algorithms, including LSTMs and DeepAR across many datasets. We use the average normalized root mean squared error (NRMSE) as our metric. Details of experiments run to produce { results are in Section \ref{sec:experiments}.}
} 
\label{table:stat_nrmse}
\begin{tabular}{@{}ccccccccccc@{}}
\toprule
        & \multicolumn{5}{c}{\begin{tabular}[c]{@{}c@{}} Mean Imputation \\ (NRMSE)\end{tabular}}                                        & \multicolumn{5}{c}{\begin{tabular}[c]{@{}c@{}}Mean Forecasting  \\ (NRMSE)\end{tabular}}      \\ \cmidrule[0.5pt](lr){2-6} \cmidrule[0.5pt](lr){7-11} 
       & \multicolumn{1}{c}{Electricity} & \multicolumn{1}{c}{Traffic} & \multicolumn{1}{c}{Synthetic} & \multicolumn{1}{c}{Financial}   & \multicolumn{1}{c}{M5}  & \multicolumn{1}{c}{Electricity} & \multicolumn{1}{c}{Traffic} & \multicolumn{1}{c}{Synthetic} & \multicolumn{1}{c}{Financial}  & \multicolumn{1}{c}{M5}                              \\ \midrule
mSSA    &    \textbf{0.398} &   0.508 &       \textbf{0.416} &      \textbf{0.238} &       \textbf{0.883}        &            0.485 &  0.536 &   \textbf{0.281} &  \textbf{0.251} &  \textbf{1.021}                         \\
SSA  &     0.514&  0.713 &       0.675  &  0.467&  0.958        &    0.632 &  0.696 &  0.665 &  0.303 &  1.068 \\
LSTM    & NA                              & NA                          & NA       &                NA       & NA &      0.558 &  0.478 &  0.559 &  1.205 &  1.034 \\
DeepAR  & NA                              & NA                          & NA       &             NA          & NA &   \textbf{0.479} & \textbf{0.464} &   0.415 &  0.316 &  1.050 \\
TRMF &    0.641 &      \textbf{0.460} &       0.564 &  0.430 &  0.916   &  		 0.495 &  0.508 &  0.422 &  0.291 &  1.032 \\
Prophet & NA                              & NA                          & NA       &             NA          & NA   &                    0.569&  0.614 &  1.010 &  1.286 &  1.100 \\    
VAR    & NA                              & NA                          & NA       &             NA          & NA    &     1.291 &  1.092 &  2.987 &  1.218 &  1.120 \\\bottomrule
\end{tabular}
\end{table}

\subsection{Our Contributions}\label{sec:contribution}
As our primary contribution, we provide an answer to the question posed above---under a spatio-temporal factor model that we introduce, the finite-sample analysis we carry out of mSSA's estimation error for imputation and out-of-sample forecasting establishes consistency, as well as its ability to effectively utilize both the spatial and temporal structure in a multivariate time series. 
{
We take this analysis a step further to understand how the tensor-based variant (tSSA) would perform in imputation relative to mSSA.
}
Below, we detail the various aspects of our contribution with respect to the:
(a) spatio-temporal factor model; 
(b) finite sample analysis of mSSA, which leads to both novel and tighter theoretical bounds compared to previous works;
{
(c)  analysis of tSSA that shows a better imputation error convergence rate compared to mSSA for certain relative scalings of $N$ and $T$.
}
(d) algorithmic extensions (and associated theoretical analysis) of mSSA to do time-varying variance estimation, which is the first of the its kind.

{
\vspace{2mm}
\noindent {\bf Spatio-temporal factor model \& Hankel calculus.} 
}
Note that the collection of latent multivariate time series $f_n(t)$, for $n \in [N], ~t \in [T]$ can be collectively viewed as a $N \times T$ matrix. 
To capture the spatial structure, i.e. the relationship across rows, we model this matrix to be low-rank---there exists a low-dimensional latent factor (or feature) associated with each of $N$ time series; analogously, there exists a low-dimensional latent factor associated with each of the $T$ time steps. 
To capture the temporal structure, we further assume that each component of the latent temporal factor has an {\em approximately low-rank Hankel matrix}  representation (see Definition \ref{def:hankel} for the Hankel matrix induced by a time series), i.e., the Hankel---and therefore Page---matrix induced by each component of the latent temporal factor is approximately low-rank. 
This additional structure imposed on the temporal factors is what motivates using the stacked Page matrix representation in mSSA, which is of dimension $L \times (N \mult T/L)$, where $L$ is a hyper-parameter.
We note that for $N=1$ this subsumes the model considered to explain the success of SSA in \cite{SSA_Sigmetrics} as a special case. 

{
As we alluded to earlier, we establish that our factor model is expressive in that it includes harmonics, polynomials, any differentiable periodic function, and any H{\"o}lder continuous function.
In particular, we show that functions in these important and rich classes of time series dynamics have an approximate low-rank Hankel matrix representation. 
We further characterize the representational strength of our spatio-temporal factor model by introducing the {\em Hankel calculus}.  The Hankel calculus establishes that the set of time series that have an approximately low-rank Hankel representation is {\em closed} under component-wise addition and multiplication. 
{
This thorough characterization of our model expands the applicability of our mSSA variant, the proposed tSSA algorithm,  as well as the various SSA/mSSA variants in the literature, which is of interest in its own right. 
}}
 
\vspace{2mm}
\noindent {\bf Finite sample analysis of mSSA.} 
Under the spatio-temporal factor model, we establish that mean squared imputation error {of mSSA} scales as $1/\sqrt{\min(N, T) T}$ (see Theorem \ref{thm:mean_estimation_imputation_simplified}) and the out-of-sample forecasting error scales as $\max(1/\sqrt{NT}, \ N/T^2)$ (see Theorem \ref{thm:mean_estimation_forecasting_simplified_oos},  and Corollary \ref{cor:mean_estimation_forecasting_simplified_oos}).
When $N < T$, the error rate is $1/\sqrt{NT}$. 
When $N > T$, one can simply divide the various time series into sets of size $O(T)$; this will result in a mean squared error rate of $1/T$.
Hence, effectively both the imputation \& forecasting error is of order $1/\sqrt{\min(N, T) T}$. 
For exact details on the relative scaling of $N$ and $T$, please refer to Theorem \ref{thm:mean_estimation_forecasting_simplified_oos}.
For $N = 1$, it implies that the SSA algorithm described above has imputation and forecasting error scaling as $1/\sqrt{T}$. 
That is, mSSA improves performance by a $\sqrt{N}$ factor over SSA by utilizing information across the $N$ time series. 
This also improves upon the prior work of \cite{SSA_Sigmetrics} which established the weaker result that SSA has imputation error scaling as $1/T^{\frac14}$ (i.e., when $N = 1$).
Further \cite{SSA_Sigmetrics} {\em does not} establish a result for the out-of-sample forecasting error of SSA. 
We note that the asymmetry in our finite-sample analysis between $N$ and $T$ is to be expected as we impose further structure on the latent temporal factors; they satisfy a low-rank Hankel representation, which is not assumed of the spatial factors.

Further, existing matrix estimation based methods applied to the $N \times T$ matrix of time series observations (i.e, without first performing the Page matrix transformation as done in mSSA) establish that the imputation prediction error scales as $1/\min(N, T)$.
This is indeed the primary result of the works \cite{TRMF, NIPS2015_5938}, as seen in Theorem 2 of \cite{NIPS2015_5938}\footnote{There seems to be a typo in Corollary 2 of \cite{TRMF} in applying Theorem 2: square in Frobenius-norm error is missing.}. 
That is, while the algorithm stated in \cite{TRMF, NIPS2015_5938} utilizes the temporal structure in addition to the spatial structure, the theoretical guarantees do not reflect it---the guarantees provided by such methods are weaker (since $1/\min(N,T) \geq 1/\sqrt{\min(N, T) T}$) than that obtained by mSSA. 
Again, we emphasize that the existing analysis of SSA and matrix estimation-based methods (for example \cite{SSA_Sigmetrics,TRMF, NIPS2015_5938}) {\em do not establish (finite-sample) bounds for out-of-sample forecasting error}, as is done in this work.

{
\noindent{\bf Introduction and analysis of tSSA.}   
{
As presented earlier, we propose tSSA, a novel tensor variant of SSA, which exploits recent developments in the tensor estimation literature.
}
In Proposition \ref{prop:imputation_comparisons_new}, with respect to imputation error, we characterize the relative performance of tSSA, mSSA, and ``vanilla'' matrix estimation (ME). 
We find that when $N = o(T^{1/3})$, mSSA outperforms tSSA; when $T^{1/3} = o(N), \ N = o(T)$ tSSA outperforms mSSA; when $T = o(N)$, standard matrix estimation methods are equally as effective as mSSA and tSSA.
In addition, as alluded to earlier, we provide empirical results that corroborate this characterization. 
See Figure \ref{fig:regimes} for a graphical depiction of the theoretical characterization.
}

\vspace{2mm}
\noindent{\bf Algorithmic extensions: variance estimation.}   
We extend mSSA to estimate the latent time-varying variance, i.e. $\Ex[\eta_n^2(t)], ~n \in [N], ~t \in [T]$. 
We establish the efficacy of such an extension when the time-varying variance is also modeled through a spatio-temporal factor model. 
To the best of our knowledge, this is the first result that provides provable finite-sample performance guarantees for estimating the time-varying variance of a time series. 

\begin{figure}[h!tb]
\tikzset{every picture/.style={line width=0.75pt}} 
\centering
\begin{tikzpicture}[x=0.75pt,y=0.75pt,yscale=-0.7,xscale=0.7]
\draw [color={rgb, 255:red, 0; green, 0; blue, 0 }  ,draw opacity=1 ][line width=1.5]  (42.09,292.12) -- (400,292.12)(72.04,42.5) -- (72.04,326.23) (393,287.12) -- (400,292.12) -- (393,297.12) (67.04,49.5) -- (72.04,42.5) -- (77.04,49.5)  ;
\draw [color={rgb, 255:red, 179; green, 35; blue, 24 }  ,draw opacity=1 ][line width=1.5]  [dash pattern={on 5.63pt off 4.5pt}]  (72.87,295.03) -- (246.83,130.52) -- (333.6,51.7) ;
\draw [color={rgb, 255:red, 24; green, 124; blue, 179 }  ,draw opacity=1 ][line width=1.5]  [dash pattern={on 5.63pt off 4.5pt}]  (72.87,295.03) -- (242.21,228.38) -- (309.95,200.01) -- (388.51,169.85) ;
\draw  [draw opacity=0][fill={rgb, 255:red, 179; green, 35; blue, 24 }  ,fill opacity=0.05 ] (333.59,51.69) -- (76.73,292.85) -- (388.51,169.85) -- cycle ;
\draw  [draw opacity=0][fill={rgb, 255:red, 24; green, 124; blue, 179 }  ,fill opacity=0.05 ] (387.88,171.5) -- (71.87,294.04) -- (387.65,294.54) -- cycle ;
\draw  [draw opacity=0][fill={rgb, 255:red, 0; green, 0; blue, 0 }  ,fill opacity=0.05 ] (332.6,50.98) -- (71.87,295.03) -- (72.94,50.02) -- cycle ;
\draw (369.45,21.63) node  [font=\small,color={rgb, 255:red, 179; green, 35; blue, 24 }  ,opacity=1 ,rotate=-317.72]  {$N\ =\ T$};
\draw (437.97,145.33) node  [font=\small,color={rgb, 255:red, 15; green, 73; blue, 190 }  ,opacity=1 ,rotate=-336.1]  {$N\ =\ T^{\frac{1}{3}}$};
\draw (366.91,317.72) node  [font=\Large,color={rgb, 255:red, 0; green, 0; blue, 0 }  ,opacity=1 ,rotate=-359.92]  {$T$};
\draw (51.32,97.91) node  [font=\Large,color={rgb, 255:red, 0; green, 0; blue, 0 }  ,opacity=1 ,rotate=-359.92]  {$N$};
\draw (113.42,201.28) node [anchor=north west][inner sep=0.75pt]  [font=\scriptsize,rotate=-316.68] [align=left] {tSSA = mSSA = ME};
\draw (148.36,230.2) node [anchor=north west][inner sep=0.75pt]  [font=\scriptsize,rotate=-330.65] [align=left] {tSSA $\gg$ mSSA $\gg$ ME};
\draw (160.52,260.83) node [anchor=north west][inner sep=0.75pt]  [font=\scriptsize,rotate=-347.83] [align=left] {mSSA $\gg$ tSSA $\gg$ ME};
\end{tikzpicture}
\captionsetup{format=plain}

\caption{Relative effectiveness of tSSA, mSSA, ME for varying $N, T$.}
\label{fig:regimes}
\end{figure}

\vspace{2mm}
\noindent{\bf Summary of contributions.} 
We now briefly summarize our contributions:

\begin{enumerate}
    \item A novel spatio-temporal factor model to analyze mSSA. We show that a large family of time series dynamics fall within our factor model including any differentiable periodic function.
    \item Finite-sample analysis for mSSA's imputation and out-of-sample forecasting. The tools we use for imputation borrow from the existing literature on matrix estimation. However, our out-of-sample forecasting requires novel technical contributions. We believe these tools might be of interest for online learning with a spatio-temporal factor model.{
    \item A novel tensor variant called tSSA, which exploits recent developments in the tensor estimation literature. 
    We find that when $T^{1/3} = o(N)$ and $N = o(T)$, tSSA has better sample complexity compared to mSSA.
    We believe this tensor variant opens a direction to future work to understand the appropriate statistical and computational trade offs for time series analysis.  
    }
    \item A novel time-varying variance estimation algorithm with theoretical guarantees. To the best of our knowledge, neither such an algorithm nor an associated theoretical analysis exists.
\end{enumerate}

\section{Related Work}\label{appendix:lit_review}

Given the ubiquity of multivariate time series analysis, it will not be possible to do justice to the entire literature. 
We focus on a few techniques most relevant to compare against, either theoretically or empirically.

\vspace{2mm}
\noindent {\bf Classical SSA and mSSA.} 
A good overview of the literature on SSA can be found in \cite{SSA_book}.
As alluded to earlier, the original SSA method differs from the variant discussed in \cite{SSA_Sigmetrics} and in this work. 
The key steps of the original SSA method are:
Step 1--create a Hankel matrix from the time series data; 
Step 2--do a Singular Value Decomposition (SVD) of it; 
Step 3--group the singular values based on user belief of the model that generated the process; 
Step 4--perform diagonal averaging to “Hankelize" the grouped rank-1 matrices outputted from the SVD to create a set of time series; and
Step 5--learn a linear model for each “Hankelized" time series for the purpose of forecasting.
The theoretical analysis of this original SSA method has been focused on proving that many univariate time series have a low-rank Hankel representation, and secondly on defining sufficient {\em asymptotic} conditions for when the singular values of the various time series components are separable, thereby justifying Step 3 of the method.
Step 3 of the original SSA method requires user input and Steps 4 and 5 are not robust to noise and missing values due to the strong dependence across entries of the Hankel representation of the time series. 
To overcome these limitations, in \cite{SSA_Sigmetrics} a simpler and practically useful version as described in Section \ref{sec:mSSA} was introduced.
The original mSSA method, like the original SSA method, involves the five steps described above, but first the Hankel matrices induced by each of the $N$ time series are stacked either column-wise (horizontal mSSA) or row-wise (vertical mSSA); see \cite{hassani2018singular}.

We note given the popularity of mSSA, there are many algorithmic variants of it proposed in the literature motivated by different applications: see \cite{Broomhead, Plaut, Ghil, mSSA3, mSSA1, mSSA2, bogalo2020understanding}. 
A significant focus of these works is signal extraction, i.e., decomposing the observed time series into a small number of simpler time series (e.g., periodic, trend, autoregressive component);
these extracted signals are then subsequently utilized for imputation and forecasting as described in the preceding paragraph.
As stated earlier, despite the popularity of the mSSA framework, a rigorous finite-sample analysis of its imputation and out-of-sample forecasting properties are missing in the literature; the challenge in such an analysis is exacerbated with missing data and measurement error.
In this work, as described in Section \ref{sec:mSSA}, we introduce a simpler variant of mSSA that uses the Page instead of the Hankel matrix representation.
This variant is simpler as it focuses only on the task of imputation and forecasting, and not signal extraction.
We do a finite-sample analysis of our variant of mSSA and establish its consistency with respect to imputation and forecasting, which so far has been missing from the mSSA literature. 
In Appendix \ref{sec:hankel_vs_page}, we compare our variant to the original version of mSSA which uses the Hankel matrix, both with respect to their theoretical and practical properties.

{
\noindent {\bf SSA variant in \cite{SSA_Sigmetrics}.}  
As we have alluded to earlier, our work extends the simpler and more practical variant of SSA introduced in \cite{SSA_Sigmetrics} and the analysis therein to the case of multivariate time series. 
However, even with respect to the univariate case, our work improves upon the analysis of \cite{SSA_Sigmetrics} in three substantial ways. %
First, it provides a stronger bounds ($1/\sqrt{T}$) for the imputation prediction error of SSA (i.e. in the univariate case $N =1$) compared to the weaker result established in \cite{SSA_Sigmetrics} which has imputation error scaling as $1/T^{\frac14}$.
Second, \cite{SSA_Sigmetrics} {\em does not} establish a result for the out-of-sample forecasting error of SSA as done in this work. 
Third, our rich spatio-temporal model expands the applicability of SSA (and mSSA) to new classes of time series dynamics. 
For example, this work establishes that smooth periodic functions admit an approximate low-rank Hankel matrix representation. See Proposition \ref{def:C_k_smoothness}  for a concrete definition.  
We further expand the richness of the model, and subsequently the applicability of SSA and mSSA, by introducing the Hankel calculus, which establishes that the set of time series with an approximately low-rank Hankel representation is {\em closed} under component-wise addition and multiplication.
}

\vspace{2mm}
{
\noindent {\bf Principal component regression (PCR).}
Two recent works on the analysis of PCR are closely related to our work. 
The first of these works (\cite{PCR_NeurIPS}) analyzes PCR under the random i.i.d. covariate design where both the in- and out-of-sample covariates are given during the training phase. 
Under this setting, \cite{PCR_NeurIPS} establishes bounds on the out-of-sample prediction error which decays as
$1/\sqrt{n}$, where $n$ is the number of training samples. 
A subsequent work of \cite{agarwal2020principal} analyzes PCR in a setting that is more closely related to ours. 
Specifically, \cite{agarwal2020principal} considers the {\em fixed design}  setting and assumes the more typical supervised learning setup where out-of-sample covariates are unavailable during training. 
Under this setting, \cite{agarwal2020principal} establishes a better decay rate for the out-of-sample prediction error ($1/n$), and proves that PCR correctly identifies the model.

It is worth noting that while our setup is similar to that of \cite{agarwal2020principal}, it differs in a few crucial ways.
This results in a few unique theoretical challenges, which we address in our analysis. 
In particular, the setup we consider differs in three ways (see the differences illustrated in Figure \ref{fig:pcr_vs_mssa}).  
\begin{figure}
    \centering
    \includegraphics[width=0.8\linewidth]{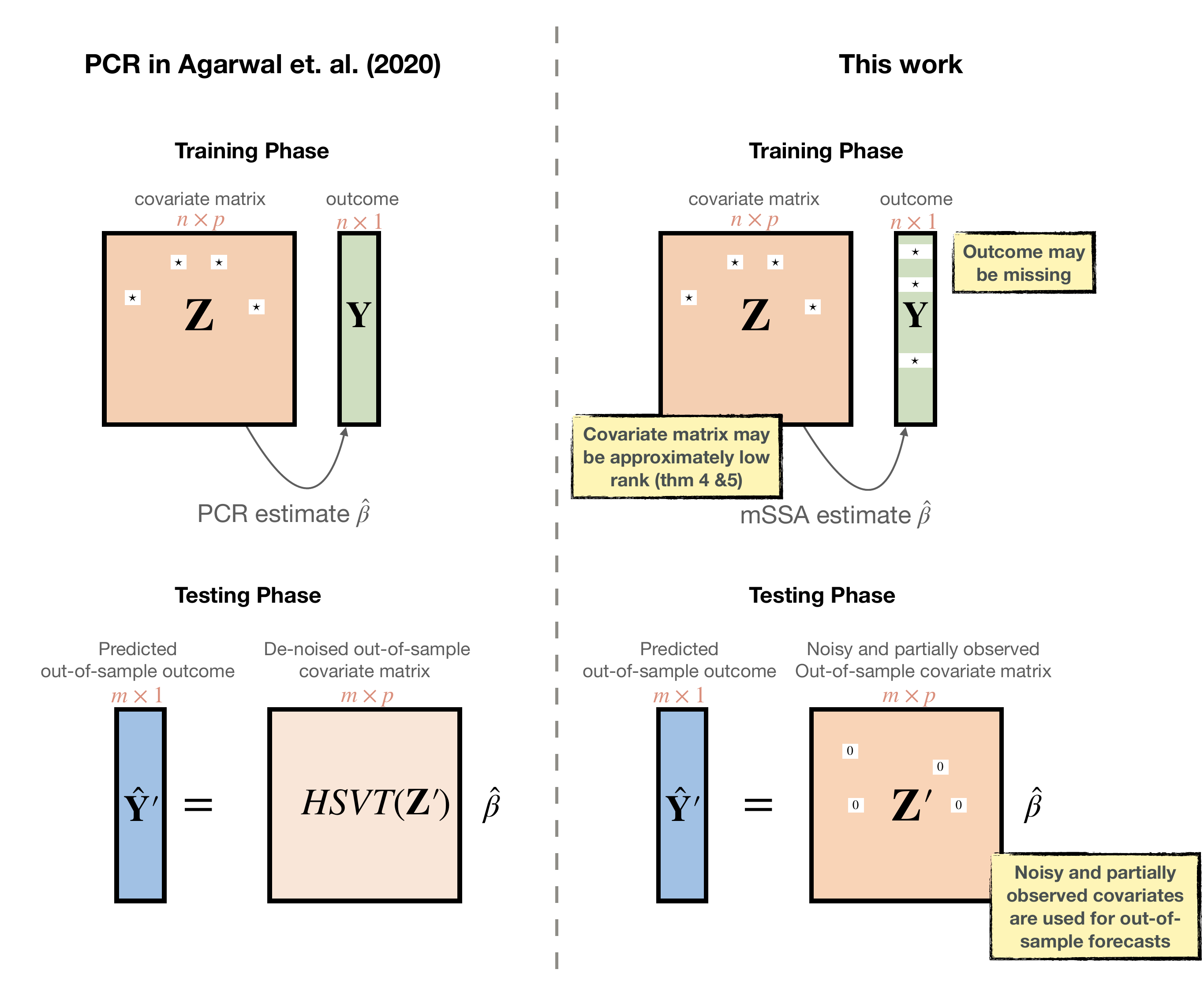}
    \caption{Our setup differs from that of \cite{agarwal2020principal} in three crucial ways, which results in a few unique theoretical challenges }
    \label{fig:pcr_vs_mssa}
\end{figure}

\vspace{2mm}
\noindent  \textit{ (i) No access to the full out-of-sample covariate matrix when forecasting.} The algorithm in \cite{agarwal2020principal} makes use of the fact that they have full access to the out-of-sample covariates matrix (therein denoted by $\boldsymbol{Z}^{\prime}$) when making out-of-sample predictions. Specifically, HSVT is applied to the out-of-sample covariate matrix to produce a de-noised estimate. Then, the predictions are produced using the de-noised estimate of the covariates.
In contrast, recall that in our out-of-sample forecasting problem, we forecast $f_n(t')$ having access to only the past observations $\{X_n(t)\}_{n\in[N], ~t<t'}$. That is, we only have partial access to the out-of-sample covariates matrix.
Since we cannot guarantee a reasonable estimate of the out-of-sample covariates using HSVT with partial access to the matrix,  our out-of-sample forecasts will be produced {\em directly from the noisy covariates}.
Deriving bounds for the out-of-sample forecasting error with such an estimate is a unique challenge we address in this work. 

\vspace{2mm}
\noindent  \textit{(ii) The outcome in the training data may be missing.} While \cite{agarwal2020principal} does not address this possibility, which is natural in the settings of time series analysis, we address the scenario where the outcome may be missing in the training data. 

\vspace{2mm}
\noindent  \textit{(iii) The covariate matrix
may be approximately low-rank.} In Theorems \ref{thm:mean_estimation_imputation_generalized} and \ref{thm:mean_estimation_forecasting_generalized} we generalize Theorems \ref{thm:mean_estimation_imputation_simplified} and \ref{thm:mean_estimation_forecasting_simplified}  to the extended spatio-temporal model where the Hankel/Page matrices are only approximately low-rank. This setting is not addressed in \cite{agarwal2020principal}.  

}

\vspace{2mm}
\noindent {\bf Matrix factorization based methods for multivariate time series.}
There is a rich line of work in econometrics, signal processing, and statistics on viewing multiple time series as a matrix, and where some form of matrix factorization is performed to learn the spatial and temporal factors induced by the matrix; such models have also been called dynamic factor models.
Some representative papers (and by no means exhaustive) include \cite{stock_watson, Forni_Hallin_Lippi_Reichlin, Hallin_Marc_Liska, Doz_Giannone_Reichlin, Marta_Modungo, barigozzi2019quasi, deistler2008generalized, anderson2012autoregressive}.
\cite{stock_watson}  consider the estimation by principal components of this $N \times T$ matrix. 
They use the model for signal extraction and forecasting. 
Also, they proposed an expectation-maximization (EM) algorithm to handle missing data and imputation.
\cite{Forni_Hallin_Lippi_Reichlin, Hallin_Marc_Liska} also estimate principal components and restrict the singular vectors to be related to the Fourier basis.
\cite{Doz_Giannone_Reichlin, barigozzi2019quasi} consider maximum likelihood estimation based on Kalman filtering and also consider forecasting and signal extraction. 
\cite{Marta_Modungo} show how to handle missing data and imputation.
Similar to the mSSA literature, the general focus of these works has been signal extraction, which can then be subsequently used for imputation and forecasting. 
The theoretical analysis of these methods has generally been asymptotic in nature, and has focused on recovery of the spatial and temporal factors, i.e., signal extraction.
Our work complements this literature as we focus directly on finite-sample analysis for imputation and out-of-sample forecasting (without the need for signal extraction), and establish consistency for the variant of mSSA we propose.
To the best of our knowledge, finite-sample consistency results such as ours are limited in the literature.

Additionally, there is a recent line of work from the machine learning literature which also employs matrix factorization based methods (see \cite{Matrix_Time2, TRMF}).
Most such methods make strong prior model assumptions on the underlying time series and the algorithm changes based on the assumptions made on the time series dynamics that generated the data. 
Further, finite sample analysis, especially with respect to forecasting error, of such methods is usually lacking. 
We highlight one method, Temporal Regularized Matrix Factorization (TRMF) (see \cite{TRMF}), which we directly compare against due to its popularity, and as it achieves state-of-the-art empirical imputation and forecasting performance. 
The authors in \cite{TRMF} provide finite sample imputation analysis for an instance of the model 
considered in this work, but forecasting analysis is absent. 
As discussed earlier, they establish that imputation error scales as $1/\min(N, T)$. 
This is a direct consequence of the low-rank structure of the original $N \times T$ matrix. 
But they fail to utilize, at least in the theoretical analysis, the temporal structure. 
Indeed, our analysis captures such temporal structure and hence our imputation error scales as $1/\sqrt{\min(N, T)T}$ which is a stronger guarantee. 
For example, for $N =\Theta(1)$, their error bound remains $\Theta(1)$ for any $T$, suggesting that TRMF \cite{TRMF} fails to utilize the temporal structure for better estimation, while the error for mSSA would vanish as $T$ grows. 

\vspace{2mm}
\noindent {\bf Other relevant literature.} 
We take a brief note of some popular time series methods in the recent literature. 
In particular, recently neural network (NN) based approaches have been popular and empirically effective.
Some industry standard neural network methods include LSTMs, from the Keras library (a standard NN library, see \cite{Keras}) and DeepAR (an industry leading NN library for time series analysis, see \cite{DeepAR}).
Though they have no theoretical guarantees, which is the focus of our work, we compare with them empirically. 

\section{Model}\label{sec:ts_model}
\subsection{Spatio-Temporal Factor Model}\label{sec:ts_model.defn} 

Below, we introduce the spatio-temporal factor model we use to explain the success of mSSA. 
In short, the model requires that the underlying latent multivariate time series satisfies Properties \ref{prop:low_rank_mean_} and \ref{prop:low_rank_mean_hankel}, which capture the ``spatial'' and ``temporal'' structure within it, respectively.

\vspace{2mm}
\noindent {\bf Spatial structure in data.}
Consider the matrix $\bM \in \Rb^{N\times T}$, where its entry in row $n$ 
and column $t$, $\bM_{nt}$ is equal to $f_n(t)$, the value of the latent time series $n$ at time $t$. 
We posit that the matrix $\bM$ is low-rank. Precisely, 
\begin{property}\label{prop:low_rank_mean_}
Let $\text{rank}(\bM) = R$. 
That is, for any $n \in [N], t \in [T]$,
$
\bM_{nt}  = \sum^{R}_{r=1} U_{nr} \ W_{rt},
$
where $| U_{nr} | \le \Gamma_1$, $| W_{rt} | \le \Gamma_2$ for constants $\Gamma_1, \Gamma_2 > 0$.
\end{property}
Property \ref{prop:low_rank_mean_} effectively captures the ``spatial'' structure amongst the $N$ time series.
Similar to the dynamic factor model literature, we can interpret this model as there existing $R$ latent time series $W_{r \cdot}$ for $r \in [R]$, and each time series $f_n(\cdot)$ is a linear combination of these $R$ time series, where the weights are given by $U_{n \cdot}$.
{\em We emphasize that though the latent factors $U_{n \cdot}$ and $W_{\cdot t}$, which determine $f_n(t)$ are bounded, the observed time series $X_n(t) = f_n(t) + \eta_n(t)$ can be be unbounded, as $\eta_n(t)$ is only assumed to be a sub-Gaussian random variable.}

\vspace{2mm}
\noindent {\bf Temporal structure in data.}
To explicitly capture the temporal structure in the data, we impose additional structure on $W_{r \cdot}$.
To that end, we introduce the notion of the Hankel matrix induced by a time series.
\begin{definition}[Hankel Matrix]\label{def:hankel}
Given a time series $g: \Zb \to \Rb$, its Hankel matrix associated with observations over $T$ time steps, $\{1,\dots, T\}$, is given by the matrix $H \in \Rb^{\lfloor T/2 \rfloor \times \lfloor T/2 \rfloor}$ with $H_{ij} = g(i+j-1)$ for $i, j \in [\lfloor T/2 \rfloor]$. 
\end{definition}
Now, for a given $r \in [R]$, consider the time series $W_{rt}$ for $t \in [T]$. 
Let $H(r) \in \Rb^{\lfloor T/2 \rfloor \times \lfloor T/2 \rfloor}$ denote its Hankel matrix restricted to $[T]$, i.e. $H(r)_{ij} = W_{r (i+j-1)}$ for $i, j \in [\lfloor T/2 \rfloor]$. 
\begin{property}\label{prop:low_rank_mean_hankel}
For each $r \in [R]$ and for any $T \ge 1$, the Hankel Matrix $H(r) \in \Rb^{\lfloor T/2 \rfloor \times \lfloor T/2 \rfloor}$ associated with time series $W_{rt}, \ t \in [T]$ has rank at most $G$. 
\end{property}
Property \ref{prop:low_rank_mean_hankel} captures the temporal structure within the latent factors associated with time; indeed, such a low-rank Hankel representation includes a rich family of time series dynamics as noted in Proposition \ref{prop:lowrank_LRF_example} below.
\begin{proposition}[Proposition 5.2, \cite{SSA_Sigmetrics}]
\label{prop:lowrank_LRF_example}
Consider a time series $f: \mathbb{Z} \to \Rb$ with its element at time $t$ denoted as
\begin{align}\label{eq:ex.poly.harmonic.exp}
f(t)& = \sum_{a=1}^A \exp(\alpha_a t) \cdot \cos(2\pi \omega_a t + \phi_a) \cdot P_{m_a}(t),
\end{align}
where $\alpha_a, \omega_a, \phi_a \in \Rb$ are parameters, $P_{m_a}$ is a degree 
$m_a \in \mathbb{N}$ polynomial in $t$. 
Then $f(\cdot)$ satisfies Property \ref{prop:low_rank_mean_hankel}.
In particular, consider the Hankel matrix of $f$ over $[T]$, denoted as $H(f) \in \Rb^{\lfloor T/2 \rfloor \times \lfloor T/2 \rfloor}$ with $H(f)_{ij} = f(i+j-1)$ for $i, j \in [\lfloor T/2 \rfloor]$. 
For any $T$, the rank of $H(f)$ is at most $G = A(m_{\max} + 1)(m_{\max} + 2)$, where 
$m_{\max} = \max_{a \in A} m_a$.
\end{proposition}
Proposition \ref{prop:lowrank_LRF_example} states any finite sum of (products of) harmonics, polynomials, and exponentials has a low-rank Hankel representation.
Each of these functions are popular to model various aspects of a time series such as periodicity and trend.
Further, we note that the spectral representation of generic stationary processes, which includes autoregressive processes, implies that {\em any} sample-path of a stationary process can be decomposed into a weighted sum (precisely an integral) 
of harmonics, where the weights in the sum are sample path dependent---see Property 4.1, Chapter 4 of \cite{VAR}. 
That is, a finite (weighted) sum of harmonics provides a good model representation for stationary processes with the model becoming more expressive as the number of harmonics grows. 
In Section~\ref{sec:hankel_calculus}, we extend this model when Property  \ref{prop:low_rank_mean_hankel} is only approximately satisfied. 
In particular, we quantify the approximation error based on the smoothness of the underlying time series and the number of harmonics used in the summation to approximate it.
 
\vspace{2mm}
\noindent {\bf Spatio-temporal model implies stacked Page matrix is low-rank.}
Recall that the primary representation utilized by mSSA, as described in Section \ref{sec:mSSA}, is the stacked Page matrix (with parameter $L$). 
Observe that the Page matrix of a univariate time series for any $L \leq \lfloor T/2 \rfloor$ is simply the sub-matrix of the associated Hankel matrix: 
precisely, the Page matrix can be obtained by restricting to the top $L$ rows and columns $1, L+1, \dots$ of the Hankel matrix. 
Therefore, the rank of the Hankel matrix is a bound on the rank of the Page matrix. 
%
Under the spatio-temporal factor model satisfying Properties \ref{prop:low_rank_mean_} and \ref{prop:low_rank_mean_hankel}, we establish the following low-rank property of the Page 
matrix of any particular time series as well as that of the stacked Page matrix. 
\begin{proposition} \label{prop:flattened_mean_low_rank_representation}
Let Properties \ref{prop:low_rank_mean_} and  \ref{prop:low_rank_mean_hankel} hold. 
Then for any $L \leq \lfloor T/2 \rfloor$ with any $T \geq 1$, the rank of the Page matrix induced by the univariate time series $f_n(\cdot)$ for $n \in [N]$ is at most $R \mult G$. 
Further, the rank of the stacked Page matrix induced by all $N$ time series $f_1(\cdot), \dots, f_N(\cdot)$ is also at most $R \mult G$. 
\end{proposition}
The proof is in Appendix \ref{sec:proof_prop_flattened_page_low_rank} where a more general result is established in Proposition \ref{prop:approx_flattened_mean_low_rank_representation}.   

{
\noindent {\bf Spatio-temporal model implies Page tensor has low CP-rank.}
Similarly,  recall that the primary representation utilized by tSSA, as described in Section \ref{sec:mSSA}, is the Page tensor representation (see Figure \ref{fig:algorithm_sketch_tensor}). 
Let the CP-rank of an order-$d$ tensor $\bT \in \Rb^{n_1 \times n_2 \times \dots \times n_d}$ be the smallest value of $r \in \Nb$ such that $\bT_{i_1, \dots, i_d} = \sum^{r}_{k=1} u_{i_1, k} \dots u_{i_d, k}$, where $u_{i_\ell, \cdot}$ are latent factors for $\ell \in [d]$.
Recall the definitions of $\tensor$  in \eqref{eq:page_tensor_rep_2},
Then, under the model described above, we have the following property. 
\begin{proposition}\label{prop:tensor}
Let Properties \ref{prop:low_rank_mean_} and \ref{prop:low_rank_mean_hankel} hold. 
Then, for any $1 \leq L \leq \sqrt{T}$, $\tensor$, the Page tensor induced by time series $f_1(\cdot), \dots, f_N(\cdot)$,  has canonical polyadic (CP)-rank
at most $R \mult G$. 
%
%
\end{proposition}
Proof of Proposition \ref{prop:tensor} can be found in Appendix \ref{sec:tSSA_proofs}.
}

\subsection{A Diagnostic Test for the Spatio-Temporal Model}\label{ssec:evidence} 
In Sections \ref{sec:main_results} and \ref{sec:hankel_calculus}, under the model described above, we theoretically establish the efficacy of mSSA.  
Beyond this model though, our work does not provide any guarantees for mSSA. 
Therefore, to utilize the guarantees of this work, it would be useful to have a data-driven diagnostic test that can help identify scenarios when the model of Section \ref{sec:ts_model} may or may not hold. 
We discuss one such test in this section.

In particular, Proposition  \ref{prop:flattened_mean_low_rank_representation} suggests a ``data driven diagnosis test'' to verify whether mSSA is likely to succeed as per the results of this work. 
Specifically, if the (effective) rank---defined as the minimum number of singular values capturing $ > 90\%$ of its spectral energy---of the Page matrix associated with any of the univariate components $f_n(\cdot)$ and the (effective) rank of stacked Page matrix associated with the multivariate time series with $N$ component are {\em very different, then mSSA may not be effective} compared to SSA, but if they are {\em very similar then mSSA is likely to be more effective} compared to SSA. 
Our finite-sample results in Sections \ref{sec:main_results} and \ref{sec:hankel_calculus} indicate that the optimal value for $L$ is $\sqrt{\min(N, T)T}$.
Thus as a further test, if the effective rank of the stacked Page matrix does not scale much slower than $L$ for $L \sim \sqrt{\min(N, T)T}$, then SSA (and mSSA) are unlikely to be effective methods. 

\vspace{2mm}
\noindent
Table \ref{table:rank} compares the (effective) rank of the stacked Page matrices for different benchmark time series data sets.  
The value of $T$ equals {$3993$, $26304$, and $10560$} for the Financial, Electricity, and Traffic 
datasets respectively (see Appendix \ref{appendix:experiments} for details on the datasets).  
We set $L = \lfloor\sqrt{\min(N, T)T}\rfloor $ for all datasets. When $N =1$, this corresponds to  $L$ equals {$63$, $162$,  and $102$}  for the Financial, Electricity, and Traffic datasets respectively. 
Table \ref{table:rank} shows the effective rank in each dataset as we vary $N$. 
As can be seen, for $N = 1$, the effective rank is much smaller than $L$ (or $T$) suggesting that SSA is likely to be effective. 
For Electricity and Financial datasets, the rank does not change by much as we increase $N$. 
However, relatively the rank does increase substantially for the Traffic dataset. 
{This might explain why mSSA is relatively less effective for the Traffic dataset in contrast to the Financial and Electricity datasets as noted in Table \ref{table:stat_nrmse}.  }

\begin{table}[!htb]
\centering
\caption{ 
	Effective rank of stacked Page matrix across benchmarks as we vary $N$. 
	%
	}
\label{table:rank}
\begin{tabular}{@{}lllll@{}}
\toprule
Dataset     & N = 1 & N =10 & N = 100 & N = 350 \\ \midrule
Electricity & 19 & 37 & 44   & 31   \\
Financial   & 1  & 3 & 3  & 6     \\
Traffic     & 14 & 32 & 69  & 116 \\ \bottomrule
\end{tabular}
\end{table}

\section{Main Results}\label{sec:main_results}
We now provide bounds on the imputation and forecasting prediction error under the spatio-temporal model introduced in Section \ref{sec:ts_model}. 
We start by defining the metric by which we measure prediction error. 
For imputation, we define prediction error as
\begin{align}\label{eq:imp.error}
\imp(N, T) & = \frac{1}{NT} \sum_{n=1}^N \sum_{t=1}^T \Ex\big[ (f_n(t) - \hat{f}_n(t))^2\big].
\end{align}
Here, the imputed estimate $\hat{f}_n(\cdot), ~n \in [N]$ are produced by the imputation algorithm of Section \ref{sec:mSSA}.  
For forecasting, we define the in-sample prediction error as
\begin{align}\label{eq:fore.error}
\fore(N, T, L) & = \frac{L}{NT} \sum_{n=1}^N \sum_{m'=1}^{T/L} \Ex\big[ (f_n(L \mult m') - \bar{f}_n(L\mult m'))^2\big].
\end{align}
Further,  let $T_1 \in \Zb$ such that $T_1 \ge L$. Then, we define the out-of-sample prediction error  as  
\begin{align}\label{eq:fore.error_oos}
\oosfore(N, T ,T_1, L) & = \frac{L}{NT_1} \sum_{n=1}^N \sum_{m'=1}^{T_1/L} \Ex\big[ (f_n(T + L \mult m') - \bar{f}_n(T +  L\mult m'))^2\big].
\end{align}
Again, the forecasted estimate $\bar{f}_n(\cdot), ~n \in [N]$ are produced by the forecasting algorithm of Section \ref{sec:mSSA}. 
In \eqref{eq:imp.error}, \eqref{eq:fore.error}, and \eqref{eq:fore.error_oos}, the expectation is with respect to the randomness in observations due to noise and missingness. 

{\noindent \textbf{Remark.} Note that the two forecasting error metrics we use measure the accuracy of only a subset of the forecasts one would naturally be interested in. For example, $\oosfore(N, T ,T_1, L)$ only measures the forecasting error for the $NT_1/L$ entries $\{\bar{f}_n\left(T+L \times m^{\prime}\right) \}_{n \in [N], m \in [T_1/L]}$. 
However,  with a slight change to the algorithm, one can use the same analysis to establish a similar bound for all $NT_1$ forecasts. 
Specifically, the algorithm is modified such that  $L$ linear forecaster $\sbeta_1, \dots, \sbeta_L$ are learned. To learn the $\ell$-th forecaster, mSSA is applied on the stacked Page matrix induced by the time series observations starting from time step $\ell$.   
In this algorithm, the forecast $\bar{f}_n(T+t')$ for $t' \in [T_1]$ is made by $\sbeta_{((T+t') \mod L)+ 1}$. 
Lastly, while this analysis would imply that learning $L$ linear forecasters is needed, our empirical results strongly suggest that learning one $\beta$ is sufficient. 
Thus, we believe that requiring $L$ linear forecasters is a consequence of our proof technique, and is not a fundamental limitation to mSSA. 
}

\subsection{Assumptions}
To state the main results, we make the following assumptions. 
Recall from \eqref{eq:model} that for each $n \in [N]$ and $t \in [T]$, we observe $f_n(t) + \eta_n(t)$ with probability $\rho \in (0,1]$ independently.
We shall assume that noise $\eta_n(\cdot), n \in [N]$ satisfy the following property.
\begin{property} \label{prop:bounded_noise}
For $n \in [N], t \in [T]$, $\eta_{n}(t)$ are independent sub-gaussian random variables, with $\Ex[\eta_{n}(t)] = 0$ and
 $\| \eta_{n}(t) \|_{\psi_2} \le \gamma$.
\end{property}
Recall that the  $\|\cdot\|_{\psi_\alpha}$ denotes the sub-gaussian norm, which is defined as follows: for a random variable $X$, the sub-gaussian norm is $\|X\|_{\psi_{2}}=\inf \left\{t>0: \mathbb{E} \exp \left(X^{2} / t^{2}\right) \leq 2\right\}$, see \cite{vershynin2010introduction} for more details.
{Note that while independent noise (across $t$ and $n$) is assumed, the results in this paper can indeed be extended to setting with time-dependent noises. This is already evident in subsequent work \cite{samossa} that builds upon our analysis to establish bounds when the noise is a stable autoregressive process. 
}.
\begin{property}\label{property:spectra}
\textbf{(Balanced spectra)}. Denote the $L \times (NT/L)$ stacked Page matrix associated with all $N$ time series $f_1(\cdot), \dots, f_N(\cdot)$ as $\SP(f) \coloneqq \StackedPage((f_1,\dots, f_N), T, L)$.
Under the setup of Proposition \ref{prop:flattened_mean_low_rank_representation}, $\text{rank}(\SP(f)) = \ell \geq 1$ and $\ell \leq  R \mult G$. 
Then, for 
%
$L = \sqrt{\min(N, T) T}$,
$\SP(f)$ is such that $\sigma_\ell(\SP(f)) \geq c \sqrt{NT}/\sqrt{\ell}$ for some absolute constant $c > 0$, where $\sigma_\ell$ is the $\ell$-th largest singular value of $\SP(f)$.
\end{property} 
Note that if $\sigma_\ell(\SP(f)) = \Theta(\sigma_1(\SP(f)))$, then one can verify that Property \ref{property:spectra} holds.
Indeed, assuming that the non-zero singular values are `well-balanced' is standard in the matrix/tensor estimation literature.
To state our results for out-of-sample forecasting error, let $\SP_1(f)$ be the $L \times (NT_1/L)$ stacked Page matrix associated with all $N$ time series $f_1(t), \dots, f_N(t)$ entries for $t \in [T+1, T+T_1]$. 
We assume an analogous condition on  $\SP_1(f)$ as we do for $\SP(f)$.
\begin{property}\label{property:spectra2}
\textbf{(Balanced spectra (out-of-sample))}. 
Under the setup of Proposition \ref{prop:flattened_mean_low_rank_representation}, we have that $\text{rank}(\SP_1(f)) = \ell \geq 1$ and $\ell \leq  R \mult G$. 
Then, for 
$L = \sqrt{\min(N, T) T}$,
$\SP_1(f)$ is such that $\sigma_\ell(\SP_1(f)) \geq c \sqrt{NT_1}/\sqrt{\ell}$ for some absolute constant $c > 0$, where $\sigma_\ell$ is the $\ell$-th largest singular value of $\SP_1(f)$.
\end{property}  
Again, note that if $\sigma_\ell(\SP_1(f)) = \Theta(\sigma_1(\SP_1(f)))$, then one can verify that Property \ref{property:spectra2} holds.

Lastly, we shall impose some restrictions on the complexity of the $N$ time series $f_1(t), \dots, f_N(t)$ for $t>T$.
Let $\SPp(f)$ denote the $(L-1) \times (NT/L)$ matrix formed using the top $L-1$ rows of $\SP(f)$.
Define $\SPp_1(f)$ analogously with respect to $\SP_1(f)$.
Let $\text{colspan}(\SPp(f))$ and $\text{colspan}(\SPp_1(f))$ denote the subspace of $\Rb^{L-1}$ spanned by the columns of $\SPp(f)$ and $\SPp_1(f)$, respectively. 
We assume the following property.
\begin{property}\label{property:subspaceinclusion}
\textbf{(Subspace inclusion)}. 
$\text{colspan}(\SPp_1(f))  \subseteq \text{colspan}(\SPp(f))$. 
\end{property} 
Intuitively, this requires that to effectively forecast, the associated stacked Page matrix of the out-of-sample time series $\text{colspan}(\SPp_1(f))$ is only as ``rich'' as that of $\SPp(f)$, its in-sample analog.

\noindent {\em Picking hyper-parameter $L$.}
The proof of Theorems \ref{thm:mean_estimation_imputation_simplified}, \ref{thm:mean_estimation_forecasting_simplified}, and \ref{thm:mean_estimation_forecasting_simplified_oos} imply the optimal choice of $L$ is to set it to $\sqrt{\min(N, T) T}$.
Intuitively, this choice of $L$ leads to the stacked Page matrix $\SP(f)$ to be as square as possible, and our analysis implies that the error rate is inversely proportional to the minimum of the number of the rows and columns of $\SP(f)$.
Hence, for the remainder of the paper, we state our results for $L = \sqrt{\min(N, T) T}$.

\noindent {\em Picking hyper-parameter $k$.}
For our theoretical result, we assume that we pick $k = \ell$, where $\ell$ is the rank of $\SP(f)$.
Empirically, we pick $k$ to equal the ``effective rank'' of the observed Page matrix as defined in Section \ref{ssec:evidence}.

\subsection{Finite-sample Analysis for Imputation and Forecasting for mSSA}
{
Now we state the main results for the matrix-based variant (mSSA). 
}
In what follows, we let $C(c, \Gamma_1, \Gamma_2, \gamma)$ denote a constant that depends only (polynomially) on model parameters $c, \Gamma_1, \Gamma_2, \gamma$.
We also remind the reader that $R, \Gamma_1, \Gamma_2$ are defined in Property \ref{prop:low_rank_mean_}, $G$ in \ref{prop:low_rank_mean_hankel}, $\gamma$ in Property \ref{prop:bounded_noise} and $c$ in Property \ref{property:spectra}.

\vspace{2mm}
\noindent {\bf Imputation.}
We begin with our imputation result.
\begin{theorem}[Imputation]\label{thm:mean_estimation_imputation_simplified}
Let Properties \ref{prop:low_rank_mean_}, \ref{prop:low_rank_mean_hankel}, \ref{prop:bounded_noise} and \ref{property:spectra} hold. 
For a large enough absolute constant $C > 0$, let $\rho \geq C \frac{\log NT}{\sqrt{NT}}$. 
Then with hyper-parameters $L =\sqrt{\min(N, T) T}$ and $k = \ell$, 
\begin{equation}
\begin{aligned}
\imp(N, T) & \leq 
C(c, \Gamma_1, \Gamma_2, \gamma)
\bigg(
\frac{R^{3}G \log NT}{\rho^{4} \sqrt{\min(N, T) T}} 
\bigg).
\end{aligned}
\end{equation} 
\end{theorem}

\vspace{2mm}
\noindent {\bf In-sample forecasting.}
Recall from \eqref{eq:ols.mssa} that in mSSA, we learn a linear model between the last row of $\StackedPage((X_1,\dots, X_N), T, L)$ and the $L-1$ rows above it (after de-noising the sub-matrix induced these $L - 1$ rows via HSVT). 
Hence, we first establish that in the idealized scenario (no noise, no missing values), there does indeed exist a linear model between the last row and the $L-1$ rows above of $\SP(f)$.  
Let $\SP(f)_{L\cdot}$ denote the $L$-th row of $\SP(f)$ and recall $\SPp(f) \in \Rb^{(L-1)\times (NT/L)}$ denotes the sub-matrix of $\SP(f)$ formed by selecting top $L-1$ rows.
In the proposition below, we show there exists a linear relationship between $\SP(f)_{L \cdot}$ and $\SPp(f)$.
\begin{proposition}\label{prop:exact_low_rank_linear}
Let Properties \ref{prop:low_rank_mean_}  and \ref{prop:low_rank_mean_hankel} hold. 
Then there exists $\pbeta \in \Rb^{L-1}$ such that 
$
\SP(f)_{L \cdot}^T = {\SPp(f)}^T \pbeta.
$
Further, $\| \pbeta \|_0 \le R G$.
\end{proposition}
\begin{theorem}[In-sample forecasting]\label{thm:mean_estimation_forecasting_simplified}
Let the conditions of Theorem \ref{thm:mean_estimation_imputation_simplified} hold. 
Then, with $\beta^{*}$ defined in Proposition \ref{prop:exact_low_rank_linear}, we have
\begin{equation}
\begin{aligned}
\fore(N, T, L) 
& \leq 
C(c, \gamma, \Gamma_1, \Gamma_2) 
\max(1,  \|\pbeta\|^2 _1)
\Big(
\frac{R^{3}G \log NT}{\rho^{4} \sqrt{\min(N, T) T}} 
\Big).
\end{aligned}
\end{equation}
\end{theorem}

\vspace{2mm}
\noindent {\bf Out-of-sample forecasting.}
\begin{theorem}[Out-of-sample Forecasting]\label{thm:mean_estimation_forecasting_simplified_oos}

Let Properties \ref{prop:low_rank_mean_}, \ref{prop:low_rank_mean_hankel}, \ref{prop:bounded_noise}, \ref{property:spectra}, \ref{property:spectra2}, and \ref{property:subspaceinclusion} hold. 
Let the hyper-parameters $L =\sqrt{\min(N, T) T}$ and $k = \ell$.
For a large enough absolute constant $C > 0$, let $\rho \geq C \max\Big(\frac{\log NT}{\sqrt{NT}}, (\gamma +R \Gamma_1\Gamma_2)  \sqrt{\frac{RG}{L}} \Big) $.
%
%
Then, with  $\beta^{*}$ defined in Proposition \ref{prop:exact_low_rank_linear}, we have
\begin{equation}
\begin{aligned}
\oosfore(N, T,T_1, L)
 &\leq  C(\gamma, \Gamma_1, \Gamma_2, c)  \max(1,  \|\pbeta\|_1^2) \\
 &\qquad \times
 \Bigg( \frac{R^9  G^3  \log ({N\max(T, T_1)})}{\rho^4  \sqrt{\min(N, T) T} }   \left(\max\left(1, \frac{N}{T}\right)  + \frac{T}{T_1}\right) \Bigg).
\end{aligned}
\end{equation}
\end{theorem}

\begin{corollary} \label{cor:mean_estimation_forecasting_simplified_oos}
Let the conditions of Theorem \ref{thm:mean_estimation_forecasting_simplified_oos} hold. 
Then, with $ T_1  = \Theta(T)$, we have
\begin{equation}
\begin{aligned}
\oosfore(N, T,T_1, L) 
 &\leq  C(\gamma, \Gamma_1, \Gamma_2, c)  \max(1,  \|\pbeta\|_1^2)
 \Bigg( \frac{R^9  G^3  \log ({NT}) \max\left(1, \frac{N}{T}\right)}{\rho^4  \sqrt{\min(N, T) T} }   \Bigg).
\end{aligned}
\end{equation}
\end{corollary}
Corollary \ref{cor:mean_estimation_forecasting_simplified_oos} implies that when $N = o(T)$, then the error scales as $\sim 1/\sqrt{NT}$. 
When $T = o(N)$, then one can simply divide the $N$ time series up into sets of size $T$. 
Corollary \ref{cor:mean_estimation_forecasting_simplified_oos} implies that this will result in  error scaling as $\sim 1/T$. 
Thus effectively, the error rate scales as $\sim 1/\sqrt{\min(N, T)T}$.

We note that Theorems \ref{thm:mean_estimation_imputation_simplified}, \ref{thm:mean_estimation_forecasting_simplified}  and Proposition \ref{prop:exact_low_rank_linear}
are special cases of Theorems \ref{thm:mean_estimation_imputation_generalized}, \ref{thm:mean_estimation_forecasting_generalized}  and Proposition \ref{prop:approx_low_rank_linear} stated in the next section, respectively.
Their proofs are in Appendices \ref{sec:proof_mean_estimation_imputation}, \ref{appendix:forecasting}, and \ref{sec:prop:flattened_mean__approx_linear_representation}, respectively. 
The proof of Theorem \ref{thm:mean_estimation_forecasting_simplified_oos} is in Appendix \ref{appendix:oos_forecast}.

{
\noindent \textbf{Remark.}  While we do not provide lower bounds for the three metrics above, we believe that our bounds are tight with respect to $N$ and $T$ when $N \geq T$.
Consider the following example, for $n \in[N], t\in[T]$ and $\theta_1, \dots, \theta_N \in \Rb$, let
$f_n(t) = \theta_n$. As in our setup, assume that we observe the noisy observation 
$X_n(t) = f_n(t) + \eta_n(t)$, 
where for $n \in [N], t \in [T]$, $\eta_{n}(t)$ are independent mean-zero Gaussian random variables. 
Both imputation and forecasting questions boil down to estimating  $\theta_n$ for $n \in[N]$.
The maximum likelihood estimate $\hat{\theta}_n$ of the parameter $\theta_n$ is the empirical mean of $ X_n(1), \dots, X_n(T)$. 
Indeed, one can easily show that for all $n \in [N]$, $\Ex[(\theta_n -\hat{\theta}_n)^2] \sim 1/T$. This is exactly the imputation (and forecasting) scaling w.r.t to $N$ and $T$ that we find in our settings when $N \ge T$. Note that $f_n(t)$ can be represented by our spatio-temporal model with $R = G = 1$.
We consider establishing formal lower bounds for all three metrics of interest an interesting direction for future work.
}

{
\subsection{Analysis of tSSA: Impact of Tensor Representation}

First, recall that the algorithm as described in Section \ref{sec:mSSA} make use of a black-box tensor estimation algorithm therein referred to as $\TE_3$.
For our analysis, we will start with a definition of such tensor estimation algorithms for a generic order-$d$ tensor.
\begin{definition}[Matrix/Tensor Estimation]\label{def:tensor_estimation}
For $d \ge 2$, denote $\TE_d: \{\star, \Rb\}^{n_1 \times n_2 \times \dots n_d} \to \Rb^{n_1 \times n_2 \times \dots n_d}$ as an order-$d$ tensor estimation algorithm. 
It takes as input an order-$d$ tensor $\Gb$ with noisy, missing entries, where $\Ex[\Gb] = \rho \bG$ and $\rho \in (0, 1]$ is the probability of each entry in $\Gb$ being observed.  
$\TE_d$ then outputs an estimate of $\bG$ denoted as $\widehat{\bG} =  \TE_d(\Gb)$.
\end{definition}
We assume the following `oracle' error convergence rate for $\TE_d$; for ease of exposition, we restrict our attention to the setting where $\rho = 1$.
\begin{property}\label{property:te_error_rates}
For $d \ge 2$, assume $\TE_d$ satisfies the following: the estimate $\widehat{\bG} \in \Rb^{n_1 \times n_2 \times \dots n_d}$, which is the output of $TE_d(\Gb)$ with $\Ex[\Gb] = \bG$, satisfies
\begin{flalign}
&\frac{1}{n_1 \dots n_d} \|\widehat{\bG} - \bG \|^{2}_F = \tilde{\Theta}\left(1 / \min(n_1, \dots, n_d)^{\lceil d/2 \rceil} \right).
\end{flalign}
Here, $\tilde{\Theta}(\cdot)$ 
suppresses dependence on noise, i.e., $\bE = \Gb - \Ex[\Gb]$, $\log(\cdot)$ factors, and CP-rank of $\bG$.
\end{property}
Property \ref{property:te_error_rates} holds for a variety of matrix/tensor estimation algorithms.
For $d = 2$, it holds for HSVT as we establish in the proof of Theorem \ref{thm:mean_estimation_imputation_simplified} for mSSA of $\tilde{O}(1 / \sqrt{\min(N, T), T})$.
It is straightforward to show that this is the best rate achievable for $\TE_2$.
For $d \ge 3$, it has recently been shown that Property \ref{property:te_error_rates} provably holds for a spectral gradient descent based algorithm  \cite{nonconvex_low_rank_tensor_noisy} (see Corollary 1.5 of \cite{nonconvex_low_rank_tensor_noisy}), conditioned on certain standard ``incoherence'' conditions imposed on the latent factors of $\bG$; another spectral algorithm that achieved the same rate was furnished in \cite{xia2018statistically}, which the authors also establish is minimax optimal.

Finally, to apply Property \ref{property:te_error_rates} in our context, we first establish the following routine property.
\begin{proposition}\label{prop:tesnor_exp}
Let Property \ref{prop:bounded_noise} hold, and $\Tensor$ and $\tensor$ be defined as in \eqref{eq:noisy_page_tensor_rep_2} and \eqref{eq:page_tensor_rep_2}, respectively.
Then, all entries of $\Tensor$ are independent random variables with each entry observed with probability $\rho \in (0,1]$, and $\Ex[\Tensor] = \rho \tensor$. 
\end{proposition}

\noindent{\bf Algorithmic comparison: tSSA vs. mSSA vs. ME.} 
We now provide a unified view of tSSA, mSSA, and ``vanilla'' ME (which we describe below) to do time series imputation.
All three methods have two key steps: 
(i) data transformation -- converting the observations $X_n(t)$ into a particular data representation/structure; 
(ii) de-noising-- applying some form of matrix/tensor estimation to de-noise the constructed data representation.
\vspace{-2mm}
\begin{itemize}[leftmargin=*,noitemsep]
	\item tSSA --  
		using $X_n(t)$, create the Page tensor $\Tensor \in \Rb^{N \times L \times T/L}$ as in \eqref{eq:noisy_page_tensor_rep_2};
		apply $\TE_3(\Tensor)$ to get $\widehat{\tensor}$ (e.g. using the method in \cite{nonconvex_low_rank_tensor_noisy}); 
		read off $\hat{f}_n(t)$ by selecting appropriate entry in $\widehat{\tensor}$.
	\item  mSSA -- 
		using $X_n(t)$, create the stacked Page matrix $\StackedPage((X_1,\dots, X_N), T, L) \in \Rb^{L \times (N \mult T/L)}$ as detailed in Section \ref{sec:mSSA};
		apply $\TE_2(\StackedPage((X_1,\dots, X_N), T, L))$ to get $\hStackedPage((X_1,\dots, X_N), T, L)$ (where we use HSVT for $\TE_2(\cdot)$); 
		read off $\hat{f}_n(t)$ by selecting appropriate entry in $\hStackedPage((X_1,\dots, X_N), T, L)$.
	\item ME --  
		using $X_n(t)$, create $\bX \in \Rb^{N \times T}$, where $\bX_{nt}$ is equal to $X_n(t)$; 
		apply $\TE_2(\bX)$ (e.g. using HSVT as in mSSA) to get $\widehat{\bX}$; 
		read off $\hat{f}_n(t)$ by selecting appropriate entry in $\widehat{\bX}$.
\end{itemize}
This perspective also suggests that one can use any ``blackbox'' matrix estimation routine to de-noise the constructed stacked Page matrix in mSSA; HSVT is one such choice that we analyze.

\vspace{2mm}
\noindent{\bf Theoretical comparison: tSSA vs. mSSA vs. ME.} 
We now do a theoretical comparison of the relative effectiveness of tSSA, mSSA, and ME in imputing a multivariate time series $X_n(t)$ for $n \in [N], t \in [T]$, as we vary $N$ and $T$.
To that end, let $\imp(N,T; \text{tSSA})$, $\imp(N,T; \text{mSSA})$, and $\imp(N,T; \text{ME})$ denote the imputation error for tSSA, mSSA, and ME, respectively.
\begin{proposition}\label{prop:imputation_comparisons_new}
For tSSA and mSSA, pick hyper-parameter $L = \sqrt{T}, L = \sqrt{\min(N, T)T}$, respectively.
Let Property \ref{property:te_error_rates} hold.
Then,
{\small
\begin{itemize}[leftmargin=5mm,noitemsep]
\item[(i)] $T = o(N)$: $\imp(N,T; \text{tSSA}), \ \imp(N,T; \text{mSSA}) = \tilde{\Theta}(\imp(N,T; \text{ME}))$;
\item[(ii)] $T^{1/3} = o(N), \ N=o(T)$: $\imp(N,T; \text{tSSA}) = \tilde{o}(\imp(N,T; \text{mSSA}))$, $\imp(N,T; \text{mSSA}) = \tilde{o}(\imp(N,T; \text{ME})$;
\item[(iii)] $N = o(T^{1/3})$: $\imp(N,T; \text{mSSA}) = \tilde{o}(\imp(N,T; \text{tSSA}))$, $ \imp(N,T; \text{tSSA}) = \tilde{o}(\imp(N,T; \text{ME}))$, 
 \end{itemize}
}
%
	%
where $\tilde{o}(\cdot)$, $\tilde{\Theta}(\cdot)$ suppresses dependence on noise parameters, CP-rank, poly-logarithmic factors.
\end{proposition}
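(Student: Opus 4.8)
The plan is to express each method's imputation error as the normalized squared Frobenius error of an associated estimate, and then to invoke the oracle rate of Property \ref{property:te_error_rates} with the appropriate tensor order $d$ and dimensions. The key first observation is that each of the three data transformations---the Page tensor $\tensor \in \Rb^{N \times T/L \times L}$, the stacked Page matrix $\SP(f) \in \Rb^{L \times (NT/L)}$, and the raw matrix $\bM \in \Rb^{N \times T}$---is a bijective rearrangement of the same collection of $NT$ latent values $\{f_n(t)\}$. Since each reshaped object has exactly $NT$ entries and the imputed estimate is read off by the inverse rearrangement, the normalized squared Frobenius error of its estimate coincides exactly with $\imp(N,T)$ for that method. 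Thus it suffices to track the Frobenius rate delivered by $\TE_d$ on each object.

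Next I would verify that Property \ref{property:te_error_rates} applies to each object, i.e. that each has bounded rank so the suppressed CP-rank dependence is harmless: for ME the rank is at most $R$ by Property \ref{prop:low_rank_mean_}, for mSSA the rank is at most $R \mult G$ by Proposition \ref{prop:flattened_mean_low_rank_representation}, and for tSSA the CP-rank is at most $R \mult G$ by Proposition \ref{prop:tensor}. Working in the $\rho = 1$ regime as assumed, plugging the dimensions into Property \ref{property:te_error_rates} yields, up to $\tilde\Theta$ factors: for ME ($d=2$ on an $N \times T$ matrix, $\lceil d/2 \rceil = 1$), $\imp(N,T;\text{ME}) = \tilde\Theta(1/\min(N,T))$; for mSSA ($d=2$ with $L = \sqrt{\min(N,T)T}$, for which a short check gives $\min(L, NT/L) = \sqrt{\min(N,T)T}$ in both the $N \le T$ and $N > T$ cases), $\imp(N,T;\text{mSSA}) = \tilde\Theta(1/\sqrt{\min(N,T)T})$; and for tSSA ($d=3$ with $L = \sqrt{T}$, giving dimensions $N, \sqrt{T}, \sqrt{T}$ and $\lceil d/2 \rceil = 2$), $\imp(N,T;\text{tSSA}) = \tilde\Theta(1/\min(N,\sqrt{T})^2)$.

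Finally I would carry out the three-way comparison regime by regime using elementary algebra on these closed forms. In regime (i), $T = o(N)$ forces $\min(N,T) = T$ and $\min(N,\sqrt{T}) = \sqrt{T}$, so all three rates collapse to $\tilde\Theta(1/T)$, establishing the stated equivalence. In regimes (ii) and (iii) we have $N = o(T)$, hence $\min(N,T) = N$, giving $\text{mSSA} = \tilde\Theta(1/\sqrt{NT})$ and $\text{ME} = \tilde\Theta(1/N)$; the ratio $\sqrt{N/T} \to 0$ then shows $\imp(\text{mSSA}) = \tilde o(\imp(\text{ME}))$ uniformly across both regimes.

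The delicate step, and the main obstacle, is the tSSA-versus-mSSA comparison, since the tSSA rate bifurcates at $N = \sqrt{T}$. When $N \ge \sqrt{T}$ the tSSA rate is $1/T$ and $\imp(\text{tSSA})/\imp(\text{mSSA}) = \sqrt{NT}/T = \sqrt{N/T} \to 0$ (using $N = o(T)$); when $N < \sqrt{T}$ the tSSA rate is $1/N^2$ and the same ratio becomes $\sqrt{NT}/N^2 = \sqrt{T/N^3}$, which tends to $0$ precisely under $T = o(N^3)$, i.e. $T^{1/3} = o(N)$. Both subcases hold throughout regime (ii), proving $\imp(\text{tSSA}) = \tilde o(\imp(\text{mSSA}))$ there. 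For regime (iii), $N = o(T^{1/3})$ forces $N < \sqrt{T}$ so $\text{tSSA} = \tilde\Theta(1/N^2)$; now $\sqrt{T/N^3} \to \infty$ reverses the comparison so that mSSA beats tSSA, while $1/N^2 = \tilde o(1/N)$ keeps tSSA ahead of ME. The remaining work is only to confirm the boundary cases and that the $\tilde o$/$\tilde\Theta$ bookkeeping (suppressing logs, rank, and noise) is uniform across regimes; no step beyond these rate computations is required.
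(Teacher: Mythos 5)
Your proposal is correct and follows essentially the same route as the paper's proof in Appendix \ref{sec:tSSA_proofs}: instantiate Property \ref{property:te_error_rates} for each of the three data representations to get the closed-form rates $1/\min(N,T)$, $1/\sqrt{\min(N,T)T}$, and $1/\min(N,\sqrt{T})^{2}$, then compare them regime by regime. If anything, your explicit handling of the bifurcation of the tSSA rate at $N=\sqrt{T}$ within regime (ii) is slightly more careful than the paper's Case 2, which writes $\imp(N,T;\text{tSSA})=\tilde{\Theta}(1/N^{2})$ for all $N=o(T)$ even though the correct expression is $1/\min(N^{2},T)$ when $\sqrt{T}\le N$ (the final conclusion is unaffected, as you verify).
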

We note given Property \ref{property:te_error_rates}, $L = \sqrt{T}$  is optimal for tSSA and $L = \sqrt{\min(N, T)T}$ is optimal for mSSA.

We also remind the reader that,  as implied by Properties \ref{prop:low_rank_mean_} and \ref{prop:low_rank_mean_hankel}, the rank (or CP-rank) of the matrix/tensor constructed in the three methods above is fixed as we increase $N$ and $T$. 
If, for example, Property \ref{prop:low_rank_mean_hankel} does not hold, and $G$ grows with $T$, e.g. $G = \Theta(T)$, then the characterization above does not hold. 
Indeed, in this case, ME will be more effective than both tSSA and mSSA (with the choice $L = \sqrt{\min(N, T)T}$). This is to expected as the Page matrix of any time series could be full rank when Property \ref{prop:low_rank_mean_hankel} does not hold. 

See Figure \ref{fig:regimes} in Section \ref{sec:introduction} for a graphical depiction of the different regimes in Proposition \ref{prop:imputation_comparisons_new}. 
Proofs of Proposition \ref{prop:tesnor_exp} and \ref{prop:imputation_comparisons_new} can be found in Appendix \ref{sec:tSSA_proofs}.

}

\section{Approximate Low-Rank Hankel Representation}\label{sec:hankel_calculus}
In this section, we extend the model presented in Section \ref{sec:ts_model} by relaxing Property \ref{prop:low_rank_mean_hankel} to only hold approximately. 
We establish a `calculus' for this extended model -- the set of time series functions which have this approximate low-rank Hankel representation is closed under component-wise addition and multiplication.
We show important examples of time series dynamics studied in the literature have an approximate low-rank Hankel representation.
Lastly, we present generalizations of Theorems \ref{thm:mean_estimation_imputation_simplified} and \ref{thm:mean_estimation_forecasting_simplified} for this extended model. 

\subsection{Approximate Low-rank Hankel Representation and Hankel Calculus}
We first introduce the definition of the approximate rank of a matrix. 

\begin{definition}[$\epsilon$-approximate rank] A matrix $M \in \mathbb{R}^{a \times b}$ is said to have   $\epsilon$-approximate rank at most $s \geq 1$, for a given $0<\epsilon\leq\norm{M}_{\infty}$, if there exists a rank $s$ matrix $M_s \in \mathbb{R}^{a \times b}$ such that $\left\|M-M_s\right\|_{\infty}\leq \epsilon$.
\end{definition}

\begin{definition}[$(G, \epsilon)$-Hankel Time Series]\label{def:hankel.aprx}
For a given $\epsilon \geq 0$ and $G \geq 1$, a time series $f: \Zb \to \Rb$ is called a $(G, \epsilon)$-Hankel time series if for any $T \geq 1$, its Hankel matrix has $\epsilon$-approximate rank $G$. 
\end{definition}
We extend the model of Section \ref{sec:ts_model} by replacing Property \ref{prop:low_rank_mean_hankel} by the following.
\begin{property}\label{prop:approx_low_rank_mean_hankel}
For each $r \in [R]$ and for any $T \ge 1$, the Hankel Matrix $H(r) \in \Rb^{\lfloor T/2 \rfloor \times \lfloor T/2 \rfloor}$ associated with time series $W_{rt}, ~t \in [T]$ has $\epsilon$-approximate rank at most $G$ for $\epsilon > 0$. 
That is, for each $r \in [R]$, $W_{r\cdot}$ is a $(G, \epsilon)$-Hankel time series.
\end{property}
We state an implication of the above stated properties on the stacked Page matrix.
\begin{proposition} \label{prop:approx_flattened_mean_low_rank_representation}
Let Properties \ref{prop:low_rank_mean_} and  \ref{prop:approx_low_rank_mean_hankel} hold. 
For any $L \leq \lfloor T/2 \rfloor$ with any $T \geq 1$, the stacked Page matrix induced by the $N$ time series $f_1(\cdot), \dots, f_N(\cdot)$ has $\epsilon'$-rank at most $R \mult G$ for $\epsilon' = R \Gamma_1 \epsilon$. 
\end{proposition}
%

\vspace{2mm}
\noindent {\bf Hankel calculus.}
We present a key property of the model class satisfying Property \ref{prop:approx_low_rank_mean_hankel}, i.e. time series that have an approximate low-rank Hankel matrix representation. 
To that end, we define `addition' and `multiplication' for time series.  
Given two time series $f_1, ~f_2: \Zb \to \Rb$, define their addition, denoted $f_1 + f_2: \Zb \to \Rb$ as
$
(f_1 + f_2)(t)  = f_1(t) + f_2(t),
$
for all $t \in \Zb$. 
Similarly, their multiplication, denoted $f_1 \circ f_2: \Zb \to \Rb$ as 
$
(f_1 \circ f_2)(t)  = f_1(t) \mult f_2(t), ~\mbox{for~all}~t \in \Zb.
$
Now, we state a key property for the model class satisfying Property \ref{prop:approx_low_rank_mean_hankel} (proof in Appendix \ref{appendix:proof_ts_model_hankel_examples}).
\begin{proposition}\label{prop:Hankel_algebra}
For $i \in \{1, 2\}$, let $f_i$ be a $(G_i, \epsilon_i)$-Hankel time series for $G_i \geq 1, ~\epsilon_i \geq 0$. 
Then,  $f_1 + f_2$ is a $(G_1 + G_2, \epsilon_1 + \epsilon_2)$-Hankel time series and
$f_1 \circ f_2$ is a $\Big( G_1 G_2, 3 \max(\epsilon_1, \epsilon_2) \cdot \max(\norm{f_1}_\infty, \norm{f_2}_\infty) \Big)$-Hankel time series.
\end{proposition}

\subsection{Examples of $(G, \epsilon)$-Hankel Time Series}\label{ssec:examples}

We establish that many important classes of time series dynamics studied in the literature are instances of $(G,\epsilon)$-Hankel time series, i.e. they satisfy Property \ref{prop:approx_low_rank_mean_hankel}. 
In particular, any differentiable periodic function (Proposition \ref{prop:C_k_smoothness_TS}), and any time series with a H\"older continuous latent variable representation (Proposition \ref{prop:LVM_implies_Hankel}).
%
%
Proofs of Propositions  \ref{prop:low_rank_Hankel_matrices}, \ref{prop:C_k_smoothness_TS}, and \ref{prop:LVM_implies_Hankel} can be found in Appendix \ref{appendix:proof_ts_model_hankel_examples}. 

\medskip
\noindent{\bf Example 1. $(G, \epsilon)$-LRF time series.} 
We start by defining a linear recurrent formula (LRF), which is a standard model for linear time-invariant systems.
\begin{definition}[$(G, \epsilon)$-LRF]\label{def:LRF}
For $G \in \Nb$ and $\epsilon \geq 0$, a time series $f$ is said to be a $(G, \epsilon)$-Linear Recurrent Formula (LRF) if for all $T \in \Zb$ and $t \in [T]$, there exists $g: \Zb \to \Rb$ such that  
\[
	f(t) = g(t) + h(t),
\]
where for all $ t \in \Zb$, (i) $g(t) = \sum_{l = 1}^{G} \alpha_l g(t-l)$ with constants $\alpha_1, \dots, \alpha_G$, and  (ii) $|h(t)| \le \epsilon$. 
\end{definition}
Now we establish a time series $f$ that is a $(G, \epsilon)$-LRF is also $(G, \epsilon)$-Hankel. 
\begin{proposition}\label{prop:low_rank_Hankel_matrices}
If $f$ is $(G, \epsilon)$-LRF representable, then it is $(G, \epsilon)$-Hankel representable.
\end{proposition}
LRF's cover a broad class of time series functions, including any finite sum of products of harmonics, polynomials and exponentials. 
In particular, it can be easily verified that a time series described by \eqref{eq:ex.poly.harmonic.exp} is a $(G, 0)$-LRF, where $G \le A(m_{\max} + 1)(m_{\max} + 2)$ with $m_{\max} = \max_{a \in A} m_a$.

\medskip
\noindent{\bf Example 2. ``smooth'' and periodic time series.} 
We establish that any differentiable periodic function is $(G, \epsilon)$-LRF and hence $(G, \epsilon)$-Hankel for appropriate choices of $G$ and $\epsilon$. 
\begin{definition}[$C^k(R, \per)$]\label{def:C_k_smoothness}
For $k \geq 1$ and $R > 0$,  we use $C^k(R, \per)$ to denote the class of all time series $f: \Rb \to \Rb$ such that it is $R$ periodic, i.e. $f(t+R) = f(t)$ for all $t \in \Rb$ and the $k$-th derivative of $f$, denoted $f^{(k)}$, exists and is continuous.
\end{definition}
%
%
\begin{proposition}\label{prop:C_k_smoothness_TS}
Any $f \in C^k(R, \per)$ is
$$
\Big(4G, C(k, R) \frac{\norm{f^{(k)}} }{G^{k - 0.5}}\Big)-\text{Hankel representable},
$$
for any $G \geq 1$.  Here $C(k, R)$ is a term that depends only on $k, R$ and 
$\norm{f^{(k)}}^2 = \frac{1}{R} \int_0^{R} (f^{(k)}(t))^2 dt$.
\end{proposition}

\medskip
\noindent{\bf Example 3. time series with latent variable model (LVM) structure.} 
We now show that if a time series has a LVM representation, and the latent function is H\"older continuous, then it has a $(G, \epsilon)$-Hankel representation for appropriate choice of $G \geq 1$ and $\epsilon \geq 0$. 
We first define the H\"older class of functions; this class of functions is widely adopted in the non-parametric regression literature \cite{non_Param_stats}. 
Given a function $g: [0, 1)^K \to \Rb$, and a multi-index $\kappa \in \Nb^K$, let the partial derivate of $g$ at $x \in [0, 1)^K$, if it exists, be denoted as $\triangledown_\kappa g(x) = \frac{\partial^{|\kappa|} g(x)}{(\partial x)^\kappa}$ {where $|\kappa| = \sum_{j=1}^K \kappa_j$ and $(\partial x)^\kappa = \partial^{\kappa_1} x_1 \cdots \partial^{\kappa_K} x_K$.}
\begin{definition}[\textbf{$(\alpha, \cL)$-H\"older Class}]\label{def:holder}
Given $\alpha, \cL > 0$, the H\"older class $\cH(\alpha, \cL)$ on $[0, 1)^K$
%
%
is defined as the set of functions $g: [0, 1)^K \to \Rb$ whose partial derivatives satisfy for 
all $x, x' \in [0, 1)^K$,
$
\sum_{\kappa: |\kappa| = \lfloor \alpha \rfloor} \frac{1}{\kappa !} |\triangledown_\kappa g(x) - \triangledown_\kappa g(x') | \le \cL \norm{x - x'}_\infty^{\alpha - \lfloor \alpha \rfloor}.
$
Here $\lfloor \alpha \rfloor$ refers to the greatest integer strictly smaller than $\alpha$ {and $\kappa! = \prod_{j=1}^K \kappa_j!$.}
\end{definition}
Note that if $\alpha \in (0, 1]$, then the definition above is equivalent to the $(\alpha, \cL)$-Lipschitz condition, i.e.,
$
|g(x) - g(x')| \le \cL \norm{x - x'}_\infty^{\alpha},
$
for $x, x' \ \in [0, 1)^K$.
Given a time series $f: \Zb \to \Rb$, for any $T \geq 1$, recall the Hankel matrix $H \in \Rb^{\lfloor T/2 \rfloor \times \lfloor T/2 \rfloor}$ is defined such that its entry in row $i \in [\lfloor T/2 \rfloor]$ and column $j \in [\lfloor T/2 \rfloor]$ is given by $H_{ij} = f(i+j-1)$. 
We call a time series $f$ to have $(\alpha, \cL)$-H\"older smooth LVM representation for $\alpha, \cL > 0$ if for any given $T \geq 1$, the corresponding Hankel matrix $H$ satisfies: for $i, j \in [\lfloor T/2 \rfloor]$,
$
	\bH_{ij} = g(\theta_i, \omega_j),
$
where $\theta_i, \omega_j  \in [0, 1)^K$ are latent parameters and  $g(\cdot, \omega) \in \cH(\alpha, \cL)$ for any $\omega \in [0, 1)^K$. 
It can be verified that a $(G,0)$-Hankel time series is an instance of such a LVM representation with corresponding $g(x, y) = x^T y$. 
Thus in a sense, this model is a natural generalization of the $(G, 0)$-Hankel matrix representation. 
The following proposition connects this LVM representation to the $(G, \epsilon)$-Hankel representation for appropriately defined $G \geq 1, \epsilon > 0$. 
\begin{proposition}\label{prop:LVM_implies_Hankel}
Given $\alpha, \cL > 0$, let $f$ have $(\alpha, \cL)$-H\"older 
smooth LVM representation. 
Then for all $\epsilon > 0$, $f$ is 
$$
(C(\alpha, K) \Big(\dfrac{1}{\epsilon}\Big)^K , \cL\epsilon^\alpha)-\text{Hankel representable}.
$$
Here $C(\alpha, K)$ is a term that depends only on $\alpha$ and $K$.
\end{proposition}

\subsection{Extending Main Results}
Below, we provide generalizations of the imputation and in-sample forecasting results stated in Section \ref{sec:main_results}.
To do so, we utilize Property \ref{property:spectra_approx} which is analogous to Property \ref{property:spectra} but for the approximate low-rank setting. %

\begin{property}\label{property:spectra_approx}
(Approximately balanced spectra). 
Under the setup of Proposition \ref{prop:approx_flattened_mean_low_rank_representation}, we can represent the $L \times (NT/L)$ stacked Page matrix associated with all $N$ time series $f_1(\cdot), \dots, f_N(\cdot)$ as $\SP(f) = \tilde{\bM}+ \bE$ with $\text{rank}(\tilde{\bM}) = \ell \geq 1$ and $\ell \leq  R \mult G$ and $\|\bE\|_\infty \leq R \Gamma_1 \epsilon$. 
Then, for 
%
$L = \sqrt{\min(N, T) T}$,
$\tilde{\bM}$ is such that $\sigma_\ell(\tilde{\bM}) \geq c \sqrt{NT}/\sqrt{\ell}$ for some absolute constant $c > 0$, where $\sigma_\ell$ is the $\ell$-th largest singular value of $\tilde{\bM}$.
\end{property} 
\begin{theorem}[Imputation]\label{thm:mean_estimation_imputation_generalized}
Let Properties \ref{prop:low_rank_mean_}, \ref{prop:approx_low_rank_mean_hankel}, \ref{prop:bounded_noise} and \ref{property:spectra_approx} hold. 
For a large enough absolute constant $C > 0$, let  $\rho \geq C \frac{\log NT}{\sqrt{NT}}$. 
Then, with hyper-parameters $L =\sqrt{\min(N, T) T}$ and $k = \ell$, 
\begin{align}
\imp(N, T) & \leq 
C(c, \Gamma_1, \Gamma_2, \gamma)
\bigg(
\frac{R^{3}G \log NT}{\rho^{4} \sqrt{\min(N, T) T}} 
+ \frac{ R^{4}G ( \epsilon +  \epsilon^{3} )  }{\rho^2}
\bigg)
\end{align}
where $C(c, \Gamma_1, \Gamma_2, \gamma) $ is a positive constant dependent on model parameters including $\Gamma_1, \Gamma_2, \gamma$. 
\end{theorem}
We remind the reader that $R, \Gamma_1, \Gamma_2$ are defined in Property \ref{prop:low_rank_mean_}, $G$ in \ref{prop:low_rank_mean_hankel}, $\gamma$ in Property \ref{prop:bounded_noise} and $c$ in Property \ref{property:spectra_approx}. 

\medskip
\noindent{\bf Existence of Linear Model.} 
We now state Proposition \ref{prop:approx_low_rank_linear}, which is analogous to Proposition \ref{prop:exact_low_rank_linear}, but for the approximate low-rank setting.
\begin{proposition}\label{prop:approx_low_rank_linear}
Let Properties \ref{prop:low_rank_mean_}  and \ref{prop:approx_low_rank_mean_hankel} hold. 
Then, there exists $\pbeta \in \Rb^{L-1}$,  such that 
$
\| \SP(f)_{L \cdot}^T - {\SPp(f)}^T \pbeta \|_\infty  \leq R\Gamma_1 (1 + \| \pbeta \|_{1}) \epsilon.,
$
Further $\| \pbeta \|_0 \le R G$.
\end{proposition}

\begin{theorem}[In-sample forecasting]\label{thm:mean_estimation_forecasting_generalized}
Let the conditions of Theorem \ref{thm:mean_estimation_imputation_generalized} hold. 
Then with $\beta^{*}$ defined in Proposition \ref{prop:approx_low_rank_linear}, we have
\begin{align}
\fore(N, T, L) 
& \leq 
C(c, \gamma, \Gamma_1, \Gamma_2) \max(1,  \|\pbeta\|^2 _1)
\Big(
\frac{R^{3}G \log NT}{\rho^{4} \sqrt{\min(N, T) T}} 
+ \frac{ R^{4}G ( \epsilon +  \epsilon^{3} )  }{\rho^2}
\Big).
\end{align}
\end{theorem}

\vspace{-2mm}
\section{Experiments}\label{sec:experiments}

{
We describe experiments supporting our theoretical results. 
In Section \ref{ssec:imputation} and \ref{ssec:forecasting}, we expand on the summary results described earlier in Table \ref{table:stat_nrmse} and report the imputation and forecasting results of mSSA. 
In Appendix \ref{appendix:experiments}, we describe the datasets utilized and the various algorithms we compare with as well as the procedure for selecting the  hyper-parameters in each algorithm. 
In Section \ref{sec:tssa_empirics}, we empirically evaluate the effectiveness of tSSA in the imputation task relative to mSSA and ME. 
}

Note that in all experiments, we use the Normalized Root Mean Squared Error (NRMSE) as out accuracy metric. 
That is, we normalize all the underlying time series to have zero mean and unit variance before calculating the root mean squared error.  
We use this metric as it weighs the error on each time series equally.

\subsection{Imputation}\label{ssec:imputation}
%
\noindent {\bf Setup.} 
We test the robustness of mSSA's imputation performance by adding two sources of corruption to the data - varying the percentage of observed values and varying the amount of  noise we perturb the observations by.
We test imputation performance by how accurately we recover missing values.
We compare the performance of mSSA with TRMF, a method which achieves state-of-the-art imputation performance.
Further, to analyze the added benefit of exploiting the spatial structure in a multivariate time series using mSSA, we compare with the SSA variant introduced in \cite{SSA_Sigmetrics} .

\vspace{2mm}
\noindent{\bf Results.} 
Figures  \ref{fig:111}, \ref{fig:121}, \ref{fig:131}, \ref{fig:141}, and \ref{fig:151} show the imputation error in the aforementioned datasets as we vary the fraction of missing values, while Figures  \ref{fig:112}, \ref{fig:122}, \ref{fig:132}, \ref{fig:142}, and \ref{fig:152}    show the imputation error as we vary $\sigma$, the standard deviation of the gaussian noise.  
We see that as we vary the fraction of missing values and noise  levels, mSSA outperforms both TRMF and SSA in {$\sim$ 75\%} of experiments run. 
It is noteworthy the large empirical gain in mSSA over SSA, giving credence to the spatio-temporal model we introduce.
The average NRMSE across all experiments for each dataset is reported in Table \ref{table:stat_nrmse}, where mSSA outperforms every other method across all datasets except for the Traffic dataset.

\subsection{Forecasting}\label{ssec:forecasting}

\noindent{\bf Setup.} 
We test the forecasting accuracy of the proposed mSSA variant against several state-of-the-art algorithms. 
For each dataset, we split the data into training, validation,  and testing datasets as outlined in Appendix \ref{ssec:dataset}.
As was done in the imputation experiments,  we vary how much each dataset is corrupted by varying the percentage of observed values and the noise levels. 

\vspace{2mm}
\noindent{\bf Results.} 
Figures \ref{fig:211}, \ref{fig:221}, \ref{fig:231}, \ref{fig:241}, and \ref{fig:251}  show the forecasting accuracy of mSSA and other methods in the aforementioned datasets as we vary the fraction of missing values, while Figures  \ref{fig:212}, \ref{fig:222}, \ref{fig:232}, \ref{fig:242}, and \ref{fig:252}  show the forecasting accuracy as we vary the standard deviation of the added gaussian noise.  
We see that as we vary the fraction of missing values and noise level, mSSA is the best or comparable to the best performing method in {$\sim$ 80\%} of experiments.
In terms of the average NRMSE across all experiments, we find that mSSA performs similar to or better than
every other method across all datasets except for the traffic dataset as was reported in Table \ref{table:stat_nrmse}.
{ 
\subsection{Empirical Evaluation of tSSA}\label{sec:tssa_empirics}

\noindent {\bf Setup.} 
To corroborate our theoretical findings, we empirically evaluate the effectiveness of tSSA, mSSA, and ME in imputing/de-noising a multivariate time series $X_n(t)$ for $n \in [N], t \in [T]$, as we vary $N$ and $T$.
For mSSA and ME, we use HSVT as the matrix/tensor estimation algorithm, while for tSSA, we use Alternating Least Squares (ALS).
We note that we use ALS for tSSA as it is a practical and a widely used tensor estimation algorithm.

Using the synthetic dataset described in Appendix \ref{ssec:dataset}, we evaluate the imputation performance of the three algorithms listed above as we vary $N$ and $T$. 
Specifically, we corrupt the time series with i.i.d. zero-mean Gaussian noise with a standard deviation of $0.5$ and use the three algorithms to recover the underlying mean.  
We evaluate the performance of each algorithm based on the NRMSE metric.

\noindent {\bf Results.} 
In Figure \ref{fig:tssaA}, each point represents an experiment performed with the corresponding value of $T$ and $N$ in the X-axis and Y-axis, respectively. 
The color represents how mSSA error compares to tSSA's (in percentage). That is,  a positive/negative value indicates tSSA/mSSA is performing better.
Figure \ref{fig:tssaB}, shows a similar comparison between mSSA and ME.
Pleasingly, we find that the empirical results agree with our theoretical analysis, where all three methods perform similarly when $N \ge T$, tSSA performs best when $T \ge N \ge T^{1/3}$, and mSSA performs best when $T \ge N^3$.
}

\begin{figure}[!h]
    \centering
    \begin{subfigure}[b]{0.45\textwidth}
        \centering
        \input{figures/tssa}

        \caption{\footnotesize{tSSA's NRMSE improvement over mSSA. A negative value (blue) indicates mSSA is better.}}
        \label{fig:tssaA}
    \end{subfigure}
    \hfill 
    \begin{subfigure}[b]{0.45\textwidth}
        \centering
        \input{figures/ME}

        \caption{\footnotesize{ME's NRMSE improvement over mSSA. A negative value (blue) indicates mSSA is better.}}
        \label{fig:tssaB}
    \end{subfigure}
    \caption{The empirical evaluation corroborates our theoretical findings. As Proposition \ref{prop:imputation_comparisons_new} suggests: (i) tSSA outperforms mSSA when $T \ge N \ge T^{1/3}$ (see red dots in Figure \ref{fig:tssaA}); (ii) mSSA starts to outperforms tSSA as we increase $T$ such that that it exceeds $N^3$  (see blue dots in Figure \ref{fig:tssaA}); (iii) All methods perform similarly in the regime $N \ge T$ (see gray dots above the blue line in figures \ref{fig:tssaA} and \ref{fig:tssaB}).  }
    \label{fig:combined}
\end{figure}

\section{Algorithmic Extensions of mSSA}\label{sec:extension}

\subsection{Variance Estimation}\label{ssec:var}

We extend the mSSA algorithm to estimate the time-varying variance of a time series by making the following simple observation.
If we apply mSSA to the squared observations, $X_n^2(t)$, we will recover an estimate of $\Ex[X_n^2(t)]$ (for $\rho = 1$). 
However, observe that $\Var[X_n(t)] = \Ex[X_n^2(t)] - \Ex[X_n(t)]^2$. 
Therefore, by applying mSSA twice, once on $X_n(t)$ and once on $X^{2}_n(t)$ for $n \in [N]$ and $t \in [T]$,  and subsequently taking the component-wise difference of the two estimates will lead to an estimate of the variance. 
This suggests a simple algorithm which we describe next. 
We note this observation suggests {\em any} mean estimation algorithm (or imputation) in time series analysis can be converted to estimate the time varying variance -- this ought to be of interest in its own right. 

\vspace{2mm}
\noindent{\bf Algorithm.}
As described in Section \ref{sec:mSSA}, let $L \geq 1$ and $k, k' \geq 1$ be algorithm parameters. 
First, apply mSSA on observations $X_n(t), ~n \in [N], ~t \in [T]$ to produce imputed estimates $\hat{f}_n(t)$ using the hyper-parameters $L$ and $k$. 
Next, apply mSSA on observations $X_n^2(t), ~n \in [N], ~t \in [T]$ to produce imputed estimates $\hat{g}_n(t)$ using the hyper-parameters $L$ and $k'$. 
Lastly, we denote $\hat{\sigma}_n^2(t) = \max(0, \hat{g}_n(t)-\hat{f}_n(t)^2), ~n \in [N], ~t \in [T]$ as our estimate of the time-varying variance.

\vspace{2mm}
\noindent{\bf Model.} 
For $n \in [N], ~t \in [T]$, let $\sigma_n^2(t) = \Ex[\eta_n^2(t)]$ be the time-varying variance of the time series observations, i.e., if $\rho = 1$ then $\sigma_n^2(t) = \Var[X_n(t)] = \Ex[X_n^2(t)] - f_n^2(t)$.  
Let $\Sigma \in \Rb^{N \times T}$ be the matrix induced by the latent time-varying variances of the $N$ time series of interest, i.e., the entry in row $n$ at time $t$ in $\bSigma$ is $\bSigma_{nt} = \sigma^2_n(t)$. 
To capture the ``spatial'' and ``temporal'' structure across the $N$ latent time-varying variances, we assume the latent variance matrix $\bSigma$ satisfies Properties \ref{prop:low_rank_var_} and \ref{prop:low_rank_var_hankel}.
These properties are analogous to those assumed about the latent mean matrix $\bM$ (defined in Section \ref{sec:ts_model}); in particular, Properties \ref{prop:low_rank_mean_} and \ref{prop:low_rank_mean_hankel}. 
We state them next.
\begin{property}\label{prop:low_rank_var_}
Let $R' = \text{rank}(\bSigma)$, i.e, for any $n \in [N], t \in [T]$, 
$
\bSigma_{nt}  = \sum^{R^\prime}_{r=1} U^\prime_{nr} \ W^\prime_{rt},
$
where the factorization is such that $| U^\prime_{nr} | \le \Gamma^\prime_1$, $| W^\prime_{rt} | \le \Gamma^\prime_2$ for $\Gamma^\prime_1, \Gamma^\prime_2 > 0$.
\end{property}
Like Property \ref{prop:low_rank_mean_}, the above property captures  the ``spatial'' structure within $N$ time series of variances. 
To capture the ``temporal'' structure, next we introduce an analogue of Property \ref{prop:low_rank_mean_hankel}.
To that end, for each $r \in [R^\prime]$, define the $\lfloor T/2 \rfloor \times \lfloor T/2 \rfloor$ Hankel matrix of each time series $W^\prime_{rt}, ~t \in [T]$ as $H^\prime(r) \in \Rb^{\lfloor T/2 \rfloor \times \lfloor T/2 \rfloor}$, where $H^\prime(r)_{ij} = W^\prime_{r (i+j-1)}$ for $i, j \in [\lfloor T/2 \rfloor]$. 
\begin{property}\label{prop:low_rank_var_hankel}
For each $r \in [R^\prime]$, the Hankel Matrix $H^\prime(r) \in \Rb^{\lfloor T/2 \rfloor \times \lfloor T/2 \rfloor}$ associated with time series $W^\prime_{rt}, t \in [T]$ has rank at most $G^\prime$. 
\end{property}
\noindent{\em Result.} 
To establish the estimation error for the variance estimation algorithm under the spatio-temporal model above, we need the following additional property (analogous to Property \ref{property:spectra}).
\begin{property}\label{property:spectra_var}
\textbf{(Balanced spectra (variance))}. Denote the $L \times (NT/L)$ stacked Page matrix associated with all $N$ time series $\sigma^{2}_1(\cdot), \dots, \sigma^{2}_N(\cdot)$ as $\SP(\sigma^{2}) \coloneqq \StackedPage((\sigma^{2}_1,\dots, \sigma^{2}_N), T, L)$.
Due to Properties \ref{prop:low_rank_var_} and \ref{prop:low_rank_var_hankel}, and a simple variant of Proposition \ref{prop:flattened_mean_low_rank_representation}, we have $\text{rank}(\SP(\sigma^{2})) = \ell' \geq 1$ and $\ell' \leq  R' \mult G'$. 
Then, for 
%
$L = \sqrt{\min(N, T) T}$,
$\SP(\sigma^{2})$ is such that $s_\ell(\SP(\sigma^{2})) \geq c \sqrt{NT}/\sqrt{\ell'}$ for some absolute constant $c > 0$, where $s_\ell(\cdot)$ denotes the $\ell$-th largest singular value of its matrix argument.\footnote{We adopt $s_\ell(\cdot)$ here (instead of the paper-wide $\sigma_k(\cdot)$) to avoid notational clash with the variance $\sigma^2$.}
\end{property} 
%

%
\begin{theorem}[Variance Estimation]\label{thm:var_estimation_imputation_simplified}
Let Properties \ref{prop:low_rank_mean_}, \ref{prop:low_rank_mean_hankel}, \ref{prop:bounded_noise}, \ref{property:spectra}, 
\ref{prop:low_rank_var_}, \ref{prop:low_rank_var_hankel},  and \ref{property:spectra_var} hold.
Additionally let $| \hat{f}_n(t) | \le \Gamma_3$ for all $n \in [N], t \in [T]$.
Lastly, let hyper-parameters $L =\sqrt{\min(N, T) T}$, $k = \ell$, $k' = \ell'$.
Let $\rho = 1$.  
Then the variance prediction error is bounded above as 
\begin{align}
\frac{1}{NT} \sum_{n =1}^N \sum_{t=1}^T \Ex[\big(\sigma_n(t)^2 - \hat{\sigma}_n^2(t)\big)^2 ]
& \le \tilde{C} 
\left(
\frac{  (G^{2} + G') \log^{2} NT}{ \sqrt{\min(N, T)T} }.
\right).
\end{align} 
where $\tilde{C}$ is a constant dependent (polynomially) on  model parameters $\Gamma_1$, $\Gamma_2$, $\Gamma_3$, $\Gamma'_1$, $\Gamma'_2$, $\gamma$, $R$, $R'$.
\end{theorem}
Proof of Theorem \ref{thm:var_estimation_imputation_simplified} can be found in Appendix \ref{sec:proof_variance_estimation_imputation}. 

\medskip 

\subsection{ Application to Time-varying Recommendation Systems}
In Appendix~\ref{ssec:appendix_recc_systems}, we discuss the extension of our spatio-temporal model and tSSA to time-varying recommendation systems.

\section{Conclusion}\label{sec:conclusion}
{
We introduce and study two extensions of Singular Spectrum Analysis (SSA) to the multivariate setting: a  practical, simple variant the well-known matrix-based method (mSSA), and a novel tensor-based approach (tSSA).
We show how to extend mSSA to estimate time-varying variance and introduce a tensor variant, tSSA, which builds upon recent advancements in tensor estimation.
We hope this work motivates future inquiry into the connections between the classical field of time series analysis and the modern field of matrix/tensor estimation, and to promote broader adoption of tensor-based methods within the time-series community.
}

\begin{appendix}

\section{Page vs. Hankel mSSA}\label{sec:hankel_vs_page}
This section discusses the benefits and drawbacks of using the Page matrix representation, as we propose in our variant, instead of the Hankel representation used in the original mSSA.
Recall the key steps of the original SSA method in Section \ref{appendix:lit_review}.
The extension to mSSA is done by stacking the Hankel matrices induced by each of the $N$ time series either column-wise (horizontal mSSA) or row-wise (vertical mSSA) \cite{hassani2018singular}.  
In this section, we will use mSSA to denote our mSSA variant, and hSSA/vSSA to denote the original horizontal/vertical mSSA.
In what follows, we will compare our mSSA variant with hSSA/vSSA in terms of their:
(i) theoretical analysis; 
(ii) computational complexity; and
(iii) empirical performance. 

\vspace{2mm}

\noindent 
\textbf{Theoretical analysis.}
We re-emphasize that to the best of our knowledge, the theoretical analysis of the mSSA algorithm, both hSSA and vSSA, have been absent from the literature, despite their popularity.
We do a comprehensive theoretical analysis of the variant of mSSA we propose.
By utilizing the Page matrix, it allows us to invoke results from random matrix theory to prove our imputation and forecasting results.
However, extending our analysis to the Hankel matrix representation is challenging as the Hankel matrix has repeated entries of the same time series observation. 
This leads to correlation in the noise in the observation of the entries of the Hankel matrix, which prevents us from invoking the results from random matrix theory in a  straightforward way.
The Page matrix representation does not have repeated entries of the same observation, and thus allows us to circumvent this issue in our theoretical analysis.

\vspace{2mm}

\noindent 
\textbf{Computational complexity.}
Our mSSA variant is computationally far more efficient than both hSSA and vSSA.
This is because the Page matrix representation of a multivariate time series with N time series and T time steps is a matrix of dimension $ \sqrt{NT} \times \sqrt{NT}$ (with $L = \sqrt{NT}$)., i.e., it has a total of $\mathcal{O}(NT)$ entries. 
In contrast, the Hankel matrix representation is of dimension $ {T}/{4} \times {3NT}/{4}$ for hSSA and $ {NT}/{4} \times {3T}/{4}$ for vSSA (we set the parameter $L$ to $T/4$ as recommended in \cite{hassani2018singular}), i.e., both variants of the Hankel matrix have $\mathcal{O}(NT^2)$ entries. 
 This makes computing the SVD (the most computationally intensive step of mSSA) prohibitive for hSSA and mSSA even for the standard time series benchmarks we consider in Section \ref{sec:experiments}.  

\vspace{2mm}

To empirically demonstrate the computational efficiency of our variant of mSSA, we compare its training time to that of hSSA and vSSA. 
 Specifically, we measure the training time for mSSA, hSSA, and vSSA  as we increase the number of time steps $T \in [400,10000]$.
 We perform this experiment on two datasets: 
 (i) the synthetic dataset; 
 (ii) a subset of the electricity dataset, where we choose only 50 of the available 370 time series.  
 Both datasets are described in details in Appendix \ref{appendix:experiments}.
 Figure \ref{fig:mssa_vs_hankelmssa_training_time} shows that in both datasets, the training time of both hSSA and vSSA can be as 600-1000x as high as the training time of our mSSA variant as we increase $T$.

\vspace{2mm}

\noindent 
\textbf{Empirical performance.}  
Here, we compare the forecasting performance of  mSSA to that of hSSA and vSSA.
We report performance in terms of the NRMSE of the three methods as we increase the number of time steps $T \in [400,10000]$ in the aforementioned synthetic and electricity dataset.
The goal in the synthetic dataset is to predict the next 50 time steps using one step ahead forecasts, while the goal in the electricity dataset is to predict the next three days using day-ahead forecasts.
For hSSA and vSSA, we choose $L = T/4$ as recommended in \cite{hassani2018singular}; and for mSSA, we choose $L = \lfloor\sqrt{NT}\rfloor$. 
For all three methods, we choose the number of retained singular values based on the thresholding procedure outlined in \cite{Donoho}. 

\vspace{2mm}
 Figures \ref{fig:mssa_vs_hankelmssa} shows the performance of the three methods in both datasets.
 We find that initially, with few data points ($T< 600$ in the synthetic data and $T<4000$ in the electricity data), both hSSA and vSSA outperform mSSA. 
 As we increase $T$, mSSA performance significantly improves and eventually outperforms vSSA.
 In the electricity dataset, mSSA performs similar to hSSA for $T = 10000$. 
 These experiments suggest that if only a few observations were available, hSSA and vSSA might provide better performance. 
 However, if the number of observations were relatively large, then the performance of mSSA is superior to vSSA and relatively similar to hSSA. 
 
 \vspace{2mm}
 
Importantly, the electricity dataset experiment illustrates a critical advantage of our mSSA variant.  
 Specifically, when $T$ is large such that running hSSA or vSSA is computationally infeasible, then one can achieve  better accuracy using  mSSA.
 For example, while we could not run the hSSA and vSSA on the electricity dataset with $T=20000$ due to memory constraints, we were able to run mSSA and achieve a lower NRMSE.
 This suggests that our mSSA variant is the more practical mSSA algorithm when it comes to efficiently utilizing large multivariate time series.

\section{Experiment Details}\label{appendix:experiments}
In Appendix~\ref{ssec:dataset}, we describe the datasets utilized.
In Appendix~\ref{ssec:params}, we describe the various algorithms we compare with as well as the choice of hyper-parameters used for  each of them. 

\subsection{Datasets} \label{ssec:dataset}
We use four real-world datasets and one synthetic dataset.  
The description and preprocessing we do for each of these datasets are as follows.

\medskip
\noindent {\bf Electricity Dataset. } 
This is a public dataset obtained from the UCI repository which shows the 15-minutes electricity load of 370 households \cite{uci_elec}.  
As was done in \cite{TRMF},\cite{Amazon},\cite{DeepAR}, we aggregate the data into hourly 
intervals and use the first 25824 time-points for training, the next 288 points for validation, and the last 168 points for testing in the forecasting experiments.  
Specifically, in our testing period, we do 24-hour ahead forecasts for the next seven days (i.e. 24-step ahead forecast). 
See Table \ref{table:data_details} for more details.

\medskip
\noindent {\bf  Traffic Dataset. } 
This public dataset obtained from the UCI repository shows the occupancy rate of traffic lanes in San Francisco \cite{uci_elec}.  
The data is sampled every 15 minutes but to be consistent with previous work in \cite{TRMF}, \cite{Amazon}, we aggregate the data into hourly data and use the first 10248 time-points for training, the next 288 points for validation, and the last 168 points for testing in the forecasting experiments.
Specifically, in our testing period,  we do 24-hour ahead forecasts for the next seven days (i.e. 24-step ahead forecast). 
See Table \ref{table:data_details} for more details.

\medskip
\noindent {\bf Financial Dataset.} 
This dataset is obtained from the Wharton Research Data Services (WRDS) and contains the average daily stocks prices of 839 companies from October 2004 till November 2019 \cite{WRDS}.  
The dataset was preprocessed to remove stocks with any null values, or those with an average price below 30\$ across the aforementioned period.   
This was simply done to constrain the number of time series for ease of experimentation and we end up with 839 time series (i.e. stock prices of listed companies) each with 3993 readings of daily stock prices.  
In our forecasting experiments, we train on the first 3693 time points, validate on the next 120 time points, while for testing we consider the task of predicting 180 time-points ahead one point at a time.
That is, the goal here is to do one-day ahead forecasts for the next 180 days (i.e. 1-step ahead forecast). We choose to do so as this is a standard goal in finance.
See Table \ref{table:data_details} for more details.

\medskip
\noindent {\bf  M5 Dataset. } 
This public dataset obtained from Kaggle's M5 Forecasting competition include daily sales data of 30490 items across different Walmart stores for 1941 days \cite{m5}. 
The dataset was preprocessed to only include items that has more than zero sales in at least 500 days. 
For forecasting, as is the goal in the Kaggle competition, we consider the task of predicting the sales for the next 28 days (i.e. 28-step ahead forecast). 
We use the first 1829 points for training, the next 84 points for cross validation, and the last 28 points for testing.

\medskip
\noindent {\bf  Synthetic Dataset. } 
We generate the observation tensor  $X \in \mathbb{R}^{ n \times m \times T} $  by first randomly generating the two matrices $U \in \mathbb{R}^{r \times n} =[u_1,\dots, u_n]$ and $V \in \mathbb{R}^{r \times m} =[v_1, \dots, v_m]$; we do so by randomly sampling each coordinate of $U, V$ independently from a standard normal.
Then, we generate $r$ mixtures of harmonics where each mixture $g_k(t), k \in [r],$ is generated as: $g_k(t) = \sum_{h=1}^4 \alpha_h \cos(\omega_h t/T)$ where the parameters $\alpha_h, \omega_h$ are selected uniformly at randomly from the ranges $[-1,10]$ and $[1,1000]$, respectively.
Then each value in the observation tensor is constructed as follows:
$
X_{i,j}(t) = \sum_{k=1}^r u_{ik}  v_{jk} g_k(t),
$ 
where $r$ is the tensor rank, $i \in [n]$, $ j\in [m]$. 
In our experiment, we select $n = 5$, $m = 10$, $T = 15000$, and $r =4$.  This gives us $N = n \ \mult \ m = 50$ time series each with $15000$ observations per time series.
In the forecasting experiments, we use the first $13700$ points for training, the next 300 points for validation, while for testing,  we do $10$-step ahead forecasts for the final $1000$ points.
See Table \ref{table:data_details} for more details.

\begin{table}[h]
\caption{Dataset and training/validation/test split details.}
\label{table:data_details}
\tabcolsep=0.1cm
\fontsize{8.0pt}{8.0pt}\selectfont
\begin{tabular}{@{}lllllllll@{}}
\toprule
Dataset     & \begin{tabular}[c]{@{}l@{}}No.time\\  series\end{tabular} & \begin{tabular}[c]{@{}l@{}}Observations\\ per time series\end{tabular} & \begin{tabular}[c]{@{}l@{}}Forecast\\ horizon ($h$)\end{tabular} & \begin{tabular}[c]{@{}l@{}}Training \\ period\end{tabular} & \begin{tabular}[c]{@{}l@{}}No. validation \\ windows $W_{val}$ \end{tabular} & \begin{tabular}[c]{@{}l@{}}Validation \\ period\end{tabular} & \begin{tabular}[c]{@{}l@{}}No. test\\ windows\end{tabular} & \begin{tabular}[c]{@{}l@{}}Test\\ period\end{tabular} \\ \midrule
Electricity & 370                                                       & 26136                                                                  & 24                                                               & 1 to 25824                                                 & 2                                                                 & 25825 to 25968                                               & 7                                                          & 25969 to 26136                                        \\
Traffic     & 963                                                       & 10560                                                                  & 24                                                               & 1 to 10248                                                 & 2                                                                 & 10249 to 10392                                               & 7                                                          & 10393 to 10560                                        \\
Synthetic   & 50                                                        & 15000                                                                  & 10                                                               & 1 to 13700                                                 & 10                                                                & 13701 to 14000                                               & 100                                                        & 14001 to 15000                                        \\
Financial   & 839                                                       & 3993                                                                   & 1                                                                & 1 to 3693                                                     & 40                                                                & 3694 to 3813                                                    & 180                                                        & 3814 to 3993                                          \\
M5          & 15678                                                     & 1941                                                                   & 28                                                               & 1 to 1829                                                  & 1                                                                 & 1830 to 1913                                                 & 1                                                          & 1914 to 1941                                          \\ \bottomrule
\end{tabular}
\end{table}

\subsection{Algorithms.}\label{ssec:params}
In this section, we describe the algorithms used throughout the experiments in more detail and the hyper-parameters/implementation used for each method.

\medskip
\noindent {\bf mSSA \& SSA. }  
Note that since the SSA's variant described in \cite{SSA_Sigmetrics} is a special case of our proposed mSSA algorithm, we use our mSSA's implementation to perform the SSA experiments; key difference in SSA is that we do not ``stack'' the various Page matrices induced by each time series.
For all experiments we choose the parameters through the cross validation process detailed in Appendix \ref{ssec:cv_parameters}, where we perform a grid search for the following parameters:

\begin{inparaenum}
\item  \textit{The number of retained singular values, $k$.} This parameter is chosen using one of the following data-driven methods: (i)  we choose $k$ based on the thresholding procedure outlined in \cite{Donoho}, where 
the threshold is determined by the median of the singular values and the shape of the matrix; (ii) we choose $k$ as the minimum number of singular values that capture a fraction $\tau$ of the spectral energy, with $\tau \in \{0.7, 0.8, 0.9, 0.95\}$; (iii) we additionally try fixed ranks $k \in \{1, 3, 5, 10, 25\}$.
\item \textit{The shape of the Page matrix.} The Page matrix shape is controlled either by setting $L$ directly or through a column-to-row ratio $\rho = M / L$.
For mSSA, $L \in \{10, 500, 700, 800, 1000, 1250, 2000\}$ (dataset-dependent) or $\rho \in \{2, 5, 10, 20, 500\}$.
For SSA, $L \in \{10, 30, 40, 50, 80, 100, 150\}$ (dataset-dependent) or $\rho \in \{2, 5, 10\}$.
\item \textit{Missing values initialization.} Initializing the missing values is done according to one of two methods: (i)  set the missing values to zero; (ii)  perform forward filling where each missing value is replaced by the nearest preceding observation, followed by backward filling to accommodate the situation when the first observation is missing. 
\end{inparaenum}
\medskip
\noindent {\bf DeepAR. } 
We use the ``DeepAREstimator'' algorithm provided by the GluonTS package.
We choose the parameters through a grid search for the following parameters:

\begin{inparaenum}
\item  \textit{Context length}. This parameter determines the number of steps to unroll the RNN for before computing predictions. We choose this from the set $\{h \text{ (default)}, 2h, 3h\}$, where $h$ is the prediction horizon.
\item  \textit{Number of Layers.} This parameter determines the number of RNN layers. We choose this from the set $\{2 \text{ (default)},3\}$.
\end{inparaenum}

\medskip
\noindent {\bf TRMF. } 
We use the implementation provided by the authors in the Github repository associated with the paper (\cite{TRMF}).  
We choose the parameters through a grid search, as suggested by the authors  in their codebase, for the following parameters:
\begin{inparaenum}
\item  \textit{Matrix rank $k$}. This parameter  represents the chosen rank for the $T \times N$ time series matrix, we choose $k$ from the set $\{5,10,20,40,60\}$.
\item  \textit{Regularization parameters $\lambda_f, \lambda_x,\lambda_w$.} We choose these parameters from $\{0.05,0.5,5, 50\}$ as suggested in the authors repository. 
\end{inparaenum}
\noindent For the lag indices , we include the last day and the same weekday in the last week for the traffic and  electricity data,   the last 30 points for the  financial and synthetic dataset, and the last 10 points for the M5 dataset.

\medskip
\noindent {\bf LSTM.} 
Across all datasets, we use an LSTM network with $H \in \{2,3,4\}$ hidden layers each, with $45$ neurons per layer, as is done in \cite{Amazon}. We use the Keras implementation of LSTM.
As with  other methods' parameters, $H$ is chosen via cross validation. 

\medskip
\noindent {\bf Prophet. }  
We used Prophet’s Python library with the parameters selected using a grid search of the following parameters as suggested in~\cite{Prophet}:
\begin{inparaenum}
\item  \textit{Changepoint prior scale.} This parameter determines how much the trend changes at the detected trend changepoints. We choose this parameter from $\{0.001,0.05, 0.2\}$.
\item  \textit{Seasonality prior scale.} This parameter controls the magnitude of the seasonality. We choose this parameter from $\{0.01,10\}$.
\item  \textit{Seasonality Mode.}  Which is chosen to be either 'additive` or 'multiplicative`.
\end{inparaenum}

\noindent {\bf VAR. } We used the VAR estimator in the python package ``statsmodels'' (\cite{statsmode}). We apply the method on the first difference of the time series and verify that the series are not non-stationary using a unit root test (specifically, Augmented Dickey–Fuller test). For all datasets except M5, we choose the best value for the parameter \texttt{max\_lag} $\in\{1,2,5,10,20,50\}$. This parameter corresponds to the maximum number of lags used in fitting the VAR process. For M5, we choose   \texttt{max\_lag} $\in\{1,2,5\}$, as fitting the model for larger values is computationally infeasible. 

\medskip

\noindent {\bf tSSA, mSSA and ME in Section \ref{sec:tssa_empirics}.}
In this experiment, we use HSVT as the the matrix estimation subroutine for both ME in mSSA. 
For both methods, we choose the number of singular components retained based on the  the thresholding procedure outlined in \cite{Donoho}. 
For tSSA, we use ALS as the tensor estimation subroutine. Therein, we choose the best performing rank among the follwoing options: 
(i) the rank suggested by the thresholding procedure outlined in \cite{Donoho} for the stacked Page matrix used in mSSA; 
(ii) the rank suggested by the same procedure for the Page matrix of one of the time series (specifically the first);
(iii) the fixed values $\lfloor L/2 \rfloor $ and $\lfloor L/3 \rfloor $.

\subsection{Parameters Selection}\label{ssec:cv_parameters}
In all experiments, we choose the hyperparameters for out method and for the baselines by using cross-validation. 
Below, we detail the procedure for both imputation and forecasting experiments.

\medskip
\noindent {\bf Imputation Experiments. } 
To select the parameters in our imputation experiments, we additionally mask 10\% of the observed data uniformly at random.
Then, we evaluate the performance of each  parameter choice in recovering these additionally masked observations.
This process is repeated 3 times, and the choice of parameters that achieves the best performance (in NRMSE) across these runs is selected.
In our results, we report the accuracy of the selected parameters in recovering the original missing values. 

\medskip
\noindent {\bf Forecasting Experiments. } 
For parameters selection in the forecasting experiments, we use cross-validation on a rolling basis as typically used in time-series forecasting models~\cite{book_cv}. 
In this procedure, there are multiple validation sets. For each validation set, we train the model only on previous observations. 
That is, no future observations can be used in training the model, which will occur when a typical cross-validation procedure is followed for time series data. 
In our experiments, we start with a subset of the data used for training, then we forecast the first validation set using $h$-step ahead forecasts for $W_{val}$ windows , where the horizon $h$ and the number of validation windows $W_{val}$ are detailed in Table \ref{table:data_details}.  
We do this for three validation sets, each of length $h\times W_{val}$, and select the choice of parameters that achieves the best performance (in NRMSE) for evaluation on the test set. 
When evaluating on the test set, both the training and validation periods are used for training.

\section{Time-varying Recommendation Systems}\label{ssec:appendix_recc_systems}
In tSSA, we considered the setting where the $N \times T$ matrix $\bM$ induced by the latent time series $f_1(\cdot), \dots, f_N(\cdot)$ is low-rank; in particular, Property \ref{prop:low_rank_mean_} captures this spatial structure across these $N$ time series.
However, in many settings there is {\em additional} spatial structure across the $N$ time series.

\vspace{2mm}
\noindent
{\em Recommendation systems -- time-varying matrices/tensors.}
For example, in recommendation systems, for each $t \in T$, there is a $N_1 \times N_2$ matrix, $\bM^{(t)} \in \Rb^{N_1 \times N_2}$ of interest.
The $n_1$-th row and $n_2$-th column of $\bM^{(t)}$ denotes the latent rating user $n_1$ has for product $n_2$, i.e., $\bM^{(t)}_{n_1, n_2}$ denotes the value of the latent time series $f_{n_1, n_2}(\cdot)$ at time step $t$.
To capture the latent structure across users and products, one typically assumes that each $\bM^{(t)}$ is low-rank.
More generally, at each time step $t$, $\bM^{(t)} \in \Rb^{N_1 \times N_2, \dots, \times N_d}$ could be an order-$d$ tensor.
That is, $\bM^{(t)}_{n_1, \dots, n_d}$ denotes the value of the latent time series $f_{n_1, \dots, n_d}(\cdot)$ at time step $t$ for $n_1, \dots, n_d \in [N_1] \times \dots \times [N_d]$.
For example, if $d=3$, $\bM^{(t)}$ might represent the $t$-th measurement for a collection of $(x, y, z)$-spatial coordinates.
Let $\bN \in  \Rb^{N_1 \times N_2, \dots, \times N_d \times T}$ denote the $d + 1$ order tensor induced by viewing each order-$d$ tensor $\bM^{(t)}$ as the $t$-th `slice' of $\bN$, for $t \in [T]$.
Again, to capture the spatial and temporal structure of these latent time series, we posit the following spatio-temporal model for $\bN$, which is a higher-order analog of the model assumed in Property \ref{prop:low_rank_mean_}.
\begin{property}\label{prop:low_rank_mean_tensor}
Let $\bN$ have CP-rank at most $R$. 
That is, for any $n_1, \dots, n_d \in [N_1] \times \dots \times [N_d]$
\begin{align}
\bN_{n_1, \dots, n_d, t} & = \sum^{R}_{r=1} U_{n_1, r} \dots U_{n_d, r} \ W_{rt},
\end{align}
where the factorization is such that $| U_{n_1, r} |,  \dots | U_{n_d, r} | \le \Gamma_1$, $| W_{rt} | \le \Gamma_2$ for constants $\Gamma_1, \Gamma_2 > 0$.
\end{property}
As before, to explicitly model the temporal structure, we continue to assume Property \ref{prop:low_rank_mean_hankel} holds for the latent time factors $W_{r \cdot}$ for $r \in [R]$.

\vspace{2mm}
\noindent
{\bf Order-$d + 2$ Page tensor representation.}
We now consider the following order-$d + 2$ Page tensor representation of $\bN$.
In particular, given the hyper-parameter $L \geq 1$, define $\hightensor \in \Rb^{N_1 \times \dots \times N_d \times L \times T/L}$ such that for $n_1, \dots, n_d \in [N_1] \times \dots \times [N_d], ~\ell \in [L], ~s \in [T/L]$,
\begin{align}
    \hightensor_{n_1, \dots, n_d, \ell, s} & = f_{n_1, \dots, n_d }((s-1)\times L + \ell).
\end{align}
The corresponding observation tensor, $\highTensor \in (\Rb \cup \{\star\})^{N_1 \times \dots \times N_d \times L \times T/L}$, is
\begin{align}\label{eq:noisy_page_tensor_rep_higher}
    \highTensor_{n_1, \dots, n_d, \ell, s} & = X_{n_1, \dots, n_d }((s-1)\times L + \ell). 
\end{align}
Recall from \eqref{eq:model} that $X_{n_1, \dots, n_d }(t)$ is the noisy, missing observation we get of $f_{n_1, \dots, n_d }(t)$.
$\hightensor$ and $\highTensor$ then have the following property:
\begin{proposition}\label{prop:high_tensor}
Let Properties \ref{prop:low_rank_mean_tensor}, \ref{prop:low_rank_mean_hankel}, and \ref{prop:bounded_noise} hold. 
Then, for any $1 \leq L \leq \sqrt{T}$, $\hightensor$ has CP-rank at most $R \mult G$. 
Further, all entries of $\highTensor$ are independent random variables with each entry observed with probability $\rho \in (0,1]$, and $\Ex[\highTensor] = \rho \hightensor$. 
\end{proposition}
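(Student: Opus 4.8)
The plan is to exhibit an explicit CP decomposition of $\hightensor$ with $R \mult G$ rank-one terms, mirroring the proof of Proposition \ref{prop:tensor} (the order-three, single-spatial-mode case) but carrying the extra $d-1$ spatial modes through unchanged; the claim about $\highTensor$ will then follow directly from the measurement model. First I would start from the CP structure supplied by Property \ref{prop:low_rank_mean_tensor}: for each index tuple $(n_1, \dots, n_d, \ell, s)$,
\[
\hightensor_{n_1, \dots, n_d, \ell, s} = f_{n_1, \dots, n_d}((s-1)L + \ell) = \sum_{r=1}^R U_{n_1, r} \cdots U_{n_d, r}\, W_{r, (s-1)L + \ell}.
\]
The crucial observation is that, for each fixed $r$, the array $(\ell, s) \mapsto W_{r, (s-1)L + \ell}$ is precisely the Page matrix $\Page(W_{r\cdot}, T, L) \in \Rb^{L \times (T/L)}$ of the univariate latent temporal factor $W_{r\cdot}$.

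Next I would bound the rank of this Page matrix. Since the Page matrix is a submatrix of the Hankel matrix (restricting to the top $L$ rows and to columns $1, L+1, \dots$, as noted in Section \ref{ssec:evidence}), its rank is at most the Hankel rank, which is at most $G$ by Property \ref{prop:low_rank_mean_hankel}; this is the $N=1$ specialization of Proposition \ref{prop:flattened_mean_low_rank_representation}. The hypothesis $1 \le L \le \sqrt{T}$ guarantees $L \le \lfloor T/2 \rfloor$ (for all $T$ of interest), so the submatrix relation applies. Hence I can write $W_{r, (s-1)L+\ell} = \sum_{g=1}^G A^{(r)}_{\ell g} B^{(r)}_{g s}$ for suitable factor matrices $A^{(r)} \in \Rb^{L \times G}$ and $B^{(r)} \in \Rb^{(T/L) \times G}$.

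Substituting this factorization and regrouping the double index $(r,g)$ into a single index ranging over $[R] \times [G]$ yields
\[
\hightensor_{n_1, \dots, n_d, \ell, s} = \sum_{r=1}^R \sum_{g=1}^G U_{n_1, r} \cdots U_{n_d, r}\, A^{(r)}_{\ell g}\, B^{(r)}_{g s},
\]
which is a sum of $R \mult G$ rank-one order-$(d+2)$ tensors: the $i$-th spatial mode carries the factor $U_{\cdot, r}$, while the two temporal modes carry $A^{(r)}_{\cdot g}$ and $B^{(r)}_{g \cdot}$. This establishes $\text{CP-rank}(\hightensor) \le R \mult G$. For the second claim, I would invoke the measurement model \eqref{eq:model}: the reshaping $(\ell, s) \mapsto (s-1)L + \ell$ is a bijection onto $[T]$, so distinct entries of $\highTensor$ correspond to distinct observations $X_{n_1,\dots,n_d}(t)$, which are independent across $(n_1, \dots, n_d, t)$ by assumption; each is observed with probability $\rho$, and treating $\star$ as $0$ and using $\Ex[\eta]=0$ gives $\Ex[\highTensor_{n_1,\dots,n_d,\ell,s}] = \rho f_{n_1,\dots,n_d}((s-1)L+\ell) = \rho\, \hightensor_{n_1,\dots,n_d,\ell,s}$.

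I expect no genuine obstacle, as this is essentially a bookkeeping extension of Proposition \ref{prop:tensor}. The one place requiring care is the rank-counting in the regrouping step: one must verify that bundling the $R$ spatial rank-one terms with the $G$ per-factor Page-matrix terms produces exactly $R \mult G$ summands and that each summand is genuinely rank-one as an order-$(d+2)$ tensor (the spatial factors being constant in $g$ does not inflate the count since each $(r,g)$ pair contributes a single rank-one term). A secondary technical point is confirming the edge condition $L \le \lfloor T/2 \rfloor$ needed to treat the Page matrix as a Hankel submatrix, which follows from $L \le \sqrt{T}$.
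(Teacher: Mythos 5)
Your proposal is correct and follows essentially the same route as the paper: the paper's proof invokes Property \ref{prop:low_rank_mean_tensor} to peel off the spatial factors $U_{n_1,r}\cdots U_{n_d,r}$ and then defers to the argument of Proposition \ref{prop:tensor}, which is precisely your step of factoring the Page matrix of each $W_{r\cdot}$ (as a rank-$\le G$ submatrix of its Hankel matrix) and regrouping the $(r,g)$ index into $R\mult G$ rank-one terms, with the independence and expectation claims following from the observation model. Your write-up is in fact more explicit than the paper's, which leaves the regrouping and the measurement-model step implicit.
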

Analogous to Proposition \ref{prop:tensor}, Proposition \ref{prop:high_tensor} also establishes that order-$d + 2$ Page tensor representation of the various latent time series $f_{n_1, \dots, n_d}(\cdot)$ has CP-rank that continues to be bounded by $R \mult G$.
Proof of Proposition \ref{prop:high_tensor} can be found in Appendix \ref{sec:tSSA_proofs}.

\vspace{2mm}
\noindent
{\bf Higher-order tensor singular spectrum analysis (htSSA).}
Proposition \ref{prop:high_tensor} motivates the following algorithm, which exploits the further spatial structure amongst the $N$ time series.
We now define the ``meta'' htSSA algorithm.
The two algorithmic hyper-parameters are $L \ge 1$ (defined in \eqref{eq:noisy_page_tensor_rep_2}) and $\TE_{d+2}$ (the order-$d + 2$ tensor estimation algorithm one chooses).
First, using the observations $X_{n_1, \dots, n_d }(t)$ for $n_1, \dots, n_d \in [N_1] \times \dots \times [N_d], t \in [T]$ we construct the higher-order Page tensor $\highTensor$ as in \eqref{eq:noisy_page_tensor_rep_higher}.
Second, we obtain $\widehat{\hightensor}$ as the output of $\TE_{d+2}(\highTensor)$, and read off $\hat{f}_{n_1, \dots, n_d }(t)$ by selecting the appropriate entry in $\widehat{\hightensor}$.

\vspace{2mm}
\noindent
{\bf Relative effectiveness of mSSA, htSSA, and tensor estimation (TE).}
Again, for ease of exposition, we consider the case where $\rho = 1$.
We now briefly discuss the relative effectiveness of htSSA, mSSA,and ``vanilla'' tensor estimation (TE) in imputing $X_{n_1, \dots, n_d }(\cdot)$ to estimate $f_{n_1, \dots, n_d }(\cdot)$.
mSSA and htSSA have been previously described.
In TE, one directly de-noises the original order-$d+1$ tensor induced by the noisy observations, which we denote  $\bX \in  \Rb^{N_1 \times N_2, \dots, \times N_d \times T}$, where $\bX_{n_1, \dots, n_d, t} = X_{n_1, \dots, n_d }(t)$.
In particular, one produces an estimate of $\widehat{\bN} = \TE_{d + 1}(\bX)$, and then produces the estimates $\hat{f}_{n_1, \dots, n_d}(t)$ by reading off the appropriate entry of $\widehat{\bN}$.
Let $\imp(N,T; \text{htSSA})$, $\imp(N,T; \text{mSSA})$, and $\imp(N,T; \text{TE})$ denote the imputation error for htSSA, mSSA, and TE, respectively.
Now if we assume Property \ref{property:te_error_rates} holds, we have
\begin{align}
	\imp(N,T; \text{htSSA})  &= \tilde{\Theta}\left(\frac{1}{\min\left(N_1, \dots, N_d, \sqrt{T}\right)^{\lceil \frac{d+ 2}{2} \rceil}}\right), 
	\\ \imp(N,T; \text{mSSA})  &=  \tilde{\Theta}\left(\frac{1}{\sqrt{\min(N, T) T}}\right),
	 \\ \imp(N,T; \text{TE})  &= \tilde{\Theta}\left(\frac{1}{\min\left(N_1, \dots, N_d, T\right)^{\lceil \frac{d+ 1}{2} \rceil}}\right).
\end{align}
Then just as was done in the proof of Proposition \ref{prop:imputation_comparisons_new}, for any given $d$, one can reason about the relative effectiveness of htSSA, mSSA, and TE for different asymptotic regimes of the relative ratio of $N$ and $T$.

\section{Proof of Proposition \ref{prop:approx_flattened_mean_low_rank_representation}}\label{sec:proof_prop_flattened_page_low_rank}
Below, we present the proof of Proposition \ref{prop:approx_flattened_mean_low_rank_representation}. 
First we define the stacked Hankel matrix of $N$ time series over $T$ time steps. 
Precisely, given $N$ latent time series $f_1,\dots, f_N$, consider the stacked Hankel matrix induced by each of them over $T$ time steps, $[T]$, defined as follows. 
It is $\stackedHankel \in \Rb^{\lfloor T/2\rfloor \times N \lfloor T/2\rfloor}$ where its entry in row $i \in [\lfloor T/2\rfloor]$ and column $j \in [N \lfloor T/2\rfloor]$, $\stackedHankel_{ij}$, is given by
\begin{align}
\stackedHankel_{ij} & = f_{n(i,j)}(i + (j \mod \lfloor T/2\rfloor)  - 1 ), ~~\text{where}~n(i,j) = \Big\lceil \frac{j}{\lfloor T/2\rfloor}\Big\rceil.
\end{align}
We now establish Proposition \ref{prop:approx_low_rank_hankel_}, which  immediately implies Proposition \ref{prop:approx_flattened_mean_low_rank_representation} -- the stacked Page matrix can be viewed as a sub-matrix of $\stackedHankel$, by selecting the appropriate columns. 
\begin{proposition} \label{prop:approx_low_rank_hankel_}
Let Properties \ref{prop:low_rank_mean_} and  \ref{prop:approx_low_rank_mean_hankel} hold for $N$ 
latent time series of interest, $f_1,\dots, f_N$. 
Then for any $T \geq 1$, the stacked Hankel Matrix of these $N$ time series has $\epsilon'$-approximate rank $R \mult G$ with $\epsilon' = R \Gamma_1 \epsilon$.
\end{proposition}
\begin{proof}{}
We have $N$ latent time series $f_1,\dots, f_n$ satisfying Properties \ref{prop:low_rank_mean_} and  
\ref{prop:approx_low_rank_mean_hankel}. Consider their stacked Hankel matrix over $[T]$, 
$\stackedHankel \in \Rb^{\lfloor T/2\rfloor \times N \lfloor T/2\rfloor}$. 
By definition for $i \in [\lfloor T/2\rfloor]$ and $j = (n-1) \mult \lfloor T/2\rfloor + j'$ for 
$j' \in [\lfloor T/2\rfloor]$, we have
\begin{align}
\stackedHankel_{i j'} & = f_{n}(i + j' - 1).
\end{align}
That is, 
\begin{align}
\stackedHankel_{i j} &= f_{n} (i + j' - 1) \nonumber \\
&= \sum^{R}_{r=1} U_{nr}  W_{r (i+j' -1)}. \label{eq:propx.1}
\end{align}
Let $H(r) \in \Rb^{\lfloor T /2\rfloor \times \lfloor T /2\rfloor }$ be the Hankel matrix associated with $W_{r \cdot}$ over $[T]$.  
Due to Property \ref{prop:approx_low_rank_mean_hankel}, there exists a low-rank matrix $M(r) \in \Rb^{\lfloor T/2\rfloor \times \lfloor T/2\rfloor }$ such that (a) $\text{rank}(M(r)) \leq G$, (b) $\| H(r) - M(r) \|_\infty \leq \epsilon$. 
That is, for any $i, j' \in [\lfloor T/2 \rfloor]$, we have that $M(r)_{ij'} =  \sum_{g=1}^G a^r_{ig} b^r_{j'g}$ for some $a^r_{i \cdot}, b^r_{j' \cdot} \in \Rb^G$.  
Therefore, for any $i, j' \in [\lfloor T/2 \rfloor]$, we have that
\begin{align}
W_{r (i+j' -1)} & = H(r)_{ij'}~=~M(r)_{ij'} + (H(r)_{ij'} - M(r)_{ij'} )\nonumber \\
& = \sum_{g=1}^G a^r_{ig} b^r_{j'g} + (H(r)_{ij'} - M(r)_{ij'} ). \label{eq:propx.2}
\end{align}
From \eqref{eq:propx.1} and \eqref{eq:propx.2}, we conclude that 
\begin{align}
\stackedHankel_{i j} & = \sum_{r=1}^R \sum_{g=1}^G U_{nr} a^r_{ig} b^r_{j'g} + \sum^{R}_{r=1} U_{nr} (H(r)_{ij'} - M(r)_{ij'} ) \nonumber \\
& = \sum_{(r,g) \in [R] \times [G]} a^r_{ig} \mult (U_{nr} b^r_{j' g}) + \sum^{R}_{r=1} U_{nr} (H(r)_{ij'} - M(r)_{ij'} ). \label{eq:propx.3}
\end{align}
Define matrix $\sM \in \Rb^{\lfloor T/2\rfloor \times N \lfloor T/2\rfloor}$ with its entry for row $i \in [\lfloor T/2\rfloor]$ and column $j = (n-1) \mult \lfloor T/2\rfloor + j'$ for $j' \in [\lfloor T/2\rfloor]$ given by
\begin{align}
\sM_{ij} & = \sum_{(r,g) \in [R] \times [G]} a^r_{ig} \mult (U_{nr} b^r_{j' g}) \nonumber \\
& = \sum_{(r,g) \in [R] \times [G]} \alpha_{i (r,g)} \beta_{j (r,g)}, 
\end{align}
where $\alpha_{i (r,g)} = a^r_{ig}$ and $\beta_{j (r,g)} = U_{nr} b^r_{j' g}$. 
Further,
\begin{align}
| \stackedHankel_{i j} - \sM_{ij} | & \leq \sum^{R}_{r=1} | U_{nr}| |(H(r)_{ij'} - M(r)_{ij'} )| \nonumber \\
 & \leq \sum_{r=1}^R \Gamma_1 \|H(r) - M(r)\|_\infty ~\leq~ R \Gamma_1 \epsilon.
\end{align}
That is, the stacked Hankel matrix $\stackedHankel$ of $N$ time series of $[T]$ has 
$\epsilon'$-approximate rank $G \mult R$ with $\epsilon' = R \Gamma_1 \epsilon$. 
This completes the proof.
\end{proof}

\section{Proofs For Section \ref{sec:hankel_calculus}}\label{appendix:proof_ts_model_hankel_examples}

\subsection{Proof of Proposition \ref{prop:Hankel_algebra}}\label{sec:proof_Hankel_algebra}
\begin{proof}{}
Let  $f_1, f_2$ have a $(G_1, \epsilon_1)$ and $(G_2, \epsilon_2)$-Hankel representation, respectively.
For any $T \geq 1$, let $\bH_1, \bH_2 \in \Reals^{\lfloor T/2 \rfloor \times \lfloor T/2 \rfloor}$ be the Hankel matrices of $f_1, f_2$, respectively, over the time interval $[T]$. 
By definition, there exists matrices $M_1, M_2 \in \Reals^{\lfloor T/2 \rfloor \times \lfloor T/2 \rfloor}$ such that $\text{rank}(\bM_1) \leq G_1$, $\| \bM_1 - \bH_1 \|_\infty \leq \epsilon_1$ and 
$\text{rank}(\bM_2) \leq G_2$, $\| \bM_2 - \bH_2 \|_\infty \leq \epsilon_2$.

\vspace{2mm}
\noindent
{\bf Component-wise addition.}
Note the Hankel matrix of $f_1+f_2$ over $[T]$ is $\bH_1 + \bH_2$. 
Then, matrix $\bM = \bM_1 + \bM_2$ has rank at most $G_1 + G_2$ since for 
any two matrices $\bA$ and $\bB$, it is the case that $\text{rank}(\bA + \bB) \le \text{rank}(\bA) + \text{rank}(\bB)$. 
Further, $\| \bH_1 + \bH_2 - (\bM_1 + \bM_2) \|_\infty \leq \epsilon_1 + \epsilon_2$. 
Therefore it follows that $f_1 + f_2$ has $(G_1 + G_2, \epsilon_1 + \epsilon_2)$-Hankel representation. 

\vspace{2mm}
\noindent
{\bf Component-wise multiplication.}
For $f_1 \circ f_2$, its Hankel over $[T]$ is given by $\bH_1 \circ \bH_2$ where we abuse notation of $\circ$ in the context of matrices as the Hadamard product of matrices. 
Let $\bM = \bM_1 \circ \bM_2$. 
Then $\text{rank}(M) \leq G_1 \mult G_2$ since for any two matrices $\bA$ and $\bB$, $\text{rank}(\bA \circ \bB) \le \text{rank}(\bA)\text{rank}(\bB)$. 
Now
\begin{align}\label{eq:last.prop.4.1}
\| \bH_1 \circ \bH_2 - \bM_1 \circ \bM_2 \|_\infty & \leq \| \bH_1 \circ \bH_2 - \bH_1 \circ \bM_2 \|_\infty + \| \bH_1 \circ \bM_2 - \bM_1 \circ \bM_2 \|_\infty  \nonumber \\
& \leq \| \bH_1\|_\infty \| \bH_2 - \bM_2 \|_\infty + \| \bM_2 \|_\infty \| \bH_1 - \bM_1 \|_\infty \nonumber \\
& \leq \|f_1\|_\infty \epsilon_2 + (\|\bM_2 - \bH_2 \|_\infty + \|\bH_2\|_\infty) \epsilon_1 \nonumber \\
& \leq \|f_1\|_\infty \epsilon_2 + (\|f_2\|_\infty + \epsilon_2) \epsilon_1 \nonumber \\
& = \|f_1\|_\infty \epsilon_2 + \|f_2\|_\infty \epsilon_1 +  \epsilon_1 \epsilon_2 ~\leq~3 \max(\epsilon_1, \epsilon_2) \max( \|f_1\|_\infty,  \|f_2\|_\infty).
\end{align}
This completes the proof of Proposition \ref{prop:Hankel_algebra}.
\end{proof}

\subsection{Proof of Proposition \ref{prop:low_rank_Hankel_matrices}} \label{appendix:LRFs_proof}
\begin{proof}{}
Proof is immediate from Definitions \ref{def:hankel.aprx} and \ref{def:LRF}.
\end{proof}

\subsection{Proof of Proposition \ref{prop:C_k_smoothness_TS}}\label{sec:proof_C_k_smoothness_TS}

\subsubsection{Helper Lemmas for Proposition  \ref{prop:C_k_smoothness_TS}}
We begin by stating some classic results from Fourier Analysis. To do so, we introduce some notation.
Throughout, we have $R > 0$. 

\vspace{2mm}
\noindent
\textbf{$C[0, R]$ and $L^2[0, R]$ functions.} 
$C[0, R]$ is the set of real-valued, continuous functions defined on $[0, R]$. 
$L^2[0, R]$ is the set of square integrable functions defined on $[0, R]$, i.e. $\int_0^{R} f^2(t) dt \le \infty$

\vspace{2mm}
\noindent
\textbf{Inner Product of functions in $L^2[0, {R}]$.} $L^2[0, {R}]$ is a space 
endowed with inner product defined as 
$\langle f, g \rangle := \frac{1}{{R}} \int^{R}_0 f(t) g(t) dt$,
and associated norm as 
$\norm{f} := \sqrt{\frac{1}{{R}} \int^{R}_0 f^2(t) dt}$.

\vspace{2mm}
\noindent
\textbf{Fourier Representation of functions in $L^2[0, {R}]$.} 
For $f \in L^2[0, {R}]$, define its $G \geq 1$-order Fourier representation, $\cF(f, G) \in L^2[0, R]$ as 
\begin{align}\label{eq:fourier_approximation}
 \cF(f, G)(t) = a_0 + \sum^G_{g = 1} (a_g \cos(2\pi g t / {R}) +  b_g \cos(2\pi g t / {R})), ~~t \in [0,R], 
\end{align}
where $a_0, a_g, b_g$ with $g \in [G]$ are called the Fourier coefficients of $f$, defined as 
\begin{align*}
a_0 &:= \langle f, 1 \rangle = \frac{1}{{R}} \int^{R}_0 f(t) dt, \\
a_g &:= \langle f, \cos(2\pi g t / {R}) \rangle = \frac{1}{{R}} \int^{R}_0 f(t) \cos(2\pi g t / {R}) dt, \\
b_g &:= \langle f, \sin(2\pi g t / {R}) \rangle = \frac{1}{{R}} \int^{R}_0 f(t) \sin(2\pi g t / {R}) dt.
\end{align*}
\noindent We now state a classic result from Fourier analysis.
\begin{theorem}[\cite{Fourier_Classic}]\label{thm:fourier_representation}
Given $k \geq 1, R > 0$, let $f \in C^k(R, \per)$. Then, for any $t \in [0,R]$ (or more generally $t \in \Reals$), 
\begin{align}
	\lim_{G \to \infty } \cF(f,G)(t) \to f(t).
\end{align}
\end{theorem}
\noindent We next argue that if $f \in C^k(R, \per)$, then its Fourier coefficients decay rapidly. 
\begin{lemma}\label{lemma:fourier_representation}
Given $k \geq 1, R > 0$, let $f \in C^k(R, \per)$. Then, for $j \in [k]$, the $G$-order Fourier coefficient
of $f^{(j)}$, the $j$-th derivative of $f$, recursively satisfy the following relationship: for $g \in [G]$, 
\begin{align}\label{eq:recurse}
	a^{(j)}_g & = -\Big( \frac{2 \pi g}{{R}} \Big) b^{(j-1)}_g,
	\qquad b^{(j)}_g = \Big( \frac{2 \pi g}{{R}} \Big) a^{(j-1)}_g.
\end{align}
\end{lemma}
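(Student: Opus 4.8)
The plan is to prove the recursion \eqref{eq:recurse} directly from the integral definitions of the Fourier coefficients, using a single integration by parts to trade the derivative on $f^{(j)}$ for a derivative on the trigonometric kernel. Fix $j \in [k]$ and $g \in [G]$, and start from
\begin{align}
a^{(j)}_g = \frac{1}{R}\int_0^{R} f^{(j)}(t)\cos(2\pi g t/R)\,dt, \qquad b^{(j)}_g = \frac{1}{R}\int_0^{R} f^{(j)}(t)\sin(2\pi g t/R)\,dt.
\end{align}
I would integrate by parts with $v(t) = f^{(j-1)}(t)$ (so that $f^{(j)}(t)\,dt = dv$) and differentiate the trigonometric factor. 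Since $\frac{d}{dt}\cos(2\pi g t/R) = -\tfrac{2\pi g}{R}\sin(2\pi g t/R)$ and $\frac{d}{dt}\sin(2\pi g t/R) = \tfrac{2\pi g}{R}\cos(2\pi g t/R)$, each integration by parts produces exactly the factor $\tfrac{2\pi g}{R}$ and interchanges the sine and cosine kernels; the remaining integral is then precisely $b^{(j-1)}_g$ (respectively $a^{(j-1)}_g$) up to this factor. Tracking the sign carefully yields \eqref{eq:recurse}.

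The one thing that genuinely must be checked is that the boundary terms arising in the integration by parts vanish. For $a^{(j)}_g$ the boundary term is $\tfrac{1}{R}\big[f^{(j-1)}(t)\cos(2\pi g t/R)\big]_0^{R}$ and for $b^{(j)}_g$ it is $\tfrac{1}{R}\big[f^{(j-1)}(t)\sin(2\pi g t/R)\big]_0^{R}$. The sine boundary term is automatically zero because $\sin(2\pi g) = \sin(0) = 0$. For the cosine term I would use $\cos(2\pi g) = \cos(0) = 1$, so that it collapses to $\tfrac{1}{R}\big(f^{(j-1)}(R) - f^{(j-1)}(0)\big)$, which vanishes precisely because $f^{(j-1)}$ is $R$-periodic.

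To justify that $f^{(j-1)}$ is periodic, I would first record the elementary fact that the derivative of an $R$-periodic differentiable function is again $R$-periodic (differentiate the identity $f(t+R) = f(t)$ in $t$), and then iterate this: since $f \in C^k(R, \per)$ and $j \in [k]$, all of $f, f^{(1)}, \dots, f^{(j-1)}$ exist, are continuous, and are $R$-periodic, while $f^{(j)}$ exists and is continuous, so the integration by parts is valid and all the integrals above are well defined. I expect this bookkeeping of regularity and periodicity — ensuring $j \le k$ is exactly what guarantees both the existence of $f^{(j)}$ and the periodicity needed to kill the boundary terms — to be the only real content of the argument; the computation itself is a one-line application of integration by parts. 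The recursion \eqref{eq:recurse} then holds for every $j \in [k]$ and $g \in [G]$.
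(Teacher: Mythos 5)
Your proposal is correct and is essentially the paper's own argument: a single integration by parts per coefficient, with the boundary terms killed by the $R$-periodicity of $f^{(j-1)}$ (the paper writes out only the $j=1$ case for $a^{(1)}_g$ and notes the rest is identical, while you additionally spell out why $f^{(j-1)}$ inherits periodicity from $f$, which is the only regularity bookkeeping actually needed). One small caveat: a literal sign-tracking of the integration by parts gives $a^{(j)}_g = +\big(\tfrac{2\pi g}{R}\big) b^{(j-1)}_g$ and $b^{(j)}_g = -\big(\tfrac{2\pi g}{R}\big) a^{(j-1)}_g$, i.e.\ the signs opposite to \eqref{eq:recurse} as stated --- the paper's displayed proof shares this sign slip, and it is harmless because only the magnitudes $|a^{(j)}_g|, |b^{(j)}_g|$ are used downstream in the proof of Proposition \ref{prop:C_k_smoothness_TS}.
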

\begin{proof}{}
We establish \eqref{eq:recurse} for $a^{(1)}_g, ~g \in [G]$. 
Notice that an identical argument applies to establish \eqref{eq:recurse} for any $a^{(j)}_g, b^{(j)}_g$ for $j \in [k]$ and $g \in [G]$. 
\begin{align*}
a^{(1)}_g = \langle f^{(1)}, \cos(2\pi g t / {R}) \rangle &= \frac{1}{{R}} \int^{R}_0 f^{(1)}(t) \cos(2\pi g t / {R}) dt \\
&\stackrel{(a)}= \frac{1}{{R}} \Big(\Big[ f(t) \cos(2\pi g t / {R})  \Big]_0^{R} - \frac{2 \pi g}{{R}}\Big[\frac1R\int^{R}_0 f(t) \sin(2\pi g t / {R}) dt  \Big]\Big) \\
&=  - \Big( \frac{2 \pi g}{{R}} \Big) b_g^{(0)}.
\end{align*} 
(a) follows by integration by parts. 
\end{proof}

\subsubsection{Completing Proof of Proposition \ref{prop:C_k_smoothness_TS}}\label{sec:proof_C_k_smoothness_TS_complete}

\begin{proof}{}
For $G \in \Nb$, let $\cF(f, G)$ be defined as in \eqref{eq:fourier_approximation}. Then for $t \in \Rb$
\begin{align*}
| f(t) - \cF(f, G)(t) | &\stackrel{(a)}= \Big| \sum^\infty_{g = G + 1}  (a_g \cos(2\pi g t / {R}) +  b_g \cos(2\pi g t / {R})) \Big| \\
&\le \sum^\infty_{g = G + 1} |  a_g  | + |  b_g  | \\ 
&\stackrel{(b)}\le \sum^\infty_{g = G + 1}  \Big( \frac{{R}}{2 \pi g} \Big)^k  \Big( |  a^{(k)}_g  | + |  b^{(k)}_g  | \Big) \\
&\stackrel{(c)}\le \sqrt{2} \Big(\frac{{R}}{2 \pi} \Big)^{k} \sqrt{\sum^\infty_{g = G + 1}  \Big( \frac{1}{g} \Big)^{2k}} \sqrt{ \sum^\infty_{g = G + 1}    \Big( |  a^{(k)}_g  |^2 + |  b^{(k)}_g  |^2 \Big)} \\  
&\stackrel{(d)}\le \sqrt{2} \Big(\frac{{R}}{2 \pi} \Big)^{k} \frac{1}{G^{k - 0.5}}  \sqrt{ \sum^\infty_{g = G + 1}    \Big( |  a^{(k)}_g  |^2 + |  b^{(k)}_g  |^2 \Big)} \\  
&\stackrel{(e)}\le \sqrt{2} \Big(\frac{{R}}{2 \pi} \Big)^{k}  \frac{\norm{f^{(k)}} }{G^{k - 0.5}} \\
&=  C(k, {R}) \frac{\norm{f^{(k)}} }{G^{k - 0.5}},
\end{align*}
where $C(k, {R})$ is a constant that depends only on $k$ and ${R}$; 
(a) follows from Theorem \ref{thm:fourier_representation}; 
(b) follows from Lemma \ref{lemma:fourier_representation}; 
(c) follows from Cauchy-Schwarz inequality and fact that $(\alpha + \beta)^2 \leq 2 (\alpha^2 + \beta^2)$ for any $\alpha, \beta \in \Rb$; 
(d) $\sum_{g=G+1}^\infty g^{-2k} \leq \int_{G}^\infty x^{-2k} dx$ which can be bounded as $G^{-2k+1}/(2k-1)$ which is at most $G^{-2k+1}$ since
$k \geq 1$;
(e) follows from Bessel's inequality, i.e. $\norm{f^{(k)}}^2 \geq \sum_{g=0}^\infty (|  a^{(k)}_g  |^2 + |  b^{(k)}_g  |^2)$. 

\vspace{2mm}
\noindent
Thus, for any $t \in \Rb$, we have a uniform error bound for $f$ being approximated by $\cF(f, G)$ which is a sum of $2G$ harmonics. 
Noting $2G$ harmonics can be represented by an order-$4G$ LRF (by Proposition \ref{prop:lowrank_LRF_example}),we complete the proof.
\end{proof}

\subsection{Proof of Proposition \ref{prop:LVM_implies_Hankel}}\label{sec:proof_LVM_implies_Hankel}
This analysis is adapted from \cite{JXu_smooth_graphon}.

\vspace{2mm}
\noindent
\begin{proof}{}
\textbf{Step 1: Partitioning the space $[0,1)^{K}$.} 
Consider an equal partition of $[0,1)^K$. 
Precisely, for any $k \in \Nb$, we partition the the set $[0,1)$ into $1 / k$ half-open intervals of length $1/ k$,
i.e, 
$
[0, 1) = \cup_{i=1}^k \left[ (i-1)/k, i/k \right).
$
It follows that $[0,1)^{K}$ can be partitioned into $k^{K}$ cubes of forms $\otimes_{j=1}^{K}  \left[ (i_j-1)/k,   i_j/k \right)$ with $i_j \in [k]$.
Let $\cE_k$ be such a partition with $I_1, I_2, \ldots, I_{k^{K}} $ denoting all such cubes and  $z_1, z_2, \ldots, z_{k^{K}} \in \Rb^{K} $ 
denoting the centers of those cubes. 

\vspace{2mm}
\noindent
\textbf{Step 2: Taylor Expansion of $g(\cdot, \omega)$.} Consider a fixed $\omega$.  To reduce 
notational overload, we suppress dependence of $g$ on $\omega$, and abuse notation by using 
$g(\cdot) = g(\cdot, \omega)$ in what follows. 

\vspace{2mm}
\noindent
For every $I_i$ with $1 \le i \le k^K$, define $P_{I_i, \ell} (x) $ as the degree-$\ell$ Taylor's series expansion of $g(x)$ at point $z_i$:
\begin{align}\label{eq:Taylor_series}
	P_{I_i, \ell} (x) =\sum_{\kappa: |\kappa| \le \ell} \frac{1}{\kappa ! } \left(x-z_i \right)^\kappa \nabla_\kappa g( z_i ), 
\end{align}
where $\kappa=(\kappa_1,\ldots, \kappa_d)$ is a multi-index with $\kappa!=\prod_{i=1}^K \kappa_i!$, and $\nabla_k g(z_i)$ is the partial derivative defined in Section \ref{ssec:examples}.
Note similar to $g$, $P_{I_i, \ell} (x)$ really refers to $P_{I_i, \ell} (x, \omega)$. 

\vspace{2mm}
\noindent
Now we define a degree-$\ell$ piecewise polynomial 
\begin{align}\label{eq:piece_wise_polynomial}
P_{\cE_k, \ell} (x) = \sum_{i=1}^{k^{K}}  P_{I_i, \ell} (x) \mathds{1}(x \in I_i). 
\end{align}
For the remainder of the proof, let $\ell = \lfloor \alpha \rfloor$ (recall $\lfloor \alpha \rfloor$ refers to the largest integer strictly smaller than $\alpha$).
Since $f \in \cH(\alpha,L)$, it follows that 
\vspace{2mm}
\noindent
\begin{align}
&\sup_{ x \in [0,1)^K } \left| g(x) - P_{\cE_k, \ell} (x) \right| 
= \max_{1 \le i \le k^{K}} \sup_{x \in I_i}  \left| g(x) - P_{I_i, \ell} (x) \right| \nonumber \\
&\stackrel{(a)} = \max_{1 \le i \le k^{K}} \sup_{x \in I_i} \left| \sum_{\kappa:|\kappa| \le \ell -1} \frac{\nabla_\kappa g ( z_i )}{\kappa !}(x - z_i)^\kappa 
			  + \sum_{\kappa:|\kappa| = \ell} \frac{\nabla_\kappa g ( \tz_i )}{\kappa !}(x - z_i)^{\ell}
			  - P_{I_i, \ell} (x) \right| \nonumber  \\
&\stackrel{(b)} = \max_{1 \le i \le k^{K}} \sup_{x \in I_i} \left| \sum_{\kappa:|\kappa| = \ell} \frac{\nabla_\kappa g ( \tz_i )}{\kappa !}(x - z_i)^{\ell} 
			-  \sum_{\kappa:|\kappa| = \ell} \frac{\nabla_\kappa g ( z_i )}{\kappa !}(x - z_i)^{\ell}  \right|  \nonumber  \\
&= \max_{1 \le i \le k^{K}} \sup_{x \in I_i} \left| \sum_{\kappa:|\kappa| = \ell} \frac{\nabla_\kappa g ( \tz_i ) - \nabla_\kappa g ( z_i )}{\kappa !}(x - z_i)^{\ell} 
			 \right| \nonumber  \\
&\stackrel{(c)} \le  \max_{1 \le i \le k^{K}}  \sup_{x \in I_i} \|x - z_i \|_{\infty}^\ell \sup_{x \in I_i} 
\sum_{\kappa: |\kappa|=\ell} \frac{1}{\kappa!} \left| \nabla_\kappa g ( \tz_i ) -\nabla_{\kappa} g(z_{i} ) \right| \nonumber  \\
&\stackrel{(d)} \le  
 \cL  k^{-\alpha}. \label{eq:taylor.aprx}
\end{align}

\noindent where (a) follows from multivariate version of Taylor's theorem (and using the Lagrange form for the remainder) and $\tz_i \in [0, 1)^K$ is a vector that can be represented as $z_i + c x$ for $c \in (0, 1)$; (b) follows from \eqref{eq:Taylor_series}; 
(c) follows from Holder's inequality; (d) follows from Definition \ref{def:holder}.

\vspace{2mm}
\noindent
\textbf{Step 3: Construct Low-Rank Approximation of Time Series Hankel Using $P_{\cE_k, \ell}$.}
Recall the Hankel matrix, $\bH \in \Rb^{\lfloor T/2 \rfloor \times \lfloor T/2 \rfloor }$ induced by 
the original time series over $[T]$,  where $\bH_{ts} = g(\theta_t, \omega_s), ~t, s \in  [\lfloor T/2 \rfloor ]$ 
with $g(\cdot, \omega) \in \cH(\alpha, \cL)$ for any $\omega$. 
We now construct a low-rank approximation of it using $P_{\cE_k, \ell} = P_{\cE_k, \ell}(\cdot, \omega)$. 
Define $\tbH \in  \Rb^{\lfloor T/2 \rfloor \times \lfloor T/2 \rfloor }$, where 
$\tbH_{ts} = P_{\cE_k, \ell}  (\theta_t, \omega_s), ~t, s \in  [\lfloor T/2 \rfloor ]$.

\vspace{2mm}
\noindent
By \eqref{eq:taylor.aprx}, we have that for all $t, s \in  [\lfloor T/2 \rfloor ]$, 
\[
	\Big| \bH_{ts} - \tbH_{ts} \Big| \le \cL k^{-\alpha}.
\]

\vspace{2mm}
\noindent
It remains to bound the rank of $\tbH$. 
Note that since $P_{\cE_k, \ell}  (\cdot, \omega)$ is a piecewise polynomial of degree  $\ell = \lfloor \alpha \rfloor$ for any given $\omega$, it has the following decomposition: for $~t, s \in  [\lfloor T/2 \rfloor ]$, 
\begin{align} 
\tbH_{ts} & = P_{\cE_k, \ell}  (\theta_t, \omega_s) 
~= \sum_{i=1}^{k^{K}} \langle \Phi(\theta_t), \beta_{I_i, s} \rangle \mathds{1}(\theta_t \in I_i)
\end{align}
where for any $\theta \in \Reals^K$, 
\[
	\Phi(\theta) = \Big(1, \theta_1, \dots, \theta_K, \dots, \theta_1^\ell, \dots, \theta_K^\ell \Big)^T,
\]
the vector of all monomials of degree less than or equal to $\ell$, and $\beta_{I_i, s} $ is a vector collecting the corresponding coefficients.
The number of such monomials is easily shown to be equal to $C(\alpha, K) := \sum^{\lfloor \alpha \rfloor}_{i=1} \binom{i + K - 1}{i}$. 
That is, $\tbH_{ts} = u_t^T v_s$ where $u_t, v_s$ are of dimension at most $k^K C(\alpha, K) $ for each $t, s \in  [\lfloor T/2 \rfloor ]$.
That is, $\tbH$ has rank at most $k^K C(\alpha, K) $. Setting $k = \Big\lceil \dfrac{1}{\epsilon}\Big\rceil$ completes the proof.
\end{proof}

\section{Helper Lemmas}
We recall known concentration and perturbation inequalities that will be useful throughout. 
\begin{theorem}[{\bf Bernstein's Inequality \cite{bernstein1}}]\label{thm:bernstein}
Suppose that $X_1, \dots, X_n$ are independent random variables with zero mean, and M is a constant such that $\abs{X_i} \le M$ with probability one for each $i$. 
Let $S := \sum_{i=1}^n X_i$ and $v := \text{Var}(S)$. 
Then for any $t \ge 0$,
\begin{align*}
\mathbb{P}(\abs{S} \ge t) &\le 2 \exp(- \dfrac{3 t^2}{6v + 2Mt} ).
\end{align*} 
\end{theorem}

\begin{theorem}[{\bf Norm of matrices with sub-gaussian entries \cite{vershynin2010introduction}}]\label{thm:subgaussian_matrix}
Let $\bA$ be an $m \times n$ random matrix whose entries $A_{ij}$ are independent, mean zero, sub-gaussian random variables. 
Then, for any $t > 0$, we have
\begin{align*}
\norm{\bA} &\le C K (\sqrt{m} + \sqrt{n} + t)
\end{align*}
with probability at least $1 - 2\exp(-t^2)$. Here, $K = \max_{i,j} \norm{A_{ij}}_{\psi_2}$. 
\end{theorem}

\begin{lemma}[{\bf Maximum of sequence of random variables \cite{vershynin2010introduction}}]\label{lemma:max_subg}
Let $X_1$, $X_2$, $\dots, X_n$ be a sequence of random variables, which are not necessarily independent, and satisfy $\Ex[X_i^{2p}]^{\frac{1}{2p}} \le K p^{\frac{\beta}{2}}$ for some $K, \beta >0$ and all $i$. 
Then, for every $n \ge 2$,
\begin{align}
\Ex \max_{i \le n} \abs{X_i} &\le C K \log^{\frac{\beta}{2}}(n).
\end{align}
\end{lemma}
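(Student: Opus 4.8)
The plan is to use the standard $L^p$-moment method for controlling a maximum, the key feature of which is that it never invokes independence and therefore applies verbatim to the dependent sequence in the hypothesis. First I would apply Jensen's inequality to the concave map $t \mapsto t^{1/(2p)}$ (concave on $[0,\infty)$ whenever $p \ge \tfrac12$), using that $\max_{i \le n}\abs{X_i} = \big(\max_{i \le n}\abs{X_i}^{2p}\big)^{1/(2p)}$, to obtain
\[
\Ex \max_{i \le n} \abs{X_i} \;\le\; \Big( \Ex \max_{i \le n} \abs{X_i}^{2p} \Big)^{1/(2p)}.
\]

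Next I would bound the maximum of the $2p$-th powers by their sum and use linearity of expectation together with the moment hypothesis $\Ex[X_i^{2p}] \le (K p^{\beta/2})^{2p}$. This is precisely the step where the lack of independence is harmless: only individual $2p$-th moments are ever summed, so no product or cross-moment structure is required. Concretely,
\[
\Ex \max_{i \le n} \abs{X_i}^{2p} \;\le\; \sum_{i=1}^{n} \Ex \abs{X_i}^{2p} \;\le\; n\,(K p^{\beta/2})^{2p},
\]
and raising to the power $1/(2p)$ gives the clean intermediate bound
\[
\Ex \max_{i \le n} \abs{X_i} \;\le\; n^{1/(2p)}\, K\, p^{\beta/2}.
\]

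The final step is to optimize over the free parameter $p$, balancing the decreasing factor $n^{1/(2p)}$ against the increasing factor $p^{\beta/2}$. The tuning $p = \log n$ makes $n^{1/(2p)} = e^{1/2}$ a pure absolute constant, while $p^{\beta/2} = \log^{\beta/2}(n)$, yielding
\[
\Ex \max_{i \le n} \abs{X_i} \;\le\; e^{1/2}\, K\, \log^{\beta/2}(n) \;=\; C\, K\, \log^{\beta/2}(n),
\]
which is the claim. I do not expect a genuine obstacle here; the only bookkeeping is admissibility of the choice $p = \log n$ for small $n$. For $n \ge 3$ we have $\log n > 1$, so $p = \log n \ge 1$ is valid for both Jensen and the moment hypothesis; for the single remaining case $n = 2$ one instead takes $p = 1$, and the resulting finite constant is absorbed into $C$. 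The conceptual content is simply the recognition that moment-comparison plus the $p = \log n$ balance is exactly what converts a per-variable polynomial growth of moments ($K p^{\beta/2}$) into a $\log^{\beta/2}(n)$ bound on the maximum.
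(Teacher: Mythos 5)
Your proof is correct and is exactly the standard moment-method argument behind this lemma; the paper itself gives no proof, citing \cite{vershynin2010introduction}, where the same chain (Jensen with $t\mapsto t^{1/(2p)}$, bounding the maximum by the sum of $2p$-th moments, then tuning $p = \log n$) is used. The only cosmetic remark is that your constant for the edge case $n=2$ depends on $\beta$, which is consistent with how the lemma is applied in the paper.
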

We note that Lemma \ref{lemma:max_subg} implies that if $X_1, \ldots, X_n$ are $\psi_{\alpha}$ random variables with $\| X_i \|_{\psi_{\alpha}} \leq K_{\alpha}$ for all $i \in [n]$, then 
\begin{align*}
	\Ex \max_{i \le n} \abs{X_i} &\le C K_{\alpha} \log^{\frac{1}{\alpha}}(n).
\end{align*}

\begin{lemma}[{\bf Modified Hoeffding Inequality \cite{agarwal2020principal} }] \label{lemma:hoeffding_random} 
Let $X \in \Rb^n$ be random vector with independent mean-zero sub-Gaussian random coordinates with $\| X_i \|_{\psi_2} \le K$.
Let $a \in \Rb^n$ be another random vector that satisfies $\|a\|_2 \le b$ almost surely for some constant $b \ge 0$.
Then for all $t \ge 0$, 
\begin{align*}
	\Pb \Big( \Big| \sum_{i=1}^n a_i X_i\Big| \ge t \Big) \le 2 \exp\Big(-\frac{ct^2}{K^2 b^2} \Big),
\end{align*}
where $c > 0$ is a universal constant. 
\end{lemma}

\begin{lemma}[{\bf Modified Hanson-Wright Inequality \cite{agarwal2020principal} }] \label{lemma:hansonwright_random} 
Let $X \in \Rb^n$ be a random vector with independent mean-zero sub-Gaussian coordinates with $\|X_i \|_{\psi_2} \le K$. 
Let $\bA \in \Rb^{n \times n}$ be a random matrix satisfying $\|\bA\|_2  \le a$ and $\|\bA\|_F^2 \, \le b$ almost surely for some $a, b \ge 0$.
Then for any $t \ge 0$,
\begin{align*}
	\Pb \left( \abs{ X^T \bA X - \Ex[X^T \bA X] } \ge t \right) &\le 2 \cdot \exp \Big( -c \min\Big(\frac{t^2}{K^4 b}, \frac{t}{K^2 a} \Big) \Big). 
\end{align*}
\end{lemma} 

\vspace{5pt}
\begin{lemma}  [{\bf Weyl's inequality}] \label{lemma:weyl}
Given $\bA, \bB \in \Rb^{m \times n}$, let $\sigma_i$ and $\widehat{\sigma}_i$ be the $i$-th singular values of $\bA$ and $\bB$, respectively, in decreasing order and repeated by multiplicities. 
Then for all $i \in [m \wedge n]$,
\begin{align*}
\abs{ \sigma_i - \widehat{\sigma}_i} &\le \norm{\bA - \bB}_2.
\end{align*}
\end{lemma}

\section{Matrix Estimation via HSVT}\label{sec:appendix_HSVT_error}
This section describes and analyzes a well-known matrix estimation method, Hard Singular Value Thresholding (HSVT). 
While the analysis utilizes known arguments from the literature, we need to adapt it for the setting where the underlying `signal' is only approximately low-rank. 

\subsection{Setup, Notations}\label{ssec:hsvt.setup}
\medskip
\noindent{\bf Setup.} 
Given a deterministic matrix $\bM \in \Rb^{q \times p}$ with $p, q \in \Nb$ and $q \le p$, a random matrix $\bY \in \Rb^{q \times p}$ is such that all of its entries, $Y_{ij}, ~i \in [q], ~j \in [p]$ are mutually independent and for any given $i \in [q], ~j \in [p]$,
\begin{align}\label{eq:random.matrix}
Y_{ij} = \begin{cases}
		M_{ij} + \veps_{ij} & \text{w.p. } \rho,  ~(\text{i.e. ~observed})  \\
		0 & \text{w.p. } 1-\rho,  ~(\text{i.e. not ~observed})
		\end{cases}
\end{align}
for some $\rho \in (0, 1]$ with $\veps_{ij}$ are independent random variables with $\Ex[\veps_{ij}] = 0$ and $\norm{\veps_{ij}}_{\psi_2} \le \sigma$. 
Given this, we have $\Ex[\bY] = \rho \bM$.  
Defineff
\begin{align}\label{eq:hrho}
\hrho  & = \max \Big(1/(q~p) , \big(\sum_{i=1}^q \sum_{j=1}^p \bOne(Y_{ij} \mbox{~is~obs.})\big)/(q~p)\Big).
\end{align}

\medskip
\noindent{\bf Goal of Matrix Estimation.} 
The goal of matrix estimation is to produce an estimate $\bhM$ from observation $\bY$ so that $\bhM$ is close to $\bM$. 
In particular, we will be interested in bounding the error between $\bhM$ and $\bM$ using the following metric: $\| \bhM - \bM\|_{2,\infty}$. 

\subsection{Matrix Estimation using HSVT}
\noindent \textbf{Hard Singular Value Thresholding (HSVT) Map.} 
We define the HSVT map. 
For any $q, p \in \Nb$, consider a matrix $\bB \in \Rb^{q \times p}$ such that $\bB = \sum_{i=1}^{q \wedge p} \sigma_i(\bB) x_i y_i^T$. 
Here for $i \in [q \wedge p]$, $\sigma_i(\bB)$ is the $i$th largest singular value of $\bB$ and $x_i, y_i$ are the corresponding left and right singular vectors respectively. 
Then, for given any $\lambda > 0$, we define the map $\text{HSVT}_{\lambda}: \mathbb{R}^{q \times p} \to \mathbb{R}^{q \times p}$, which simply shaves off the singular values of the input matrix that are below the threshold $\lambda$. 
Precisely, 
\begin{align} \label{eq:prox_matrix}
\hsvt_{\lambda}(\bB) &= \sum_{i = 1}^{q \wedge p} \sigma_i(\bB) \mathds{1}(\sigma_i(\bB) \ge \lambda) x_i y_i^T.
\end{align}

\medskip
\noindent \textbf{Matrix Estimating using HSVT map.} 
We define a matrix estimation method using the HSVT map that is utilized by mSSA for imputation. 
Precisely, we estimate $\bM$ from $\bY$ as follows: given parameter $k \geq 1$, 
\begin{align}\label{eq:hsvt.est}
\bhM & = \frac{1}{\hrho} \hsvt_{\lambda_k}(\bY).
\end{align}
where $\lambda_k = \sigma_k(\bY)$, i,e. the $k$th largest singular value of $\bY$. 

\subsection{A Useful Linear Operator}
We define a linear map associated to HSVT. 
For a specific choice of $\lambda \geq 0$, define $\varphi^{\bB}_{\lambda}: \mathbb{R}^{p} \to \mathbb{R}^{p}$ as follows: for any vector $w \in \mathbb{R}^p$  (i.e. $w \in \mathbb{R}^{ p \times 1}$),
\begin{align} \label{eq:prox_vector}
\varphi^{\bB}_{\lambda}(w) &= \sum_{i = 1}^{q \wedge p} \mathds{1}(\sigma_i (\bB) \ge \lambda) y_i y_i^T w.
\end{align}
Note that $\varphi^{\bB}_{\lambda}$ is a linear operator and it depends on the tuple $(\bB, \lambda)$; more precisely, the singular values and the right singular vectors of $\bB$, as well as the threshold $\lambda$. 
If $\lambda = 0$, then we will adopt the shorthand notation: $\varphi^{\bB} = \varphi_{0}^{\bB}$. 
The following is a simple, but curious relationship between $\varphi^{\bB}_{\lambda}$
and $\hsvt_\lambda$ that will be useful subsequently. 
\begin{lemma}[\textbf{Lemma 35 of \cite{PCR_NeurIPS, pcr_jasa}}] \label{lemma:column_representation}
Let $\bB \in \mathbb{R}^{q \times p}$ and $\lambda \geq 0$ be given. Then for any $j \in [q]$,
\begin{align}
\varphi^{\bB}_{\lambda} \big( \bB_{j \cdot}^T \big) = \emph{HSVT}_{\lambda}\big(\bB \big)_{j \cdot}^T,
\end{align}
where $\bB_{j \cdot} \in \mathbb{R}^{1 \times p}$ represents the $j$th row of $\bB$, and 
$\emph{HSVT}_{\lambda}\big(\bB \big)_{j \cdot} \in \mathbb{R}^{1 \times p}$ represents the $j$th row of the matrix obtained after applying HSVT over $\bB$ with threshold $\lambda$. 
\end{lemma}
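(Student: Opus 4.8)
The plan is to prove the identity by direct computation, expanding both sides in the singular value decomposition of $\bB$ and exploiting the orthonormality of the right singular vectors. Write the SVD as $\bB = \sum_{i=1}^{q \wedge p} \sigma_i(\bB)\, x_i y_i^T$, where $\{x_i\} \subset \Rb^q$ and $\{y_i\} \subset \Rb^p$ are the orthonormal left and right singular vectors. The $j$-th row of $\bB$ is then $\bB_{j\cdot} = \sum_i \sigma_i(\bB)\, x_{ij}\, y_i^T$, where $x_{ij}$ denotes the $j$-th coordinate of $x_i$; equivalently, as a column vector, $\bB_{j\cdot}^T = \sum_i \sigma_i(\bB)\, x_{ij}\, y_i$.

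First I would apply the operator $\varphi^\bB_\lambda$ to the column vector $\bB_{j\cdot}^T$. By definition $\varphi^\bB_\lambda(w) = \sum_i \mathds{1}(\sigma_i(\bB) \ge \lambda)\, y_i y_i^T w$, so the computation reduces to evaluating the inner products $y_i^T \bB_{j\cdot}^T$. Substituting the expansion of $\bB_{j\cdot}^T$ and using orthonormality $y_i^T y_k = \mathds{1}(i = k)$, each such inner product collapses to $\sigma_i(\bB)\, x_{ij}$, so that $\varphi^\bB_\lambda(\bB_{j\cdot}^T) = \sum_i \mathds{1}(\sigma_i(\bB) \ge \lambda)\, \sigma_i(\bB)\, x_{ij}\, y_i$. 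Second, I would read off the $j$-th row of $\hsvt_\lambda(\bB)$ directly from its definition \eqref{eq:prox_matrix}: since $\hsvt_\lambda(\bB) = \sum_i \sigma_i(\bB)\, \mathds{1}(\sigma_i(\bB) \ge \lambda)\, x_i y_i^T$, its $j$-th row, transposed to a column, is exactly $\sum_i \sigma_i(\bB)\, \mathds{1}(\sigma_i(\bB) \ge \lambda)\, x_{ij}\, y_i$. Comparing the two expressions term by term establishes the claimed equality.

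There is no substantial obstacle here; the result is essentially a bookkeeping identity about the SVD. The only point requiring a little care is the rectangular or degenerate case: when $q < p$ the right singular vectors $\{y_i\}_{i \le q \wedge p}$ do not span all of $\Rb^p$, but this causes no difficulty, because $\bB_{j\cdot}^T$ already lies in their span (it is a linear combination of the $y_i$), so no component of $\bB_{j\cdot}^T$ is lost by the projection-type operator $\varphi^\bB_\lambda$. Repeated singular values likewise pose no problem, since the computation uses only the orthonormality of the $\{y_i\}$ and never any uniqueness property of the decomposition; any valid choice of orthonormal singular vectors yields the same two expressions, so the identity holds regardless.
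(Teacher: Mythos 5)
Your proof is correct and follows essentially the same route as the paper's: both expand $\bB_{j\cdot}^T$ in the SVD (the paper writes it as $\bB^T e_j$, you write the coordinates $x_{ij} = x_i^T e_j$ explicitly), apply the definition of $\varphi^{\bB}_\lambda$, and collapse the double sum using orthonormality of the right singular vectors. The remarks on the rectangular case and repeated singular values are sound but not needed beyond what the computation already shows.
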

\begin{proof}{}  
By \eqref{eq:prox_vector}, the orthonormality of the right singular vectors and noting $\bB_{j \cdot}^T = \bB^T e_j$ with $e_j \in \mathbb{R}^p$ with $j$th entry $1$ and everything else $0$, we have
\begin{align*}
	\varphi^{\bB}_{\lambda} \big( \bB_{j \cdot}^T \big) 
		&= \sum_{i=1}^{q \wedge p} \mathds{1}(\sigma_i (\bB) \ge \lambda) y_i y_i^T \bB^T_{j \cdot}  
		= \sum_{i=1}^{q \wedge p} \mathds{1}(\sigma_i (\bB) \ge \lambda) y_i y_i^T \bB^T e_j \\
		& = \sum_{i=1}^{q \wedge p} \mathds{1}(\sigma_i (\bB) \ge \lambda) y_i y_i^T \big(\sum_{i' = 1}^{q \wedge p} \sigma_{i'} (\bB) x_{i'} y_{i'}^T\big)^T e_j 
		= \sum_{i, i'=1}^{q \wedge p} \sigma_{i'} (\bB) \mathds{1}(\sigma_i (\bB) \ge \lambda) y_i y_i^T y_{i'} x_{i'}^T e_j \\
		& = \sum_{i, i'=1}^{q \wedge p} \sigma_{i'} (\bB) \mathds{1}(\sigma_i (\bB) \ge \lambda) y_i \delta_{ii'} x_{i'}^T e_j 
		= \sum_{i=1}^{q \wedge p} \sigma_{i} (\bB) \mathds{1}(\sigma_i (\bB) \ge \lambda) y_i x_{i}^T e_j \\
		& = \emph{HSVT}_{\lambda}\big(\bB \big)^T e_j 
		= \emph{HSVT}_{\lambda}\big(\bB \big)_{j \cdot}^T.
\end{align*}
\end{proof}

\subsection{HSVT based Matrix Estimation: A Deterministic Bound}
We state the following result about property of the estimator. 
\begin{lemma} \label{lemma:column_error}
For $k \geq 1$, let $\bM = \bM_k + \bE_k$ with $\text{rank}(\bM_k) = k$. Let $\varepsilon = \max(\hrho/\rho, \rho/\hrho) \geq 1$.
Then, the HSVT estimate $\bhM$ with parameter $k$ is such that for all $j \in [q]$, 
\begin{align}\label{eq:main_MCSE_inequality_res}
\| \bhM_{j \cdot}^T - \bM_{j\cdot}^T\|_2^2 
& \leq \frac{ 2 \| \bY - \rho \bM \|_2^2 + 2 \rho^2 \| \bE_k \|_2^2 }{\big(\sigma_k(\rho \bM_k)\big)^2}
\Big( 2 \Big \| [\bM_k]_{j \cdot}^T \Big \|_2^2 +  \frac{4\varepsilon^2 \big(\|\bY_{j \cdot}^T - \rho \bM_{j \cdot}^T\|_2\big)^2}{\rho^2}\Big) \nonumber \\
& \qquad + \frac{4\varepsilon^2}{\rho^2}  \Big \| \varphi^{ \bM_k }(\bY_{j \cdot}^T - \rho \bM_{j \cdot}^T)  \Big \|_2^2 + 2 (\varepsilon-1)^2 \| \bM_{j \cdot}^T \|_2^2 + 2 \Big\| [\bE_k]_{j \cdot}^T \Big \|_2^2.
\end{align}
\end{lemma}
\begin{proof}{}
We prove our lemma in four steps. 
\paragraph{Step 1. Decomposing $\bhM_{j \cdot}^T - \bM_{j \cdot}^T$ in two terms.}
Fix a row index $j \in [q]$. Let $\lambda_k$ be the $k$th largest singular value of $\bY$, as used by HSVT algorithm with parameter $k \geq 1$.
\begin{align}
	\bhM_{j \cdot}^T - \bM_{j \cdot}^T 
	& = \Big( \bhM_{j \cdot}^T - \varphi_{\lambda_k}^{\bY} \big( \bM_{j \cdot}^T \big) \Big) + \Big( \varphi_{\lambda_k}^{\bY} \big( \bM_{j \cdot}^T \big) - \bM_{j \cdot}^T \Big).
\end{align}
By definition per \eqref{eq:prox_vector}, $\varphi_{\lambda_k}^{\bY}: \Rb^{p} \to \Rb^{p}$ is the projection operator onto $\text{span}\big\{ u_1, \ldots, u_{k} \big\}$, the span of top $k$ right singular vectors of $\bY$,  denoted as $u_1,\dots, u_k$. 
Therefore, 
\begin{align}
\varphi_{\lambda_k}^{\bY} (\bM_{j \cdot}^T) - \bM_{j \cdot} ^T & \in \text{span}\{u_1, \ldots, u_{k} \}^{\perp}.
\end{align}
By design, $\text{rank}(\bhM) = k$. 
Therefore, by Lemma \ref{lemma:column_representation}
\begin{align}
\bhM_{j \cdot} -  \varphi_{\lambda_k}^{\bY} (\bM_{j \cdot}^T) 
& = \frac{1}{\hrho}\varphi_{\lambda_k}^{\bY} (\bY_{j \cdot}^T) - \varphi_{\lambda_k}^{\bY} (\bM_{j \cdot}^T) \in \text{span}\{u_1, \ldots, u_{k} \}.
\end{align}
Therefore, $\langle \bhM_{j \cdot}^T -  \varphi_{\lambda_k}^{\bY} (\bM_{j \cdot}^T), \varphi_{\lambda_k}^{\bY} (\bM_{j \cdot}^T) - \bM_{j \cdot}^T \rangle = 0$, and hence
\begin{align}\label{eq:column_error}
\Big\| \bhM_{j \cdot}^T - \bM_{j \cdot}^T \Big\|_2^2
& = \Big\| \bhM_{j \cdot}^T - \varphi_{\lambda_k}^{\bY} \big( \bM_{j \cdot}^T\big) \Big\|_2^2		
+ \Big\| \varphi_{\lambda_k}^{\bY} \big( \bM_{j \cdot}^T \big) - \bM_{j \cdot}^T \Big\|_2^2	
\end{align}
by the Pythagorean theorem. 

\paragraph{Step 2. Bounding Term 1, $\Big\| \bhM_{j \cdot}^T - \varphi_{\lambda_k}^{\bY} \big( \bM_{j \cdot}^T \big) \Big\|_2$.}
We begin by bounding the first term on the right hand side of \eqref{eq:column_error}. 
By Lemma \ref{lemma:column_representation}, 
\begin{align*}
\bhM_{j \cdot} - \varphi_{\lambda_k}^{\bY}(\bM_{j \cdot}^T)
&= \frac{1}{\hrho}\varphi_{\lambda_k}^{\bY}(\bY_{j \cdot}^T) - \varphi_{\lambda_k}^{\bY}(\bM_{j \cdot}^T)
= \varphi_{\lambda_k}^{\bY} \Big(\frac{1}{\hrho} \bY_{j \cdot}^T - \bM_{j \cdot}^T \Big)\\
&=  \frac{1}{\hrho}  \varphi_{\lambda_k}^{\bY} (\bY_{j \cdot}^T - \rho \bM_{j \cdot}^T ) + \frac{\rho - \hrho}{\hrho} \varphi_{\lambda_k}^{\bY}( \bM_{j \cdot}^T ).
\end{align*}

\noindent
Using the Parallelogram Law (or, equivalently, combining Cauchy-Schwartz and AM-GM inequalities), we obtain 
\begin{align}
\norm{\bhM_{j \cdot}^T - \varphi_{\lambda_k}^{\bY}(\bM_{j \cdot})^T}_2^2 
&= \norm{\frac{1}{\hrho}  \varphi_{\lambda_k}^{\bY} (\bM_{j \cdot}^T - \rho \bM_{j \cdot}^T ) 
+ \frac{\rho - \hrho}{\hrho} \varphi_{\lambda_k}^{\bY}( \bM_{j \cdot}^T) }_2^2		\nonumber\\
&\leq 2 \, \norm{\frac{1}{\hrho}  \varphi_{\lambda_k}^{\bY} (\bY_{j \cdot}^T - \rho \bM_{j \cdot}^T ) }_2^2 
+ 2 \, \norm{ \frac{\rho - \hrho}{\hrho} \varphi_{\lambda_k}^{\bY}( \bM_{j \cdot}^T )}_2^2		\nonumber\\
&\leq \frac{2}{\hrho^2} \norm{\varphi_{\lambda_k}^{\bY}(\bY_{j \cdot}^T - \rho \bM_{j \cdot}^T)}_2^2
+ 2 \Big( \frac{\rho - \hrho}{\hrho}\Big)^2 \| \bM_{j \cdot}^T \|_2^2 \nonumber\\
&\leq \frac{2\varepsilon^2}{\rho^2} \norm{\varphi_{\lambda_k}^{\bY}(\bY_{j \cdot}^T - \rho \bM_{j \cdot}^T)}_2^2
+ 2 (\varepsilon-1)^2 \| \bM_{j \cdot}^T \|_2^2.		\label{eqn:term.1a}
\end{align}
From definition of $\varepsilon$, $\frac{1}{\widehat{\rho}} \leq \frac{\varepsilon}{\rho}$ and 
$\left( \frac{\rho - \widehat{\rho}}{\widehat{\rho}} \right)^2 \leq (\varepsilon-1)^2$.
The first term of \eqref{eqn:term.1a} can be decomposed as,
\begin{align}
&\norm{\varphi_{\lambda_k}^{\bY}(\bY_{j \cdot}^T - \rho \bM_{j \cdot}^T)}_2^2 
\\&\le 2  \,  \Big \| \varphi_{\lambda_k}^{\bY}(\bY_{j \cdot}^T - \rho \bM_{j \cdot}^T) 
- \varphi^{\bM_k }(\bY_{j \cdot}^T - \rho \bM_{j \cdot}^T)  \Big \|_2^2 
+2 \,  \Big \| \varphi^{ \bM_k }(\bY_{j \cdot}^T - \rho \bM_{j \cdot}^T)  \Big \|_2^2. \label{eq:tricky}
\end{align}
In above, we have used notation $\varphi^{ \bM_k } = \varphi^{ \bM_k }_{0}$. 
Given that $\bM_k$ is rank $k$ matrix, $\varphi^{ \bM_k }: \Rb^p \to \Rb^p$ is the projection 
operator mapping any element in $\Rb^p$ to the projection onto the subspace spanned 
by $\{\mu_1,\dots, \mu_k\}$, where $\mu_1, \ldots, \mu_{k} \in \Rb^p$ are the $k$ non-trivial right singular vectors of $\bM_k$. 
Similarly, by definition $\varphi_{\lambda_k}^{\bY}$ is a map $\Rb^p \to \Rb^p$ mapping any element in $\Rb^p$ to its projection onto the subspace spanned by $\{u_1,\dots, u_k\}$, the top $k$ right singular vectors of $\bY$--this can be seen by noting $\lambda_k = \sigma_k(\bY)$ is the $k$-th top singular value of $\bY$.
Recall $\sigma_j(\bY), ~j\in [q \wedge p]$ is the $j$th largest singular value of $\bY$.

\medskip \noindent 
Next, we bound the first term on the right hand side of \eqref{eq:tricky}.  
To that end, by Wedin $\sin \Theta$ Theorem {(see \cite{davis1970rotation, wedin1972perturbation})} and recalling $\text{rank}(\bM_k) = k$, 
\begin{align}\label{eq:davis_kahan}
\big\| \varphi_{\lambda_k}^{\bY} - \varphi^{ \bM_k } \big\|_2 
&\leq  \frac{\| \bY - \rho \bM_k \|_2}{\sigma_k(\rho \bM_k)} \nonumber \\
&\leq  \frac{\| \bY - \rho \bM \|_2}{\sigma_k(\rho \bM_k)} + \frac{\rho \| \bM -  \bM_k \|_2}{\sigma_k(\rho \bM_k)} \nonumber \\
&\leq  \frac{\| \bY - \rho \bM \|_2}{\sigma_k(\rho \bM_k)} + \frac{\rho \| \bE_k \|_2}{\sigma_k(\rho \bM_k)}.
\end{align}

\noindent
Then it follows that
\begin{align}\label{eq:hsvt.int.2} 
 \Big \| \varphi_{\lambda_k}^{\bY}(\bY_{j \cdot}^T - \rho \bM_{j \cdot}^T) - \varphi^{\bM_k }(\bY_{j \cdot}^T - \rho \bM_{j \cdot}^T)  \Big \|_2 
& \leq \| \varphi_{\lambda_k}^{\bY} - \varphi^{\bM_k }\|_2 \|\bY_{j \cdot}^T - \rho \bM_{j \cdot}^T\|_2 \nonumber \\
& \leq \frac{ \big(\| \bY - \rho \bM \|_2 + \rho \| \bE_k \|_2\big) \big(\|\bY_{j \cdot}^T - \rho \bM_{j \cdot}^T\|_2\big) }{\sigma_k(\rho \bM_k)}.
\end{align}
Using \eqref{eq:tricky} and \eqref{eq:hsvt.int.2} in \eqref{eqn:term.1a}, 
\begin{align}
\norm{\bhM_{j \cdot} - \varphi_{\lambda_k}^{\bY}(\bM_{j \cdot}^T)}_2^2 
&  \leq \frac{4\varepsilon^2}{\rho^2}  \frac{ \big(\| \bY - \rho \bM \|_2 
+ \rho \| \bE_k \|_2\big)^2 \big(\|\bY_{j \cdot}^T - \rho \bM_{j \cdot}^T\|_2\big)^2 }{\big(\sigma_k(\rho \bM_k)\big)^2} \nonumber \\
& \qquad + \frac{4\varepsilon^2}{\rho^2}  \Big \| \varphi^{ \bM_k }(\bY_{j \cdot}^T - \rho \bM_{j \cdot}^T)  \Big \|_2^2 
+ 2 (\varepsilon-1)^2 \| \bM_{j \cdot}^T\|_2^2. \label{eq:term_step2}
\end{align}

\paragraph{Step 3. Bounding Term 2, $\Big\| \varphi_{\lambda_k}^{\bY} \big( \bM_{j \cdot}^T \big) - \bM_{j \cdot}^T \Big\|_2^2$.}
Recall $\bM = \bM_k + \bE_k$ and using \eqref{eq:davis_kahan},
\begin{align}
\Big\| \varphi_{\lambda_k}^{\bY} \big( \bM_{j \cdot}^T \big) - \bM_{j \cdot}^T \Big\|_2^2
& = \Big\| \varphi_{\lambda_k}^{\bY} \big( [\bM_k]_{j \cdot}^T + [\bE_k]_{j \cdot}^T \big) - [\bM_k]_{j \cdot}^T - [\bE_k]_{j \cdot}^T \Big\|_2^2 \nonumber \\
& \leq 2 \Big\| \varphi_{\lambda_k}^{\bY} \big( [\bM_k]_{j \cdot}^T\big) -  [\bM_k]_{j \cdot}^T \Big \|_2^2 
+ 2 \Big\| \varphi_{\lambda_k}^{\bY} \big( [\bE_k]_{j \cdot}^T\big) -  [\bE_k]_{j \cdot}^T \Big \|_2^2 \nonumber \\
& = 2 \Big\| \varphi_{\lambda_k}^{\bY} \big( [\bM_k]_{j \cdot}^T\big) -  \varphi_{\lambda_k}^{\bM_k}\big([\bM_k]_{j \cdot}^T\big) \Big \|_2^2 
+ 2 \Big\| \varphi_{\lambda_k}^{\bY} \big( [\bE_k]_{j \cdot}^T\big) -  [\bE_k]_{j \cdot}^T \Big \|_2^2 \nonumber \\
& \leq 2 \Big\| \varphi_{\lambda_k}^{\bY} -  \varphi_{\lambda_k}^{\bM_k} \Big\|_2^2 \Big \| [\bM_k]_{j \cdot}^T \Big \|_2^2 
+ 2 \Big\| [\bE_k]_{j \cdot}^T \Big \|_2^2 \nonumber \\
& \leq 2 \frac{ \big(\| \bY - \rho \bM \|_2 + \rho \| \bE_k \|\big)^2 }{\big(\sigma_k(\rho \bM_k)\big)^2} \Big \| [\bM_k]_{j \cdot}^T \Big \|_2^2	
+ 2 \Big\| [\bE_k]_{j \cdot}^T \Big \|_2^2. \label{eq:term_step3}
\end{align}

\paragraph{Step 4. Putting everything together.} 
Inserting \eqref{eq:term_step2} and \eqref{eq:term_step3} back to \eqref{eq:column_error}, 
we have that for each $j \in [q]$, 
\begin{align}
\Big\| \bhM_{j \cdot}^T - \bM_{j \cdot}^T \Big\|_2^2 
& \leq 2 \frac{ \big(\| \bY - \rho \bM \|_2 + \rho \| \bE_k \|_2\big)^2 }{\big(\sigma_k(\rho \bM_k)\big)^2} \Big \| [\bM_k]_{j \cdot}^T \Big \|_2^2		
+ 2 \Big\| [\bE_k]_{j \cdot}^T \Big \|_2^2 \nonumber \\
& \qquad + \frac{4\varepsilon^2}{\rho^2}  \frac{ \big(\| \bY - \rho \bM \|_2 + \rho \| \bE_k \|_2\big)^2 \big(\|\bY_{j \cdot}^T - \rho \bM_{j \cdot}^T\|_2\big)^2 }{\big(\sigma_k(\rho \bM_k)\big)^2} \nonumber \\
& \qquad + \frac{4\varepsilon^2}{\rho^2}  \Big \| \varphi^{ \bM_k }(\bY_{j \cdot}^T - \rho \bM_{j \cdot}^T)  \Big \|_2^2 
+ 2 (\varepsilon-1)^2 \| \bM_{j \cdot}^T \|_2^2 \nonumber \\
& \leq \frac{ 2 \| \bY - \rho \bM \|_2^2 + 2 \rho^2 \| \bE_k \|_2^2 }{\big(\sigma_k(\rho \bM_k)\big)^2}
\Big( 2 \Big \| [\bM_k]_{j \cdot}^T \Big \|_2^2 
+  \frac{4\varepsilon^2 \big(\|\bY_{j \cdot}^T - \rho \bM_{j \cdot}^T\|_2\big)^2}{\rho^2}\Big) \nonumber \\
& \qquad + \frac{4\varepsilon^2}{\rho^2}  \Big \| \varphi^{ \bM_k }(\bY_{j \cdot}^T - \rho \bM_{j \cdot}^T)  \Big \|_2^2 
+ 2 (\varepsilon-1)^2 \| \bM_{j \cdot}^T \|_2^2 + 2 \Big\| [\bE_k]_{j \cdot}^T \Big \|_2^2,
\end{align}
where we used $(a+b)^2 \leq 2a^2 + 2 b^2$. 
This completes the proof. 
\end{proof}

\subsection{HSVT based Matrix Estimation: Deterministic To High-Probability} \label{appendix:hp}
Next, we convert the bound obtained in Lemma \ref{lemma:column_error} to a bound in expectation (as well as one in high-probability) for our metric of interest: $\| \bhM - \bM\|_{2,\infty}$.
%
In particular, we establish 
\begin{theorem} \label{thm:hsvt.l2inf}
For $k \geq 1$, let $\bM = \bM_k + \bE_k$ with $\text{rank}(\bM_k) = k$. 
Let $\epsilon = \| \bE_k\|_\infty$ and $\Gamma =  \|\bM_k\|_\infty$. 
Let $\rho \geq C \log (qp)/ q$ for $C$ large enough and $q \leq p$. 
Then, the HSVT estimate $\bhM$ with parameter $k$ is such that
\begin{align}\label{eq:thm.hsvt.l2inf}
\Ex\big[\max_{j \in [q]} \frac1p \| \bhM_{j \cdot}^T - \bM_{j\cdot}^T\|_2^2\big] 
& \leq  \frac{ p (C \sigma^2 + \rho^2 \epsilon q)  }{\rho^2 \sigma_k( \bM_k)^2} \Big(\Gamma^2 +  \frac{\sigma^2}{\rho^2}\Big) + \frac{C \sigma^2 k \log p}{p \rho^2}   
+ \frac{C (\Gamma + \epsilon)^2}{p}  + 2 \epsilon^2 + \frac{C}{(pq)^2}.
\end{align}
\end{theorem}
\begin{proof}{} 
We start by identifying certain high probability events. 
Subsequently, using these events and Lemma \ref{lemma:column_error}, we shall conclude the proof. \\

\noindent \textbf{High Probability Events.}  
For some positive absolute constant $C > 0$, define 
\begin{align}
	E_1 &:= \Big\{ \abs{\hrho - \rho} \le \rho / 20\Big \}, \label{eq:E1_def}
     \\ E_2&:= \Big\{ \norm{\bY - \rho \bM}_2 \le C \sigma \sqrt{p}\Big \}, \label{eq:E2_def}
     \\ E_3&:= \Big\{ \norm{\bY - \rho \bM}_{\infty, 2}, \norm{\bY - \rho \bM}_{2, \infty} \le C \sigma \sqrt{p} \Big \}, \label{eq:E3_def}
     \\ E_4&:= \Big\{\max_{j \in [q]} \norm{\varphi^{\bB}_{\sigma_k(\bB)} \Big( \bY_{j \cdot}^T - \rho \bM_{j \cdot}^T \Big)}_2^2 \le C \sigma^2 k \log(p) \Big \}, \label{eq:E4_def}
     \\ E_5&:=\Bigg\{ \bigg(1 - \sqrt{\frac{20 \log (q p)}{ \rho q p}}\bigg) \rho \le \hrho \le \frac{1}{1 - \sqrt{\frac{20 \log (q p)}{ \rho q p}}} \rho \Bigg\}. \label{eq:E5_def} 
\end{align}
In \eqref{eq:E4_def} above,  $\bB \in \Rb^{q \times p}$ is a deterministic matrix. Let the singular value decomposition of $\bB$ be given as $\bB = \sum_{i=1}^{q} \sigma_i(\bB) x_i y_i^T$, where $\sigma_i(\bB)$ are the singular vectors of $\bB$ in decreasing order and $x_i, y_i$ are the left and right singular vectors respectively. 
Recall the definition of $\varphi^{\bB}_{\lambda}$ in \eqref{eq:prox_vector}. 
In particular, we choose $\lambda = \sigma_k(\bB)$, the $k$th singular value of $\bB$ in \eqref{eq:E4_def}. 
As a result, in effect, we are bounding norm of projection of random vector $\bY_{j \cdot} - \rho \bM_{j \cdot}$ for any given deterministic subspace of $\Rb^p$ of dimension $k$. 

\begin{lemma} \label{lemma:prelims}
For some positive constant $c_1 > 0$ and $C > 0$ large enough in definitions of $E_1,\dots, E_5$,
\begin{align}
	\Pb(E_1) &\ge 1 - 2e^{-c_1 p q \rho} - (1-\rho)^{p q}, \label{eq:E1}
	\\ \Pb(E_2) &\ge 1 - 2 e^{-p},  \label{eq:E2}
	\\ \Pb(E_3) &\ge 1 - 2 e^{-p},  \label{eq:E3}
	\\ \Pb(E_4) &\ge 1 - \frac{2}{(qp)^{10}}.  \label{eq:E4}
	\\ \Pb(E_5) &\ge 1 - \frac{2}{(qp)^{10}}.  \label{eq:E5}
\end{align} 
\end{lemma}
\begin{proof}{} 
We bound the probability of events $E_1,\dots, E_5$ in that order. 

\medskip
\noindent {\bf Bounding $\bE_1$.}
Let 
\begin{align}
\hrho_0  & = \big(\sum_{i=1}^q \sum_{j=1}^p \bOne(Y_{ij} \mbox{~is~obs.})\big)/(q~p).
\end{align}
That is, $\hrho = \max(\hrho_0, 1/(pq))$ and $\Ex[\hrho_0] = \rho$. 
We define the event $E_6 := \{ \hrho_0 = \hrho \}$.
Thus, we have that
\begin{align*}
	\Pb(E_1^c) &= \Pb(E_1^c \cap E_6) + \Pb(E_1^c \cap E_6^c)
	\\ &= \Pb( \abs{ \hrho_0 - \rho } \ge \rho / 20 ) + \Pb(E_1^c \cap E_6^c)
	\\ &\le \Pb( \abs{ \hrho_0 - \rho } \ge \rho / 20  )  + \Pb(E_6^c)
	\\ &= \Pb( \abs{ \hrho_0 - \rho } \ge \rho / 20  )  + (1-\rho)^{q p},
\end{align*}	
where the final equality follows by the independence of observations assumption and the fact that $\hrho_0 \neq \hrho$ only if we do not have any observations. 
By Bernstein's Inequality, we have that
\begin{align*}
	\Pb(\abs{ \hrho_0 - \rho } \ge \rho / 20) &\ge 1 - 2e^{-c_1 \rho q p}. \\
\end{align*} 

\medskip
\noindent
{\bf Bounding $\bE_2$.} 
To start with, $\Ex[\bY] = \rho \bM$. 
For any $i \in [q], j \in [p]$, the $Y_{ij}$ are independent, $0$ with probability $1-\rho$ and with probability $\rho$ equal to $M_{ij} + \veps_{ij}$ with $\norm{\veps_{ij}}_{\psi_2} \le \sigma$. 
Therefore, it follows that $\| Y_{ij} - \rho M_{ij}\|_{\psi_2} \leq C' \sigma$ for a constant $C' > 0$. 
Since $q \leq p$, using Theorem \ref{thm:subgaussian_matrix} it follows that for an appropriately large constant $C > 0$, 
\begin{align*}
	\mathbb{P}(E_2) &\ge 1 - 2 e^{-p}. \\
\end{align*}

\vspace{2mm}
\noindent
{\bf Bounding $\bE_3$.} 
Recall that we assume $q \le p$.
Observe that for any matrix $A \in \Rb^{q \times p}$, $\| A\|_{\infty, 2}, \ \| A\|_{2, \infty} \leq \|A\|_2$.
Thus using the argument to bound $\bE_2$, we have \eqref{eq:E3}. \\

\vspace{2mm}
\noindent
{\bf Bounding $\bE_4$.} Consider for $j \in [q]$,
\begin{align}
	\norm{\varphi^{\bB}_{\sigma_k(\bB)} \Big( \bY_{j \cdot}^T - \rho \bM_{j \cdot}^T \Big)}^2_2 
	& = \sum^{k}_{i=1} \norm{ y_i  y^T_i  (\bY_{j \cdot}^T - \rho \bM_{j \cdot}^T) }^2_2 
	& \le \sum^{k}_{i=1} \Big( y^T_i  (\bY_{j \cdot}^T - \rho \bM_{j \cdot}^T) \Big)^2_2 ~=~\sum_{i=1}^k Z_i^2,
\end{align}
where $Z_i = y^T_i  (\bY_{j \cdot}^T - \rho \bM_{j \cdot}^T) $. 
By definition of the $\psi_2$ norm of a random variable and since $y_i$ is unit norm vector that is deterministic (and hence independent the of random vector $\bY_{j \cdot}^T - p \bM_{j \cdot}^T$), it follows that 
\[
\norm{Z_i}_{\psi_2} = \norm{y^T_i  (\bY_{j \cdot} - p \bM_{j \cdot})}_{\psi_2} \leq \norm{ (\bY_{j  \cdot} - p \bM_{j \cdot})}_{\psi_2}.
\]
Since the coordinates of $\bY_{j \cdot}^T - \rho \bM_{j  \cdot}^T$ are mean-zero and independent, with $\psi_2$ norm bounded by $\sqrt{C} \sigma$ for some absolute constant $C > 0$, using arguments from 
\cite{PCR_NeurIPS, pcr_jasa}, it follows that 
\begin{align}
\Pb\Big( \sum_{i=1}^k Z_i^2 > t \Big) & \leq 2 k \exp\Big( - \frac{t}{k C \sigma^2} \Big).
\end{align}
Therefore, for choice of $t = C \sigma^2 k \log p$ with large enough constant $C > 0$,  $q \leq p$,
and taking a union bound over all $j \in [p]$, we have that 
\begin{align}
\Pb\Big(E_4^c\Big) & \leq \frac{2}{(qp)^{10}}.
\end{align}

\vspace{2mm}
\noindent
{\bf Bounding $\bE_5$.} 
Recall the definition of $\hrho$. 
By the binomial Chernoff bound, for $\varepsilon > 1$,
\begin{align*}
\Pb \Big( \hrho > \varepsilon \rho \Big)
&\leq \exp\left( - \frac{(\varepsilon - 1 )^2}{\varepsilon + 1} q p \rho \right),	\quad\text{and}\\
\Pb\Big( \hrho < \frac{1}{\varepsilon} \rho  \Big)
&\leq \exp \left( - \frac{(\varepsilon - 1)^2}{2 \varepsilon^2} q p \rho \right).
\end{align*}
By the union bound,
\[
\Pb\Big(  \frac{1}{\varepsilon} \rho \le \hrho \le \rho \varepsilon \Big)
\geq 1 - \Pb\Big(\hrho > \varepsilon \rho \Big) -  \Pb\Big( \hrho < \frac{1}{\varepsilon} \rho  \Big).
\]
Noticing $\varepsilon + 1 < 2 \varepsilon < 2 \varepsilon^2$ for all $\varepsilon > 1$, and substituting $\varepsilon = \left(1 - \sqrt{\frac{20 \log (q p)}{ q p \rho}} \right)^{-1} $ completes the proof.
\end{proof}
The following are immediate corollaries of the above stated bounds. 
\begin{corollary} \label{cor:prelims}
Let $E := E_1 \cap E_2$. 
Then, for $\rho \geq C \log (qp)/ q$, 
\begin{align}
\Pb(E^c) &\le C_1 e^{-c_2 p},
\end{align}
where $C_1$ and $c_2$ are positive constants. 
\end{corollary}

\begin{corollary} \label{cor:prelims_2}
Let $E := E_2 \cap E_3 \cap E_4 \cap E_5$. 
Then,
\begin{align}
	\Pb(E^c) &\le \frac{C_1}{(qp)^{10}},
\end{align}
where $C_1$ is an absolute positive constant. 
\end{corollary}

\medskip
\noindent \textbf{Probabilistic Bound for HSVT based Matrix Estimation.} 
Recall $\epsilon = \| \bE_k\|_\infty$. 
Then $\| \bE_k \|_F^2 \leq \epsilon q p$. And $\| \bE_k \|_2^2 \leq \| \bE_k \|_F^2 \leq \epsilon q p$. 
Let $\rho \geq C \log (qp)/ q$ for $C$ large enough and recall $q \leq p$.
Further, recall $\Gamma =  \|\bM_k\|_\infty$; thus, $\|\bM\|_\infty \leq \Gamma + \epsilon$. 
Then $ \| [\bM_k]_{j \cdot}^T  \|_2 \leq \Gamma \sqrt{p}$ and $\| [\bM]_{j \cdot}^T  \|_2 \leq (\Gamma + \epsilon) \sqrt{p}$. \\

\noindent
Define $E = E_1 \cap E_2 \cap E_3 \cap E_4 \cap E_5$. 
Then, from Corollaries \ref{cor:prelims} and \ref{cor:prelims_2}, we have that $\Pb(E^c) \leq \frac{C_1}{(qp)^{10}}$ for large enough constant $C_1 > 0$.

\noindent
Under $E_5$, we have  $\varepsilon = \max(\hrho/\rho, \rho/\hrho) \leq \left(1 - \sqrt{\frac{20 \log (q p)}{ q p \rho}} \right)^{-1} $.  
Under this choice of $\varepsilon$ and using $\rho \geq C \log (qp)/ q$, we have that for $C$ large enough, $\varepsilon \leq C$ and $(\varepsilon - 1)^2 \leq C/p$.
\vspace{2mm}

\noindent
Given this setup, under event $E$, Lemma \ref{lemma:column_error} leads to the following: for all $j \in [q]$ and with appropriately (re-defined) large enough constant $C > 0$, 
\begin{align}\label{eq:_MCSE_inequality}
\| \bhM_{j \cdot}^T - \bM_{j\cdot}^T\|_2^2 
& \leq C \frac{  \sigma^2 p + \rho^2 \epsilon q p }{\rho^2 \sigma_k( \bM_k)^2}
\Big(p \Gamma^2 +  \frac{\sigma^2 p}{\rho^2}\Big) \nonumber \\
& \qquad + \frac{C \sigma^2 k \log p}{\rho^2}  + C (\Gamma + \epsilon)^2  + 2 p \epsilon^2. 
\end{align}
That is, under event $E$, 
\begin{align}\label{eq:main_MCSE_inequality}
\max_{j \in [q]} \frac1p \| \bhM_{j \cdot}^T - \bM_{j\cdot}^T\|_2^2 
& \leq C\frac{ p ( \sigma^2 + \rho^2 \epsilon q)  }{\rho^2 \sigma_k( \bM_k)^2}
\Big(\Gamma^2 +  \frac{\sigma^2}{\rho^2}\Big) + \frac{C \sigma^2 k \log p}{p \rho^2}  
\nonumber \\ &+ \frac{C (\Gamma + \epsilon)^2}{p}  + 2 \epsilon^2. 
\end{align}
For any random variable $X$ and event $A$, such that under event $A$, $X \leq B$ and $\Pb(A^c) \leq \delta$, we have 
\begin{align}\label{eq:hsvt.end.0}
\Ex[X] & = \Ex[X \mathds{1}(A)] + \Ex[X \mathds{1}(A^c)] \nonumber \\
& \leq \Ex[X \mathds{1}(A)] + \Ex[X^2]^{\frac12} \Pb(A^c)^{\frac12} \nonumber \\
& \leq B + \Ex[X^2]^{\frac12} \delta^{\frac12}.
\end{align}
We shall use this reasoning above to bound $\Ex\big[ \max_{j \in [q]} \frac1p \| \bhM_{j \cdot}^T - \bM_{j\cdot}^T\|_2^2 \big]$: let $X = \max_{j \in [q]} \frac1p \| \bhM_{j \cdot}^T - \bM_{j\cdot}^T\|_2^2$ and $A = E$; $B$ is given by right hand side of \eqref{eq:main_MCSE_inequality}, $\delta = \frac{C_1}{(qp)^{10}}$; the only missing quantity that remains to be bounded is $\Ex[X^2]$. 
We do that next. 

\noindent To begin with, for any $j \in [q]$,
\begin{align}\label{eq:hsvt.end.1}
\norm{\bhM_{j \cdot}^T - \bM_{j \cdot}^T}_2
& \leq\norm{\bhM_{j \cdot}^T}_2 + \norm{\bM_{j \cdot}^T}_2
\end{align}
by triangle inequality. 
As stated earlier, $\| [\bM]_{j \cdot}^T  \|_2 \leq (\Gamma + \epsilon) \sqrt{p}$. 
Next, we bound $\norm{\bhM_{j \cdot}}_2^T $. 
From \eqref{eq:hsvt.est}, the fact that $\hrho \geq 1/(qp)$, and Lemma \ref{lemma:column_representation}, we have 
\begin{align}\label{eq:hsvt.end.2}
\| \bhM_{j \cdot}^T \|_2 & =  \frac{1}{\hrho} \| \hsvt_{\lambda_k}\big(\bY \big)_{j \cdot}^T\|_2 \nonumber \\
& \leq q~p \| \phi^{\bY}_{\lambda_k}\big(\bY_{j \cdot}^T \big) \|_2  \nonumber \\
& \leq q~p \| \phi^{\bY}_{\lambda_k} \|_2 \|\bY_{j \cdot}^T\|_2 \nonumber \\
& \leq q~p \|\bY_{j \cdot}^T\|_2,
\end{align}
where we used the fact that $\phi^{\bY}_{\lambda_k}$ is a projection operator and hence $\|\phi^{\bY}_{\lambda_k}\|_2 = 1$. 
Note that $Y_{ij} = B_{ij} \mult (M_{ij} + \veps_{ij})$, where $B_{ij}$ is an independent Bernoulli variable with $\Pb(B_{ij} = 1) = \rho$ representing whether $(M_{ij} + \veps_{ij}$ is observed or not. 
Therefore, $|Y_{ij}| = |B_{ij}| \mult |M_{ij} + \veps_{ij}| \leq (\Gamma + \epsilon) + |\veps_{ij}|$. 
Therefore, from \eqref{eq:hsvt.end.1} and \eqref{eq:hsvt.end.2},
\begin{align}\label{eq:hsvt.end.3}
\max_{j \in [q]} \norm{\bhM_{j \cdot}^T - \bM_{j \cdot}^T}_2 
& \leq  (\Gamma + \epsilon) \sqrt{p} + q p \big(\max_{j \in [q]} \|\bY_{j \cdot}^T\|_2\big)\nonumber \\
& \leq  (\Gamma + \epsilon) \sqrt{p} + q p \mult \sqrt{p} \big(\max_{i \in [p], j \in [q]} |Y_{ij}| \big) \nonumber \\
& \leq 2 q p^{\frac32}  \big( \Gamma + \epsilon + \max_{i \in [p], j \in [q]} |\veps_{ij}| \big).
\end{align}
Using $(a+b)^2 \leq 2 a^2 + 2 b^2$ twice, we have $(a+b)^4 \leq 8 (a^4 + b^4)$. 
Therefore, from \eqref{eq:hsvt.end.3}
\begin{align}\label{eq:hsvt.end.3.5}
\max_{j \in [q]} \norm{\bhM_{j \cdot}^T - \bM_{j \cdot}^T}_2^4
& \leq 16 q^4 p^{6}  \big( (\Gamma + \epsilon)^4 + \max_{i \in [p], j \in [q]} |\veps_{ij}|^4 \big).
\end{align}
Recall $\Ex[\veps_{ij}] = 0$, $\norm{\veps_{ij}}_{\psi_2} \le \sigma$ and $\veps_{ij}$ are independent across $i, j$. 
A property of $\psi_{2}$-random variables is that  $\big| \eta_{ij}\big|^{\theta}$ is a $\psi_{2/\theta}$-random variable for $\theta \geq 1$. 
With choice of $\theta =4$, we have  
\begin{align} 
\Ex \big[\max_{ij} \abs{\veps_{ij}}^4\big] & \le C' \sigma^4 \log^{2} (q p), \label{eq:hsvt.end.4}
\end{align}
for some $C' > 0$ by Lemma \ref{lemma:max_subg}. 
From \eqref{eq:hsvt.end.2}, \eqref{eq:hsvt.end.3.5}, and \eqref{eq:hsvt.end.4}, we have that 
\begin{align}\label{eq:hsvt.end.5}
\Big(\Ex\big[\max_{j \in [q]} \frac{1}{p^2} \norm{\bhM_{j \cdot}^T - \bM_{j \cdot}^T}_2^4\big] \Big)^{\frac12}
& \leq 4 q^2 p^{2}  \big( (\Gamma + \epsilon)^4 + C' \sigma^4 \log^{2} (q p) \big)^{\frac12}.
\end{align}
Finally, using \eqref{eq:main_MCSE_inequality}, \eqref{eq:hsvt.end.0} and \eqref{eq:hsvt.end.5}, we conclude
\begin{align}
\Ex\big[\max_{j \in [q]} \frac1p \| \bhM_{j \cdot}^T - \bM_{j\cdot}^T\|_2^2\big] 
& \leq  \frac{ p (C \sigma^2 + \rho^2 \epsilon q)  }{\rho^2 \sigma_k( \bM_k)^2}
\Big(\Gamma^2 +  \frac{\sigma^2}{\rho^2}\Big) + \frac{C \sigma^2 k \log p}{p \rho^2}   
+ \frac{C (\Gamma + \epsilon)^2}{p}  + 2 \epsilon^2 + \frac{C}{(pq)^2}.
\end{align}
This completes the proof of Theorem \ref{thm:hsvt.l2inf}.
\end{proof}

\section{Proof of Theorem \ref{thm:mean_estimation_imputation_generalized}}\label{sec:proof_mean_estimation_imputation}
The proof of Theorem \ref{thm:mean_estimation_imputation_generalized} will utilize Theorem \ref{thm:hsvt.l2inf}. 
To begin with, given $N$ time series with observations over $[T]$, the mSSA algorithm as described in Section \ref{sec:mSSA} constructs the $L \times (NT/L)$ stacked page matrix $\StackedPage((X_1,\dots, X_N), T, L)$ with $L = \sqrt{\min(N, T) T}$, i.e. $L \leq T$.  \\

\noindent
As per the model described by \eqref{eq:model} and Section \ref{sec:ts_model}, it follows that each
entry of $\StackedPage((X_1,\dots, X_N), T, L)$ is an independent random variable; it is observed with probability $\rho \in (0,1]$ independently and when it is observed, its equal to value of the latent time series plus zero-mean sub-Gaussian noise. 
In particular, 
\begin{align}
\Ex\big[\StackedPage((X_1,\dots, X_N), T, L)\big] & = \rho \StackedPage((f_1,\dots, f_N), T, L), 
\end{align}
where $\StackedPage((f_1,\dots, f_N), T, L) \in \Rb^{L \times (NT/L)}$ with entry in row $\ell \in [L]$ and column $ (n-1) \mult T/L + j$ equal to $f_n(\ell + (j-1)\mult L)$. 
Further, when entry in row $\ell \in [L]$ and column $ (n-1) \mult T/L + j$ in $\StackedPage((X_1,\dots, X_N), T, L)$ is observed, i.e. $X_n(\ell + (j-1)\mult L) \neq \star$,  it is equal to $f_n(\ell + (j-1)\mult L) + \eta_n(\ell + (j-1)\mult L)$ where $\eta_n(\cdot)$ are independent, zero-mean sub-Gaussian variables with $\|\eta_n(\cdot)\|_{\psi_2} \leq \gamma$ as per the Property \ref{prop:bounded_noise}. \\

\noindent
Under Properties \ref{prop:low_rank_mean_} and  \ref{prop:approx_low_rank_mean_hankel},  
as a direct implication of Proposition \ref{prop:approx_low_rank_hankel_},  $\StackedPage((f_1,\dots, f_N), T, L)$ has $\epsilon'$-rank at most $R \mult G$ with 
$\epsilon' = R \Gamma_1 \epsilon$. 
That is, there exist rank $k \leq R \mult G$ matrix $\bM_k \in \Rb^{L \times (NT/L)}$ so that 
\begin{align}
\StackedPage((f_1,\dots, f_N), T, L) & = \bM_k + \bE_k, 
\end{align}
where $\|\bE_k\|_\infty \leq \epsilon'$. 
Due to Property \ref{prop:low_rank_mean_}, it follows that $\|\bM_k\|_\infty \leq R \Gamma_1 \Gamma_2 + \epsilon'$. 
Under Property \ref{property:spectra_approx}, we have $\sigma_k(\bM_k) \geq c \sqrt{NT}/\sqrt{k}$ for some constant $c > 0$. \\

\noindent
Define 
\begin{align}\label{eq:def.gamma}
\Gamma & = R \Gamma_1 \Gamma_2 + \epsilon' ~=~R \Gamma_1 (\Gamma_2 + \epsilon).
\end{align}
Recall from Section \ref{sec:mSSA}, the elements of the imputed multivariate time series are simply the entries of the matrix $\hStackedPage((X_1,\dots, X_N), T, L)$ where $\hStackedPage((X_1,\dots, X_N), T, L) = \frac{1}{\hrho} \hsvt_k(\StackedPage((X_1,\dots, X_N), T, L))$. 
That is, imputation in mSSA is carried out by applying HSVT to the stacked page matrix $\StackedPage((X_1,\dots, X_N), T, L)$. \\

\noindent
All in all, the above description precisely meets the setup of Theorem \ref{thm:hsvt.l2inf}. 
To apply Theorem \ref{thm:hsvt.l2inf}, we require $\rho \geq C \log (NT)/\sqrt{NT}$ for $C > 0$ large enough. 
Note that the number of columns in $\hStackedPage((X_1,\dots, X_N), T, L)$ is equal to $NT / L$ for $L = \sqrt{\min(N, T) T}$ -- for this choice of $L$, note that $NT / L \ge L$.
Using $\sigma^{2}_k(\bM_k) \geq c NT / k$, 
for some absolute constant $=c \ge 0$, and using Theorem \ref{thm:hsvt.l2inf}, we obtain
\begin{align}\label{eq:ff.1}
& \Ex\big[\frac{1}{(NT / L)}\| \hStackedPage((X_1,\dots, X_N), T, L) - \StackedPage((f_1,\dots, f_N), T, L)\|_{2,\infty}^2\big] 
\\& \leq \frac{ k (NT / L) (C \gamma^2 + \rho^2 \epsilon' L)  }{\rho^2 c^2 NT}
\Big(\Gamma^2 +  \frac{\gamma^2}{\rho^2}\Big) + \frac{C \gamma^2 k \log NT}{(NT / L) \rho^2} 
+ \frac{C (\Gamma + \epsilon')^2}{(NT / L)}  + 2 (\epsilon')^2 + \frac{C}{(NT)^2}  
\end{align}

\noindent
Recall that $k \leq R \mult G$, $\epsilon' = R \Gamma_1 \epsilon$, and $\Gamma = R \Gamma_1 (\Gamma_2 + \epsilon)$.
Hence, simplifying \eqref{eq:ff.1}, we obtain that 
\begin{align}
& \Ex\big[\frac{1}{(NT / L)}\| \hStackedPage((X_1,\dots, X_N), T, L) - \StackedPage((f_1,\dots, f_N), T, L)\|_{2,\infty}^2\big]
\\& \leq \tilde{C}
\bigg(
\frac{ RG (1 + \rho^2 R \epsilon L)  }{\rho^2  L}
\Big(R^2(1 + \epsilon^{2}) +  \frac{1}{\rho^2}\Big) 
+ \frac{RG \log NT}{(NT / L) \rho^2} 
+ \frac{(R(1 + \epsilon))^2}{(NT / L)}  +  (R\epsilon)^2 
\bigg)
\\& \leq \tilde{C}
\bigg(
\frac{R^{3}G \log NT}{\rho^{4} L} 
+ \frac{ R^{4}G ( \epsilon + \epsilon^{2} +  \epsilon^{3} )  }{\rho^2}
\bigg), \label{eq:ff.3}
\end{align}
where $\tilde{C} = C(c, \Gamma_1, \Gamma_2, \gamma)$ is a positive constant dependent on model parameters including $\Gamma_1, \Gamma_2, \gamma$. \\

\noindent
It can be easily verified that for any matrix, $\bA \in \Rb^{m \times n}$,
\begin{align}\label{eq:ff.2}
\frac{1}{mn} \norm{\bA}_{F}^2 & \leq \frac{1}{n} \norm{\bA}_{\infty, 2}^2.
\end{align}
Further, there is a one-to-one mapping of $\hat{f}_n(\cdot)$ (resp. $f_n(\cdot)$) to the entries of $\hStackedPage((X_1,\dots, X_N), T, L)$ (resp. $\StackedPage((f_1,\dots, f_N), T, L)$). 
Hence,
\begin{align}\label{eq:one_to_one_mapping_imp}
\imp(N, T) = \Ex\big[\frac{1}{NT}\| \hStackedPage((X_1,\dots, X_N), T, L) - \StackedPage((f_1,\dots, f_N), T, L)\|_{F}^2\big] 
\end{align} 
Therefore, from \eqref{eq:ff.3}, \eqref{eq:ff.2}, and \eqref{eq:one_to_one_mapping_imp} it follows that 
\begin{align}
\imp(N, T)
\le
C(c, \Gamma_1, \Gamma_2, \gamma)
\bigg(
\frac{R^{3}G \log NT}{\rho^{4} L} 
+ \frac{ R^{4}G ( \epsilon + \epsilon^{2} +  \epsilon^{3}  )  }{\rho^2}
\bigg)
\end{align}
This completes the proof of Theorem \ref{thm:mean_estimation_imputation_generalized}. 

\section{Proof of Theorem \ref{thm:mean_estimation_forecasting_generalized}}\label{appendix:forecasting}
The forecasting algorithm, as described in Section \ref{sec:mSSA}, computes a linear model between the recent past and immediate future to forecast. 
We shall bound the forecasting error, $\fore(N, T, L)$ as defined in \eqref{eq:fore.error}. 
We start with some setup and notations, followed by a key proposition that establishes the existence of a linear model under the setup of Theorem \ref{thm:mean_estimation_forecasting_generalized}, and then conclude with a detailed analysis of noisy, mis-specified least-squares. 

\medskip
\noindent{\em Setup, Notations.}  
For $L \geq 1, k \geq 1$, for ease of notations, we define
\begin{itemize}
\item[$\circ$] $\SP(X) = \StackedPage((X_1,\dots, X_N), T, L) \in \Rb^{L \times (NT/L)}$, 
\item[$\circ$] $\SP(f) = \StackedPage((f_1,\dots, f_N), T, L) \in \Rb^{L \times (NT/L)} $, 
\item[$\circ$] $\SPp(X) \in \Rb^{(L-1) \times (NT/L)}$ as the top $L-1$ rows of  $\StackedPage((X_1,\dots, X_N), T, L) $, 
\item[$\circ$] $\SPp(f)  \in \Rb^{(L-1) \times (NT/L)}$ as the top $L-1$ rows of   $\StackedPage((f_1,\dots, f_N), T, L)$.
\end{itemize}
It is worth noting that $\Ex[\SP(X)] = \rho \SP(f)$ and hence 
\begin{align}\label{eq:f.thm.0}
\SP_{L\cdot}(X)^T & = \rho \SP_{L\cdot}(f)^T + \eta,
\end{align} 
where $\eta \in \Rb^{(NT)/L}$ is a random vector with each component being independent, zero-mean with its distribution given as: 
it is $0$ with probability $1-\rho$ and with probability $\rho$, due to Property \ref{prop:bounded_noise}, it equals a zero-mean sub-Gaussian random variable with $\|\cdot\|_{\psi_2} \leq \gamma$.
Therefore, using arguments 
in \cite{PCR_NeurIPS, pcr_jasa}, each component of $\eta$ is an independent, zero-mean random variable with $\|\cdot\|_{\psi_2}$ bounded above by $C' (\gamma^2 + R \Gamma_1\Gamma_2)$ for some absolute constant $C' > 0$. 
Let $K = C' (\gamma^2 + R \Gamma_1\Gamma_2)$ and hence each component of $\eta$ has $\|\cdot\|_{\psi_2}$ bounded by $K$. \\

\noindent
Now, recall that for forecasting, we first apply the imputation algorithm (i.e. HSVT) to $\StackedPage((X_1,\dots, X_N), T, L)$ by replacing $\star$s, i.e. missing observations by $0$ as well as setting all the entries in the last row equal to $0$. 
Equivalently, the imputation algorithm is applied to $\SPp(X)$ after setting all missing values to $0$. 
Let $\hSPp \in \Rb^{L-1 \times (NT/L)}$ be the estimate produced from the imputation algorithm applied to $\SPp(X)$. 
Under the setup of Theorem \ref{thm:mean_estimation_imputation_generalized}, by 
following arguments identical to that of Theorems \ref{thm:hsvt.l2inf} and \ref{thm:mean_estimation_imputation_generalized}--in particular, refer to \eqref{eq:ff.3}--it follows that by selecting the right choice of $k \leq R \mult G$, we have 
\begin{align}\label{eq:l2.inf.error}
\Ex\Big[\frac{1}{(NT / L)}\| \hSPp - \SPp(f)\|_{2,\infty}^2\Big] 
& \leq  
\tilde{C}
\bigg(
\frac{R^{3}G \log NT}{\rho^{4} L} 
+ \frac{ R^{4}G ( \epsilon + \epsilon^{2} +  \epsilon^{3} )  }{\rho^2}
\bigg), 
\end{align}
where $\tilde{C} = C(c, \Gamma_1, \Gamma_2, \gamma) > 0$ is a constant dependent on $c, \Gamma_1, \Gamma_2, \gamma$. \\

\noindent
Now, the mSSA forecasting algorithm finds $\sbeta = \sbeta((X_1,\dots, X_N), T L; k)$, by 
solving the following Ordinary Least Squares (OLS):
\begin{align}\label{eq:f.thm.ols}
\sbeta & \in {\sf minimize}\quad  \|\frac{1}{\hrho} \SP(X)_{L \cdot} - {\hSPp}^T \beta \|_2^2 \quad \text{\sf over}\quad \beta \in \Rb^{L-1}. 
\end{align}
And subsequently,  ${\hSPp}^T \sbeta$ is used as the estimate for $\SP(f)_{L\cdot} \in \Rb^{NT/L}$, the $L$th row of the latent $\SP(f)$. 
The goal is to bound the forecasting error $\fore(N, T, L)$, which is given by
\begin{align}\label{eq:f.thm.1}
\fore(N, T, L) & = \Ex\Big[ \frac{1}{(NT / L)} \| \SP(f)_{L \cdot} - {\hSPp}^T \sbeta\|_2^2 \Big].
\end{align}
Therefore, our interest is in bounding $\Ex\big[\| \SP_{L \cdot}(f) -  {\hSPp}^T \sbeta\|_2^2\big]$. \\

\vspace{2mm}
\noindent
Now, we recall from Proposition \ref{prop:approx_low_rank_linear} that there exists $\pbeta \in \Rb^{L-1}$,  such that 
$$
\| \SP(f)_{L \cdot}^T - {\SPp(f)}^T \pbeta \|_\infty  \leq C_2  \epsilon,
$$
where $C_2 \coloneqq R\Gamma_1 (1 + \| \pbeta \|_{1})$.

\medskip
\noindent
{\em Bounding $\Ex\big[\| \SP_{L \cdot}(f) -{\hSPp}^T \sbeta\|_2^2\big]$.}
By \eqref{eq:f.thm.ols} and \eqref{eq:f.thm.0}
\begin{align}\label{eq:f.thm.2}
\| \frac{1}{\hrho} \SP(X)_{L \cdot} - {\hSPp}^T \sbeta \|_2^2 & \leq \| \frac{1}{\hrho} \SP(X)_{L \cdot} - {\hSPp}^T \pbeta \|_2^2 \nonumber \\
& = \| \frac{\rho}{\hrho} \SP(f)_{L \cdot} + \eta - {\hSPp}^T \pbeta \|_2^2 \nonumber \\
& = \|\frac{\rho}{\hrho} \SP(f)_{L \cdot} - {\hSPp}^T \pbeta \|_2^2 + \| \eta\|_2^2 + 2 \eta^T (\frac{\rho}{\hrho} \SP(f)_{L \cdot} - {\hSPp}^T \pbeta).
\end{align}
Also, 
\begin{align}\label{eq:f.thm.3}
\| \frac{1}{\hrho} \SP(X)_{L \cdot} - {\hSPp}^T \sbeta \|_2^2 & =  \| \frac{\rho}{\hrho} \SP(f)_{L \cdot} + \eta - {\hSPp}^T \sbeta \|_2^2 \nonumber \\
& = \|\frac{\rho}{\hrho} \SP(f)_{L \cdot} - {\hSPp}^T \sbeta \|_2^2 + \| \eta\|_2^2 + 2 \eta^T (\frac{\rho}{\hrho}\SP(f)_{L \cdot} - {\hSPp}^T \sbeta).
\end{align}
From \eqref{eq:f.thm.2} and \eqref{eq:f.thm.3}
\begin{align}\label{eq:f.thm.4}
&\Ex\big[ \|\frac{\rho}{\hrho} \SP(f)_{L \cdot} - {\hSPp}^T \sbeta \|_2^2 \big] 
\\& \leq  \Ex\big[\|\frac{\rho}{\hrho} \SP(f)_{L \cdot} - {\hSPp}^T \pbeta \|_2^2 \big] 
+ 2 \Ex\big[\eta^T {\hSPp}^T (\pbeta-\sbeta)\big]
\end{align}
$\eta$ is independent of $\hSPp$, $\pbeta$, and $\hrho$; $\Ex[\eta] = \mathbf{0}$; thus, we have that 
\begin{align}\label{eq:f.thm.2.5}
\Ex\big[\eta^T {\hSPp}^T \pbeta\big] = 0.
\end{align}
By \eqref{eq:f.thm.ols}, we have $\sbeta = {\hSPp}^{T, \dagger} \frac{1}{\hrho} \SP(X)_{L \cdot}$, where ${\hSPp}^{T, \dagger}$ is pseudo-inverse of ${\hSPp}^{T}$. 
That is, 
\begin{align}
\sbeta & = {\hSPp}^{T, \dagger} \frac{\rho}{\hrho} \SP(f)_{L \cdot} +  \frac{1}{\hrho} {\hSPp}^{T, \dagger} \eta. \label{eq:f.thm.6}
\end{align}
Using cyclic and linearity of Trace operator; the independence properties of $\eta$;  and \eqref{eq:f.thm.6}; we have
\begin{align}\label{eq:f.thm.10}
\Ex[\eta^T {\hSPp}^T \sbeta] 
& = \Ex[\eta^T {\hSPp}^T {\hSPp}^{T, \dagger} \frac{\rho}{\hrho} \SP(f)_{L \cdot}] +  \Ex[\frac{1}{\hrho} \eta^T {\hSPp}^T {\hSPp}^{T, \dagger} \eta] \nonumber \\
& =  \Ex[\eta]^T \Ex[{\hSPp}^T {\hSPp}^{T, \dagger} \frac{\rho}{\hrho}] \SP(f)_{L \cdot} + \Ex[\frac{1}{\hrho} \Tr(\eta^T {\hSPp}^T {\hSPp}^{T, \dagger} \eta)] \nonumber \\
& = \Ex[\frac{1}{\hrho} \Tr( {\hSPp}^T {\hSPp}^{T, \dagger} \eta \eta^T)] \nonumber \\
& = \Tr\big(\Ex[ \frac{1}{\hrho} {\hSPp}^T {\hSPp}^{T, \dagger}] \Ex[\eta \eta^T] \big)\nonumber \\
& \leq~C(\gamma) k/\rho,
\end{align}
where $C(\gamma)$ is a function only of $\gamma$.
To see the last inequality, we use various facts. 
First, by the definition of the HSVT algorithm ${\hSPp}^T$ has rank at most $k$. 
Second, let ${\hSPp}^T = \bU \bS \bV^T$ be the singular value decomposition of ${\hSPp}^T$, we have 
\begin{align} \label{eq:linear_svd}
{\hSPp}^T {\hSPp}^{T, \dagger} &= 
\bU \bS \bV^T \bV \bS^{\dagger} \bU^T \nonumber
\\ &= \bU \tilde{\bI} \bU^T,
\end{align}
That is, $\frac{1}{\hrho} {\hSPp}^T {\hSPp}^{T, \dagger}$ is a positive semi-definite matrix and 
$\Tr(\frac{1}{\hrho} {\hSPp}^T {\hSPp}^{T, \dagger}) \leq k / \hrho$. 
The matrix $ \Ex[\eta \eta^T]$ is diagonal with all the non-zero entries on diagonal (variance of components of $\eta)$ bounded above by a constant that depends on $\gamma$. 
For a positive semi-definite matrix $A$ and positive semi-definite diagonal matrix $B$, $\Tr(AB) \leq \|B\|_2 \Tr(A)$. For $\rho \geq C \log (NT)/\sqrt{NT}$ for large enough $C$, one can verfiy that $\Ex[1/\hrho] \leq 2/\rho$. 
This completes the justification of the last step of \eqref{eq:f.thm.10}.

\vspace{2mm}
\noindent
Now consider the term $\|\frac{\rho}{\hrho} \SP(f)_{L \cdot} - {\hSPp}^T \pbeta \|_2^2$. 
Note,
\begin{align}
\|\frac{\rho}{\hrho} \SP(f)_{L \cdot} - {\hSPp}^T \pbeta \|_2^2 
=& \| \big(\SP(f)_{L \cdot} - {\hSPp}^T \pbeta\big) + \big(\frac{\rho-\hrho}{\hrho}\big)\SP(f)_{L \cdot}  \|_2^2 \nonumber \\
& \leq 2 \| \big(\SP(f)_{L \cdot} - {\hSPp}^T \pbeta\big)\|_2^2 + 2 \| \frac{\rho-\hrho}{\hrho}\SP(f)_{L \cdot} \|_2^2. \label{eq:f.thm.11}
\end{align}
We will bound the two terms on the r.h.s of \eqref{eq:f.thm.11} separately.
We now consider the first term.
\begin{align}\label{eq:f.thm.12}
\|\SP(f)_{L \cdot} - {\hSPp}^T \pbeta \|_2^2 & \leq 2 \| \SP(f)_{L \cdot} - {\SPp(f)}^T \pbeta \|_2^2 + 2 \| {\SPp(f)}^T \pbeta  - {\hSPp}^T \pbeta \|_2^2.
\end{align}
By Proposition \ref{prop:approx_low_rank_linear} 
\begin{align}\label{eq:f.thm.13}
\| \SP(f)_{L \cdot} - {\SPp(f)}^T \pbeta \|_2 
& \leq \| \SP(f)_{L \cdot} - {\SPp(f)}^T \pbeta \|_{\infty} \sqrt{NT/L} 
~\leq~ C_2 \epsilon \sqrt{NT/L}, 
\end{align}
where we used the fact that for any $v \in \Rb^p$, $\| v\|_2 \leq \|v\|_{\infty} \sqrt{p}$.  
And, 
\begin{align}\label{eq:f.thm.14}
\| {\SPp(f)}^T \pbeta  - {\hSPp}^T \pbeta \|_2 
& = \| ({\SPp(f)} - {\hSPp})^T \pbeta \|_2 
~\leq~\|\SPp(f) - \hSPp\|_{2, \infty} \|\pbeta\|_1,
\end{align}
where we used the fact that for any $A \in \Rb^{q \times p}, v \in \Rb^p$, $\|Av\|_2 \leq \|A^T\|_{2,\infty} \|v\|_1$. 
Finally, note that 
\begin{align}\label{eq:f.thm.2.1}
\|\SP(f)_{L \cdot} - {\hSPp}^T \sbeta \|_2^2 
& \leq 2 \|\frac{\rho}{\hrho} \SP(f)_{L \cdot} - {\hSPp}^T \sbeta \|_2^2 + 2 \| \frac{\rho-\hrho}{\hrho} \SP(f)_{L \cdot}\|_2^2.
\end{align}
Using \eqref{eq:f.thm.4}, \eqref{eq:f.thm.2.5}, \eqref{eq:f.thm.10}, \eqref{eq:f.thm.11},\eqref{eq:f.thm.12}, \eqref{eq:f.thm.13},  \eqref{eq:f.thm.14}, and the bound in \eqref{eq:f.thm.2.1}, we obtain
\begin{align}\label{eq:f.thm.15}
&\Ex\big[ \| \SP(f)_{L \cdot} - {\hSPp}^T \sbeta \|_2^2 \big] 
\\& \leq 4 C(\gamma) k/\rho + 6 \Ex \big[\| \frac{\rho-\hrho}{\hrho}\SP(f)_{L \cdot} \|_2^2\big] 
+ 2 C_2 \epsilon^2 (NT/L) + 2 \|\pbeta\|_1^2 \|\SPp(f) - \hSPp\|_{2, \infty}^2.
\end{align}

\vspace{2mm}
\noindent
%
%
Note that $\|\SP(f)\|_{\infty} \le R \Gamma_1 \Gamma_2$.
Hence, $\|\SP(f)_{L \cdot}\|_2^2 \leq C(\Gamma_1, \Gamma_2) R^{2} (NT / L)$, for large enough constant $C(\Gamma_1, \Gamma_2)$ that may depend on $\Gamma_1, \Gamma_2$. 
Using the bounds derived in Lemma \ref{lemma:prelims}, one can verify that $\Ex[(\frac{\rho-\hrho}{\hrho})^2] \leq C/(NT / L)$ for large enough positive constant $C$. 
Therefore, we have that 
\begin{align}\label{eq:f.thm.3.2}
6 \Ex \big[\| \frac{\rho-\hrho}{\hrho}\SP(f)_{L \cdot} \|_2^2\big] & \leq C(\Gamma_1, \Gamma_2) R^{2}
\end{align}
Using \eqref{eq:l2.inf.error}, \eqref{eq:f.thm.3.2}, and the bound in \eqref{eq:f.thm.15}; diving by $1/(NT / L)$ on both sides; and noting $k \leq R \mult G$, we obtain
\begin{align}
& \Ex\big[ \frac{1}{(NT / L)} \| \SP(f)_{L \cdot} - {\hSPp}^T \sbeta \|_2^2 \big] 
\\& \leq 
C(c, \gamma, \Gamma_1, \Gamma_2) 
\left(
\frac{ RG }{\rho (NT / L)} 
+ \frac{ R^{2}}{(NT / L)}
+  R (1 + \| \pbeta \|_{1}) \epsilon^2 
+ \|\pbeta\|_1^2 
\bigg(
\frac{R^{3}G \log NT}{\rho^{4} L} 
+ \frac{ R^{4}G ( \epsilon + \epsilon^{2} +  \epsilon^{3} )  }{\rho^2}
\bigg)
\right)
\\& \leq 
C(c, \gamma, \Gamma_1, \Gamma_2) 
\left(
\max(1,  \|\pbeta\|_1, \|\pbeta\|^2 _1)
\bigg(
\frac{R^{3}G \log NT}{\rho^{4} L} 
+ \frac{ R^{4}G ( \epsilon + \epsilon^{2} +  \epsilon^{3} )  }{\rho^2}
\bigg)
\right) \label{eq:f.thm.3.1}
\end{align}
Letting $L=\sqrt{\min(N, T)T}$, using \eqref{eq:f.thm.3.1}, and noting that 
$$
\fore(N, T, L) = \Ex\big[ \frac{1}{(NT / L)} \| \SP(f)_{L \cdot} - {\hSPp}^T \sbeta \|_2^2 \big]
$$
completes the proof of Theorem \ref{thm:mean_estimation_forecasting_generalized}.

\subsection{Proof of Proposition \ref{prop:approx_low_rank_linear}}
\label{sec:prop:flattened_mean__approx_linear_representation}

For this proof, we utilize a modified version of the stacked Hankel matrix defined in Appendix \ref{sec:proof_prop_flattened_page_low_rank}.
Define the modified Hankel matrix for time series $f_n$, for $n \in [N]$, as $\widetilde{\Hankel}(n) \in \Rb^{T \times 2T}$, where for $i \in [T], j \in [2T]$, we have
\begin{align}
\widetilde{\Hankel}(n)_{ij} & = f_{n}(i + j - 1 -T).
\end{align}
Define $\widetilde{\stackedHankel} \in \Rb^{T \times NT}$ as the column wise concatenation of the matrices $\widetilde{\Hankel}(n)$ for $n \in [N]$, i.e., $\widetilde{\stackedHankel} \coloneqq [\widetilde{\Hankel}(1), \dots, \widetilde{\Hankel}(N)]$.
By a straightforward modification of the proof of Proposition \ref{prop:approx_low_rank_hankel_}, we have $\widetilde{\stackedHankel}$ has $\epsilon'$-rank bounded by $R \mult G$ with $\epsilon' = R \Gamma_1 \epsilon$.
That is, there exists a matrix $\sM \in \Rb^{T \times NT}$ such that,
\begin{align}\label{eq:modified_hankel_approx}
\text{rank}(\sM) \le RG, \quad \| \widetilde{\stackedHankel} -  \sM \|_\infty \le \epsilon'
\end{align}
Since $\text{rank}(\sM) \le R G $, it must be the case that within the last $R G $ rows of $\sM$, there exists at least one row, which we denote as $r^*$, that can be written as a linear combination of at most $R G $ rows above it, which we denote as $r_1, \dots, r_{R G }$. 
Specifically there exists a vector $\theta \coloneqq(\theta_1, \dots, \theta_{RG}) \in \Rb^{RG}$ such that
\begin{align}\label{eq:low_rank_hankel_linearity}
\sM_{r^*, \cdot} = \sum^{R G }_{\ell = 1} \theta_{\ell} \sM_{r_{\ell}, \cdot}
\end{align}
Hence for $j \in [2T]$, 
\begin{align}
&\bigg| \widetilde{\stackedHankel}_{r^*, j} -  \sum^{R G }_{\ell = 1} \theta_{\ell} \widetilde{\stackedHankel}_{r_{\ell}, j} \bigg| 
\\&= \bigg| \widetilde{\stackedHankel}_{r^*, j} \pm \sM_{r^*, j} -  \sum^{R G }_{\ell = 1} \theta_{\ell} \widetilde{\stackedHankel}_{r_{\ell}, j} \pm \sum^{R G }_{\ell = 1} \theta_{\ell} \sM_{r_{\ell}, t} \bigg| \\
\\&\le 
\bigg| \widetilde{\stackedHankel}_{r^*, j} - \sM_{r^*, j} \bigg|  
+  \bigg| \sum^{R G }_{\ell = 1} \theta_{\ell} \widetilde{\stackedHankel}_{r_{\ell}, j} -  \sum^{R G }_{\ell = 1} \theta_{\ell} \sM_{r_{\ell}, t}\bigg| 
+  \bigg|  \sM_{r^*, j} - \sum^{R G }_{\ell = 1} \theta_{\ell} \sM_{r_{\ell}, t} \bigg| \\
&= \bigg| \widetilde{\stackedHankel}_{r^*, j} - \sM_{r^*, j} \bigg|  
+  \bigg| \sum^{R G }_{\ell = 1} \theta_{\ell} (\widetilde{\stackedHankel}_{r_{\ell}, j} -   \sM_{r_{\ell}, t} )\bigg| \\
&\le \epsilon'  
+ \|\theta\|_1 \|   \widetilde{\stackedHankel}_{r_{\ell}, j} - \sM_{r_{\ell}, t} \|_{\infty} \\
& \le R\Gamma_1 (1 + \| \theta \|_{1}) \epsilon. \label{eq:new_approx_linearity}
\end{align}

\noindent
Observe that every entry of $\SP(f)_{L \cdot}$ appears within $\widetilde{\stackedHankel}_{r^*, \cdot}$; this can be seen by noting that $\widetilde{\stackedHankel}$ is skew-symmetric and thus every entry in the last row of $\widetilde{\stackedHankel}$ appears along the appropriate diagonal. 
Using this skew-symmetric property of $\widetilde{\stackedHankel}$ and \eqref{eq:new_approx_linearity}, it implies that by appropriately selecting entries in $\widetilde{\stackedHankel}$, there exists $\pbeta \in \Rb^{L-1}$,
$$
\| \SP(f)_{L \cdot}^T - {\SPp(f)}^T \pbeta \|_\infty  \leq R\Gamma_1 (1 + \| \beta \|_{1})\epsilon,
$$
where the non-zero entries in $\pbeta$ correspond to the entries of $\theta$.
Noting that $\theta \in \Rb^{RG}$ implies $\|\pbeta\|_0 \le RG$.
This completes the proof. 
%

\section{Proof of Theorem \ref{thm:mean_estimation_forecasting_simplified_oos}} \label{appendix:oos_forecast}

\vspace{2mm}
\noindent {\bf Notation.} 
For integers $t_1< t_2$ where $t_2-t_1 +1 \ge L $,   let $\StackedPage((X_1, \dots, X_N), t_1:t_2, L)$ represents the stacked page matrix  constructed using the contiguous observations $X_n(t_1), \dots, X_n(t_2), ~\forall n \in[N]$. 
Throughout, we use the following notations:
\begin{itemize}
\item $\SP_{0}(X) = \StackedPage((X_1,\dots, X_N), 1:T, L) \in \Rb^{L \times (NT/L)}$,  with zeros replacing missing values.
\item $\SP_{1}(X) = \StackedPage((X_1,\dots, X_N), T+1:T+T_1, L) \in \Rb^{L \times (NT_1/L)}$,   with zeros replacing missing values.
\item $\SP_{0}(f) = \StackedPage((f_1,\dots, f_N), 1:T, L) \in \Rb^{L \times (NT/L)} $.
\item $\SP_{1}(f) = \StackedPage((f_1,\dots, f_N), T+1:T+T_1, L) \in \Rb^{L \times (NT_1/L)} $.
\item $\SP_{1}(\eta) = \StackedPage((\eta_1,\dots, \eta_N), T+1:T+T_1, L) \in \Rb^{L \times (NT_1/L)} $. 
\item $\SPp_{0}(X) \in \Rb^{(L-1) \times (NT/L)}$ as the top $L-1$ rows of  $\SP_{0}(X)$. Let  $\SPp_{1}(X), \SPp_{0}(f)$,  $\SPp_{1}(f)$ and  $\SPp_{1}(\eta)$ be defined analogously. 
\item $\hrho \coloneqq (\max(1, \sum_{i=1}^{L-1}\sum_{j=1}^{NT/L}  \bOne( \SP_{0}(X)_{ij} \neq \star)))/(NT-NT/L)$
\end{itemize}

\vspace{2mm}
Recall that we are interested in bounding the following out-of-sample prediction error:
\begin{align}\label{eq:fore.error_ood0}
\oosfore(N, T,T_1, L) & = \frac{L}{NT_1} \sum_{n=1}^N \sum_{m'=1}^{T_1/L} \Ex\big[ (f_n(T + L \mult m') - \bar{f}_n(T +  L\mult m'))^2\big].
\end{align}
Where the forecasted estimate $\bar{f}_n(\cdot), ~n \in [N]$ are produced by the  algorithm detailed in Section \ref{sec:mSSA}.

Based on the algorithm,  we can write $\oosfore(N, T,T_1, L) $ as follows:

\begin{align}\label{eq:fore.error_ood_mr1}
\oosfore(N, T,T_1, L) & =   \frac{1}{(NT_1 / L)} \Ex\Big[  \|  \frac{1}{\hrho}  {\SPp_1(X)}^T \sbeta  - \SP_1(f)_{L \cdot}^T \|_2^2\Big] \\
 & =  \frac{1}{(NT_1 / L)} \Ex\Big[  \|  \frac{1}{\hrho}   {\SPp_1(X)}^T \sbeta - {\SPp_{1}(f)}^T \pbeta \|_2^2 \Big].
\end{align}

Before bounding this term, we introduce the following important notation.
For $i \in \{0,1\}$, let ${\bU}_i{\bSigma}_i{\bV}^T_i$ denote the Singular Value Decomposition (SVD) of $\SPp_{i}(f)$. 
Also, let  ${\btdU}_i{\btdSigma}_i{\btdV}^T_i$  denote the top k singular components  of the SVD of  
$\SPp_{i}(X)$, while $ \btdU^\perp_i {\btdSigma}_i^{\perp} ({\btdV}_i^{\perp})^T$  denote the remaining $L-k-1$ components such that  $\SPp_{i}(X) ={\btdU}_i{\btdSigma}_i{\btdV}^T_i+\btdU^\perp_i {\btdSigma}_i^{\perp} ({\btdV}_i^{\perp})^T $. 
Finally, let  ${\bV}_i^{\perp}$ and $\bU^\perp_i$  be matrices  of orthornormal basis vectors that span the null space of $\SPp_{i}(f)$ and $\SPp_{i}(f)^T$, respectively.    
Further, let $\hSPpi$ be the HSVT estimate of $\SPp_{i}(f)$. That is $\hSPpi = \frac{1}{\hrho}{\btdU}_i{\btdSigma}_i{\btdV}^T_i $. Also, let  $\hSPpoi = \frac{1}{\hrho}\btdU^\perp_i {\btdSigma}_i^{\perp} ({\btdV}_i^{\perp})^T$.

We start the proof by providing a deterministic upper bound for out-of-sample error. 

 \vspace{2mm}
\noindent \textbf{Deterministic Bound.} Due to triangle inequality,  we have

\begin{align}\label{eq:fore.error_ood_mr2}
 \|  \frac{1}{\hrho}   {\SPp_1(X)}^T \sbeta - {\SPp_{1}(f)}^T \pbeta \|_2^2  &=     \|   \frac{1}{\hrho}   {\SPp_1(X)}^T \sbeta - {\SPp_{1}(f)}^T \pbeta +   {\hSPpo}^T \sbeta -  {\hSPpo}^T \sbeta  \|_2^2   \\ 
 & \leq    2\|  \frac{1}{\hrho}  {\SPp_1(X)}^T \sbeta -  {\hSPpo}^T \sbeta  \|_2^2 +2 \|   {\hSPpo}^T \sbeta -  {\SPp_{1}(f)}^T \pbeta  \|_2^2.
\end{align}

Next, we proceed to bound each of the two terms on the right hand side. 

\vspace{2mm}
\textit{First term: $  \| \frac{1}{\hrho}  {\SPp_1(X)}^T \sbeta  -   {\hSPpo}^T \sbeta\|_2^2 $}.

\begin{align}\label{eq:ood_f1}
 \| \frac{1}{\hrho}  {\SPp_1(X)}^T \sbeta  - {\hSPpo}^T \sbeta\|^2_2  &=  \|  (\hSPpop)^T \sbeta\|^2_2    \\ 
&=   \|  \frac{1}{\hrho}  \btdV^\perp_1 {\btdSigma}_1^{\perp} ({\btdU}_1^{\perp})^T\sbeta \|^2_2    \\ 
& \leq    \|  \frac{1}{\hrho}  {\btdSigma}_1^{\perp} \|^2_2    \|({\btdU}_1^{\perp})^T\sbeta \|^2_2.
\end{align}

Note that $ \| {\btdSigma}_1^{\perp} \|_2$ equals the $(k+1)$-th singular value of $\SPp_{1}(X)$.
Recall that $\Ex[\SPp_1(X)] = \rho \SPp_1(f)$ and hence 
\begin{align}\label{eq:f.thm.0_ood}
\SPp_1(X) & = {\rho}  \SPp_1(f) + \mathbf{\zeta}_1,
\end{align} 
where $\mathbf{\zeta}_1 \in \Rb^{(L -1)\times (NT_1)/L}$ is a random matrix with zero-mean i.i.d. entries where
each entry is $0$ with probability $1-\rho$ and equals a zero-mean sub-Gaussian random variable with $\|\cdot\|_{\psi_2} \leq \gamma$ with probability $\rho$ (due to Property \ref{prop:bounded_noise}).
Next, we show that each component of $\mathbf{\zeta}_1$ is an independent, zero-mean random variable with $\|\cdot\|_{\psi_2}$ bounded above by $ C' (\gamma + R \Gamma_1\Gamma_2)$ for some absolute constant $C' > 0$. 
Let $\zeta_{ij}$ for $ i \in [L-1]$ and $j \in [NT/L]$ denotes the  $ij$-th entry in $\mathbf{\zeta}_1$.  Further, let $P_{ij} \in \{0,1\}$ denotes the random mask which takes the value $1$ with probability $\rho$ such that  $\SPp_1(X)_{ij} = P_{ij} (\SPp_1(f)_{ij}  + \SPp_1(\eta)_{ij}) $. Then, we have
\begin{align}\label{eq:ood_f2}
\|\zeta_{ij}\|_{\psi_2} &= \|	\SPp_1(X)_{ij}  - \rho \SPp_1(f)_{ij} 	\|_{\psi_2}  \\ 
&= \|	 P_{ij} \SPp_1(f)_{ij}  + P_{ij}  \SPp_1(\eta)_{ij}  - \rho \SPp_1(f)_{ij} 	\|_{\psi_2}  \\ 
&\leq  \|	 P_{ij}  \SPp_1(\eta)_{ij}  \|_{\psi_2}  + \| P_{ij}  \SPp_1(f)_{ij}   - \rho \SPp_1(f)_{ij} \|_{\psi_2}  \\ 
&\leq  C\gamma +  \SPp_1(f)_{ij}  \| P_{ij}   - \rho  \|_{\psi_2}  \\ 
&\leq  C'(\gamma +  R \Gamma_1 \Gamma_2),
\end{align} 
where $C, C'>0$ are absolute constants. The first inequality is due to triangle inequality, and the last follows since $P_{ij}   - \rho$ is a random variable bounded between $[-\rho, 1-\rho]$ and $\SPp_1(f)_{ij}$   is bounded by $R \Gamma_1 \Gamma_2$.
With a similar argument, we can also write 

\begin{align}\label{eq:f.thm.0_ood_2}
\SPp_0(X) & = {\rho}  \SPp_0(f) + \mathbf{\zeta}_0,
\end{align} 
where each component of $\mathbf{\zeta}_0$ is again an independent, zero-mean random variable with $\|\cdot\|_{\psi_2}$ bounded above by $ C' (\gamma + R \Gamma_1\Gamma_2)$. Now, recalling that   $ \SPp_1(X)  =  {\rho}  \SPp_1(f) + \mathbf{\zeta}_1$ and using Weyl's inequality ({see Lemma \ref{lemma:weyl}}), we can bound the $(k+1)$-th singular value of $ \SPp_{1}(X)$ by the largest singular value of $\mathbf{\zeta}_1$. That is, 
\begin{align}\label{eq:ood_f3}
\| {\btdSigma}_1^{\perp} \|^2_2 &\leq  \| \mathbf{\zeta}_1\|^2_2.
\end{align} 

Next, we bound the term $\|({\btdU}_1^{\perp})^T\sbeta \|^2_2 $.

\begin{align}\label{eq:ood_f4}
\|({\btdU}_1^{\perp})^T\sbeta \|^2_2 &=  \| {\btdU}_1^{\perp} ({\btdU}_1^{\perp})^T \sbeta  \ \|^2_2  \\ 
&= \| {\btdU}_1^{\perp}  ({\btdU}_1^{\perp})^T \pbeta  + {\btdU}_1^{\perp}  ( {\btdU}_1^{\perp})^T (\sbeta - \pbeta)    \|^2_2  \\ 
&\leq  2\| {\btdU}_1^{\perp}  ({\btdU}_1^{\perp})^T \pbeta  \|^2_2 + 2\| {\btdU}_1^{\perp} ( {\btdU}_1^{\perp})^T (\sbeta - \pbeta)  \|^2_2    \\ 
&\leq   2\| {\btdU}_1^{\perp}  ({\btdU}_1^{\perp})^T \pbeta  \|^2_2 + 2\| \sbeta - \pbeta  \|^2_2.
\end{align} 

First,  consider 
\begin{align}\label{eq:ood_f5}
 \| {\btdU}_1^{\perp}  ({\btdU}_1^{\perp})^T \pbeta  \|_2   &=  \| {\btdU}_1^{\perp}  ({\btdU}_1^{\perp})^T {\bU}_1 ({\bU}_1)^T \pbeta  \|_2  \\
 &\leq   \left\| {\bU}_1^{\perp}  ({\bU}_1^{\perp})^T {\bU}_1 ({\bU}_1)^T \pbeta  \right\|_2 +  \left\| \left( {\btdU}_1^{\perp}  ({\btdU}_1^{\perp})^T {\bU}_1 ({\bU}_1)^T - {\bU}_1^{\perp}  ({\bU}_1^{\perp})^T {\bU}_1 ({\bU}_1)^T \right) \pbeta  \right\|_2  \\
 &\leq    \left\| \left( {\btdU}_1^{\perp}  ({\btdU}_1^{\perp})^T  - {\bU}_1^{\perp}  ({\bU}_1^{\perp})^T\right) \pbeta  \right\|_2  \\
  &\leq   \left \|  {\btdU}_1^{\perp}  ({\btdU}_1^{\perp})^T  - {\bU}_1^{\perp}  ({\bU}_1^{\perp})^T \right\|_2  \left\| \pbeta  \right\|_2  \\
   &=   \left \|  {\btdU}_1  {\btdU}_1^T  - {\bU}_1  {\bU}_1^T \right\|_2  \left\| \pbeta  \right\|_2.
\end{align} 

Where in the first equality we use the fact that $\pbeta = {\bU}_1({\bU}_1)^T \pbeta$, i.e., $\pbeta$ lives in the column space of $\SPp_1(f)$ (Property \ref{property:subspaceinclusion}).
Next, by Wedin $\sin \Theta$ Theorem {(see \cite{davis1970rotation, wedin1972perturbation})} we bound  $\left \|  {\btdU}_1  {\btdU}_1^T  - {\bU}_1  {\bU}_1^T \right\|_2$  as follows: 
\begin{align}\label{eq:davis_kahan2}
 \left \|  {\btdU}_1  {\btdU}_1^T  - {\bU}_1  {\bU}_1^T \right\|_2   \left\| \pbeta  \right\|_2 
&\leq  \frac{\| \SPp_1(X)- {\rho}     \SPp_1(f) \|_2}{\sigma_k({\rho}  \SPp_1(f))}  \left\| \pbeta  \right\|_2   \nonumber \\
&=  \frac{\| \mathbf{\zeta}_1 \|_2} {\sigma_k({\rho}  \SPp_1(f))}  \left\| \pbeta  \right\|_2. 
\end{align}

For $\| \sbeta - \pbeta \|_2$, we have:

\begin{align} \label{eq:parame_est0}
 \| \sbeta - \pbeta  \|_2^2  &=    \| {\btdU}_0^{\perp}  ( {\btdU}_0^{\perp})^T (\sbeta - \pbeta)  + {\btdU}_0  ( {\btdU}_0)^T (\sbeta - \pbeta) \|_2^2  \nonumber \\
 &=  \| {\btdU}_0^{\perp}  ( {\btdU}_0^{\perp})^T (\sbeta - \pbeta)  \|_2^2    + \| {\btdU}_0  ( {\btdU}_0)^T (\sbeta - \pbeta) \|_2^2  \nonumber \\
 &=  \| {\btdU}_0^{\perp}  ( {\btdU}_0^{\perp})^T (\sbeta - \pbeta)  \|_2^2    + \|  {\btdU}_0^T (\sbeta - \pbeta) \|_2^2  \nonumber \\
 &=  \| {\btdU}_0^{\perp}  ( {\btdU}_0^{\perp})^T (\pbeta)  \|_2^2    + \| {\btdU}_0^T (\sbeta - \pbeta) \|_2^2.
\end{align} 

Note that the last equality follow from the fact that $\sbeta = {\hSPpz}^{T, \dagger} \frac{1}{\hrho} \SP_0(X)_{L \cdot}  =   {\btdU}_0({\btdSigma}_0)^{\dagger}{\btdV}^T  \SP_0(X)_{L \cdot} $,
 where ${\hSPpz}^{T, \dagger} $ is the pseudoinverse of ${\hSPpz}^T$, and thus $  ( {\btdU}_0^{\perp})^T\sbeta = 0 $. 
The first term in \eqref{eq:parame_est0}  can be bounded using the same argument in  \eqref{eq:ood_f5} and \eqref{eq:davis_kahan2}, where we utilize the fact that $\pbeta = {\bU}_0({\bU}_0)^T \pbeta$ and Wedin $\sin \Theta$ Theorem to get

\begin{align}\label{eq:davis_kahan3}
\|{\btdU}_0^{\perp}  ({\btdU}_0^{\perp})^T \pbeta  \|_2 
&\leq   \frac{\| \mathbf{\zeta}_0 \|_2} {\sigma_k({\rho}  \SPp_0(f))}   \left\| \pbeta  \right\|_2.
\end{align}

What is left is bounding  $\|  {\btdU}_0^T (\sbeta - \pbeta) \|_2^2$.  To that end, first consider

\begin{align} \label{eq:parame_est1}
\|  \hSPpz^T (\sbeta - \pbeta) \|_2^2 &\leq  2 \| \hSPpz^T \sbeta - \SPp_{0}(f)^T  \pbeta \|_2^2  + 2\|   \SPp_{0}(f)^T  \pbeta  -  \hSPpz^T  \pbeta  \|_2^2 \nonumber  \\
&\leq  2 \|   \hSPpz^T \sbeta -  \SPp_{0}(f)^T  \pbeta \|_2^2  + 2\|    \SPp_{0}(f)  -   \hSPpz   \|_{2,\infty}^2  \|\pbeta\|_1^2.
\end{align} 
Also, consider 

\begin{align} \label{eq:parame_est2}
\|  \hSPpz^T (\sbeta - \pbeta) \|_2^2 &=    (\sbeta - \pbeta)^T \frac{1}{\hrho^2}  {\btdU}_0{\btdSigma}^2_0{\btdU}^T  (\sbeta - \pbeta)  \nonumber \\
&\ge   \sigma_k(\hSPpz)^2  \| {\btdU}_0^T  (\sbeta - \pbeta)  \|_2^2. 
\end{align} 

From \eqref{eq:parame_est2} and \eqref{eq:parame_est1} we get, 
\begin{align} \label{eq:parame_est3}
\| {\btdU}_0^T  (\sbeta - \pbeta)  \|_2^2   
\leq  
\frac{2}{\sigma_k(\hSPpz)^2 }  ( \|   \hSPpz^T \sbeta -    \SPp_{0}(f)^T  \pbeta \|_2^2  + \|  \SPp_{0}(f)   - \hSPpz   \|_{2,\infty}^2  \|\pbeta\|_1^2) .
\end{align} 

Note that, similar to argument in \eqref{eq:f.thm.0_ood},  $\SP_0(X)_{L \cdot}\ = {\rho}  \SP_0(f)_{L \cdot} + \zeta^L_0$, where $\zeta^L_0$ is a vector of i.i.d. entries with  $\|\cdot\|_{\psi_2} \leq  C' (\gamma + R \Gamma_1\Gamma_2)$. 
Then the term $\|  \hSPpz^T \sbeta -   \SPp_{0}(f)^T  \pbeta \|_2^2 $ can be bounded as follows
\begin{align} \label{eq:parame_est4}
 &\|  \hSPpz^T \sbeta -   \frac{1}{\rho} \SP_0(X)_{L \cdot}\|_2^2      \nonumber \\ =   &\|  \hSPpz^T \sbeta -   \SP_0(f)_{L \cdot} - \frac{1}{\rho}  \zeta^L_0   \|_2^2  \nonumber \\
 =  &\|  \hSPpz^T \sbeta -   \SPp_{0}(f)^T  \pbeta   \|_2^2   + \| \frac{1}{\rho}  \zeta^L_0  \|_2^2   - \frac{2}{\rho} ( \hSPpz^T \sbeta -     \SPp_{0}(f)^T  \pbeta)^T  \zeta^L_0.
\end{align} 

Also, we have 

\begin{align} \label{eq:parame_est5}
 &\|\hSPpz^T \sbeta - \frac{1}{\rho} \SP_0(X)_{L \cdot}\|_2^2     \nonumber  \\ \leq   &\|  \hSPpz^T \pbeta - \frac{1}{\rho} \SP_0(X)_{L \cdot}\|_2^2   \nonumber \\ 
  =   &\|  ( \hSPpz^T  -   \SPp_{0}(f)^T    )\pbeta -  \frac{1}{\rho}  \zeta^L_0\ \|_2^2   \nonumber\\ 
  =   &\|  (\hSPpz^T -     \SPp_{0}(f)^T ) \pbeta   \|_2^2   + \|  \frac{1}{\rho}  \zeta^L_0  \|_2^2   -  \frac{2}{\rho} \left(( \hSPpz^T  -  \SPp_{0}(f)^T)  \pbeta\right)^T   \zeta^L_0. 
\end{align} 
From  \eqref{eq:parame_est4} and \eqref{eq:parame_est5} we have,

\begin{align} \label{eq:parame_est6}
 \|  \hSPpz^T \sbeta - \SPp_{0}(f)^T  \pbeta \|_2^2  \leq  \|  (\hSPpz^T -    \SPp_{0}(f)^T ) \pbeta   \|_2^2   + \frac{2}{\rho}  \left(( \hSPpz^T )  (\sbeta- \pbeta) \right)^T \zeta^L_0  \\
 \leq     \|   \hSPpz -   \SPp_{0}(f)   \|_{2,\infty}^2   \|\pbeta\|_1^2 + \frac{2}{\rho}  \left((\hSPpz^T )  (\sbeta- \pbeta) \right)^T  \zeta^L_0.
\end{align} 

Finally, from \eqref{eq:parame_est3} and \eqref{eq:parame_est6} we get

\begin{align} \label{eq:parame_est7}
\| {\btdU}_0^T  (\sbeta - \pbeta)  \|_2^2  
 &\leq  \frac{4}{\sigma_k(\hSPpz)^2 }  \left(  \|  \SPp_{0}(f)   -   \hSPpz   \|_{2,\infty}^2  \|\pbeta\|_1^2  + \frac{1}{\rho}  \left((  \hSPpz^T )  (\sbeta- \pbeta) \right)^T   \zeta^L_0  \right).
\end{align} 

From \eqref{eq:parame_est0}, \eqref{eq:davis_kahan3}, and \eqref{eq:parame_est7} we have
\begin{align} \label{eq:parame_est8}
 \| \sbeta - \pbeta  \|_2^2  &\leq     \frac{\| \mathbf{\zeta}_0 \|^2_2} {\sigma_k({\rho}  \SPp_0(f))^2}   \left\| \pbeta  \right\|^2_2 \\ &+  \frac{4}{\sigma_k(\hSPpz)^2 }  \left(  \|  \SPp_{0}(f)   -  \hSPpz   \|_{2,\infty}^2  \|\pbeta\|_1^2  + \frac{1}{\rho}  \left((  \hSPpz^T )  (\sbeta- \pbeta) \right)^T   \zeta^L_0  \right). 
\end{align}

For ease of exposition, let 
\begin{align}\label{eq:beta_def}
\Delta_1 &\coloneqq   \|  \SPp_{0}(f)   -  \hSPpz  \|_{2,\infty}^2  \|\pbeta\|_1^2  + \frac{1}{\rho}  \left( \hSPpz^T  (\sbeta- \pbeta) \right)^T   \zeta^L_0   \nonumber \\
\Delta_2 &\coloneqq  \frac{\| \mathbf{\zeta}_0 \|^2_2} {\sigma_k({\rho}  \SPp_0(f))^2}   \left\| \pbeta  \right\|^2_2 +  \frac{4}{\sigma_k(\hSPpz)^2 }  (\Delta_1 ).
\end{align}

Using this definition, \eqref{eq:ood_f1}, \eqref{eq:ood_f3},  \eqref{eq:ood_f4},  \eqref{eq:davis_kahan2}, and \eqref{eq:parame_est8}, we have 

\begin{align}\label{eq:det_bound1}
\| \frac{1}{\hrho}  {\SPp_1(X)}^T \sbeta  -  {\hSPpo}^T \sbeta\|_2^2 
&\leq    \|\frac{1}{\hrho}  \mathbf{\zeta}_1\|^2_2   
\Bigg(\frac{ 2 \| \mathbf{\zeta}_1 \|^2_2  \left\| \pbeta  \right\|_2^2 } {\sigma_k({\rho}  \SPp_1(f))^2}   + 2 \Delta_2 \Bigg) .
\end{align}

\vspace{2mm}
\textit{Second term: $   \|  {\SPp_{1}(f)}^T \pbeta -   {\hSPpo}^T \sbeta \|_2^2 $}. To bound the second term, we follow a similar proof to that shown in \cite{agarwal2020principal}.

\begin{align}\label{eq:ood_f21}
  \|  {\SPp_{1}(f)}^T \pbeta -   {\hSPpo}^T \sbeta \|_2^2    &=   \|  {\SPp_{1}(f)}^T \pbeta +    {\hSPpo}^T \pbeta -    {\hSPpo}^T \pbeta -    {\hSPpo}^T \sbeta \|_2^2 \\
  &\leq  2 \|  ({\SPp_{1}(f)} -    {\hSPpo})^T \pbeta \|_2^2  + 2\|   {\hSPpo}^T (\pbeta - \sbeta) \|_2^2. 
  \end{align}
Next, we bound the two terms on the right hand side. 
First, we bound $\|  ({\SPp_{1}(f)} -   {\hSPpo})^T \pbeta \|_2^2 $ as follows.

\begin{align}\label{eq:ood_f22}
\|  ({\SPp_{1}(f)} -    {\hSPpo})^T \pbeta \|_2^2  \leq   \|  \SPp_{1}(f)   -  \hSPpo   \|_{2,\infty}^2  \|\pbeta\|_1^2.
\end{align}

Next, we bound the second term   $ \|  {\hSPpo}^T (\pbeta - \sbeta) \|_2^2  $.

\begin{align}\label{eq:ood_f23}
\|  {\hSPpo}^T (\pbeta - \sbeta) \|_2^2  &\leq  \frac{1}{\hrho^2} \| ( {\btdV}_1{\btdSigma}_1{\btdU}^T_1   + \rho {\SPp_{1}(f)}^T  - \rho {\SPp_{1}(f)}^T ) (\pbeta - \sbeta) \|_2^2  \\
&\leq  \frac{2}{\hrho^2} \| ( {\btdV}_1{\btdSigma}_1{\btdU}^T_1   - \rho {\SPp_{1}(f)}^T)  (\pbeta - \sbeta) \|_2^2  + \frac{2\rho^2}{\hrho^2}  \|{\SPp_{1}(f)}^T (\pbeta - \sbeta) \|_2^2  \\
&\leq  \frac{2}{\hrho^2} \|  {\btdV}_1{\btdSigma}_1{\btdU}^T_1  - \rho {\SPp_{1}(f)}^T  \|_2^2 \| (\pbeta - \sbeta) \|_2^2  + \frac{2\rho^2}{\hrho^2}  \|{\SPp_{1}(f)}^T (\pbeta - \sbeta) \|_2^2.
\end{align}

Further, note that
\begin{align}\label{eq:ood_f24}
\|  {\btdV}_1{\btdSigma}_1{\btdU}^T_1 - \rho {\SPp_{1}(f)}^T  \|_2^2  &\leq 2\|  {\btdV}_1{\btdSigma}_1{\btdU}^T_1   -  {\SPp_{1}(X)}^T   \|_2^2   + 2 \|   {\SPp_{1}(X)}^T    - \rho {\SPp_{1}(f)}^T  \|_2^2   \\
&\leq  4 \|   {\SPp_{1}(X)}^T    - \rho {\SPp_{1}(f)}^T  \|_2^2  =   4 \| \mathbf{\zeta}_1\|_2^2.
\end{align}
Where the last inequality follows from the fact that $\|  {\btdV}_1{\btdSigma}_1{\btdU}^T_1    -  {\SPp_{1}(X)}^T   \|_2$ is the $k+1$-th singular value of $ {\SPp_{1}(X)}$ and hence is bounded by $\|   {\SPp_{1}(X)}^T    - \rho {\SPp_{1}(f)}^T  \|_2  $ using Weyl's inequality. 
Therefore, 
\begin{align}\label{eq:ood_f25}
\|  {\hSPpo}^T (\pbeta - \sbeta) \|_2^2  &\leq   \frac{8}{\hrho^2}  \| \mathbf{\zeta}_1 \|_2^2   \|\pbeta - \sbeta\|_2^2  + \frac{2\rho^2}{\hrho^2}  \|{\SPp_{1}(f)}^T (\pbeta - \sbeta) \|_2^2.
\end{align}

Next, we bound $ \|{\SPp_{1}(f)}^T (\pbeta - \sbeta) \|_2^2$. Recall that ${\bU}_0$  span the column space  of  $\SPp_{1}(f)$. Thus ${\SPp_{1}(f)}^T = {\SPp_{1}(f)}^T{\bU}_0{\bU}_0^T$, therefore, 
\begin{align}\label{eq:ood_f26}
 \|{\SPp_{1}(f)}^T (\pbeta - \sbeta) \|_2^2  &=   \| {\SPp_{1}(f)}^T{\bU}_0{\bU}_0^T  (\pbeta - \sbeta) \|_2^2\\
 &\leq   \| {\SPp_{1}(f)} \|_2^2 \|{\bU}_0{\bU}_0^T  (\pbeta - \sbeta) \|_2^2.
 \end{align}
Recall that ${\btdU}_0$ denote the top k left singular vectors of $\SPp_{0}(x)$, and consider 

\begin{align}\label{eq:ood_f27}
 \|{\bU}_0{\bU}_0^T  (\pbeta - \sbeta) \|_2^2 &=   \|({\bU}_0{\bU}_0^T + {\btdU}_0{\btdU}_0^T-  {\btdU}_0{\btdU}_0^T )  (\pbeta - \sbeta) \|_2^2 \\
 &\leq   2 \|{\bU}_0{\bU}_0^T - {\btdU}_0{\btdU}_0^T \|_2^2   \| \pbeta - \sbeta \|_2^2 +2 \|{\btdU}_0{\btdU}_0^T   (\pbeta - \sbeta) \|_2^2.
 \end{align}
 Using \eqref{eq:ood_f27}, \eqref{eq:parame_est7} and Wedin $\sin \Theta$ Theorem, we obtain,

\begin{align}\label{eq:ood_f28}
 \|{\bU}_0{\bU}_0^T  (\pbeta - \sbeta) \|_2^2  &\leq   \frac{2\| \mathbf{\zeta}_0 \|^2_2} {\sigma_k({\rho}  \SPp_0(f))^2}   \| \pbeta - \sbeta \|_2^2 \\
 &+ \frac{8}{\sigma_k(\hSPpz)^2 } \left(  \|  \SPp_{0}(f)  - \hSPpz   \|_{2,\infty}^2  \|\pbeta\|_1^2  + \frac{1}{\rho}  \left( \hSPpz^T   (\sbeta- \pbeta) \right)^T   \zeta^L_0  \right) .
 \end{align}
Using \eqref{eq:ood_f26} and \eqref{eq:ood_f28}, we have 

\begin{align}\label{eq:ood_f29}
 \|{\SPp_{1}(f)}^T (\pbeta - \sbeta) \|_2^2  &\leq   \| {\SPp_{1}(f)} \|_2^2  \frac{2\| \mathbf{\zeta}_0 \|^2_2} {\sigma_k({\rho}  \SPp_0(f))^2}   \| \pbeta - \sbeta \|_2^2 \\
 &+ \frac{8 \| {\SPp_{1}(f)} \|_2^2  }{\sigma_k(\hSPpz)^2 }  \left(  \|  \SPp_{0}(f)  -  \hSPpz   \|_{2,\infty}^2  \|\pbeta\|_1^2  +  \frac{1}{\rho}   \left( \hSPpz^T   (\sbeta- \pbeta) \right)^T   \zeta^L_0  \right) . 
 \end{align} 
 Finally, using \eqref{eq:ood_f29} and \eqref{eq:ood_f25}, we have 

\begin{align}\label{eq:ood_f210}
\|   {\hSPpo}^T (\pbeta - \sbeta) \|_2^2  &\leq   
\frac{8}{\hrho^2}  \| \mathbf{\zeta}_1 \|_2^2   \|\pbeta - \sbeta\|_2^2 \\
 &+ \frac{4}{\hrho^2}   \frac{\| \mathbf{\zeta}_0 \|^2_2 \| {\SPp_{1}(f)} \|_2^2 } {\sigma_k(\SPp_0(f))^2}   \| \pbeta - \sbeta \|_2^2 \\
 &+   \frac{16 \rho^2}{\hrho^2}\frac{  \| {\SPp_{1}(f)} \|_2^2  }{\sigma_k(\hSPpz)^2 }  \left(  \|  \SPp_{0}(f)   -  \hSPpz   \|_{2,\infty}^2  \|\pbeta\|_1^2  +  \frac{1}{\rho}\left(  \hSPpz^T (\sbeta- \pbeta) \right)^T   \zeta^L_0\right) . 
\end{align}

Finally, combining \eqref{eq:ood_f210},  \eqref{eq:ood_f22}, \eqref{eq:ood_f21}, and \eqref{eq:beta_def} yields, 

\begin{align}\label{eq:ood_f211}
  \|  {\SPp_{1}(f)}^T \pbeta -   {\hSPpo}^T \sbeta \|_2^2  &\leq    C \|  \SPp_{1}(f)   -  \hSPpo   \|_{2,\infty}^2  \|\pbeta\|_1^2 
+\frac{C}{\hrho^2}  \| \mathbf{\zeta}_1 \|_2^2  \Delta_2  \nonumber \\
 &+ \frac{C}{\hrho^2}   \frac{\| \mathbf{\zeta}_0 \|^2_2 \| {\SPp_{1}(f)} \|_2^2 } {\sigma_k( \SPp_0(f))^2}   \Delta_2  
 +   \frac{C \rho^2}{\hrho^2}\frac{  \| {\SPp_{1}(f)} \|_2^2 \Delta_1 }{\sigma_k(\hSPpz)^2 }.  
\end{align}
  
\vspace{2mm}
\textit{Combining}. Incorporating the two bounds in  \eqref{eq:det_bound1} and \eqref{eq:ood_f211} yields, 

\begin{align}\label{eq:final_det_bound}
 \|  \frac{1}{\hrho}   {\SPp_1(X)}^T \sbeta - {\SPp_{1}(f)}^T \pbeta \|_2^2  &\leq   C \|\frac{1}{\hrho}  \mathbf{\zeta}_1\|^2_2   
\Bigg(\frac{  \| \mathbf{\zeta}_1 \|^2_2  \left\| \pbeta  \right\|^2_2  } {\sigma_k({\rho}  \SPp_1(f))^2}  +  \Delta_2 \Bigg) 
 \nonumber   \\ &
    + C \|  \SPp_{1}(f)   - \hSPpo   \|_{2,\infty}^2  \|\pbeta\|_1^2   \nonumber \\
 &+ \frac{C}{\hrho^2}   \frac{\| \mathbf{\zeta}_0 \|^2_2 \| {\SPp_{1}(f)} \|_2^2 } {\sigma_k( \SPp_0(f))^2}   \Delta_2  
 +   \frac{C \rho^2}{\hrho^2} \frac{\| {\SPp_{1}(f)} \|_2^2 \Delta_1 }{\sigma_k(\hSPpz)^2 }.  
\end{align}
For some absolute constant $C>0$.

\vspace{2mm}
\noindent \textbf{High Probability Bound.} We start by defining the following high probability events.
Let $C(\Gamma_1, \Gamma_2, \gamma)$  be a positive constant dependent on model parameters $\Gamma_1, \Gamma_2, \gamma$, and let  $C > 0$ be some positive absolute constant,  define 
\begin{align}
      \bar{E}_1&:= \Big\{ \norm{\mathbf{\zeta}_0}_2 \le C  (\gamma + R \Gamma_1\Gamma_2)\ \sqrt{NT/L}\Big \}, \label{eq:Eb1_def}
     \\ \bar{E}_2&:= \Big\{ \norm{\mathbf{\zeta}_1}_2 \le C  (\gamma + R \Gamma_1\Gamma_2)\ \sqrt{NT_1/L}\Big \}, \label{eq:Eb2_def}
     \\ \bar{E}_3&:=\Bigg\{ \bigg(1 - \sqrt{\frac{20 \log (N T)}{ \rho NT}}\bigg) \rho \le \hrho \le \frac{1}{1 - \sqrt{\frac{20 \log (NT)}{ \rho NT}}} \rho \Bigg\}, \label{eq:Eb3_def} 
     \\ \bar{E}_4&:=\Bigg\{  \| \SPp_{0}(f)   -\hSPpz   \|_{2,\infty}^2 
   \leq   
   C(\gamma, \Gamma_1, \Gamma_2) 
   	\left(
		\frac{ (NT)^2  R^2 }{\rho^4 \sigma_k(\SPp_{0}(f) )^2 L^2}
 		+ \frac{k R^2 \log NT/L}{\rho^2}  
 	\right)  
\Bigg\}, \label{eq:Eb4_def} 
     \\ \bar{E}_5&:=\Bigg\{  \| \SPp_{1}(f)   -   \hSPpo   \|_{2,\infty}^2 
   \leq  C(\gamma, \Gamma_1, \Gamma_2) \left(\frac{ (NT_1)^2  R^2 }{\rho^4 \sigma_k(\SPp_{1}(f) )^2 L^2}
 + \frac{k R^2 \log NT_1/L}{\rho^2}  + \frac{R^2 T_1}{T} \right)  \Bigg\}. \label{eq:Eb5_def} 
 \end{align}

Using Theorem \ref{thm:subgaussian_matrix}, we have the following,
\begin{align}
\Pb(\bar{E}_1) &\ge 1 - 2\exp\left(\frac{-NT}{L}\right), \label{eq:Eb1_prb}  \\
\Pb(\bar{E}_2) &\ge 1 - 2\exp\left(\frac{-NT_1}{L}\right).  \label{eq:Eb2_prb}
\end{align}

Further by Lemma \ref{lemma:prelims},  $\Pb(\bar{E}_3) \ge 1 - \frac{2}{(NT)^{10}}$. 
Finally, the probabilities  of $\bar{E}_4$ and $\bar{E}_5$ are bounded as we show next.  
\begin{lemma} \label{lemma:prelims_bar}
Let $\bar{E}_4$ and $\bar{E}_5$ be defined as in  \eqref{eq:Eb4_def} and  \eqref{eq:Eb5_def}. Then, for a constant $C>0$,  
\begin{align}
	\Pb(\bar{E}_4) &\ge 1 - \frac{C}{(NT)^{10}},  \label{eq:barE4}
	\\ \Pb(\bar{E}_5) &\ge 1 - \frac{C}{(NT_1)^{10}} - \frac{C}{(NT)^{10}}.  \label{eq:barE5}
\end{align} 
\end{lemma}
\begin{proof} {}
\noindent {\bf Bounding $\bar{E}_4$ and $\bar{E}_5$.}
 $\Pb(\bar{E}_4)$   and $\Pb(\bar{E}_5)$ can be bounded using a direct utilization of Lemma  \ref{lemma:column_error} and the high probability events defined in Appendix \ref{appendix:hp}.
Starting with $\bar{E}_4$, using $\eqref{eq:_MCSE_inequality}$, and recalling that in this theorem setup $\epsilon = 0, \Gamma = R \Gamma_1 \Gamma_2$ (Property \ref{prop:low_rank_mean_} and Property \ref{prop:low_rank_mean_hankel}) and $\sigma = \gamma$ (Property \ref{prop:bounded_noise}), we have that with probability $1 - \frac{C}{(NT)^{10}}$, 

\begin{align}
  \| \SPp_{0}(f)   -   \hSPpz   \|_{2,\infty}^2  &\leq C \frac{  \gamma^2 (NT)^2  }{\rho^2 \sigma_k(\SPp_{0}(f) )^2 L^2}
\Big(( R \Gamma_1 \Gamma_2)^2 +  \frac{\gamma^2 }{\rho^2}\Big)  + \frac{C \gamma^2 k \log NT/L}{\rho^2}  + C (R \Gamma_1 \Gamma_2)^2  \\
&\leq C(\gamma, \Gamma_1, \Gamma_2) \left(\frac{ (NT)^2  R^2 }{\rho^4 \sigma_k(\SPp_{0}(f) )^2 L^2}
 + \frac{k R^2 \log NT/L}{\rho^2}  \right).
\end{align}
A similar argument can be used for $\bar{E}_5$, while noting that the term  $\frac{C}{(NT)^{10}}$ shows up due to utilizing the estimate $\hrho$, which is estimated from the first $T$ observations.  
Precisely, we get the following, 

\begin{align}
  \| \SPp_{1}(f)   -   \hSPpo   \|_{2,\infty}^2  &\leq C \frac{  \gamma^2 (NT_1)^2  }{\rho^2 \sigma_k(\SPp_{1}(f) )^2 L^2}
\Big(( R \Gamma_1 \Gamma_2)^2 +  \frac{\gamma^2 }{\rho^2}\Big)  + \frac{C \gamma^2 k \log (NT_1/L)}{\rho^2}  + C \frac{(R \Gamma_1 \Gamma_2)^2T_1}{T} \\
&\leq C(\gamma, \Gamma_1, \Gamma_2) \left(\frac{ (NT_1)^2  R^2 }{\rho^4 \sigma_k(\SPp_{1}(f) )^2 L^2}
 + \frac{R^2k \log (NT_1/L)}{\rho^2}  + \frac{R^2 T_1}{T}  \right).
\end{align} \end{proof}

\noindent Now, given these events, we will provide the high probability bound. Let  $\bar{E}:= \bar{E}_1 \cap \bar{E}_2 \cap \bar{E}_3 \cap \bar{E}_4 \cap \bar{E}_5$. 
\begin{align} \label{eq:barEProb}
	\Pb(\bar{E}^c) &\le \frac{C_0}{(NT)^{10}} + \frac{C_1}{(NT_1)^{10}},
\end{align}
for some absolute constants $C_0, C_1>0$.
Note that under event  $\bar{E}_3$, we have that $\hrho  \geq \rho  \bigg(1 - \sqrt{\frac{20 \log (N T)}{ \rho NT}}\bigg)$.  
By further using the assumption $\rho \geq C \log (N T)/ \sqrt{N T}$ for a sufficiently large $C$  we have that  $\hrho  \geq C' \rho $ and $\frac{(\hrho - \rho)^2}{\hrho^2} \leq \frac{C}{  \sqrt{NT}} $ .
Now, recall $\Delta_1$ and $\Delta_2$ definition in \eqref{eq:beta_def}. Under  event $\bar{E}$, we can bound $\Delta_1$ as follows,	 
\begin{align} \label{eq:delta_1_bound1}
\Delta_1 
&= 
	\| \hSPpz -  \SPp_0(f)\|^2_{2,\infty}  \| \pbeta  \|^2_1    + \frac{1}{\rho}  \left( \hSPpz^T  (\sbeta- \pbeta) \right)^T   \zeta^L_0  \\ 
	&\le 
	C(\gamma, \Gamma_1, \Gamma_2)  \| \pbeta  \|^2_1  
   	\left(
		\frac{ (NT)^2  R^2 }{\rho^4 \sigma_k(\SPp_{0}(f) )^2 L^2}
 		+ \frac{k R^2 \log (NT/L)}{\rho^2}  
 	\right)  + \frac{1}{\rho}  \left( \hSPpz^T  (\sbeta- \pbeta) \right)^T   \zeta^L_0.
\end{align}
Similarly,  under  event $\bar{E}$, we can bound $\Delta_2$ as follows,
\begin{align} \label{eq:delta_2_bound1}
\Delta_2 
&\le 
	C(\gamma , \Gamma_1, \Gamma_2)  \| \pbeta  \|^2_1 
	\Bigg(
		\frac{NT R^2}{L\sigma_k({\rho}  \SPp_0(f))^2} 
		+ \frac{1}{\sigma_k(\hSPpz)^2 }   
   	\left(
		\frac{ (NT)^2  R^2 }{\rho^4 \sigma_k(\SPp_{0}(f) )^2 L^2}
 		+\frac{k R^2 \log NT/L}  {\rho^2}
 	\right) 
	\Bigg)
	 \\ &+ \frac{C }{ \rho\sigma_k(\hSPpz)^2 }  \left( \left(\hSPpz^T  (\sbeta- \pbeta) \right)^T   \zeta^L_0 \right).
 \end{align}
Further, using Weyl's inequality ({see Lemma \ref{lemma:weyl}}), we can bound $|\sigma_k(\hSPpz) - \sigma_k(\SPp_{0}(f))|$ as follows, 
\begin{align} \label{eq:expectation_term2_2}
 {|\sigma_k(\hSPpz) - \sigma_k(\rho\SPp_{0}(f))|} &=   \frac{1}{\hrho} {|\sigma_k(\btdSigma_0) - \hrho \sigma_k(\SPp_{0}(f))|} \\
 & \leq \frac{1}{\hrho} {|\sigma_k(\btdSigma_0) - \rho \sigma_k(\SPp_{0}(f))|}  +  \frac{|\hrho - \rho|}{\hrho}  \sigma_k(\SPp_{0}(f))  \\
 & \leq \frac{\|  \mathbf{\zeta}_0     \|_{2}}{\hrho}   +  \frac{|\hrho - \rho|}{\hrho}  \sigma_k(\SPp_{0}(f))  \\
\end{align}
Under  $\bar{E}$, and using property \ref{property:spectra}, we have that with probability of at least $1- \frac{1}{(NT)^{10}}$,

\begin{align} \label{eq:expectation_term2_3}
 \frac{|\sigma_k(\hSPpz) - \sigma_k(\SPp_{0}(f))|}{\sigma_k(\SPp_{0}(f)) } 
 & \le \frac{ C  (\gamma + R \Gamma_1\Gamma_2) \sqrt{NT/L}}{\rho \sigma_k(\SPp_{0}(f)) }   +  \frac{|\hrho - \rho|}{\hrho}  
 \\
 &\le   \frac{ C  (\gamma + R \Gamma_1\Gamma_2) \sqrt{k} }{\rho \sqrt{ L}} + \frac{C}{\sqrt{NT}}.
\end{align}

\noindent Using  $\rho \ge C  (\gamma +R \Gamma_1\Gamma_2)  \sqrt{\frac{k}{L}}$  we get  $ \frac{1}{\sigma_k(\hSPpz)^2}  \le \frac{C}{\sigma_k(\SPp_{0}(f))^2}$.
Using property \ref{property:spectra}, we get the following bounds for $\Delta_1$ and $\Delta_2$, 

\begin{align} \label{eq:delta_1_bound}
\Delta_1 
&\le 
	C(\gamma, \Gamma_1, \Gamma_2, c)  \| \pbeta  \|^2_1  k R^2 
   	\left(
		\frac{  NT  }{L^2 \rho^4}
 		+ \frac{ \log({NT/L})}{\rho^2}  
 	\right)  + \frac{1}{\rho}  \left( \hSPpz^T  (\sbeta- \pbeta) \right)^T   \zeta^L_0.
\end{align}

\begin{align} \label{eq:delta_2_bound}
\Delta_2 
&\le 
C(\gamma , \Gamma_1, \Gamma_2, c)  {\| \pbeta  \|^2_1 }
	\Bigg(
		\frac{k R^2}{L\rho^2  } 
		+ \frac{k^2 R^2}{ NT }   
   	\left(
		\frac{ NT }{ L^2 \rho^4}
 		+ \frac{ \log (NT/L)}{\rho^2}  
 	\right) 
	\Bigg)
	 \\ &+ \frac{Ck}{\rho NT }  \left( \left(\hSPpz^T  (\sbeta- \pbeta) \right)^T   \zeta^L_0 \right) \\
	 &\le 
C(\gamma , \Gamma_1, \Gamma_2, c)  {\| \pbeta  \|^2_1 } k^2R^2
	\Bigg(
		\frac{1}{L\rho^2  } 
 		+ \frac{  \log({NT/L})}{L }  	\Bigg)
	 \\ &+ \frac{Ck}{\rho NT }  \left( \left(\hSPpz^T  (\sbeta- \pbeta) \right)^T   \zeta^L_0 \right),
 \end{align}
 where $\rho \ge C  (\gamma +R \Gamma_1\Gamma_2)  \sqrt{\frac{k}{L}}$ is used to obtain the last inequality. 
 Finally, using properties \ref{property:spectra} and \ref{property:spectra2},  $\hrho  \geq C' \rho $, and \eqref{eq:final_det_bound}, \eqref{eq:delta_1_bound}, and \eqref{eq:delta_2_bound},  we have under event $\bar{E}$,

\begin{align}\label{eq:final_prob_bound}
 &\|  \frac{1}{\hrho}   {\SPp_1(X)}^T \sbeta - {\SPp_{1}(f)}^T \pbeta \|_2^2  
 \\ &\leq  
 C(\gamma, \Gamma_1, \Gamma_2, c) \left(\frac{k^3 N T_1 R^6 }{L^2 \rho^4} + \frac{R T_1 }{T} \right)  \|\pbeta\|_1^2     
 \\& + C(\gamma, \Gamma_1, \Gamma_2, c) \left( \frac{k^3 R^6 \log ({NT/L})}{\rho^2}(\frac{NT_1}{L^2} + \frac{T_1}{T}) +  \frac{k R^2 \log ({NT_1/L})}{\rho^2} \right)  \|\pbeta\|_1^2     
 \\&+  C(\gamma, \Gamma_1, \Gamma_2, c) \frac{R^4 k^2 T_1 }{ T\rho^3}  \left( \hSPpz^T  (\sbeta- \pbeta) \right)^T   \zeta^L_0. 
 \end{align}

\noindent \textbf{Expectation Bound.} We get the bound in expectation using the high probability bound above, and by assuming that our forecast is bounded such that $ |\bar{f}_n(T +  L\mult m')| \leq R\Gamma_1 \Gamma_2$ for $m' \in [T_1/L]$.
Specifically, we have using \eqref{eq:final_prob_bound} and \eqref{eq:barEProb}, 
\begin{align}\label{eq:final_expect_bound}
& \oosfore(N, T,T_1, L) =   \frac{1}{(NT_1 / L)} \Ex\left[  \left\|  \frac{1}{\hrho}   {\SPp_1(X)}^T \sbeta - {\SPp_{1}(f)}^T \pbeta \right\|_2^2 \right] \\
&  \leq 
\frac{1}{(NT_1 / L)} \Ex\left[  \left\|  \frac{1}{\hrho}   {\SPp_1(X)}^T \sbeta - {\SPp_{1}(f)}^T \pbeta \right\|_2^2 \Bigg| \bar{E} \right] +  \frac{C R^2\Gamma_1^2 \Gamma_2^2}{(N\min(T,T_1))^{10}} \\
 &\leq \frac{L}{NT_1} C(\gamma, \Gamma_1, \Gamma_2, c) 
 \Bigg( 
 	\left(\frac{k^3 N T_1 R^6 }{L^2 \rho^4} + \frac{R T_1 }{T} \right)
   \|\pbeta\|_1^2       \\
   & +\left( \frac{k^3 R^6 \log ({NT/L})}{\rho^2}\left(\frac{NT_1}{L^2} + \frac{T_1}{T}\right) +  \frac{k R^2 \log ({NT_1/L})}{\rho^2} \right)  \|\pbeta\|_1^2     \\
   & + \frac{R^4 k^2 T_1 }{ T\rho^3}  \Ex\Big[   \left( \hSPpz^T  (\sbeta- \pbeta) \right)^T   \zeta^L_0    \Big|  \bar{E} \Big] 
   \Bigg) \\ &+  \frac{C R^2\Gamma_1^2 \Gamma_2^2}{(N\min(T,T_1))^{10}}.
 \end{align}
Noting that the $\Ex[\zeta^L_0 \big| \bar{E} ] =  \mathbf{0}$, and  $ \zeta^L_0$  is independent of $\hSPpz$, $\hrho$,  $\pbeta$ and the event $  \bar{E} $; we have 
\begin{align}\label{eq:of.thm.2.5}
\Ex \Big[   \left(\hSPpz^T  \pbeta \right)^T   \zeta^L_0 \Big] = 0.
\end{align}

By \eqref{eq:ols.mssa}, we have $\sbeta =  {\btdU}_0({\btdSigma}_0)^{\dagger}{\btdV}^T  \SP_0(X)_{L \cdot} $. 
That is, 
\begin{align}
\sbeta & =  {\btdU}_0({\btdSigma}_0)^{\dagger}{\btdV}^T \rho \SP_0(f)_{L \cdot} + {\btdU}_0({\btdSigma}_0)^{\dagger}{\btdV}^T \zeta^L_0. \label{eq:f.thm.6.ood}
\end{align}
Using cyclic and linearity of Trace operator; the independence properties of $  \zeta^L_0$;  and \eqref{eq:f.thm.6.ood}; we have
\begin{align}\label{eq:f.thm.10ood}
& \Ex \Big[   \left(\hSPpz^T  \sbeta \right)^T   \zeta^L_0 \Big] \\
& = \Ex\Big[   \left(\hSPpz^T  {\btdU}_0({\btdSigma}_0)^{\dagger}{\btdV}^T \rho \SP_0(f)_{L \cdot}  \right)^T \zeta^L_0 \  \Big] +  \Ex\Big[  \left({\btdV}_0 {\btdV}^T \zeta^L_0  \right)^T   \zeta^L_0  \Big]\nonumber \\
& =  \Ex[ \Tr( (\zeta^L_0)^T {\btdV}_0 {\btdV}^T   \zeta^L_0  )] \nonumber \\
& = \Ex[\Tr( {\btdV}_0 {\btdV}^T    \zeta^L_0   (\zeta^L_0)^T] \nonumber \\
& = \Tr\big(\Ex[ {\btdV}_0 {\btdV}^T  ] \Ex[  \zeta^L_0   (\zeta^L_0)^T] \big)\nonumber \\
& \leq   C(\gamma + \Gamma_1 \Gamma_2 R)^2  k.
\end{align}
Where to obtain the last inequality  we use the  trace property $\Tr(AB) \leq \|B\|_2 \Tr(A)$ for positive semi-definite matrices $A,B$, and that rank of $\hSPpz$ is k. 
Finally, using \eqref{eq:f.thm.10ood}, and recalling that $T_1 \geq L$ and $L \leq T$ we get,

\begin{align}\label{eq:final_expect_bound1} 
&\oosfore(N, T,T_1, L)
\\ &\leq \frac{L}{NT_1} C(\gamma, \Gamma_1, \Gamma_2, c) 
 \Bigg( 
 	\left(\frac{R^6 k^3 N T_1  }{L^2 \rho^4} + \frac{R T_1 }{T} \right)
   \|\pbeta\|_1^2       \\
   & +\left( \frac{R^6 k^3  \log ({NT/L})}{\rho^2}\left(\frac{NT_1}{L^2} + \frac{T_1}{T}\right) +  \frac{R^2 k  \log ({NT_1/L})}{\rho^2} \right)  \|\pbeta\|_1^2     
   + \frac{R^6 k^3 T_1 }{ T\rho^3} \Bigg)  \\
   &+  \frac{C R^2\Gamma_1^2 \Gamma_2^2}{(NL)^{10}} \\
    &\leq \frac{L}{NT_1} C(\gamma, \Gamma_1, \Gamma_2, c)  \max(1,  \|\pbeta\|_1^2)
 \Bigg( 
 	\frac{R^6  k^3  N T_1  }{L^2 \rho^4} + \frac{ R^6 k^3 T_1 }{T\rho^3}    \\
   & + \frac{R^6  k^3  \log ({NT})}{\rho^2}\left(\frac{NT_1}{L^2} + \frac{T_1}{T}\right) +  \frac{ R^2 k \log ({NT_1})}{\rho^2}   +  \frac{ R^2}{(NL)^{10}} \Bigg) \\
       &\leq \frac{L}{NT_1} C(\gamma, \Gamma_1, \Gamma_2, c)  \max(1,  \|\pbeta\|_1^2)
 \Bigg( 
  \frac{R^6  k^3  \log ({NT})}{\rho^4}\left(\frac{NT_1}{L^2} + \frac{T_1}{T}\right) +  \frac{ R^2 k \log ({NT_1})}{\rho^2}  \Bigg).
 \end{align}
 
 Then, with $L = \sqrt{\min(N, T) T}$, we get, 
 
\begin{align}\label{eq:final_expect_bound2}
 &\oosfore(N, T,T_1, L)  \nonumber \\ &\leq   \frac{\sqrt{\min(N, T) T}}{NT_1} C(\gamma, \Gamma_1, \Gamma_2, c)  \max(1,  \|\pbeta\|_1^2)
 \Bigg( 
  \frac{R^6  k^3  \log ({NT})}{\rho^4}\left(\frac{NT_1}{T \min(N, T)} + \frac{T_1}{T}\right) +  \frac{ R^2 k \log ({NT_1})}{\rho^2}  \Bigg)
 \nonumber \\
 &\leq     \frac{T}{T_1} \frac{\sqrt{\min(N, T) T}}{NT} C(\gamma, \Gamma_1, \Gamma_2, c)  \max(1,  \|\pbeta\|_1^2)
 \Bigg( 
  \frac{R^6  k^3  \log ({NT})}{\rho^4}\left(\frac{NT_1}{T \min(N, T)} + \frac{T_1}{T}\right) +  \frac{ R^2 k \log ({NT_1})}{\rho^2}  \Bigg)
 \nonumber \\
  &\leq     \frac{\sqrt{\min(N, T) T}}{NT} C(\gamma, \Gamma_1, \Gamma_2, c)  \max(1,  \|\pbeta\|_1^2)
 \Bigg( 
  \frac{R^6  k^3  \log ({NT})}{\rho^4}\left(\frac{N}{ \min(N, T)} + 1 \right) +  \frac{ T R^2 k \log ({NT_1})}{T_1 \rho^2}  \Bigg)
 \nonumber \\
 & \leq C(\gamma, \Gamma_1, \Gamma_2, c)  \max(1,  \|\pbeta\|_1^2)
 \Bigg( \frac{R^6  k^3  \log ({N\max(T, T_1)})}{\rho^4 \sqrt{\min(N, T) T} }   \left(\max(1, \frac{N}{T})  + \frac{T}{T_1}\right) \Bigg).
 \end{align}
 Choosing $ k = RG$ completes the proof. 
 
\section{Proof of Theorem \ref{thm:var_estimation_imputation_simplified}}
\label{sec:proof_variance_estimation_imputation}
\noindent{\bf Setup, Notations.}  
For $L \geq 1, k \geq 1$, for ease of notations, we define
\begin{itemize}
\item[$\circ$] $\SP(X) = \StackedPage((X_1,\dots, X_N), T, L) \in \Rb^{L \times (NT/L)}$, 
\item[$\circ$] $\SP(X^{2}) = \StackedPage((X^{2}_1,\dots, X^{2}_N), T, L) \in \Rb^{L \times (NT/L)}$,
\item[$\circ$] $\SP(f) = \StackedPage((f_1,\dots, f_N), T, L) \in \Rb^{L \times (NT/L)} $, 
\item[$\circ$] $\SP(f^{2}) = \StackedPage((f^{2}_1,\dots, f^{2}_N), T, L) \in \Rb^{L \times (NT/L)} $, 
\item[$\circ$] $\SP(\sigma^{2}) = \StackedPage((\sigma^{2}_1,\dots, \sigma^{2}_N), T, L) \in \Rb^{L \times (NT/L)} $, 
\item[$\circ$] $\SP(f^{2} + \sigma^{2}) = \SP(f^{2}) + \SP(\sigma^{2})$.
\end{itemize}
Recalling that $\rho = 1$, we note that 
\begin{align}
\Ex[\SP(X)] =  \SP(f), \quad \Ex[\SP(X^{2})] =  \SP(f^{2} + \sigma^{2}).
\end{align}
Further, from the definition of the variance estimation algorithm, we recall
\begin{align}
\widehat{\SP}(f) &\coloneqq \hStackedPage((X_1,\dots, X_N), T, L) = \frac{1}{\hrho} \hsvt_k(\StackedPage((X_1,\dots, X_N), T, L))
\\ \widehat{\SP}(f^{2} + \sigma^{2}) &\coloneqq \hStackedPage((X^{2}_1,\dots, X^{2}_N), T, L) = \frac{1}{\hrho} \hsvt_k(\StackedPage((X^{2}_1,\dots, X^{2}_N), T, L))
\end{align}
We denote
\begin{itemize}
\item[$\circ$] $\widehat{\SP}(f^{2}) = \widehat{\SP}(f) \circ \widehat{\SP}(f)$
\item[$\circ$] $\widehat{\SP}(\sigma^{2}) = \max\Big(\widehat{\SP}(f^{2} + \sigma^{2}) - \widehat{\SP}(f^{2}), \boldsymbol{0}\Big)$,
\end{itemize}
where $\boldsymbol{0} \in \Rb^{L \times (NT/L)}$ is a matrix of all zeroes, and we apply the $\max(\cdot)$ above entry-wise.
We remind the reader the output of the variance estimation algorithm is $\widehat{\SP}(\sigma^{2})$.
Thus, we have
$$
\frac{1}{NT} \sum_{n =1}^N \sum_{t=1}^T \big(\sigma_n(t)^2 - \hat{\sigma}_n^2(t)\big)^2 
= \frac{1}{NT} \| \SP(\sigma^{2}) - \widehat{\SP}(\sigma^{2})  \|_F^{2}. \\
$$

\noindent{\bf Initial Decomposition.}  
Note that since $\sigma^{2}_n(t) \ge 0$ for $n \in [N]$ and $t \in [T]$, we have that 
\begin{align}
&\frac{1}{NT} \| \SP(\sigma^{2}) - \widehat{\SP}(\sigma^{2})  \|_F^{2}
\\& \le \frac{1}{NT} \| \SP(\sigma^{2}) - (\widehat{\SP}(f^{2} + \sigma^{2}) - \widehat{\SP}(f^{2}))  \|_F^{2} 
\\ &= \frac{1}{NT} \| \SP(f^{2} + \sigma^{2}) - \SP(f^{2}) - (\widehat{\SP}(f^{2} + \sigma^{2}) - \widehat{\SP}(f^{2})  \|_F^{2} 
\\ &\le 
\frac{2}{NT} \| \SP(f^{2} + \sigma^{2})  - \widehat{\SP}(f^{2} + \sigma^{2})  \|_F^{2} 
+  \frac{2}{NT} \|  \SP(f^{2}) - \widehat{\SP}(f^{2})  \|_F^{2} \label{eq:variance_1.0}
\end{align}
We bound the two terms on the r.h.s of \eqref{eq:variance_1.0} separately. \\

\noindent
{\bf Bounding  $\Ex[\|  \SP(f^{2}) - \widehat{\SP}(f^{2})  \|_F^{2}$].}
\\
\begin{align}
\|  \SP(f^{2}) - \widehat{\SP}(f^{2})  \|_F^{2}
&=  \sum^N_{n=1} \sum^T_{t=1} \Big( f^2_n(t) - \hat{f}^2_n(t)  \Big)^2 \\
&= \sum^N_{n=1} \sum^T_{t=1}  \Big( f_n(t) - \hat{f}_n(t)  \Big)^2 \Big( f_n(t) + \hat{f}_n(t)  \Big)^2 \\
&\le \left[ \max_{n \in [N], t \in [T]} \Big( f_n(t) + \hat{f}_n(t)  \Big)^2 \right] \left[ \sum^N_{n=1} \sum^T_{t=1}  \Big( f_n(t) - \hat{f}_n(t)  \Big)^2 \right] \\
&\stackrel{(a)}\le C(\Gamma_1,  \Gamma_2, \Gamma_3) R^{2}  \left[\sum^N_{n=1} \sum^T_{t=1}  \Big( f_n(t) - \hat{f}_n(t)  \Big)^2 \right] \\
&= C(\Gamma_1,  \Gamma_2, \Gamma_3) R^{2} \norm{ {\SP}(f)- \widehat{\SP}(f)}_F^2 \label{eq:variance_final_1}
\end{align}

\noindent
{\bf Bounding  $\| \SP(f^{2} + \sigma^{2})  - \widehat{\SP}(f^{2} + \sigma^{2})  \|_F^{2} $.}
To bound  $\| \SP(f^{2} + \sigma^{2})  - \widehat{\SP}(f^{2} + \sigma^{2})  \|_F^{2} $, we modify the proof of Theorem \ref{thm:mean_estimation_imputation_generalized} in a straightforward manner. 
The need for the modification is that Theorem \ref{thm:mean_estimation_imputation_generalized} was proven for the case where the coordinate wise noise, $\eta_n(t) = X_n(t) - f_n(t)$ are independent sub-gaussian random variables, and $\| \eta \|_{\psi_2} \le \gamma$. 
However, one can verify that $X^2_n(t) - f^{2}_n(t) - \sigma^{2}_n(t)$ is a sub-exponential random variable with $\| \cdot \|_{\psi_1}$ norm bounded as 
\begin{align}
\norm{X^2_n(t) - f^{2}_n(t) - \sigma^{2}_n(t)}_{\psi_1} 
&\le \norm{X^2_n(t)}_{\psi_1}  \\
&= \norm{f^2_n(t) + 2 f_n(t) \eta_n(t) + \eta_n^2(t)}_{\psi_1} \\
&\le 2\norm{f^2_n(t)}_{\psi_1} + 2\norm{\eta_n^2(t)}_{\psi_1} \\
&= 2\norm{f_n(t)}^{2}_{\psi_2} + 2\norm{\eta_n(t)}^2_{\psi_2} \\
&\le C(\Gamma_1, \Gamma_2) R^{2} + 2\gamma^{2} \\
&\le C(\Gamma_1, \Gamma_2, \gamma) R^{2},
\end{align}
where we have use the standard facts that for a random variable $A$, $\| A - \Ex[A] \|_{\psi_1} \le \| A \|_{\psi_1}$ and $\| A^{2} \|_{\psi_1} = \| A \|^{2}_{\psi_2}$. \\

\noindent
Further, note that by using Properties \ref{prop:low_rank_mean_}, \ref{prop:low_rank_mean_hankel}, \ref{prop:low_rank_var_}, and \ref{prop:low_rank_var_hankel}, and a straightforward modification of Proposition \ref{prop:approx_low_rank_hankel_}, we have
\begin{align}
\text{rank}(\SP(f^{2} + \sigma^{2})) 
&\le \text{rank}(\SP(f^{2})) + \text{rank}(\SP(\sigma^{2}))
\\&\le (RG)^{2} + (R' G'),
\end{align}
where we have used that for any two matrices $\bA, \bB$, we have $\text{rank}(\bA \circ \bA) \le \text{rank}(\bA)^{2}$, where $\circ$ denotes Hadamard product, and $\text{rank}(\bA + \bB) \le \text{rank}(\bA) + \text{rank}(\bB)$.
We define $\tilde{k} \coloneqq (RG)^{2} + (R' G')$.

\medskip
\noindent
{\em Modified Theorem \ref{thm:mean_estimation_imputation_generalized}}.
Below, we state the modified version of Theorem \ref{thm:mean_estimation_imputation_generalized} to get our desired result. 
\begin{lemma}[Imputation Error]\label{lem:hsvt.l2inf_modified}
Let the conditions of Theorem \ref{thm:var_estimation_imputation_simplified} hold. Then,
\begin{align}\label{eq:lem.hsvt.l2inf_modified}
&\Ex\big[\max_{j \in [L]} \frac{1}{(NT/L)} \| \SP(f^{2} + \sigma^{2})_{L, \cdot}^{T}  - \widehat{\SP}(f^{2} + \sigma^{2})_{L, \cdot}^{T}\|_2^2\big] 
\\& \leq  
C(\Gamma_1, \Gamma_2, \Gamma'_1, \Gamma'_2, \gamma, R, R') 
\left(
\frac{  (G^{2} + G') \log^{2} NT}{ L }.
\right),
\end{align}
where $C(\Gamma_1, \Gamma_2, \Gamma'_1, \Gamma'_2, \gamma, R, R')$ is a term that depends only polynomially on $\Gamma_1$, $\Gamma_2$, $\Gamma'_1$, $\Gamma'_2$, $\gamma$, $R$, $R'$.
\end{lemma}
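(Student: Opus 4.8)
The plan is to reuse almost verbatim the machinery behind Theorem \ref{thm:hsvt.l2inf} (and hence Theorem \ref{thm:mean_estimation_imputation_generalized}), recognizing that recovering $\SP(f^2+\sigma^2)$ from $\SP(X^2)$ is exactly an instance of HSVT-based matrix estimation. There are only two substantive changes to track. First, the additive noise $X_n^2(t) - f_n^2(t) - \sigma_n^2(t)$ is sub-exponential rather than sub-gaussian, with $\psi_1$-norm bounded by $K := C(\Gamma_1,\Gamma_2,\gamma)R^2$ as already established in the text preceding the lemma. Second, the signal matrix has rank $\tilde k = (RG)^2 + R'G'$ (via the variant of Proposition \ref{prop:approx_low_rank_hankel_} quoted above the lemma) rather than $RG$. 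Since $\rho = 1$ we have $\hrho = 1$, so $\varepsilon = 1$ and all terms in Lemma \ref{lemma:column_error} carrying a factor $(\varepsilon-1)$ or $(\rho - \hrho)$ vanish; moreover we are in the exact low-rank regime, so $\bE_k = 0$ and $\epsilon = 0$.

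Next I would note that the deterministic decomposition of Lemma \ref{lemma:column_error} is purely algebraic and applies unchanged with $\bY = \SP(X^2)$, $\bM = \SP(f^2+\sigma^2)$, and $\bM_k = \bM$. What remains is to control the two random quantities that appear there: the spectral norm $\|\bY - \bM\|_2$ and the projected-noise norm $\|\varphi^{\bM_k}(\bY_{j\cdot}^T - \bM_{j\cdot}^T)\|_2$, together with the per-row noise norm $\|\bY_{j\cdot}^T - \bM_{j\cdot}^T\|_2$. Accordingly I would re-derive the high-probability events $E_1,\dots,E_5$ of Lemma \ref{lemma:prelims}, but for sub-exponential entries. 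The events $E_1,E_5$ concerning $\hrho$ are trivial here. For the spectral-norm event $E_2$, a truncation argument combined with Bernstein's inequality (Theorem \ref{thm:bernstein}) replaces Theorem \ref{thm:subgaussian_matrix} and still yields $\|\bY-\bM\|_2 \lesssim K\sqrt{p}$ up to lower-order poly-logarithmic corrections, and the per-row norm concentrates as $\|\bY_{j\cdot}^T - \bM_{j\cdot}^T\|_2^2 \lesssim K^2 p$ by Bernstein since $p = NT/L$ is large.

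The decisive step is the projected-noise event $E_4$. Writing $\sum_{i=1}^{\tilde k} Z_i^2$ with $Z_i = y_i^T(\bY_{j\cdot}^T - \bM_{j\cdot}^T)$, each $Z_i$ is now (in the worst case) sub-exponential, so $Z_i^2$ has a tail of the form $\exp(-c\sqrt{s}/K)$; to reach failure probability $(pq)^{-10}$ this forces $Z_i^2 \lesssim K^2\log^2(pq)$, and a union bound gives $\sum_{i=1}^{\tilde k} Z_i^2 \lesssim K^2 \tilde k \log^2(pq)$. This is exactly one extra logarithmic factor compared to the sub-gaussian bound $C\sigma^2 k \log p$ of \eqref{eq:E4_def}, and it is the source of the $\log^2 NT$ in the target rate. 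I would then pass from the deterministic bound to expectation precisely as in the proof of Theorem \ref{thm:hsvt.l2inf}: on the good event the row-wise error is governed by the quantities above, while on the complement I bound the second moment crudely, using that $|\veps_{ij}|^4$ is $\psi_{1/4}$ so $\Ex[\max_{ij}|\veps_{ij}|^4] \lesssim K^4\log^4(NT)$ by Lemma \ref{lemma:max_subg}, which multiplied by $\Pb(E^c)^{1/2} = O((NT)^{-5})$ is negligible. Substituting $q = L$, $p = NT/L \ge L$, the balanced-spectra lower bound $\sigma_{\tilde k}(\bM)^2 \gtrsim NT/\tilde k$, and $\Gamma = \|\bM\|_\infty \le (R\Gamma_1\Gamma_2)^2 + R'\Gamma_1'\Gamma_2'$, the dominant contribution collapses to $\tfrac{K^2\tilde k \log^2 NT}{L}$; absorbing all dependence on $R,R',\Gamma_1,\Gamma_2,\Gamma_1',\Gamma_2',\gamma$ into $C(\cdot)$ and using $\tilde k = (RG)^2 + R'G'$ yields $C(\cdots)\tfrac{(G^2 + G')\log^2 NT}{L}$, as claimed.

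The main obstacle will be the careful re-derivation of the sub-exponential concentration inequalities with the correct logarithmic exponents: specifically, confirming that the spectral-norm bound $E_2$ does not itself degrade beyond $\sqrt{p}$ (so that the first term in Lemma \ref{lemma:column_error} stays at order $\tilde k/L$), and that the projected-noise bound $E_4$ contributes exactly one extra log, giving $\log^2$ and not a higher power. Everything else — the deterministic decomposition, the expectation-splitting argument, and the final algebraic simplification — is inherited intact from the proofs of Lemma \ref{lemma:column_error} and Theorems \ref{thm:hsvt.l2inf} and \ref{thm:mean_estimation_imputation_generalized}.
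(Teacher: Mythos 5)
Your proposal is correct and follows essentially the same route as the paper: reduce to the HSVT machinery of Theorem \ref{thm:hsvt.l2inf} with $\bY = \SP(X^2)$, $\bM = \SP(f^2+\sigma^2)$, $\rho=1$, $\epsilon=0$, rank $\tilde k = (RG)^2 + R'G'$, then re-derive the high-probability events $E_2, E_3, E_4$ for sub-exponential noise, which is exactly where the extra $\log^2 NT$ factor enters. The only cosmetic difference is that the paper cites an operator-norm bound for matrices with sub-exponential entries from \cite{PCR_NeurIPS, pcr_jasa} where you propose to rederive it via truncation plus Bernstein.
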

\begin{proof}
To reduce redundancy, we provide an overview of the argument needed for this proof, focusing only the parts of the arguments made in Theorem \ref{thm:mean_estimation_imputation_generalized} that need to be modified.
For ease of exposition, we let $\tilde{C} = C(\Gamma_1, \Gamma_2, \Gamma'_1, \Gamma'_2, \gamma, R, R')$.
We being by matching notation with that used in Theorem \ref{thm:mean_estimation_imputation_generalized}; in particular with respect to $\rho, k, \epsilon, \Gamma$.
Under the setup of Theorem \ref{thm:var_estimation_imputation_simplified}, we have $\rho = 1$, $k = \tilde{k}$, $\epsilon = 0$, $\Gamma \le \tilde{C}$
Further, recall the definition of $\bY, \bM, p, q, \sigma$ from Appendix \ref{ssec:hsvt.setup}.
We will now use $\bY = \SP(X^{2})$, and $\bM = \SP(f^{2} + \sigma^{2})), \sigma = \gamma$, $p = (NT/L), q = L$.
One can verify that there is only required change to the proof of Theorem \ref{thm:mean_estimation_imputation_generalized}; 
in particular, in the argument made to prove Theorem \ref{thm:hsvt.l2inf}, we need to re-define events $E_2, E_3, E_4$ in \eqref{eq:E2_def}, \eqref{eq:E3_def}, \eqref{eq:E4_def} for the case where $(\bY - \bM)_{ij}$ is mean-zero sub-exponential.
Using the result from 
\cite{PCR_NeurIPS, pcr_jasa},  which bounds the operator norm of a matrix with sub-exponential mean-zero entries, we have with probability at least $1 - 1/((NT)^{10})$
\begin{align}\label{eq:matrix_concentration_sub_exponential}
 \| \bY -  \bM \|_2 \le \tilde{C} \sqrt{(NT / L)} \log^{2} NT
\end{align}
As a result \eqref{eq:matrix_concentration_sub_exponential}, and standard concentration inequalities for sub-exponential random variables, we have the modified events, $\tilde{E}_2, \tilde{E}_3, \tilde{E}_4$.
\begin{align}
     \tilde{E}_2&:= \Big\{ \norm{\bY - \rho \bM}_2 \le  \tilde{C} \sqrt{(NT / L)} \log^{2} NT \Big \}, \label{eq:E2_def_modified}
     \\ \tilde{E}_3&:= \Big\{ \norm{\bY - \rho \bM}_{\infty, 2}, \norm{\bY - \rho \bM}_{2, \infty} \le \tilde{C} \sqrt{(NT / L)} \log^{2} NT \Big \}, \label{eq:E3_def_modified}
     \\ \tilde{E}_4&:= \Big\{\max_{j \in [q]} \norm{\varphi^{\bB}_{\sigma_k(\bB)} \Big( \bY_{j \cdot}^T - \rho \bM_{j \cdot}^T \Big)}_2^2 \le \tilde{C} \tilde{k} \log^{2}(NT / L) \Big \}, \label{eq:E4_def_modified}
\end{align}
Using these modified events in the proofs of Theorem \ref{thm:hsvt.l2inf} and Theorem \ref{thm:mean_estimation_imputation_generalized}, and appropriately simplifying leads to the desired result.
\end{proof}
By Lemma \ref{lem:hsvt.l2inf_modified} and \eqref{eq:ff.2}, we have that
\begin{align}
\frac{1}{NT} \Ex[\|  \SP(f^{2} + \sigma^{2})  - \widehat{\SP}(f^{2} + \sigma^{2})    \|_F^{2} 
&\le \Ex\big[\max_{j \in [L]} \frac{1}{(NT/L)} \| \SP(f^{2} + \sigma^{2})_{L, \cdot}^{T}  - \widehat{\SP}(f^{2} + \sigma^{2})_{L, \cdot}^{T}\|_2^2\big]
\\&\le C(\Gamma_1, \Gamma_2, \Gamma'_1, \Gamma'_2, \gamma, R, R') 
\left(
\frac{  (G^{2} + G') \log^{2} NT}{ L }.
\right).
\label{eq:variance_final_2}
\end{align}

\noindent
{\bf Completing proof.}
Substituting \eqref{eq:variance_final_1} and \eqref{eq:variance_final_2} into \eqref{eq:variance_1.0} and letting $L = \sqrt{\min(N, T)T}$
\begin{align}
\frac{1}{NT} \| \SP(\sigma^{2}) - \widehat{\SP}(\sigma^{2})  \|_F^{2}
\le C(\Gamma_1, \Gamma_2, \Gamma_3, \Gamma'_1, \Gamma'_2, \gamma, R, R') 
\left(
\frac{  (G^{2} + G') \log^{2} NT}{ \sqrt{\min(N, T)T} }.
\right).
\end{align}
This completes the proof.

\section{tSSA Proofs}\label{sec:tSSA_proofs}

\subsection{Proof of Propositions \ref{prop:tensor} and \ref{prop:tesnor_exp}}
Consider $n \in [N], ~\ell \in [L], ~s \in [T/L]$.  By Property \ref{prop:low_rank_mean_}, 
\begin{align}
    \tensor_{n \ell s} & = f_n((s-1)\times L + \ell) \nonumber \\
                 & = \sum_{r=1}^R U_{nr} W_{r ((s-1)\times L + \ell)}. \label{eq:cp-rank.1}
\end{align}
The Hankel matrix induced by time series $W_{r \cdot}$ has rank at most $G$ as per Property \ref{prop:low_rank_mean_hankel}. 
The Page matrix associated with it is of dimension $L \times T/L$ with entry in its $\ell$-th row and $s$-th column equal to $W_{r ((s-1)\times L + \ell)}$. 
Since this Page matrix can be viewed as a sub-matrix of the Hankel matrix, it has rank at most $G$ as well. 
That is, there exists vectors $w^r_{\ell \cdot}, v^r_{s \cdot} \in \Rb^G$ such that 
\begin{align}
W_{r ((s-1)\times L + \ell)} & = \sum_{g=1}^G w^r_{\ell g} v^r_{s g}.\label{eq:cp-rank.2}
\end{align}
From \eqref{eq:cp-rank.1} and \eqref{eq:cp-rank.2}, it follows that   
\begin{align}
\tensor_{n \ell s} & = \sum_{r=1}^R U_{nr} \Big(\sum_{g=1}^G w^r_{\ell g} v^r_{s g} \Big) \nonumber \\
& = \sum_{r \in [R], g \in [G]} U_{nr} w^r_{\ell g} v^r_{s g} \nonumber \\
& = \sum_{r\in [R], g\in [G]}  a_{n ~(r, g)} b_{ \ell ~(r, g)} c_{s~(r, g)}, \label{eq:cp-rank.3}
\end{align}
where $a_{n ~(r, g)} = U_{nr}$, $b_{ \ell ~(r, g)} = w^r_{\ell g} $ and $c_{s~(r, g)} = v^r_{s g}$.
Thus \eqref{eq:cp-rank.3} implies that $\tensor$ has CP-rank at most $R \mult G$, which completes the proof for Propositions \ref{prop:tensor}.

\smallskip
\noindent
For Proposition  \ref{prop:tesnor_exp} , by the setup and model definition, it follows $\Tensor_{n \ell s} = X_n((s-1)\times L + \ell)$. 
And $X_n((s-1)\times L + \ell) = \star$ with probability $1-\rho$ and $f_n((s-1)\times L + \ell) + \eta_n((s-1)\times L + \ell)$ with probability $\rho$, where $\eta_n((s-1)\times L + \ell)$ are independent and zero-mean. 
Therefore, it follows that the entries of $\Tensor$ are independent and 
\begin{align}
\Ex[\Tensor_{n \ell s}] & = \Ex[X_n((s-1)\times L + \ell)] \nonumber \\
& = \rho f_n((s-1)\times L + \ell) \nonumber \\
& = \rho \tensor_{n \ell s}.
\end{align}
That is, $\Ex[\Tensor] = \rho \tensor$. 
This concludes the proof.

\subsection{Proof of Proposition \ref{prop:imputation_comparisons_new} }
From Property \ref{property:te_error_rates}, and our choice of parameter $L$ for mSSA ($L = \sqrt{\min(N, T)T}$) and tSSA ($L = \sqrt{T}$), we have that 
\begin{align}
	\imp(N,T; \text{tSSA})  &= \tilde{\Theta}\left(\frac{1}{\min\left(N, \sqrt{T}\right)^{2}}\right) =  \tilde{\Theta}\left(\frac{1}{\min\left(N^{2},T\right)}\right), \label{eq:compare_tssa}
	\\ \imp(N,T; \text{mSSA})  &=   \tilde{\Theta}\left(\frac{1}{\sqrt{\min(N, T) T}}\right), \label{eq:compare_mssa}
	 \\ \imp(N,T; \text{ME})  &=  \tilde{\Theta}\left(\frac{1}{\min\left(N, T \right) }\right). \label{eq:compare_me}
\end{align}
We proceed in cases.

\vspace{2mm}
\noindent
{\bf Case 1: $T = o(N)$.}
In this case, from \eqref{eq:compare_tssa}, \eqref{eq:compare_mssa}, and \eqref{eq:compare_me}, we have 
\begin{align}
	\imp(N,T; \text{tSSA}), \ \imp(N,T; \text{mSSA}), \  \imp(N,T; \text{ME}) = \tilde{\Theta}\left(\frac{1}{T}\right)
\end{align}

\vspace{2mm}
\noindent
{\bf Case 2: $N = o(T)$.}
In this case, from \eqref{eq:compare_tssa}, \eqref{eq:compare_mssa}, and \eqref{eq:compare_me}, we have
\begin{align}
	\imp(N,T; \text{tSSA})  &= \tilde{\Theta}\left(\frac{1}{N^{2}}\right), \label{eq:compare_tssa_2}
	\\ \imp(N,T; \text{mSSA})  &=   \tilde{\Theta}\left(\frac{1}{\sqrt{N T}}\right), \label{eq:compare_mssa_2}
	 \\ \imp(N,T; \text{ME})  &=  \tilde{\Theta}\left(\frac{1}{N}\right). \label{eq:compare_me_2}
\end{align}
In this case, we have 
$$
\imp(N,T; \text{tSSA}), \imp(N,T; \text{mSSA}) = \tilde{o}(\imp(N,T; \text{ME})).
$$
It remains to compare the relative performance of tSSA and mSSA for the regime $N = o(T)$. 
Towards this, note from \eqref{eq:compare_tssa_2} and \eqref{eq:compare_mssa_2} that
\begin{align}
\imp(N,T; \text{tSSA}) &=\tilde{o}(\imp(N,T; \text{mSSA}))
\\ \iff \frac{1}{N^{2}} &=\tilde{o}(\frac{1}{\sqrt{N T}})
\\ \iff T^{1/3} & ={o}(N)
\end{align}
This completes the proof.

\subsection{Proof of Proposition \ref{prop:high_tensor}}
\begin{proposition}
Let Properties \ref{prop:low_rank_mean_tensor}, \ref{prop:low_rank_mean_hankel}, and \ref{prop:bounded_noise} hold. 
Then, for any $1 \leq L \leq \sqrt{T}$, $\hightensor$ has CP-rank at most $R \mult G$. 
Further, all entries of $\highTensor$ are independent random variables with each entry observed with probability $\rho \in (0,1]$, and $\Ex[\highTensor] = \rho \hightensor$. 
%
\end{proposition}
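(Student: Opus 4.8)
The plan is to follow the argument of Proposition \ref{prop:tensor} essentially verbatim, treating the $d$ spatial modes $n_1,\dots,n_d$ in exactly the same way that the single spatial mode $n$ was treated there; the only structural change is that the resulting CP decomposition carries $d$ spatial factor vectors rather than one. First I would fix indices $n_1,\dots,n_d \in [N_1]\times\dots\times[N_d]$, $\ell \in [L]$, $s \in [T/L]$, and use Property \ref{prop:low_rank_mean_tensor} to write $\hightensor_{n_1,\dots,n_d,\ell,s} = f_{n_1,\dots,n_d}((s-1)\mult L + \ell) = \sum_{r=1}^R U_{n_1,r}\cdots U_{n_d,r}\, W_{r((s-1)\mult L + \ell)}$.

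Next, for each $r \in [R]$, I would invoke Property \ref{prop:low_rank_mean_hankel} through the Page-matrix observation used in Proposition \ref{prop:tensor}: the Page matrix of the scalar time series $W_{r\cdot}$, of dimension $L \times T/L$ with $(\ell,s)$-entry $W_{r((s-1)\mult L+\ell)}$, is a submatrix of the Hankel matrix of $W_{r\cdot}$ taken over a sufficiently long horizon, and since that Hankel matrix has rank at most $G$ for every horizon, the Page matrix has rank at most $G$ as well. This yields a factorization $W_{r((s-1)\mult L+\ell)} = \sum_{g=1}^G w^r_{\ell g}\, v^r_{sg}$ for suitable vectors $w^r_{\ell\cdot}, v^r_{s\cdot}\in\Rb^G$. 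The restriction $1 \le L \le \sqrt{T}$ is precisely what keeps the argument $(s-1)\mult L+\ell$ within the admissible index range so that this Page-as-Hankel-submatrix embedding is valid.

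Substituting this factorization and regrouping the two sums into a single sum over the $RG$ index pairs $(r,g)$ gives
\[
\hightensor_{n_1,\dots,n_d,\ell,s} = \sum_{(r,g)\in[R]\times[G]} a^{(1)}_{n_1,(r,g)}\cdots a^{(d)}_{n_d,(r,g)}\, b_{\ell,(r,g)}\, c_{s,(r,g)},
\]
where $a^{(i)}_{n_i,(r,g)} = U_{n_i,r}$ for $i\in[d]$, $b_{\ell,(r,g)} = w^r_{\ell g}$, and $c_{s,(r,g)} = v^r_{sg}$. This is a CP decomposition of $\hightensor$ into $RG$ rank-one terms, so its CP-rank is at most $R \mult G$, as claimed. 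For the observation tensor I would appeal directly to the model \eqref{eq:model} together with Property \ref{prop:bounded_noise}: by definition $\highTensor_{n_1,\dots,n_d,\ell,s} = X_{n_1,\dots,n_d}((s-1)\mult L + \ell)$, and each such observation is independently $\star$ with probability $1-\rho$ or equals $f_{n_1,\dots,n_d}(\cdot) + \eta_{n_1,\dots,n_d}(\cdot)$ with probability $\rho$, the noise being independent and mean zero. Since distinct tensor entries correspond either to distinct time series or to distinct time indices $(s-1)\mult L+\ell$, the entries of $\highTensor$ are mutually independent, each observed with probability $\rho$, and $\Ex[\highTensor_{n_1,\dots,n_d,\ell,s}] = \rho\, f_{n_1,\dots,n_d}(\cdot) = \rho\,\hightensor_{n_1,\dots,n_d,\ell,s}$, i.e. $\Ex[\highTensor] = \rho\,\hightensor$.

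I do not expect any genuine obstacle: the statement is a routine higher-order lift of Proposition \ref{prop:tensor}, and the extra spatial modes enter the CP factorization additively without interacting with the temporal (Hankel) structure. The only point requiring minor care is the Page-as-Hankel-submatrix step, where one must confirm that the rank bound $G$ transfers from the Hankel matrix to the Page matrix over the chosen range of $L$; everything else is bookkeeping on the additional indices $n_2,\dots,n_d$.
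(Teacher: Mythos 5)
Your proposal is correct and follows essentially the same route as the paper: the paper's own proof writes out the spatial factorization from Property \ref{prop:low_rank_mean_tensor} and then defers to the argument of Proposition \ref{prop:tensor} (Page matrix as a submatrix of a low-rank Hankel matrix, regrouping over the $RG$ pairs $(r,g)$, and the independence/expectation claims from the observation model), which is exactly what you fill in. The only small inaccuracy is attributing the condition $1 \le L \le \sqrt{T}$ to the Hankel-submatrix embedding --- since Property \ref{prop:low_rank_mean_hankel} holds for every horizon, the embedding works for any $L \le T$; the $\sqrt{T}$ constraint is about the tensor's shape --- but this does not affect the validity of the argument.
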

Consider $n_1, \dots, n_d \in [N_1] \times \dots \times [N_d], ~\ell \in [L], ~s \in [T/L]$.  
By Property \ref{prop:low_rank_mean_tensor}, 
\begin{align}
    \hightensor_{n_1, \dots, n_d, \ell, s} & = f_{n_1, \dots, n_d}((s-1)\times L + \ell) \nonumber \\
                 & =  \sum^{R}_{r=1} U_{n_1, r} \dots U_{n_d, r} \ W_{r, ((s-1)\times L + \ell)},
\end{align}
The rest of the proof follows in a similar fashion to that of Proposition \ref{prop:tensor}.

\newpage
\section{Additional Figures}\label{sec:figures}

\begin{figure}[h]
	\begin{subfigure}[t]{.48\textwidth}
		\includegraphics[width=\linewidth]{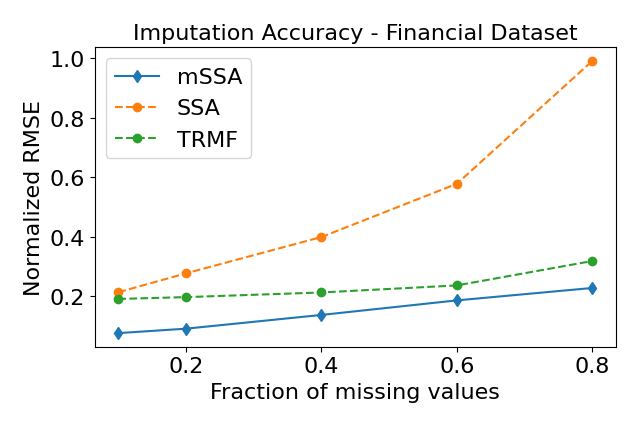}
		\caption{}
		\label{fig:141}
	\end{subfigure}
	\begin{subfigure}[t]{.48\textwidth}
		\includegraphics[width=\linewidth]{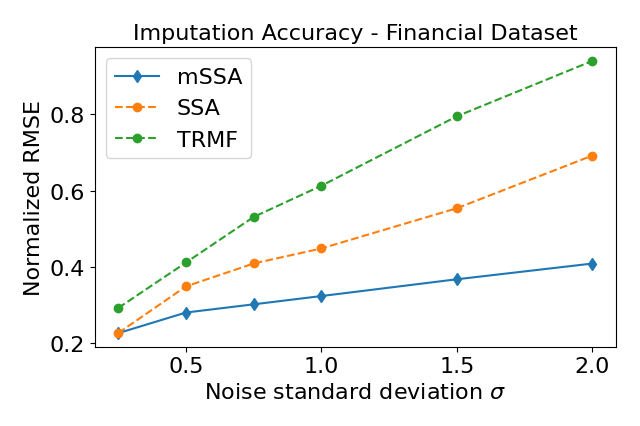}
		\caption{}
		\label{fig:142}
	\end{subfigure}
    	\begin{subfigure}[t]{.48\textwidth}
		\includegraphics[width=\linewidth]{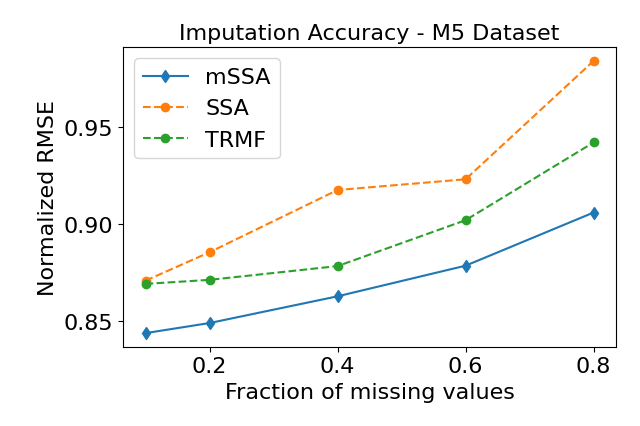}
		\caption{}
		\label{fig:151}
	\end{subfigure}
    	\begin{subfigure}[t]{.48\textwidth}
		\includegraphics[width=\linewidth]{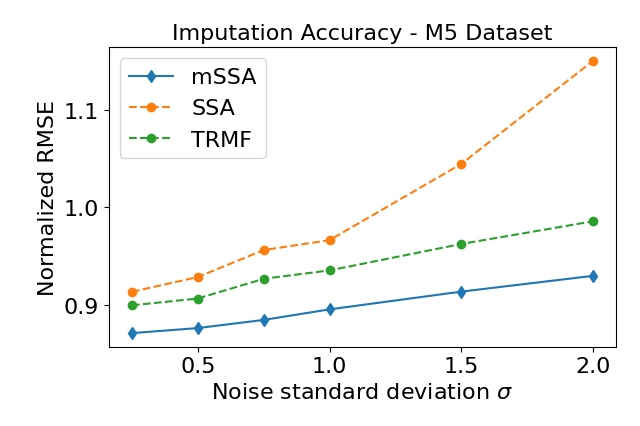}
		\caption{}
		\label{fig:152}
	\end{subfigure}
    \caption{mSSA vs. TRMF vs. SSA - imputation performance on the Financial and M5 datasets. Figures \ref{fig:141}, and \ref{fig:151}  show imputation accuracy of mSSA, TRMF and SSA as we vary the fraction of missing values; Figures  \ref{fig:142}, and \ref{fig:152} show imputation accuracy as we vary the noise level (and with 50\% of values missing).}
    \label{fig:mSSA_imp2}
\end{figure} 

\begin{figure}[h]
	\begin{subfigure}[t]{.45\textwidth}
		\includegraphics[width=\linewidth]{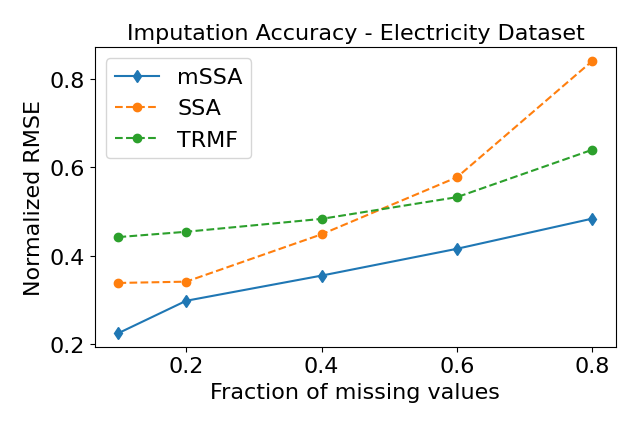}
		\caption{}
		\label{fig:111}
	\end{subfigure}
   \hfill
        \begin{subfigure}[t]{.45\textwidth}
		\includegraphics[width=\linewidth]{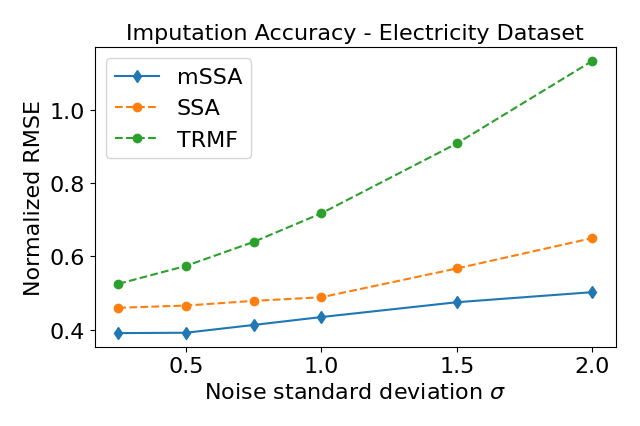}
		\caption{}
		\label{fig:112}
	\end{subfigure}
    	\begin{subfigure}[t]{.45\textwidth}
		\includegraphics[width=\linewidth]{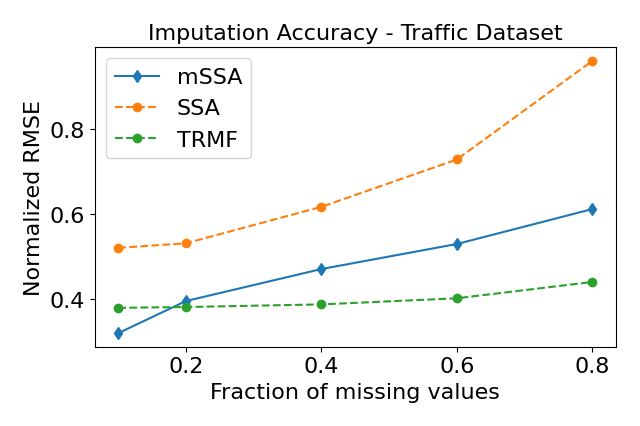}
		\caption{}
		\label{fig:121}
	\end{subfigure}
   \hfill
    	\begin{subfigure}[t]{.45\textwidth}
		\includegraphics[width=\linewidth]{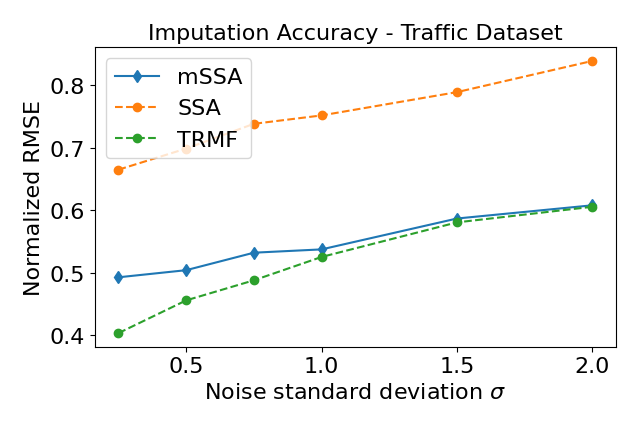}
		\caption{}
		\label{fig:122}
	\end{subfigure}

	\begin{subfigure}[t]{.45\textwidth}
		\includegraphics[width=\linewidth]{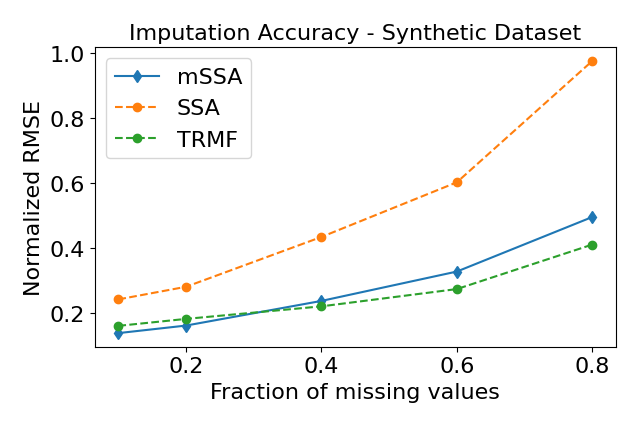}
		\caption{}
		\label{fig:131}
	\end{subfigure}
    \hfill
    	\begin{subfigure}[t]{.45\textwidth}
		\includegraphics[width=\linewidth]{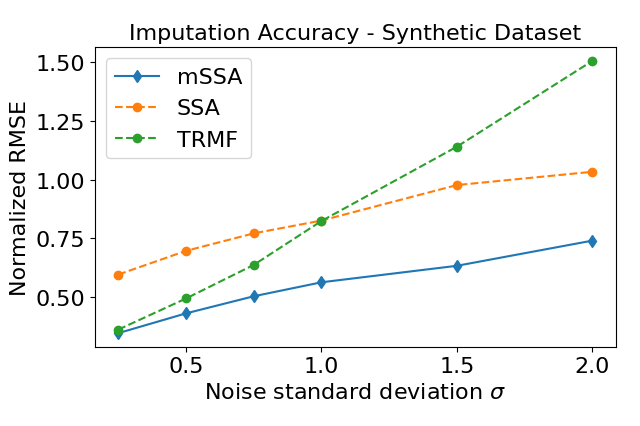}
		\caption{}
		\label{fig:132}
	\end{subfigure}

    \caption{mSSA vs. TRMF vs. SSA - imputation performance on the Electricity, Traffic and Synthetic datasets. Figures  \ref{fig:111}, \ref{fig:121}, and \ref{fig:131}, show imputation accuracy of mSSA, TRMF and SSA as we vary the fraction of missing values; Figures  \ref{fig:112},  \ref{fig:122}, and \ref{fig:132} show imputation accuracy as we vary the noise level (and with 50\% of values missing).}
    \label{fig:mSSA_imp1}
    \vspace{-2mm}
\end{figure}

\begin{figure}[h]
	\begin{subfigure}[t]{.48\textwidth}
		\includegraphics[width=\linewidth]{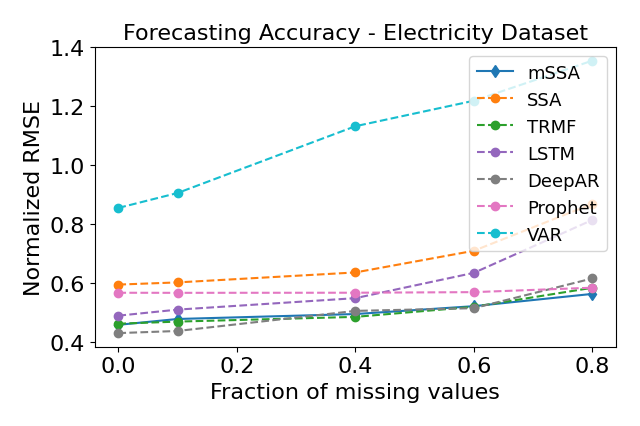}
		\caption{}
		\label{fig:211}
	\end{subfigure}
	\begin{subfigure}[t]{.48\textwidth}
		\includegraphics[width=\linewidth]{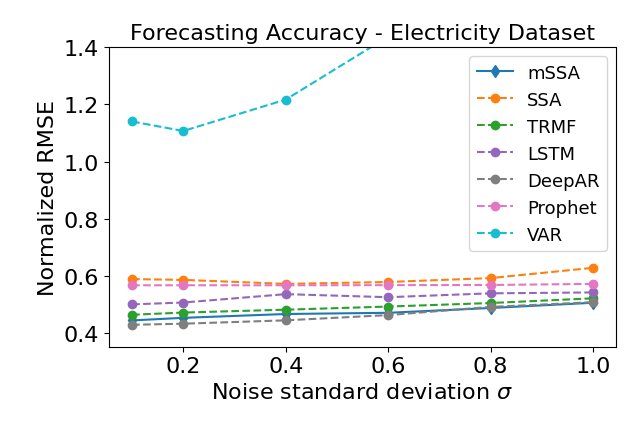}
		\caption{}
		\label{fig:212}
	\end{subfigure}
    	\begin{subfigure}[t]{.48\textwidth}
		\includegraphics[width=\linewidth]{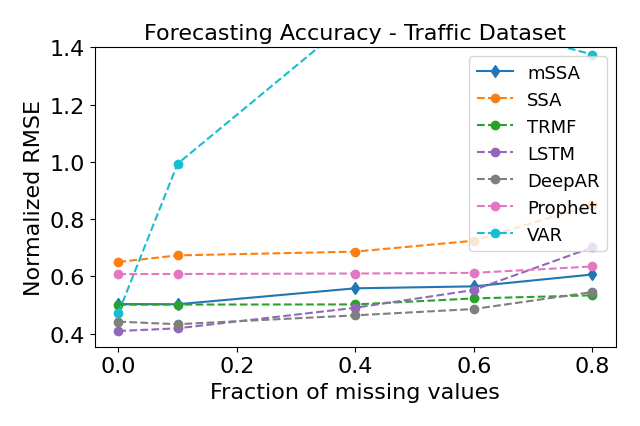}
		\caption{}
		\label{fig:221}
	\end{subfigure}
    	\begin{subfigure}[t]{.48\textwidth}
		\includegraphics[width=\linewidth]{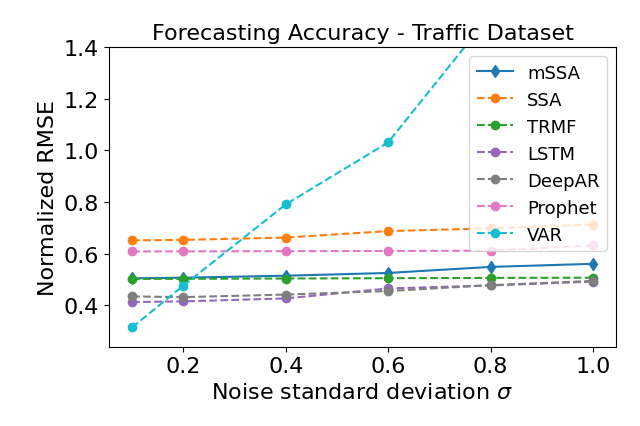}
		\caption{}
		\label{fig:222}
	\end{subfigure}
	    	\begin{subfigure}[t]{.48\textwidth}
		\includegraphics[width=\linewidth]{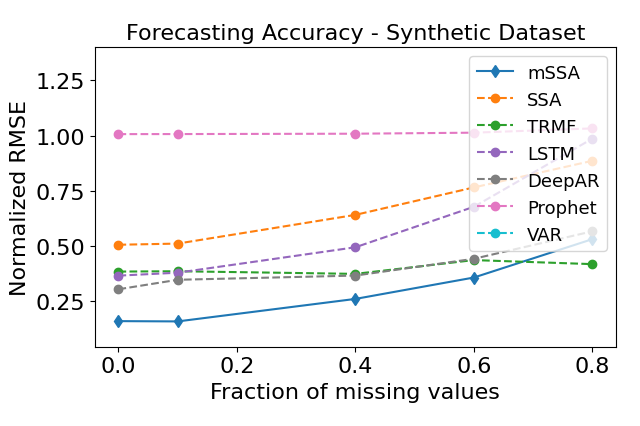}
		\caption{}
		\label{fig:231}
	\end{subfigure}
    	\begin{subfigure}[t]{.48\textwidth}
		\includegraphics[width=\linewidth]{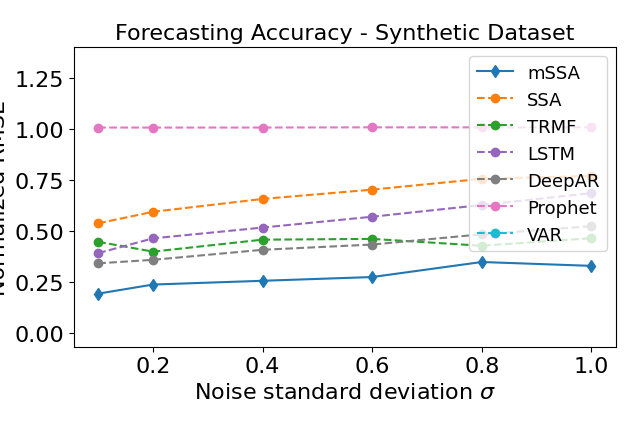}
		\caption{}
		\label{fig:232}
	\end{subfigure}
    \caption{ mSSA forecasting performance  on standard multivariate time series benchmark is competitive with/outperforming industry standard methods as we vary the number of missing data and  noise level.  Figures  \ref{fig:211}, \ref{fig:221}, and \ref{fig:231} show the forecasting accuracy of all methods (some of VAR results are not shown due to its relatively high error) on the Electricity, Traffic and Synthetic datasets with varying fraction of missing values; Figures \ref{fig:212}, \ref{fig:222}, and \ref{fig:232}  shows the forecasting accuracy   on the same  datasets with varying noise level.}
    \label{fig:mSSA_fore1}
\end{figure}

\begin{figure}[h]
    	\begin{subfigure}[t]{.48\textwidth}
		\includegraphics[width=\linewidth]{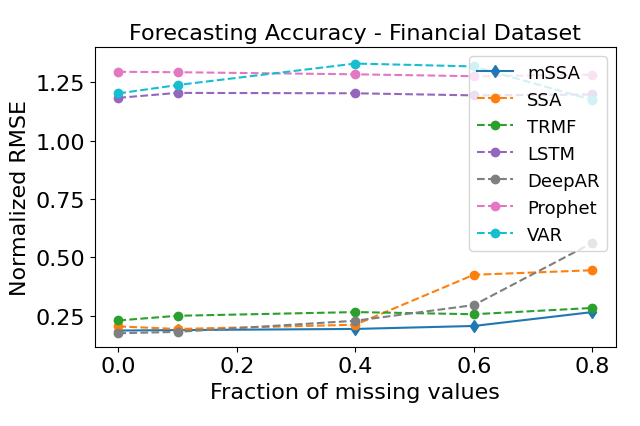}
		\caption{}
		\label{fig:241}
	\end{subfigure}
    	\begin{subfigure}[t]{.48\textwidth}
		\includegraphics[width=\linewidth]{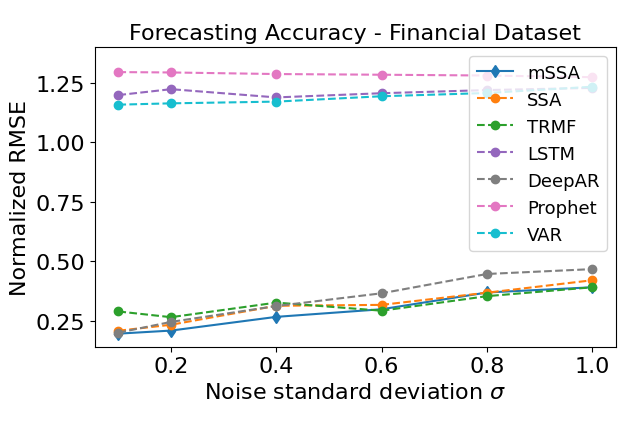}
		\caption{}
		\label{fig:242}
	\end{subfigure}
	    	\begin{subfigure}[t]{.48\textwidth}
		\includegraphics[width=\linewidth]{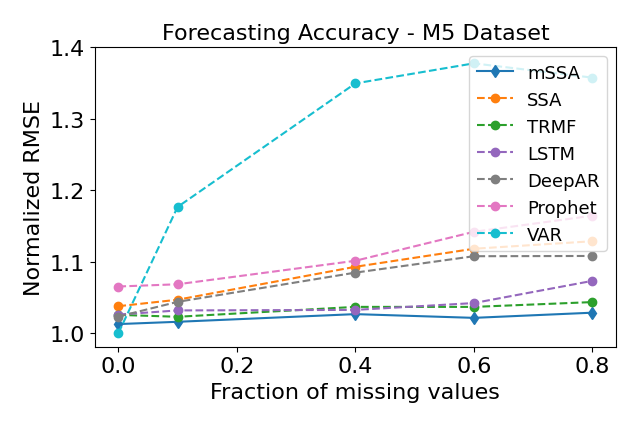}
		\caption{}
		\label{fig:251}
	\end{subfigure}
    	\begin{subfigure}[t]{.48\textwidth}
		\includegraphics[width=\linewidth]{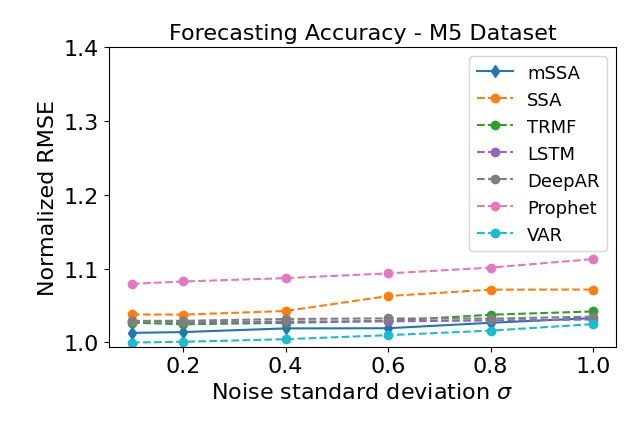}
		\caption{}
		\label{fig:252}
	\end{subfigure}
    \caption{ Figures \ref{fig:241}, and \ref{fig:251}  show the forecasting accuracy of all methods (some of VAR results are not shown due to its relatively high error) on the financial and M5 datasets with varying fraction of missing values; Figures \ref{fig:242}, and \ref{fig:252} show the forecasting accuracy   on the same  datasets with varying noise levels.}
    \label{fig:mSSA_fore2}
\end{figure}

\begin{figure}[h!]
\centering
\begin{subfigure}{0.40\textwidth}
\begin{tikzpicture}
\definecolor{color0}{rgb}{0.12156862745098,0.466666666666667,0.705882352941177}
\definecolor{color1}{rgb}{1,0.498039215686275,0.0549019607843137}
\definecolor{color2}{rgb}{0.172549019607843,0.627450980392157,0.172549019607843}

\begin{axis}[
legend cell align={left},
width = \linewidth,
legend style={
  fill opacity=0.0,
  draw opacity=1,
  text opacity=1,
  at={(0.03,0.97)},
  anchor=north west,
  draw=none
},
log basis x={10},
log basis y={10},
tick align=outside,
tick pos=left,
title={Training Time - Synthetic Dataset},
x grid style={white!69.0196078431373!black},
xlabel={Timesteps (T)},
xmin=340.535969008314, xmax=11746.1894308802,
xmode=log,
xtick style={color=black},
y grid style={white!69.0196078431373!black},
ylabel={Seconds},
ymin=0.00990452154238732, ymax=825.334475947086,
ymode=log,
ytick style={color=black}
]
\addplot [very thick, color0, mark=asterisk, mark size=1.5, mark options={solid}]
table {%
400 0.016577005
600 0.0379211902618408
800 0.0569393634796142
1000 0.0500750541687011
2000 0.0806777477264404
4000 0.223652601242065
6000 0.255280495
8000 0.354083538
10000 0.471513033
};
\addlegendentry{mSSA}
\addplot [very thick, color1, dashed, mark=*, mark size=1.5, mark options={solid}]
table {%
400 0.4852283
600 0.837280035
800 1.445567131
1000 2.04690361
2000 7.227142334
4000 30.48887277
6000 86.89790583
8000 184.2927191
10000 323.5471435
};
\addlegendentry{hSSA}
\addplot [very thick, color2, dashed, mark=triangle*, mark size=1.5, mark options={solid}]
table {%
400 0.387770653
600 0.670904636
800 1.016738176
1000 1.500113249
2000 6.171154261
4000 35.16030359
6000 112.8221099
8000 259.0603824
10000 493.1254528
};
\addlegendentry{vSSA}
\end{axis}
\end{tikzpicture}
\caption{}
\label{fig:syn_train_time}
 \end{subfigure}
 \hspace{4mm}
   \begin{subfigure}{0.40\textwidth}
\begin{tikzpicture}
\definecolor{color0}{rgb}{0.12156862745098,0.466666666666667,0.705882352941177}
\definecolor{color1}{rgb}{1,0.498039215686275,0.0549019607843137}
\definecolor{color2}{rgb}{0.172549019607843,0.627450980392157,0.172549019607843}

\begin{axis}[
legend cell align={left},
width = \linewidth,
legend style={
  fill opacity=0.0,
  draw opacity=1,
  text opacity=1,
  at={(0.03,0.97)},
  anchor=north west,
  draw=none
},
log basis x={10},
log basis y={10},
tick align=outside,
tick pos=left,
title={Training Time - Electricity Dataset},
x grid style={white!69.0196078431373!black},
xlabel={Timesteps (T)},
xmin=328.936063770756, xmax=24320.8358131732,
xmode=log,
xtick style={color=black},
y grid style={white!69.0196078431373!black},
ylabel={Seconds},
ymin=0.0100792516593312, ymax=866.779489175226,
ymode=log,
ytick style={color=black}
]
\addplot [very thick, color0, mark=asterisk, mark size=1.5, mark options={solid}]
table {%
400 0.016893625
600 0.0430974960327148
800 0.043970585
1000 0.045286655
2000 0.0832948684692382
4000 0.185041189193726
6000 0.239439249
8000 0.381483793258667
10000 0.493450403
20000 0.992658138
};
\addlegendentry{mSSA}
\addplot [very thick, color1, dashed, mark=*, mark size=1.5, mark options={solid}]
table {%
400 0.488216639
600 0.969156504
800 1.349081278
1000 2.471289873
2000 8.351601601
4000 36.89091086
6000 104.4114544
8000 220.7085018
10000 354.0430982
};
\addlegendentry{hSSA}
\addplot [very thick, color2, dashed, mark=triangle*, mark size=1.5, mark options={solid}]
table {%
400 0.434861898
600 0.768960238
800 1.172620058
1000 1.724573851
2000 7.031897783
4000 39.27743149
6000 124.9336317
8000 282.1014705
10000 517.1470661
};
\addlegendentry{vSSA}
\end{axis}
\end{tikzpicture}
\caption{}
\label{fig:elec_train_time}
 \end{subfigure}
  \caption{The training time of the original mSSA variants (hSSA in the orange dotted line and vSSA in the green dotted line) are orders of magnitude higher than that of the mSSA variant we propose (blue solid line).}
    \label{fig:mssa_vs_hankelmssa_training_time}
\end{figure}
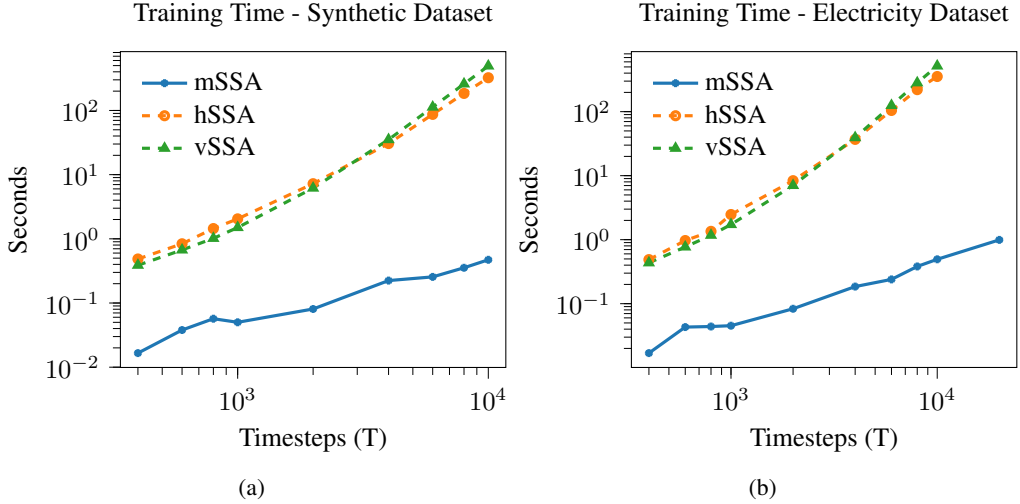

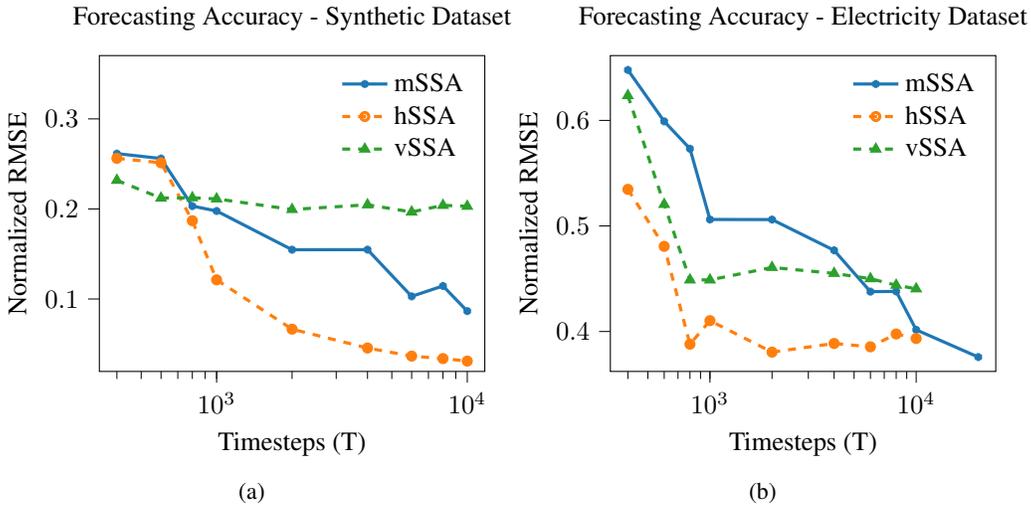
\begin{figure}[h!]
\centering
  \begin{subfigure}{0.40\textwidth}
\begin{tikzpicture}
\definecolor{color0}{rgb}{0.12156862745098,0.466666666666667,0.705882352941177}
\definecolor{color1}{rgb}{1,0.498039215686275,0.0549019607843137}
\definecolor{color2}{rgb}{0.172549019607843,0.627450980392157,0.172549019607843}
\begin{axis}[
legend cell align={left},
  width = \linewidth,
legend style={
  fill opacity=0.0,
  draw opacity=1,
  text opacity=1,
  draw=none
},
log basis x={10},
tick align=outside,
tick pos=left,
title={Forecasting Accuracy - Synthetic Dataset},
x grid style={white!69.0196078431373!black},
xlabel={Timesteps (T)},
xmin=340.535969008314, xmax=11746.1894308802,
xmode=log,
xtick style={color=black},
y grid style={white!69.0196078431373!black},
ylabel={Normalized RMSE},
ymin=0.01955485315, ymax=0.370296103585,
ytick style={color=black}
]
\addplot [very thick, color0, mark=asterisk, mark size=1.5, mark options={solid}]
table {%
400 0.261442573
600 0.256003707202008
800 0.203185617563666
1000 0.197718804141371
2000 0.15472493533954
4000 0.154786601536842
6000 0.102898727
8000 0.114503632
10000 0.086525123
};
\addlegendentry{mSSA}
\addplot [very thick, color1, dashed, mark=*, mark size=1.5, mark options={solid}]
table {%
400 0.256225217
600 0.25123582
800 0.186925865
1000 0.12127167
2000 0.066509288
4000 0.045553443
6000 0.036576606
8000 0.033824169
10000 0.031073316
};
\addlegendentry{hSSA}
\addplot [very thick, color2, dashed, mark=triangle*, mark size=1.5, mark options={solid}]
table {%
400 0.231812572
600 0.212062484
800 0.212402001
1000 0.211165402
2000 0.199401504
4000 0.204596591
6000 0.196629628
8000 0.203987912
10000 0.202987912
};
\addlegendentry{vSSA}
\end{axis}
\end{tikzpicture}
\caption{}
\label{fig:syn_accuracy}
 \end{subfigure}
  \hspace{4mm}
   \begin{subfigure}{0.40\textwidth}
\begin{tikzpicture}
\definecolor{color0}{rgb}{0.12156862745098,0.466666666666667,0.705882352941177}
\definecolor{color1}{rgb}{1,0.498039215686275,0.0549019607843137}
\definecolor{color2}{rgb}{0.172549019607843,0.627450980392157,0.172549019607843}

\begin{axis}[
legend cell align={left},
  width = \linewidth,
legend style={fill opacity=0.0, draw opacity=1, text opacity=1, draw=none},
log basis x={10},
tick align=outside,
tick pos=left,
title={Forecasting Accuracy - Electricity Dataset},
x grid style={white!69.0196078431373!black},
xlabel={Timesteps (T)},
xmin=328.936063770756, xmax=24320.8358131732,
xmode=log,
xtick style={color=black},
y grid style={white!69.0196078431373!black},
ylabel={Normalized RMSE},
ymin=0.36217997505, ymax=0.66139095195,
ytick style={color=black}
]
\addplot [very thick, color0, mark=asterisk, mark size=1.5, mark options={solid}]
table {%
400 0.647790453
600 0.599088419844481
800 0.573203809
1000 0.506131502
2000 0.506038926703504
4000 0.476921176529033
6000 0.437772122
8000 0.437965621148883
10000 0.401613807
20000 0.375780474
};
\addlegendentry{mSSA}
\addplot [very thick, color1, dashed, mark=*, mark size=1.5, mark options={solid}]
table {%
400 0.534651367
600 0.48066074
800 0.387990903
1000 0.410288973
2000 0.380463595
4000 0.388664761
6000 0.385501263
8000 0.397650134
10000 0.393362316
};
\addlegendentry{hSSA}
\addplot [very thick, color2, dashed, mark=triangle*, mark size=1.5, mark options={solid}]
table {%
400 0.623493682
600 0.520438475
800 0.448877317
1000 0.448873216
2000 0.460669535
4000 0.455063744
6000 0.450056081
8000 0.443863593
10000 0.440422217
};
\addlegendentry{vSSA}
\end{axis}
\end{tikzpicture}
\caption{}
\label{fig:elec_accuracy}
 \end{subfigure}
  \caption{The forecasting error of the original mSSA variants (hSSA in the orange dotted line and vSSA in the green dotted line) and the proposed mSSA variant (blue solid line) as we increase $T$.}
    \label{fig:mssa_vs_hankelmssa}
\end{figure}

\vspace{200mm}

\newpage

\end{appendix}



\bibliographystyle{imsart-number} 
\bibliography{bibliography}       

\begin{thebibliography}{48}

\bibitem{SSA_Sigmetrics}
\begin{barticle}[author]
\bauthor{\bsnm{Agarwal},~\bfnm{Anish}\binits{A.}},
  \bauthor{\bsnm{Amjad},~\bfnm{Muhammad~Jehangir}\binits{M.~J.}},
  \bauthor{\bsnm{Shah},~\bfnm{Devavrat}\binits{D.}} \AND
  \bauthor{\bsnm{Shen},~\bfnm{Dennis}\binits{D.}}
(\byear{2018}).
\btitle{Model Agnostic Time Series Analysis via Matrix Estimation}.
\bjournal{Proceedings of the ACM on Measurement and Analysis of Computing
  Systems}
\bvolume{2}
\bpages{40}.
\end{barticle}
\endbibitem

\bibitem{agarwal2020principal}
\begin{barticle}[author]
\bauthor{\bsnm{Agarwal},~\bfnm{Anish}\binits{A.}},
  \bauthor{\bsnm{Shah},~\bfnm{Devavrat}\binits{D.}} \AND
  \bauthor{\bsnm{Shen},~\bfnm{Dennis}\binits{D.}}
(\byear{2020}).
\btitle{On Principal Component Regression in a High-Dimensional
  Error-in-Variables Setting}.
\bjournal{arXiv preprint arXiv:2010.14449}.
\end{barticle}
\endbibitem

\bibitem{PCR_NeurIPS}
\begin{binproceedings}[author]
\bauthor{\bsnm{Agarwal},~\bfnm{Anish}\binits{A.}},
  \bauthor{\bsnm{Shah},~\bfnm{Devavrat}\binits{D.}},
  \bauthor{\bsnm{Shen},~\bfnm{Dennis}\binits{D.}} \AND
  \bauthor{\bsnm{Song},~\bfnm{Dogyoon}\binits{D.}}
(\byear{2019}).
\btitle{On robustness of principal component regression}.
In \bbooktitle{Advances in Neural Information Processing Systems}
\bpages{9889--9900}.
\end{binproceedings}
\endbibitem

\bibitem{pcr_jasa}
\begin{barticle}[author]
\bauthor{\bsnm{Agarwal},~\bfnm{Anish}\binits{A.}},
  \bauthor{\bsnm{Shah},~\bfnm{Devavrat}\binits{D.}},
  \bauthor{\bsnm{Shen},~\bfnm{Dennis}\binits{D.}} \AND
  \bauthor{\bsnm{Song},~\bfnm{Dogyoon}\binits{D.}}
(\byear{2021}).
\btitle{On Robustness of Principal Component Regression}.
\bjournal{Accepted to appear in Journal of the American Statistical
  Association}.
\end{barticle}
\endbibitem

\bibitem{samossa}
\begin{barticle}[author]
\bauthor{\bsnm{Alomar},~\bfnm{Abdullah}\binits{A.}},
  \bauthor{\bsnm{Dahleh},~\bfnm{Munther}\binits{M.}},
  \bauthor{\bsnm{Mann},~\bfnm{Sean}\binits{S.}} \AND
  \bauthor{\bsnm{Shah},~\bfnm{Devavrat}\binits{D.}}
(\byear{2023}).
\btitle{Samossa: Multivariate singular spectrum analysis with stochastic
  autoregressive noise}.
\bjournal{Advances in Neural Information Processing Systems}
\bvolume{36}
\bpages{8442--8486}.
\end{barticle}
\endbibitem

\bibitem{anderson2012autoregressive}
\begin{barticle}[author]
\bauthor{\bsnm{Anderson},~\bfnm{Brian~DO}\binits{B.~D.}},
  \bauthor{\bsnm{Deistler},~\bfnm{Manfred}\binits{M.}},
  \bauthor{\bsnm{Chen},~\bfnm{Weitian}\binits{W.}} \AND
  \bauthor{\bsnm{Filler},~\bfnm{Alexander}\binits{A.}}
(\byear{2012}).
\btitle{Autoregressive models of singular spectral matrices}.
\bjournal{Automatica}
\bvolume{48}
\bpages{2843--2849}.
\end{barticle}
\endbibitem

\bibitem{Marta_Modungo}
\begin{barticle}[author]
\bauthor{\bsnm{Banbura},~\bfnm{Marta}\binits{M.}} \AND
  \bauthor{\bsnm{Modugno},~\bfnm{Michele}\binits{M.}}
(\byear{2014}).
\btitle{Maximum Likelihood Estimation of Factor Models on Datasets with
  Arbitrary Pattern of Missing Data}.
\bjournal{Journal of Applied Econometrics}
\bvolume{29}
\bpages{133-160}.
\end{barticle}
\endbibitem

\bibitem{barak_moitra}
\begin{binproceedings}[author]
\bauthor{\bsnm{Barak},~\bfnm{Boaz}\binits{B.}} \AND
  \bauthor{\bsnm{Moitra},~\bfnm{Ankur}\binits{A.}}
(\byear{2016}).
\btitle{Noisy Tensor Completion via the Sum-of-Squares Hierarchy}.
In \bbooktitle{Proceedings of the 29th Conference on Learning Theory, {COLT}
  2016, New York, USA, June 23-26, 2016}
(\beditor{\bfnm{Vitaly}\binits{V.}~\bsnm{Feldman}},
  \beditor{\bfnm{Alexander}\binits{A.}~\bsnm{Rakhlin}} \AND
  \beditor{\bfnm{Ohad}\binits{O.}~\bsnm{Shamir}}, eds.).
\bseries{{JMLR} Workshop and Conference Proceedings}
\bvolume{49}
\bpages{417--445}.
\bpublisher{JMLR.org}.
\end{binproceedings}
\endbibitem

\bibitem{barigozzi2019quasi}
\begin{barticle}[author]
\bauthor{\bsnm{Barigozzi},~\bfnm{Matteo}\binits{M.}} \AND
  \bauthor{\bsnm{Luciani},~\bfnm{Matteo}\binits{M.}}
(\byear{2019}).
\btitle{Quasi maximum likelihood estimation of non-stationary large approximate
  dynamic factor models}.
\bjournal{arXiv preprint arXiv:1910.09841}.
\end{barticle}
\endbibitem

\bibitem{bernstein1}
\begin{bbook}[author]
\bauthor{\bsnm{Bernstein},~\bfnm{Sergei}\binits{S.}}
(\byear{1946}).
\btitle{The Theory of Probabilities}.
\bpublisher{Gastehizdat Publishing House}.
\end{bbook}
\endbibitem

\bibitem{bogalo2020understanding}
\begin{barticle}[author]
\bauthor{\bsnm{B{\'o}galo},~\bfnm{Juan}\binits{J.}},
  \bauthor{\bsnm{Poncela},~\bfnm{Pilar}\binits{P.}} \AND
  \bauthor{\bsnm{Senra},~\bfnm{Eva}\binits{E.}}
(\byear{2020}).
\btitle{Understanding fluctuations through Multivariate Circulant Singular
  Spectrum Analysis}.
\bjournal{arXiv preprint arXiv:2007.07561}.
\end{barticle}
\endbibitem

\bibitem{Broomhead}
\begin{binbook}[author]
\bauthor{\bsnm{Broomhead},~\bfnm{David}\binits{D.}} \AND
  \bauthor{\bsnm{King},~\bfnm{Gregory}\binits{G.}}
(\byear{1986}).
\btitle{On the Qualitative Analysis of Experimental Dynamical Systems}
\bvolume{11}.
\bpublisher{Adam Hilger Bristol}, \baddress{Bristol (UK)}.
\end{binbook}
\endbibitem

\bibitem{nonconvex_low_rank_tensor_noisy}
\begin{barticle}[author]
\bauthor{\bsnm{Cai},~\bfnm{Changxiao}\binits{C.}},
  \bauthor{\bsnm{Li},~\bfnm{Gen}\binits{G.}},
  \bauthor{\bsnm{Poor},~\bfnm{H.~Vincent}\binits{H.~V.}} \AND
  \bauthor{\bsnm{Chen},~\bfnm{Yuxin}\binits{Y.}}
(\byear{2019}).
\btitle{Nonconvex Low-Rank Tensor Completion from Noisy Data}.
\bjournal{Advances in neural information processing systems}
\bvolume{32}
\bpages{1863--1874}.
\end{barticle}
\endbibitem

\bibitem{Keras}
\begin{bmisc}[author]
\bauthor{\bsnm{Chollet},~\bfnm{François}\binits{F.}}
(\byear{2015}).
\btitle{keras}.
\bhowpublished{\url{https://github.com/fchollet/keras}}.
\end{bmisc}
\endbibitem

\bibitem{davis1970rotation}
\begin{barticle}[author]
\bauthor{\bsnm{Davis},~\bfnm{Chandler}\binits{C.}} \AND
  \bauthor{\bsnm{Kahan},~\bfnm{William~Morton}\binits{W.~M.}}
(\byear{1970}).
\btitle{The rotation of eigenvectors by a perturbation. III}.
\bjournal{SIAM Journal on Numerical Analysis}
\bvolume{7}
\bpages{1--46}.
\end{barticle}
\endbibitem

\bibitem{deistler2008generalized}
\begin{binproceedings}[author]
\bauthor{\bsnm{Deistler},~\bfnm{Manfred}\binits{M.}} \AND
  \bauthor{\bsnm{Anderson},~\bfnm{Brian}\binits{B.}}
(\byear{2008}).
\btitle{GENERALIZED LINEAR DYNAMIC FACTOR MODELS-ASTRUCTURE THEORY}.
In \bbooktitle{Workshop on Inverse and Partial Information Problems}
\bvolume{1}
\bpages{50}.
\end{binproceedings}
\endbibitem

\bibitem{Doz_Giannone_Reichlin}
\begin{barticle}[author]
\bauthor{\bsnm{Doz},~\bfnm{Catherine}\binits{C.}},
  \bauthor{\bsnm{Giannone},~\bfnm{Domenico}\binits{D.}} \AND
  \bauthor{\bsnm{Reichlin},~\bfnm{Lucrezia}\binits{L.}}
(\byear{2012}).
\btitle{{A Quasi–Maximum Likelihood Approach for Large, Approximate Dynamic
  Factor Models}}.
\bjournal{The Review of Economics and Statistics}
\bvolume{94}
\bpages{1014-1024}.
\end{barticle}
\endbibitem

\bibitem{Prophet}
\begin{bmisc}[author]
\bauthor{\bsnm{Facebook}}
(\byear{2020}).
\btitle{Prophet}.
\bhowpublished{\url{https://facebook.github.io/prophet/}}.
\bnote{Online; accessed 25 February 2020}.
\end{bmisc}
\endbibitem

\bibitem{Forni_Hallin_Lippi_Reichlin}
\begin{barticle}[author]
\bauthor{\bsnm{Forni},~\bfnm{Mario}\binits{M.}},
  \bauthor{\bsnm{Hallin},~\bfnm{Marc}\binits{M.}},
  \bauthor{\bsnm{Lippi},~\bfnm{Marco}\binits{M.}} \AND
  \bauthor{\bsnm{Reichlin},~\bfnm{Lucrezia}\binits{L.}}
(\byear{2000}).
\btitle{The Generalized Dynamic-Factor Model: Identification and Estimation}.
\bjournal{The Review of Economics and Statistics}
\bvolume{82}
\bpages{540--554}.
\end{barticle}
\endbibitem

\bibitem{Donoho}
\begin{barticle}[author]
\bauthor{\bsnm{Gavish},~\bfnm{Matan}\binits{M.}} \AND
  \bauthor{\bsnm{Donoho},~\bfnm{David~L}\binits{D.~L.}}
(\byear{2014}).
\btitle{The optimal hard threshold for singular values is $4 /\sqrt{3}$}.
\bjournal{IEEE Transactions on Information Theory}
\bvolume{60}
\bpages{5040--5053}.
\end{barticle}
\endbibitem

\bibitem{Ghil}
\begin{barticle}[author]
\bauthor{\bsnm{Ghil},~\bfnm{M.}\binits{M.}},
  \bauthor{\bsnm{Allen},~\bfnm{M.~R.}\binits{M.~R.}},
  \bauthor{\bsnm{Dettinger},~\bfnm{M.~D.}\binits{M.~D.}},
  \bauthor{\bsnm{Ide},~\bfnm{K.}\binits{K.}},
  \bauthor{\bsnm{Kondrashov},~\bfnm{D.}\binits{D.}},
  \bauthor{\bsnm{Mann},~\bfnm{M.~E.}\binits{M.~E.}},
  \bauthor{\bsnm{Robertson},~\bfnm{A.~W.}\binits{A.~W.}},
  \bauthor{\bsnm{Saunders},~\bfnm{A.}\binits{A.}},
  \bauthor{\bsnm{Tian},~\bfnm{Y.}\binits{Y.}},
  \bauthor{\bsnm{Varadi},~\bfnm{F.}\binits{F.}} \AND
  \bauthor{\bsnm{Yiou},~\bfnm{P.}\binits{P.}}
(\byear{2002}).
\btitle{Advanced Spectral Method for Climatic Time Series}.
\bjournal{Reviews of Geophysics}
\bvolume{40}
\bpages{3-1-3-41}.
\end{barticle}
\endbibitem

\bibitem{SSA_book}
\begin{bbook}[author]
\bauthor{\bsnm{Golyandina},~\bfnm{Nina}\binits{N.}},
  \bauthor{\bsnm{Nekrutkin},~\bfnm{Vladimir}\binits{V.}} \AND
  \bauthor{\bsnm{Zhigljavsky},~\bfnm{Anatoly~A}\binits{A.~A.}}
(\byear{2001}).
\btitle{Analysis of time series structure: SSA and related techniques}.
\bpublisher{Chapman and Hall/CRC}.
\end{bbook}
\endbibitem

\bibitem{Fourier_Classic}
\begin{bbook}[author]
\bauthor{\bsnm{Grafakos},~\bfnm{Loukas}\binits{L.}}
(\byear{2008}).
\btitle{Classical fourier analysis}
\bvolume{2}.
\bpublisher{Springer}.
\end{bbook}
\endbibitem

\bibitem{Hallin_Marc_Liska}
\begin{barticle}[author]
\bauthor{\bsnm{Hallin},~\bfnm{Marc}\binits{M.}} \AND
  \bauthor{\bsnm{Liška},~\bfnm{Roman}\binits{R.}}
(\byear{2007}).
\btitle{Determining the Number of Factors in the General Dynamic Factor Model}.
\bjournal{Journal of the American Statistical Association}
\bvolume{102}
\bpages{603--617}.
\end{barticle}
\endbibitem

\bibitem{mSSA2}
\begin{barticle}[author]
\bauthor{\bsnm{Hassani},~\bfnm{Hossein}\binits{H.}},
  \bauthor{\bsnm{Heravi},~\bfnm{Saeed}\binits{S.}} \AND
  \bauthor{\bsnm{Zhigljavsky},~\bfnm{Anatoly}\binits{A.}}
(\byear{2013}).
\btitle{Forecasting UK industrial production with multivariate singular
  spectrum analysis}.
\bjournal{Journal of Forecasting}
\bvolume{32}
\bpages{395--408}.
\end{barticle}
\endbibitem

\bibitem{mSSA1}
\begin{barticle}[author]
\bauthor{\bsnm{Hassani},~\bfnm{Hossein}\binits{H.}} \AND
  \bauthor{\bsnm{Mahmoudvand},~\bfnm{Rahim}\binits{R.}}
(\byear{2013}).
\btitle{Multivariate singular spectrum analysis: A general view and new vector
  forecasting approach}.
\bjournal{International Journal of Energy and Statistics}
\bvolume{1}
\bpages{55--83}.
\end{barticle}
\endbibitem

\bibitem{hassani2018singular}
\begin{bbook}[author]
\bauthor{\bsnm{Hassani},~\bfnm{Hossein}\binits{H.}} \AND
  \bauthor{\bsnm{Mahmoudvand},~\bfnm{Rahim}\binits{R.}}
(\byear{2018}).
\btitle{Singular spectrum analysis: Using R}.
\bpublisher{Springer}.
\end{bbook}
\endbibitem

\bibitem{book_cv}
\begin{bbook}[author]
\bauthor{\bsnm{Hyndman},~\bfnm{Rob~J}\binits{R.~J.}} \AND
  \bauthor{\bsnm{Athanasopoulos},~\bfnm{George}\binits{G.}}
(\byear{2018}).
\btitle{Forecasting: principles and practice}.
\bpublisher{OTexts}.
\end{bbook}
\endbibitem

\bibitem{m5}
\begin{barticle}[author]
\bauthor{\bsnm{Makridakis},~\bfnm{S}\binits{S.}},
  \bauthor{\bsnm{Spiliotis},~\bfnm{E}\binits{E.}} \AND
  \bauthor{\bsnm{Assimakopoulos},~\bfnm{V}\binits{V.}}
(\byear{2020}).
\btitle{The M5 accuracy competition: Results, findings and conclusions}.
\bjournal{Int J Forecast}.
\end{barticle}
\endbibitem

\bibitem{mSSA3}
\begin{barticle}[author]
\bauthor{\bsnm{Oropeza},~\bfnm{Vicente}\binits{V.}} \AND
  \bauthor{\bsnm{Sacchi},~\bfnm{Mauricio}\binits{M.}}
(\byear{2011}).
\btitle{Simultaneous seismic data denoising and reconstruction via multichannel
  singular spectrum analysis}.
\bjournal{Geophysics}
\bvolume{76}
\bpages{V25--V32}.
\end{barticle}
\endbibitem

\bibitem{Plaut}
\begin{barticle}[author]
\bauthor{\bsnm{Plaut},~\bfnm{Guy}\binits{G.}} \AND
  \bauthor{\bsnm{Vautard},~\bfnm{Robert}\binits{R.}}
(\byear{1994}).
\btitle{Spells of Low-Frequency Oscillations and Weather Regimes in the
  Northern Hemisphere}.
\bjournal{Journal of Atmospheric Sciences}
\bvolume{51}
\bpages{210 - 236}.
\end{barticle}
\endbibitem

\bibitem{NIPS2015_5938}
\begin{bincollection}[author]
\bauthor{\bsnm{Rao},~\bfnm{Nikhil}\binits{N.}},
  \bauthor{\bsnm{Yu},~\bfnm{Hsiang-Fu}\binits{H.-F.}},
  \bauthor{\bsnm{Ravikumar},~\bfnm{Pradeep~K}\binits{P.~K.}} \AND
  \bauthor{\bsnm{Dhillon},~\bfnm{Inderjit~S}\binits{I.~S.}}
(\byear{2015}).
\btitle{Collaborative Filtering with Graph Information: Consistency and
  Scalable Methods}.
In \bbooktitle{Advances in Neural Information Processing Systems 28}
(\beditor{\bfnm{C.}\binits{C.}~\bsnm{Cortes}},
  \beditor{\bfnm{N.~D.}\binits{N.~D.}~\bsnm{Lawrence}},
  \beditor{\bfnm{D.~D.}\binits{D.~D.}~\bsnm{Lee}},
  \beditor{\bfnm{M.}\binits{M.}~\bsnm{Sugiyama}} \AND
  \beditor{\bfnm{R.}\binits{R.}~\bsnm{Garnett}}, eds.)
\bpages{2107--2115}.
\bpublisher{Curran Associates, Inc.}
\end{bincollection}
\endbibitem

\bibitem{VAR}
\begin{bbook}[author]
\bauthor{\bsnm{Robert H.~Shumway},~\bfnm{David S.~Stoffer}\binits{D.~S.~S.}}
(\byear{2015}).
\btitle{Time Series Analysis and It's Applications},
\bedition{3rd} ed.
\bpublisher{Blue Printing}.
\end{bbook}
\endbibitem

\bibitem{DeepAR}
\begin{barticle}[author]
\bauthor{\bsnm{Salinas},~\bfnm{David}\binits{D.}},
  \bauthor{\bsnm{Flunkert},~\bfnm{Valentin}\binits{V.}},
  \bauthor{\bsnm{Gasthaus},~\bfnm{Jan}\binits{J.}} \AND
  \bauthor{\bsnm{Januschowski},~\bfnm{Tim}\binits{T.}}
(\byear{2019}).
\btitle{DeepAR: Probabilistic forecasting with autoregressive recurrent
  networks}.
\bjournal{International Journal of Forecasting}.
\end{barticle}
\endbibitem

\bibitem{statsmode}
\begin{binproceedings}[author]
\bauthor{\bsnm{Seabold},~\bfnm{Skipper}\binits{S.}} \AND
  \bauthor{\bsnm{Perktold},~\bfnm{Josef}\binits{J.}}
(\byear{2010}).
\btitle{statsmodels: Econometric and statistical modeling with python}.
In \bbooktitle{9th Python in Science Conference}.
\end{binproceedings}
\endbibitem

\bibitem{Amazon}
\begin{binproceedings}[author]
\bauthor{\bsnm{Sen},~\bfnm{Rajat}\binits{R.}},
  \bauthor{\bsnm{Yu},~\bfnm{Hsiang-Fu}\binits{H.-F.}} \AND
  \bauthor{\bsnm{Dhillon},~\bfnm{Inderjit~S}\binits{I.~S.}}
(\byear{2019}).
\btitle{Think globally, act locally: A deep neural network approach to
  high-dimensional time series forecasting}.
In \bbooktitle{Advances in Neural Information Processing Systems}
\bpages{4838--4847}.
\end{binproceedings}
\endbibitem

\bibitem{shah2019iterative}
\begin{binproceedings}[author]
\bauthor{\bsnm{Shah},~\bfnm{Devavrat}\binits{D.}} \AND
  \bauthor{\bsnm{Yu},~\bfnm{Christina~Lee}\binits{C.~L.}}
(\byear{2019}).
\btitle{Iterative Collaborative Filtering for Sparse Noisy Tensor Estimation}.
In \bbooktitle{2019 IEEE International Symposium on Information Theory (ISIT)}
\bpages{41--45}.
\bpublisher{IEEE}.
\end{binproceedings}
\endbibitem

\bibitem{stock_watson}
\begin{barticle}[author]
\bauthor{\bsnm{Stock},~\bfnm{James~H.}\binits{J.~H.}} \AND
  \bauthor{\bsnm{Watson},~\bfnm{Mark~W.}\binits{M.~W.}}
(\byear{2002}).
\btitle{Forecasting Using Principal Components from a Large Number of
  Predictors}.
\bjournal{Journal of the American Statistical Association}
\bvolume{97}
\bpages{1167--1179}.
\end{barticle}
\endbibitem

\bibitem{uci_elec}
\begin{barticle}[author]
\bauthor{\bsnm{Trindade},~\bfnm{Artur}\binits{A.}}
(\byear{2014}).
\btitle{{UCI} Machine Learning Repository - Individual Household Electric Power
  Consumption Data Set}.
\end{barticle}
\endbibitem

\bibitem{vershynin2010introduction}
\begin{barticle}[author]
\bauthor{\bsnm{Vershynin},~\bfnm{Roman}\binits{R.}}
(\byear{2010}).
\btitle{Introduction to the non-asymptotic analysis of random matrices}.
\bjournal{arXiv preprint arXiv:1011.3027}.
\end{barticle}
\endbibitem

\bibitem{non_Param_stats}
\begin{bbook}[author]
\bauthor{\bsnm{Wasserman},~\bfnm{Larry}\binits{L.}}
(\byear{2006}).
\btitle{All of nonparametric statistics}.
\bpublisher{Springer}.
\end{bbook}
\endbibitem

\bibitem{wedin1972perturbation}
\begin{barticle}[author]
\bauthor{\bsnm{Wedin},~\bfnm{Per-{\AA}ke}\binits{P.-{\AA}.}}
(\byear{1972}).
\btitle{Perturbation bounds in connection with singular value decomposition}.
\bjournal{BIT Numerical Mathematics}
\bvolume{12}
\bpages{99--111}.
\end{barticle}
\endbibitem

\bibitem{Matrix_Time2}
\begin{binproceedings}[author]
\bauthor{\bsnm{Wilson},~\bfnm{Kevin~W}\binits{K.~W.}},
  \bauthor{\bsnm{Raj},~\bfnm{Bhiksha}\binits{B.}} \AND
  \bauthor{\bsnm{Smaragdis},~\bfnm{Paris}\binits{P.}}
(\byear{2008}).
\btitle{Regularized non-negative matrix factorization with temporal
  dependencies for speech denoising}.
In \bbooktitle{Ninth Annual Conference of the International Speech
  Communication Association}.
\end{binproceedings}
\endbibitem

\bibitem{WRDS}
\begin{barticle}[author]
\bauthor{\bsnm{WRDS}}
(\byear{2021}).
\btitle{The Trade and Quote (TAQ) database}.
\end{barticle}
\endbibitem

\bibitem{xia2018statistically}
\begin{bmisc}[author]
\bauthor{\bsnm{Xia},~\bfnm{Dong}\binits{D.}},
  \bauthor{\bsnm{Yuan},~\bfnm{Ming}\binits{M.}} \AND
  \bauthor{\bsnm{Zhang},~\bfnm{Cun-Hui}\binits{C.-H.}}
(\byear{2018}).
\btitle{Statistically Optimal and Computationally Efficient Low Rank Tensor
  Completion from Noisy Entries}.
\end{bmisc}
\endbibitem

\bibitem{JXu_smooth_graphon}
\begin{barticle}[author]
\bauthor{\bsnm{Xu},~\bfnm{Jiaming}\binits{J.}}
(\byear{2017}).
\btitle{Rates of convergence of spectral methods for graphon estimation}.
\bjournal{arXiv preprint arXiv:1709.03183}.
\end{barticle}
\endbibitem

\bibitem{yu2020tensor}
\begin{barticle}[author]
\bauthor{\bsnm{Yu},~\bfnm{Christina~Lee}\binits{C.~L.}}
(\byear{2020}).
\btitle{Tensor Estimation with Nearly Linear Samples}.
\bjournal{arXiv preprint arXiv:2007.00736}.
\end{barticle}
\endbibitem

\bibitem{TRMF}
\begin{binproceedings}[author]
\bauthor{\bsnm{Yu},~\bfnm{Hsiang-Fu}\binits{H.-F.}},
  \bauthor{\bsnm{Rao},~\bfnm{Nikhil}\binits{N.}} \AND
  \bauthor{\bsnm{Dhillon},~\bfnm{Inderjit~S}\binits{I.~S.}}
(\byear{2016}).
\btitle{Temporal regularized matrix factorization for high-dimensional time
  series prediction}.
In \bbooktitle{Advances in neural information processing systems}
\bpages{847--855}.
\end{binproceedings}
\endbibitem

\end{thebibliography}

\end{document}